%% file: main.tex
\documentclass[10pt]{article}

\usepackage[round, authoryear]{natbib}
\usepackage[utf8]{inputenc}
\usepackage[T1]{fontenc}
\usepackage{lmodern}
\input{commands}

\input{defs.tex}

\newcommand{\colppat}{\textcolor{violet}{PPAT}}
\newcommand{\ase}{\textcolor{NavyBlue}{ASE}}
\newcommand{\random}{\textcolor{gray}{Random}}
\newcommand{\colure}{\textcolor{orange}{LURE}}

\DeclareRobustCommand{\solidline}[1]{%
  \tikz[baseline=-0.6ex]{\draw[#1, line width=1.2pt] (0,0) -- (0.55,0);}%
}
\DeclareRobustCommand{\dashedline}[1]{%
  \tikz[baseline=-0.6ex]{\draw[#1, line width=1.2pt, dashed] (0,0) -- (0.55,0);}%
}

\title{Prediction--Powered Active Testing}
\author[1]{Kianoosh Ashouritaklimi}
\author[1]{Valentin Kilian}
\author[2]{Daolang Huang}
\author[1]{Tom Rainforth}
\author[1]{François Caron}
\affiliation[1]{Department of Statistics, University of Oxford}
\affiliation[2]{Department of Computer Science, Aalto University.\\
Correspondence to: \texttt{ashouritaklimi@stats.ox.ac.uk}}

\begin{document}

\begin{abstract}
\looseness=-1
Active testing provides a label--efficient approach to risk estimation by adaptively selecting which test points should be labelled. However, existing estimators fail to exploit the informative predictions of powerful black--box models, even though such predictions are increasingly available in settings where labels remain expensive.
To address this, we propose \textbf{Prediction--Powered Active Testing (PPAT)}, a novel label--efficient risk estimation framework that combines the unbiased LURE estimator \citep{farquhar2021statistical} with a prediction--powered control variate. Rather than using proxy predictions as biased pseudo--labels, PPAT uses them to residualise the loss, preserving unbiasedness while reducing variance. 
Beyond the estimator itself, PPAT also changes which points should be acquired: we derive oracle and practical surrogate--based acquisition rules tailored to reducing the variance of our estimator. 
Moreover, we establish asymptotic normality for PPAT, yielding asymptotically valid confidence intervals and thus a principled estimate of the uncertainty around our estimates. Across tabular regression and image--classification tasks, PPAT outperforms existing methods in risk estimation, while its confidence intervals attain the target coverage with substantially fewer labels and smaller widths.

\end{abstract}

\maketitle

\section{Introduction}
\label{sec:intro}
\input{introduction}

\section{Background}
\label{sec:background}
\input{background/background}

\newpage
\section{Prediction--Powered Active Testing}
\label{sec:pp-at}
\input{method/ppat}

\section{Asymptotic Properties and Approximate Confidence Intervals}
\label{sec:asymp_props_approx_cis}
\input{theory/asymptotics.tex}

\section{Experiments}
\label{sec:experiments}
\input{experiments/experiments}

\section{Related Work}
\label{sec:related_work_main}
\input{related_work_main}

\section{Discussion}
\label{sec:discussion}
\input{future_work}

\newpage
\section*{Acknowledgments}
KA is supported by the EPSRC Centre for
Doctoral Training in Modern Statistics and Statistical Machine Learning (EP/S023151/1). VK is supported by the Clarendon Funds Scholarship. DH is supported by the Research Council of Finland (Flagship programme: Finnish Center for Artificial Intelligence FCAI, 359207). TR is supported by the EPSRC grant EP/Y037200/1. The authors are grateful to Stefano Cortinovis for helpful discussions.

\clearpage
\bibliographystyle{plainnat} 
\bibliography{references}

\clearpage
\appendix
\startcontents[appendix]

\begingroup
\hypersetup{linkcolor=LabPrimary}
\section*{Appendix Contents}
\printcontents[appendix]{}{1}{\setcounter{tocdepth}{2}}
\endgroup
\input{appendix/appendix}

\end{document}

%% file: commands.tex
\PassOptionsToPackage{dvipsnames,table}{xcolor}

\usepackage{metatemp}
\usepackage{microtype}
\usepackage{parskip}
\usepackage{titletoc}

\usepackage{booktabs}
\usepackage{caption}
\usepackage{subcaption}
\usepackage{multirow}
\usepackage{wrapfig}
\usepackage{makecell}
\usepackage{adjustbox}
\usepackage{fullpage}
\usepackage{ragged2e}
\usepackage{tabularx}
\usepackage{array}
\usepackage{xurl}
\usepackage{graphicx}
\usepackage{nicefrac}
\usepackage{tikz}

\usepackage{amsmath}
\usepackage{amssymb}
\usepackage{amsfonts}
\usepackage{mathtools}
\usepackage{amsthm}
\usepackage{thmtools}
\usepackage{thm-restate}
\usepackage{cancel}
\usepackage[inline]{enumitem}

\setlist[enumerate,1]{leftmargin=2em, itemsep=1em, parsep=0pt}
\setlist[itemize,1]{leftmargin=2em}

\usepackage[capitalize,noabbrev]{cleveref}
\crefformat{section}{Section~#2#1#3}
\crefformat{subsection}{Section~#2#1#3}
\crefformat{subsubsection}{Section~#2#1#3}
\crefformat{equation}{(#2#1#3)}
\crefrangeformat{equation}{(#3#1#4) to (#5#2#6)}
\crefmultiformat{equation}{(#2#1#3)}{ and (#2#1#3)}{, (#2#1#3)}{ and (#2#1#3)}
\crefname{appendix}{Appendix}{Appendices}
\Crefname{appendix}{Appendix}{Appendices}
\crefformat{appendix}{Appendix~#2#1#3}
\AddToHook{cmd/appendix/before}{%
  \crefalias{section}{appendix}%
  \crefalias{subsection}{appendix}%
  \crefalias{subsubsection}{appendix}%
}
\crefname{table}{Table}{Tables}
\crefname{figure}{Figure}{Figures}

\definecolor{equationgold}{HTML}{B8860B}
\definecolor{d62728}{RGB}{214,39,40}
\definecolor{1f77b4}{RGB}{31,119,180}
\definecolor{9467bd}{RGB}{148,103,189}
\definecolor{FigNY}{HTML}{1a1a3e}
\definecolor{FigSD}{HTML}{b0a8d0}
\usepackage{colortbl}
\definecolor{pinegreen}{rgb}{0.0, 0.47, 0.44}
\definecolor{cornellred}{rgb}{0.7, 0.11, 0.11}
\definecolor{cadmiumgreen}{rgb}{0.0, 0.42, 0.24}
\definecolor{spirodiscoball}{rgb}{0.06, 0.75, 0.99}
\definecolor{blizzardblue}{rgb}{0.73, 0.96, 0.99}
\definecolor{aliceblue}{rgb}{0.91, 0.94, 0.97}
\definecolor{darkblue}{RGB}{232, 224, 240}
\definecolor{Red7}{rgb}{0.941, 0.243, 0.243}
\definecolor{Green7}{RGB}{55, 178, 77}

\makeatletter
\def\thm@space@setup{%
  \thm@preskip=0.8em plus 0.2em minus 0.2em
  \thm@postskip=0.8em plus 0.2em minus 0.2em
}
\makeatother

\newcolumntype{C}[1]{>{\centering\arraybackslash}p{#1}}
\newcolumntype{L}[1]{>{\raggedright\arraybackslash}p{#1}}

\providecommand{\answerTODO}[1][]{\textcolor{red}{\bf [TODO]}}
\providecommand{\justificationTODO}[1][]{\textcolor{red}{\bf [TODO]}}

%% file: defs.tex
\usepackage[inline]{enumitem}
\newlist{enuminline}{enumerate*}{1}
\setlist[enuminline]{label=(\roman*)}

\newcommand{\E}{\mathbb E}

\newcommand{\bbR}{\mathbb R}
\newcommand{\bbE}{\mathbb E}

\newcommand{\calL}{\mathcal L}
\newcommand{\calD}{\mathcal D}
\newcommand{\calX}{\mathcal X}
\newcommand{\calY}{\mathcal Y}

\newcommand{\bx}{\mathbf x}

\newcommand{\Rlure}{\widehat{R}_{\mathrm{LURE}}}
\newcommand{\Rppat}{\widehat{R}_\mathrm{PPAT}}

\newcommand{\lure}{\mathrm{LURE}}
\newcommand{\ppat}{\mathrm{PPAT}}

\newcommand{\Clure}{\widehat{C}_{\mathrm{LURE}}}

\theoremstyle{plain}
\newtheorem{thm}{Theorem}[section]

\newtheorem{prop}[thm]{Proposition}

\newtheorem{lem}[thm]{Lemma}

\newtheorem{cor}[thm]{Corollary}

\theoremstyle{definition}

\newtheorem{definition}[thm]{Definition}

\newtheorem{rmrk}[thm]{Remark}

\newcommand{\ppatboxenvironment}[1]{%
  \tcolorboxenvironment{#1}{
    metatemp-common,
    colback=LabPrimary!4,
    left=12pt,
    right=12pt,
    top=8pt,
    bottom=8pt
  }%
}
\ppatboxenvironment{thm}
\ppatboxenvironment{theorem}
\ppatboxenvironment{prop}
\ppatboxenvironment{proposition}
\ppatboxenvironment{lem}
\ppatboxenvironment{lemma}
\ppatboxenvironment{cor}
\ppatboxenvironment{corollary}
\ppatboxenvironment{defn}
\ppatboxenvironment{definition}
\ppatboxenvironment{assump}
\ppatboxenvironment{assumption}
\ppatboxenvironment{rmrk}
\ppatboxenvironment{remark}
\ppatboxenvironment{restatable}

\crefname{thm}{Theorem}{Theorems}
\Crefname{thm}{Theorem}{Theorems}
\crefname{prop}{Proposition}{Propositions}
\Crefname{prop}{Proposition}{Propositions}
\crefname{lem}{Lemma}{Lemmas}
\Crefname{lem}{Lemma}{Lemmas}
\crefname{cor}{Corollary}{Corollaries}
\Crefname{cor}{Corollary}{Corollaries}
\crefname{defn}{Definition}{Definitions}
\Crefname{defn}{Definition}{Definitions}
\crefname{assump}{Assumption}{Assumptions}
\Crefname{assump}{Assumption}{Assumptions}
\crefname{rmrk}{Remark}{Remarks}
\Crefname{rmrk}{Remark}{Remarks}
\crefname{theorem}{Theorem}{Theorems}
\crefname{proposition}{Proposition}{Propositions}
\Crefname{proposition}{Proposition}{Propositions}
\crefname{lemma}{Lemma}{Lemmas}
\crefname{corollary}{Corollary}{Corollaries}
\crefname{definition}{Definition}{Definitions}
\crefname{assumption}{Assumption}{Assumptions}
\crefname{remark}{Remark}{Remarks}

\newcommand{\cX}{\mathcal{X}}

\newcommand{\cF}{\mathcal{F}}
\newcommand{\Pbb}{\mathbb{P}}

\definecolor{darkgreen}{RGB}{0, 100, 0} %

\newcommand{\Var}{\mathrm{Var}}

\newcommand{\Cov}{\mathrm{Cov}}

\usepackage{graphicx}

%% file: introduction.tex
\looseness=-1
Active learning \citep{settles2009active, al_model_data_survey, selective_sampling_networks, queries_concept_learn} reduces the cost of training, but in many applications the cost of \emph{evaluation} is just as important. In many domains, large unlabelled test pools are often easy to obtain, while ground--truth labels remain expensive and require expert annotation. Active testing \citep{active_risk_est, active_est_f_measures, sample_efficient_model_evaluation, active_evaluation_classifier_large_datasets, yu2023actively, kossen2021active, kossen2022active, berrada2025scaling} addresses this bottleneck by adaptively sampling which test points to label while correcting for the bias introduced by adaptive sampling. Crucially, adaptive sampling can achieve lower variance than random sampling under suitable acquisition strategies \citep{farquhar2021statistical}, while remaining unbiased.

\looseness=-1
In many settings, such as image classification \citep{radford2021learning, li2023blip, jia2021scaling, zhai2023sigmoid}, language modelling \citep{brown2020language, wang2021want, touvron2023llama, team2023gemini}, and biomolecular structure prediction \citep{Jumper2021HighlyAP, Abramson2024AccurateSP, corso2022diffdock, prat2025sigmadock}, we also have access to a powerful black--box predictor that can produce cheap predictions on the entire test pool. A natural question here is: \emph{can we leverage these predictions to produce more accurate, label--efficient estimates of our risk?} One approach would be to replace the true, unknown labels with the predictions of the black--box model, similar to the work of \cite{kossen2022active}. This, however, generally results in biased estimates of the risk. While it can yield accurate estimates of the risk when the labelling budget is very small and the black--box predictor is already highly accurate, the resulting estimator is generally biased, with the bias becoming increasingly problematic as the labelling budget grows.

\looseness=-1
To address this, we propose \textbf{Prediction--Powered Active Testing (PPAT)}, which combines the unbiased importance--weighted LURE estimator \citep{farquhar2021statistical} with a prediction--powered control variate built from cheap black--box predictions \citep{angelopoulos2023prediction}. Rather than using these predictions as biased pseudo--labels, PPAT uses them to \emph{residualise} the loss: it subtracts, from each true loss, the loss induced by the black--box prediction after centring it by its average over the test pool, thereby preserving the unbiasedness of active testing. Our approach introduces a tuning parameter $\lambda$ that controls the strength of this correction and recovers LURE as the special case $\lambda=0$, making PPAT a principled extension of the LURE estimator. We show that this control--variate formulation can reduce variance relative to LURE and discuss practical choices of $\lambda$.

\looseness=-1
Beyond the estimator itself, PPAT also changes what should be sampled. Rather than reusing active-testing proposals designed for the raw loss, we derive proposals tailored to the residualised objective induced by our estimator. This leads to an oracle proposal and a practical surrogate acquisition rule that samples points expected to be most informative for reducing the variance of the resulting risk estimate. We also establish asymptotic normality for the PPAT estimator, which allows us to construct asymptotically valid confidence intervals. Empirically, we evaluate PPAT extensively on tabular regression and image-classification tasks. We find that PPAT outperforms existing methods in risk estimation, while also attaining the target coverage level with substantially fewer labels and narrower confidence intervals.

%% file: background/background.tex
\looseness=-1
Throughout this paper, we reserve capital letters for random variables and lowercase letters for observations. For readability, we state the results in the main body using the simplified notation introduced below; their formal versions are given in \S\ref{app:mathematical_results}.

Let $\mathcal{D} = \{\mathbf{x}_1, \dots, \mathbf{x}_N\}$ be an unlabelled test pool of size $N$, where each input $\mathbf{x}_i\in \mathcal{X}$ has a true, unknown label $y_i \in \mathcal{Y}$. We treat both the inputs and labels as fixed quantities. Furthermore, let $f: \mathcal{X} \to \mathcal{Y}$ be a fixed function we wish to evaluate for a given loss function $\mathcal{L}: \mathcal{Y} \times \mathcal{Y} \to \mathbb{R}$. We define the loss for each test data point as $\ell_i := \mathcal{L}(f(\mathbf{x}_i), y_i)$ for $i = 1, \dots, N$, and the sequence of these losses as $\ell_{1:N} := (\ell_1, \dots, \ell_N)$. For a fixed labelling budget $M\ll N$, our objective is to estimate the test pool risk
\begin{equation}
R=\frac{1}{N} \sum_{i=1}^N \ell_i
\label{eq:testrisk}
\end{equation}
based on the inputs $\calD$ and on $M$ queried labels $y_{I_1},\ldots,y_{I_M}$, with indices $I_1,\ldots,I_M$. A naive approach is to sample the indices randomly without replacement from $[N]:=\{1,\ldots,N\}$ and use the unweighted estimator
\begin{equation}
\label{eqn:unweighted_estimator}
\widehat R_{\text{unw}}=\frac{1}{M}\sum_{m=1}^M \ell_{I_m}.  
\end{equation}
This estimator is unbiased but typically suffers from high variance \citep{farquhar2021statistical, kossen2021active}.

\subsection{Active Testing and the LURE Estimator}
\label{sub:active_testing_lure_estimator}
\looseness=-1
Active testing (AT) \citep{kossen2021active} aims to estimate the test pool risk $R$ more efficiently by sequentially selecting the most useful points to label. More formally, for each acquisition round $m$, let $S_m \subseteq \{1,\dots,N\}$ denote the remaining indices to be sampled and $\mathcal{F}_{m-1}$ the $\sigma$-algebra generated by all the information available before round $m$ (e.g.\ $D_{\mathrm{}}$, past queried indices and labels). AT selects the $m$th index to query by sampling from a proposal $Q_m$:
\begin{equation}
Q_m(\cdot) := \Pbb(I_m=\cdot\mid \mathcal{F}_{m-1})
\quad\text{supported on }S_m.    
\end{equation}

To estimate the risk, AT uses the LURE (\emph{Levelled Unbiased Risk Estimator}) estimator of \cite{farquhar2021statistical}:
\begin{equation}
\label{eq:lureestimator}
\widehat R_\mathrm{LURE}=\frac{1}{M}\sum_{m=1}^M V_m \ell_{I_m},
\end{equation}
where $I_m \mid I_{1:m-1}\sim Q_m$ and 
\begin{align}
\label{eq:lure-weight}
V_m &= 1 + \frac{N-M}{N-m}\left(\frac{1}{(N-m+1)\,Q_m(I_m)} - 1\right).
\end{align}

This is an \emph{unbiased} estimate of $R$ and, when $\mathcal{L}$ is nonnegative, has strictly \emph{lower variance} than random sampling without replacement under the oracle proposal $Q_m^\star(i) \propto \ell_i$ \citep[Thm.~5]{farquhar2021statistical}. For general real-valued losses, the variance is instead minimised greedily at each step using the oracle \emph{myopic} proposal $Q^\star_m(i) \propto |\ell_i|$ (see \S\ref{sec:app:myopic_oracle_proposal} for further details). In practise, the true labels are not known, and so $Q^\star_m$ is approximated\footnote{Note that when the true loss is unknown, the oracle $Q^\star_m \propto |\ell_i|$ is better approximated by the root‑mean‑square score \eqref{eq:oracle-q-expected-loss} rather than by an expected absolute value; see \S\ref{app:surrogate_score_choice} for more details.} with 
\begin{equation}
\label{eq:oracle-q-expected-loss}
Q^{\mathrm{AT}}_m(i) \;\propto\; \sqrt{\E_{\pi_m(Y\mid \mathbf{x}_i)}[\,\mathcal{L}(f(\mathbf{x}_i),Y)^2\,]},
\end{equation}
where 
$\pi_m(Y\mid \mathbf{x})$ is a \emph{surrogate model} used to approximate $p(Y\mid \mathbf{x})$ at round $m$ \citep{kossen2021active}.


\input{background/active_testing_with_ase}


\subsection{Control--Variate Estimator}
\label{sec:controlvariate}
\looseness=-1
We conclude this background section with a short review of control variate estimators. Let $\widehat R $ be an unbiased estimator of $R$. Let $\widehat C$ be a zero--mean random variable, called the \emph{control variable} \citep{glasserman2003monte}. For $\lambda\in\bbR$, the control variate estimator, $\widehat R_{\mathrm{cv}}$, of $R$ is defined as 
\begin{equation}
\label{eqn:cv_estimator}
\widehat R_{\mathrm{cv}} = \widehat R - \lambda \widehat C.
\end{equation}
\looseness=-1
The centred variable $\widehat C$ acts as a control--variate correction to the estimator $\widehat R$. By design, $\widehat R_{\textrm{cv}}$ is unbiased if $\widehat R$ is, with variance
$\Var(\widehat R_{\textrm{cv}})=\Var(\widehat R)-2\lambda\Cov(\widehat R,\widehat C)+\lambda^2\Var(\widehat C).$ Moreover, $\widehat R_{\textrm{cv}}$ has lower variance than $\widehat R$ whenever $\lambda\in(\min(0,2\lambda^*),\max(0,2\lambda^*))$, where $\lambda^*=\Cov(\widehat R,\widehat C)/\Var(\widehat C)$ results in the largest variance reduction. At $\lambda^\star$, $\Var(\widehat R_{\textrm{cv}})=(1-\rho^2)\Var(\widehat R)$, where $\rho$ is the correlation between $\widehat R$ and $\widehat C$. Thus, a larger absolute correlation implies a larger variance reduction.

%% file: background/active_testing_with_ase.tex
\subsection{Active Surrogate Estimators}
\label{sub:ases_background}

\emph{Active Surrogate Estimators} (ASEs, \cite{kossen2022active}) provide an alternative approach to AT that replaces importance--weighted Monte Carlo estimation with an interpolation--based surrogate estimate of the loss. Rather than estimating the risk directly from the queried losses, ASEs learn a surrogate model for the unknown test--time label distribution and use it to impute losses across the full test pool. Concretely, after $M$ acquisitions, the estimator is
\begin{equation*}
\widehat{R}_{\mathrm{ASE}}
=
\frac{1}{N}\sum_{i=1}^{N}
\mathbb{E}_{Y \sim \pi_M(\cdot \mid \mathbf{x}_i)}
\!\left[ \mathcal{L}\!\left(f(\mathbf{x}_i), Y\right)\right].    
\end{equation*}

In contrast to LURE--based AT, the acquisition in ASE is designed to improve the surrogate itself rather than to produce an unbiased importance--weighted estimator. To this end, \cite{kossen2022active} propose the \textit{Expected Weighted Disagreement (XWED)} acquisition function, which prefers points with high epistemic uncertainty under $\pi_m$ that are also expected to contribute strongly to the final risk estimate (see \S\ref{app:extended_background_ase} for additional details). 

A key advantage of ASEs is that they can make more effective use of surrogate generalisation and allow for more flexible acquisition strategies, including deterministic ones. However, they do not retain the finite--sample unbiasedness of LURE: surrogate misspecification and finite--sample estimation error enter the final estimator directly. Consequently, their performance depends strongly on the quality of the learned surrogate and they suffer from weaker theoretical guarantees than LURE--based AT.

%% file: method/ppat.tex
\looseness=-1
We now describe our approach, \textbf{Prediction--Powered Active Testing (PPAT)}, which leverages a cheap proxy labeller $g$ to reduce the variance of the LURE estimator in a similar spirit to \cite{angelopoulos2023prediction, angelopoulos2023ppi++}. 

Assume we can evaluate a cheap proxy labeller $g:\cX\to{\mathcal{Y}}$ on all test inputs, yielding proxy labels
$\widetilde y_i = g(\mathbf{x}_i)$. 
Define the \emph{proxy loss} $\widetilde\ell_i:=\calL(f(\mathbf{x}_i), \widetilde y_i)$ and the associated proxy size--$N$ test risk $\widetilde R:=\frac{1}{N}\sum_{i=1}^N \widetilde\ell_i$.  
We first note that, for any $\lambda\in \bbR$, the test risk can be written alternatively as
\begin{equation*}
R=\frac{1}{N}\sum_{i=1}^N (\ell_i-\lambda(\widetilde\ell_i-\widetilde R)),     
\end{equation*}
that is, an average of the \emph{residualised losses} $(\ell_i-\lambda(\widetilde\ell_i-\widetilde R))_{i=1}^N$. 
We propose to apply the LURE weighting scheme to this residualised representation of $R$, yielding our PPAT estimator:
\begin{equation}
\Rppat(\lambda)=\frac{1}{M} \sum_{m=1}^M V_m(\ell_{I_m} - \lambda (\widetilde\ell_{I_m}-\widetilde R)).\label{eq:ppatestimator}
\end{equation}
\looseness=-1
Our estimator recovers the standard LURE estimator \eqref{eq:lureestimator} as a special case when $\lambda=0$. Moreover, noting that
\begin{equation*}
\Rppat(\lambda) = \Rlure - \lambda\Clure,    
\end{equation*}
\looseness=-1
the PPAT estimator can be interpreted as a control--variate correction of $\Rlure$, with the control variate being
%
%
\begin{equation*}
\Clure=\frac{1}{M}\sum_{m=1}^M V_m  (\widetilde\ell_{I_m}-\widetilde R),
\end{equation*}
\looseness=-1
where $\Clure$ is itself a zero--mean LURE-style estimator\footnote{Indeed, we have that $\bbE[\Clure]=\frac{1}{N}\sum_{i=1}^N (\tilde\ell_i - \tilde R)=0$.}. This control--variate view suggests that suitable choices of $\lambda$ should reduce variance relative to LURE. We make this more precise in the next section, where we study the finite--sample properties of the PPAT estimator.


\subsection{Finite--Sample Properties of PPAT}
\label{sub:finite_sample_properties}
\looseness=-1
The results below hold for any $1\leq M< N$, and any choice of proposals $Q_m$. We first note that our estimator naturally inherits the unbiasedness of LURE.
\begin{restatable}[Unbiasedness]{prop}{unbiasednessppat}
\label{prop:cv-unbiased}
For any fixed $\lambda\in\mathbb{R}$, we have $\E\!\left[\widehat R_{\mathrm{PPAT}}(\lambda)\right]= R$.
\end{restatable}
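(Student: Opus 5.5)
The plan is to reduce the claim to the unbiasedness of the LURE weighting scheme applied to an arbitrary fixed sequence of values. Set $z_i := \ell_i - \lambda(\widetilde\ell_i - \widetilde R)$ for $i\in[N]$. Since $\lambda$, the inputs, the labels, $f$ and $g$ are all fixed, these are deterministic reals. By definition, $\Rppat(\lambda)=\frac{1}{M}\sum_{m=1}^M V_m z_{I_m}$, which is exactly the LURE estimator with $\ell_{1:N}$ replaced by $z_{1:N}$. Moreover $\frac1N\sum_i z_i = R - \lambda(\widetilde R-\widetilde R)=R$. So it suffices to show that, for any fixed $z_{1:N}$ and any proposals $Q_m$ supported on $S_m$ (and positive there), $\E\big[\frac1M\sum_m V_m z_{I_m}\big]=\frac1N\sum_i z_i$.

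The unbiasedness of LURE is stated in the background, citing \cite{farquhar2021statistical}. That result holds for arbitrary real losses and never uses how the losses arise, so it can be invoked directly with $z$ in place of $\ell$. Note that the proposals $Q_m$ may depend on $\widetilde\ell$ or on past labels. This causes no difficulty, because the argument only conditions on $\mathcal F_{m-1}$.

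For completeness, I would also sketch the self-contained martingale argument. Write $V_m z_{I_m} = \frac{N-M}{N-m}\cdot\frac{z_{I_m}}{(N-m+1)Q_m(I_m)} + \frac{M-m}{N-m}\, z_{I_m}$. Conditioning on $\mathcal F_{m-1}$ and using importance weighting over $S_m$ gives
\[
\E\!\left[V_m z_{I_m}\mid\mathcal F_{m-1}\right] = \frac{N-M}{N-m}\,\bar z_{S_m} + \frac{M-m}{N-m}\,\E\!\left[z_{I_m}\mid \mathcal F_{m-1}\right],
\]
where $\bar z_{S_m}$ denotes the average of $z$ over the remaining indices $S_m$. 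Following \cite{farquhar2021statistical}, one rewrites $\bar z_{S_m}$ as $\frac{1}{N-m+1}\big(NR-\sum_{j<m} z_{I_j}\big)$. The sum over $m$ of the resulting terms then telescopes to give total expectation $MR$, after taking expectations and combining coefficients.

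The main obstacle lies only in this self-contained telescoping step. The coefficient bookkeeping of the levelling constants is where errors occur, and I would verify it by induction on $M$ or simply defer to \cite[Thm.~1]{farquhar2021statistical}. Given the reduction in the first paragraph, the proposition follows immediately from LURE's unbiasedness.
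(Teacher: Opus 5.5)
Your proof is correct and follows the paper's route: the paper likewise notes that $\Rppat(\lambda)$ is the LURE estimator for the fixed array $\ell_i-\lambda(\widetilde\ell_i-\widetilde R)$, whose pool mean is $R$. It then invokes LURE unbiasedness for general real arrays, proved there via $\widehat R=\frac1M\sum_m\gamma_m A_m$ with $\E[A_m\mid\mathcal F_{m-1}]=R$ and $\sum_m\gamma_m=M$. Your sketched telescoping also checks out: the coefficient of $NR$ sums to $M/N$, and the first-sum contributions to each $\E[z_{I_j}]$ total $-\frac{M-j}{N-j}$, which cancel exactly against the $+\frac{M-j}{N-j}$ term.
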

\emph{Proof.} See \S\ref{sec:app:lureunbiased}.    

Secondly, as $\widehat R_{\mathrm{PPAT}}(\lambda)$ is a control--variate estimator, it follows that it achieves lower variance than $\widehat R_{\mathrm{LURE}}$ for a range of $\lambda$ values.
\begin{restatable}[Variance of $\widehat R_{\mathrm{PPAT}}$]{prop}{varianceppat} 
Let $\lambda \in \bbR$. We have
\label{prop:var-quadratic}
\begin{equation}
\label{eq:var-quadratic}
\Var\!\left(\widehat R_{\mathrm{PPAT}}(\lambda)\right)
=
\Var(\Rlure)
-2\lambda\,\Cov(\Rlure,\Clure)
+\lambda^2\Var(\Clure).
\end{equation}
Thus, $\Var\!\left(\widehat R_{\mathrm{PPAT}}(\lambda)\right) < \Var(\widehat R_\mathrm{LURE})$ whenever $\lambda \in \left(\min \left\{0,2 \lambda^{\star}\right\}, \max \left\{0,2 \lambda^{\star}\right\}\right)$ and $\Var(\Clure) > 0$, where 
\begin{equation}
\label{eq:lambda-star}
\lambda^\star
= \frac{\Cov(\widehat R_\mathrm{LURE},\Clure)}{\Var(\Clure)}.
\end{equation}

In particular, $\lambda^\star$ minimises the variance of $\widehat R_{\mathrm{PPAT}}(\lambda)$.
\end{restatable}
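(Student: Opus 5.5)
The plan is to use the decomposition $\Rppat(\lambda)=\Rlure-\lambda\Clure$ from \S\ref{sec:pp-at} and then treat the statement as a fact about a quadratic in $\lambda$. The one preliminary point is that all second moments are finite. Since $N$ and $M$ are finite and each $\ell_i,\widetilde\ell_i$ is a fixed real number, the index vector $(I_1,\dots,I_M)$ takes finitely many values. Each $V_m$ is finite on the event that $I_m$ is drawn, because $Q_m(I_m)>0$ there. Hence $\Rlure$ and $\Clure$ are bounded random variables, and their variances and covariance exist.

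\textbf{Step 1: the variance formula.} Using bilinearity of covariance with $\lambda$ fixed,
\begin{equation*}
\Var(\Rlure-\lambda\Clure)=\Var(\Rlure)-2\lambda\Cov(\Rlure,\Clure)+\lambda^2\Var(\Clure),
\end{equation*}
which is exactly \eqref{eq:var-quadratic}.

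\textbf{Step 2: when the variance drops.} Write $a=\Var(\Clure)>0$ and $b=\Cov(\Rlure,\Clure)$, so that $\lambda^\star=b/a$. The difference from LURE factors as
\begin{equation*}
\Var(\Rppat(\lambda))-\Var(\Rlure)=a\lambda^2-2b\lambda=a\lambda(\lambda-2\lambda^\star).
\end{equation*}
Because $a>0$, the right-hand side is negative exactly when $\lambda$ lies strictly between the roots $0$ and $2\lambda^\star$. That is, it is negative for $\lambda\in(\min\{0,2\lambda^\star\},\max\{0,2\lambda^\star\})$. If $\lambda^\star=0$ this interval is empty and the claim holds vacuously.

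\textbf{Step 3: the minimiser.} Completing the square gives
\begin{equation*}
\Var(\Rppat(\lambda))=a(\lambda-\lambda^\star)^2+\Var(\Rlure)-b^2/a,
\end{equation*}
which is uniquely minimised at $\lambda=\lambda^\star$.

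The only step needing care is the finiteness justification at the start, specifically that $V_m$ is finite almost surely. Beyond that, the argument is the standard control-variate calculation recalled in \S\ref{sec:controlvariate}, and it does not need the unbiasedness result \cref{prop:cv-unbiased}.
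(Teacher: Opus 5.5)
Your proposal is correct and follows the paper's proof essentially step for step: write $\Rppat(\lambda)=\Rlure-\lambda\Clure$, expand the variance by bilinearity, and factor the difference from LURE as the strictly convex parabola $a\lambda(\lambda-2\lambda^\star)$, which is negative exactly between its roots $0$ and $2\lambda^\star$. The only differences are cosmetic: you add an explicit check that the second moments are finite, which the paper leaves implicit, and you identify the minimiser by completing the square rather than by differentiating.
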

\emph{Proof.} See \S\ref{sec:app:lureunbiased}.

Note that under $\lambda^*$, the variance of the PPAT estimator can also be written as $\Var(\widehat R_\mathrm{LURE})(1-{\rho}^2)$, where $\rho$ is the correlation between $\widehat{R}_\mathrm{LURE}$ and $\Clure$. Here, a larger correlation between these two terms results in a larger reduction in variance relative to LURE.

While Prop. \ref{prop:var-quadratic} shows that PPAT can achieve lower variance than LURE, this depends critically on our choice of $\lambda$ and holds only when both LURE and PPAT use the same proposals. Moreover, although it is possible to adopt the same LURE proposal, $Q_m^{AT}$, for our approach and tune only $\lambda$, this will typically be suboptimal as it selects points by their raw loss $\ell_i$, whereas the variance of PPAT is driven by the residual $\ell_i - \lambda(\widetilde{\ell}_i - \widetilde{R})$. In the following sections, we derive a proposal tailored to our PPAT estimator and discuss practical choices of $\lambda$.


\subsection{PPAT Proposal}
\label{sub:ppat_active_proposal}
While the previous results hold for any valid proposal $Q_m$, we
would ideally like to choose the proposal that minimises the variance of the PPAT
estimator. As the exact finite--horizon variance--minimising proposal is
intractable, we instead consider proposals that minimise the alternative, myopic proxy described below.

Firstly, note that since $\Rppat$ is itself a LURE estimator applied to the residualised losses $\ell_i - \lambda(\widetilde \ell_i - \widetilde R)$, its sampling variance decomposes across rounds in the same way as that of LURE (see \S\ref{sec:app:myopic_oracle_proposal} for further details). In
particular, conditionally on $I_{1:m-1}$, the $m$th contribution to this variance
is proportional to
\begin{equation}
\label{eqn:ppat_myopic_proposal_proxy}
    \mathrm{Var}\!\left(
        \frac{\ell_{I_m}-\lambda(\widetilde \ell_{I_m} - \widetilde R)}
             {N\, Q_m(I_m)}
        \;\middle|\;
        I_{1:m-1}
    \right).
\end{equation}
Thus, at round $m$, a natural myopic oracle is the proposal which minimises this
conditional variance over the remaining pool $S_m$. This oracle admits a simple closed form.
\begin{restatable}[Myopic oracle proposal]{prop}{myopicOracleProposal}
\label{prop:ppat_myopic_oracle}
Fix \(\lambda \in \mathbb R\). For each round \(m\), among all proposals supported
on \(S_m\), the myopic proxy \eqref{eqn:ppat_myopic_proposal_proxy} is minimised by
\begin{equation*}
    Q^\star_{m,\lambda}(i)
    =
    \frac{|\ell_i-\lambda (\widetilde \ell_i - \widetilde R) |}
         {\sum_{j\in S_m}|\ell_j-\lambda (\widetilde \ell_j - \widetilde R) |}.
\end{equation*}
\end{restatable}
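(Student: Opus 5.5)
Write $r_i := \ell_i - \lambda(\widetilde\ell_i - \widetilde R)$ for the residualised losses, which are fixed numbers since the pool, the labels and $\lambda$ are fixed. Conditionally on $I_{1:m-1}$, the remaining set $S_m$ is determined, and $I_m$ is drawn from $Q_m$ supported on $S_m$. The plan is to write the conditional variance in \eqref{eqn:ppat_myopic_proposal_proxy} as a second moment minus a squared first moment. We then observe that the first moment does not depend on $Q_m$, which reduces the problem to minimising a weighted sum of squares under a simplex constraint.

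First, compute the conditional mean:
\begin{equation*}
\E\!\left[\frac{r_{I_m}}{N Q_m(I_m)} \,\middle|\, I_{1:m-1}\right] = \sum_{i\in S_m} Q_m(i)\frac{r_i}{N Q_m(i)} = \frac{1}{N}\sum_{i\in S_m} r_i .
\end{equation*}
This is independent of the choice of $Q_m$, provided $Q_m(i)>0$ whenever $r_i\neq 0$; otherwise the ratio is undefined or infinite, and we exclude such proposals as inadmissible. Hence minimising the conditional variance is equivalent to minimising the conditional second moment
\begin{equation*}
\frac{1}{N^2}\sum_{i\in S_m}\frac{r_i^2}{Q_m(i)} ,
\end{equation*}
over probability vectors on $S_m$.

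Second, minimise this via Cauchy--Schwarz:
\begin{equation*}
\Bigl(\sum_{i\in S_m}|r_i|\Bigr)^2 = \Bigl(\sum_{i\in S_m}\frac{|r_i|}{\sqrt{Q_m(i)}}\sqrt{Q_m(i)}\Bigr)^2 \le \sum_{i\in S_m}\frac{r_i^2}{Q_m(i)}\cdot\sum_{i\in S_m}Q_m(i) = \sum_{i\in S_m}\frac{r_i^2}{Q_m(i)} .
\end{equation*}
Equality holds exactly when $\sqrt{Q_m(i)}\propto |r_i|/\sqrt{Q_m(i)}$, that is, when $Q_m(i)\propto |r_i|$. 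Normalising over $S_m$ gives $Q^\star_{m,\lambda}$. Substituting this proposal into the second moment attains the lower bound $(\sum_{j\in S_m}|r_j|)^2/N^2$, so it is the minimiser.

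The main obstacle is handling degenerate cases rather than the optimisation itself. Indices with $r_i=0$ contribute nothing, so terms $0/0$ should be read as $0$, and we allow $Q_m(i)=0$ there. If $\sum_{j\in S_m}|r_j|=0$, every admissible proposal gives zero variance and the formula is undefined. In that case any proposal, for instance the uniform one on $S_m$, is optimal, and I would state this convention explicitly. Finally, the proportionality constant and the factor $(N-M)/(N-m)$ in $V_m$ only rescale the proxy, so they do not affect which proposal minimises it. This justifies taking \eqref{eqn:ppat_myopic_proposal_proxy} as the object to minimise.
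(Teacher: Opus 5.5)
Your proposal is correct and follows the paper's own proof (Prop.~\ref{prop:myopic_oracle_general}, specialised in Cor.~\ref{cor:ppat-myopic}) essentially step for step. Like the paper, you note that the conditional mean $\frac{1}{N}\sum_{i\in S_m} r_i$ does not depend on $Q_m$, reduce the problem to minimising $\frac{1}{N^2}\sum_{i\in S_m} r_i^2/Q_m(i)$, and apply Cauchy--Schwarz with equality exactly when $Q_m(i)\propto|r_i|$. Your explicit handling of indices with $r_i=0$, where the oracle puts zero mass despite the full-support requirement in the APP definition, is a small refinement that the paper's argument glosses over.
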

\emph{Proof.} See \S\ref{sec:app:myopic_oracle_proposal}.

In practice, this oracle proposal cannot be used, as it depends on the unknown labels \(y_i\) through the true losses \(\ell_i\). 
As in \S\ref{sub:active_testing_lure_estimator}, we approximate the oracle using the surrogate predictive distribution $\pi_m(\cdot\mid \mathbf{x}_i)$, giving the surrogate score\footnote{As in \S\ref{sub:active_testing_lure_estimator}, the root‑mean‑square score \eqref{eqn:ppat_score} -- rather than an expected absolute value -- is the more appropriate surrogate of the oracle here.}
\begin{equation}
\label{eqn:ppat_score}
    a_{m,\lambda}(i)
    :=
    \sqrt{\mathbb E_{Y\sim \pi_m(\cdot\mid \bx_i)}
    \left[
        \left(
        \calL(f(\bx_i),Y)-\lambda (\widetilde \ell_i - \widetilde R)
        \right)^2
    \right]},
\end{equation}
and the PPAT proposal
\begin{equation}
\label{eqn:surrogate_proposal_2}
        Q^{\mathrm{PPAT}}_{m,\lambda}(i)
    =
    \frac{a_{m,\lambda}(i)}
    {\sum_{j\in S_m}a_{m,\lambda}(j)},
    \qquad i\in S_m.
\end{equation}
To ensure strict positivity, this proposal is mixed with a uniform
distribution over \(S_m\) like in \citep{kossen2021active}, replacing \(Q^{\mathrm{PPAT}}_{m,\lambda}(i)\) by
\((1-\varepsilon)\,Q^{\mathrm{PPAT}}_{m,\lambda}(i) + \varepsilon/|S_m|\) for a small
\(\varepsilon\in(0,1)\).



\subsection{Choosing $\lambda$}
\label{sub:lambda_estimation}
\looseness=-1
As mentioned previously, the variance reduction achieved by PPAT depends critically on the choice of $\lambda$. More specifically, Prop.~\ref{prop:var-quadratic} shows that PPAT achieves lower variance than LURE only for $\lambda \in (\min\{0, 2\lambda^\star\}, \max\{0, 2\lambda^\star\})$; a $\lambda$ with the wrong sign, or with too large a magnitude, lies outside this range and instead inflates the variance relative to LURE. Selecting $\lambda$ appropriately is therefore crucial, but is also further complicated by the fact that it affects both the PPAT estimator \emph{and} the PPAT proposal. Indeed, for given proposals $Q_m$, the control--variate coefficient $\lambda^\star$ in \eqref{eq:lambda-star} depends on the sampling law they induce, which cannot be estimated from a single active--testing run. Moreover, $\lambda$ also parameterises our proposal, $Q^{\mathrm{PPAT}}_{m,\lambda}$, and so shapes the sampling law on which $\lambda^\star$ itself depends. 

These considerations motivate two practical strategies for choosing $\lambda$: a \emph{heuristic} choice of fixed values, and an \emph{online} estimate updated from the labels acquired during testing.

\paragraph{Heuristic, fixed choices of $\lambda$.} 
In a similar spirit to \cite{angelopoulos2023prediction}, we propose the fixed values $\lambda = 1$ and $\lambda = \tfrac{1}{2}$, where we expect $\lambda = 1$ to be most effective when the proxy is strongly predictive of the true loss, and use $\lambda = \tfrac{1}{2}$ as a more conservative choice. To see why these are sensible, first note that $\lambda^\star$ can be written as $\lambda^\star = \rho\,\sigma_R/\sigma_C$, where $\rho$ is the correlation between {{$\widehat{R}_{\text{LURE}}$}} and {{$\widehat{C}_{\text{LURE}}$}} and $\sigma_R, \sigma_C$ are their standard deviations. 
When the proxy losses vary on a scale comparable to the true losses, we have $\sigma_R \approx \sigma_C$, so that $\lambda^\star \approx \rho$. Moreover, for an informative proxy whose loss tracks the true loss, we expect $\rho > 0$. Under these conditions, PPAT reduces variance over LURE when $\lambda \in (0, 2\rho)$ and, consequently, $\lambda = 1$ yields a reduction when $\rho > \tfrac{1}{2}$, while $\lambda = \tfrac{1}{2}$ does so over the wider range $\rho > \tfrac{1}{4}$. For a genuinely useful proxy, we expect it to be strongly correlated with the true loss, giving $\rho > \tfrac{1}{2}$, in which case both choices reduce variance; a less useful proxy, with only moderate correlation $\tfrac{1}{4} < \rho \le \tfrac{1}{2}$, falls outside the range for $\lambda = 1$ but is still handled by the more conservative $\lambda = \tfrac{1}{2}$.


\paragraph{Online estimation of $\lambda$.}
As $\lambda^\star$ depends on the sampling law induced by the proposals $Q_m$, it cannot be estimated from a single active--testing run. We therefore target an alternative coefficient $\lambda^\dagger$ that (i) does not depend on the proposal, and so can be estimated online from the acquired labels, and (ii) flattens the residualised losses $\ell_i - \lambda c_i$ across the pool, which minimises an upper bound on the estimator's variance.
Concretely, writing $c_i = \widetilde{\ell}_i - \widetilde{R}$, we define
\begin{equation*}
\lambda^\dagger
:=
\arg\min_{\lambda\in\mathbb R}
\frac1N\sum_{i=1}^N(\ell_i-\lambda c_i)^2
=
\frac{N^{-1}\sum_{i=1}^N \ell_i c_i}
{N^{-1}\sum_{i=1}^N c_i^2},    
\end{equation*}
which is well--defined whenever $N^{-1}\sum_{i=1}^{N} c_i^2 > 0$. Here, $\lambda^\dagger$ can be seen as choosing the amount of proxy correction to subtract so that the residualised losses are as flat as possible across the pool. Moreover, under mild assumptions about our proposal, it can be shown that this minimises an upper bound on the variance of our estimator:

\begin{restatable}[Variance bound for PPAT]{prop}{varianceBoundPPAT}
\label{prop:ppat-bound}
Suppose that our proposals satisfy a uniform overlap condition: there exists $\beta > 0$ such that
\begin{equation*}
Q_m(i) \ge \frac{\beta}{N},
\qquad m = 1,\dots,M,\ \ i \in S_m.    
\end{equation*}
Then, for every $\lambda \in \mathbb{R}$,
\begin{equation*}
\operatorname{Var}\!\bigl(\widehat R^{\mathrm{PPAT}}_M(\lambda)\bigr)
\;\le\;
\frac{1}{\beta M}\,\frac{N}{N-M+1}
\left(\frac1N\sum_{i=1}^N(\ell_i-\lambda c_{i})^2\right).    
\end{equation*}
Moreover, whenever $N^{-1}\sum_{i=1}^N c_i^2 > 0$, this bound is minimised over $\lambda$ by
\begin{equation*}
\lambda^{\dagger}
= \frac{N^{-1} \sum_{i=1}^N \ell_i c_i}{N^{-1} \sum_{i=1}^N c_i^2}.
\end{equation*}
\end{restatable}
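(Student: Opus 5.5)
The plan is to view $\Rppat(\lambda)$ as the LURE estimator applied to the residualised losses $r_i := \ell_i-\lambda c_i$, with $c_i=\widetilde\ell_i-\widetilde R$. We rewrite it as a convex combination of per-round importance-sampling estimators whose errors form a martingale difference sequence. The variance then splits into a weighted sum of conditional variances; each is bounded with the overlap condition, and the weights are controlled by their largest element.

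\textbf{Step 1 (martingale representation).} For each round $m$, define
\begin{equation*}
\Phi_m := \frac1N\Bigl(\sum_{t<m} r_{I_t} + \frac{r_{I_m}}{Q_m(I_m)}\Bigr).
\end{equation*}
Since $Q_m$ is supported on $S_m$, we have $\E[\Phi_m\mid\cF_{m-1}] = \frac1N\sum_{i=1}^N r_i = R$. Hence $D_m:=\Phi_m-R$ is a martingale difference sequence, and the $D_m$ are pairwise uncorrelated.

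I would then verify that $\Rppat(\lambda)=\sum_{m=1}^M w_m\Phi_m$ with nonnegative weights summing to one:
\begin{equation*}
w_m=\frac{N(N-M)}{M(N-m)(N-m+1)} \quad (m<M), \qquad w_M=\frac{N}{M(N-M+1)}.
\end{equation*}
The check matches the coefficient of $r_{I_m}$, namely $w_m/(NQ_m(I_m))+\sum_{t>m}w_t/N$, against $V_m/M$ from \eqref{eq:lure-weight}. For $m=M$ this is immediate, since $V_M=1/((N-M+1)Q_M(I_M))$. For $m<M$ it uses the telescoping identity $\sum_{t=m+1}^{M-1}\frac{1}{(N-t)(N-t+1)}=\frac{1}{N-M+1}-\frac{1}{N-m}$. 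This algebraic identification is the step I expect to be the main obstacle, mainly as bookkeeping; it is essentially the construction of \citet{farquhar2021statistical}, which I would cite and recheck.

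\textbf{Step 2 (variance bound).} Orthogonality of the $D_m$ gives
\begin{equation*}
\Var\bigl(\Rppat(\lambda)\bigr)=\sum_m w_m^2\,\E\bigl[\Var(\Phi_m\mid\cF_{m-1})\bigr].
\end{equation*}
The second moment dominates the conditional variance, so
\begin{equation*}
\Var(\Phi_m\mid\cF_{m-1}) \le \frac1{N^2}\sum_{i\in S_m}\frac{r_i^2}{Q_m(i)} \le \frac{1}{\beta N}\sum_{i=1}^N r_i^2,
\end{equation*}
where the last inequality uses $Q_m(i)\ge\beta/N$. Because the $w_m$ are nonnegative and sum to one, $\sum_m w_m^2\le\max_m w_m$. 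The $w_m$ increase in $m$ for $m<M$, so the maximum is $w_{M-1}=w_M=N/(M(N-M+1))$. Combining these gives the stated bound
\begin{equation*}
\frac{1}{\beta M}\frac{N}{N-M+1}\cdot\frac1N\sum_i(\ell_i-\lambda c_i)^2 .
\end{equation*}

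\textbf{Step 3 (minimiser).} The bound is a positive multiple of the convex quadratic $\lambda\mapsto N^{-1}\sum_i(\ell_i-\lambda c_i)^2$. When $N^{-1}\sum_i c_i^2>0$ it is strictly convex, and setting its derivative to zero yields $\lambda^\dagger=\sum_i\ell_ic_i/\sum_i c_i^2$.
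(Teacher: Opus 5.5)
Your proposal is correct and follows the paper's proof essentially step for step. Your $w_m$ are exactly $\gamma_{M,m}/M$ from the paper's running-average representation (Lemma~\ref{lem:lure_alt_rep}), and the per-round second-moment and overlap bounds, together with $\sum_m w_m^2\le\max_m w_m$, form the same chain of inequalities. One small slip: $w_{M-1}\neq w_M$ (in fact $w_{M-1}/w_M=(N-M)/(N-M+2)<1$), but your $m<M$ formula also gives $w_M$ when evaluated at $m=M$, so the weights increase on all of $\{1,\dots,M\}$ and the maximum is $w_M=N/(M(N-M+1))$, as you need.
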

\emph{Proof.} See \S\ref{app:sec:var_ppat_bound}.

In particular, we expect this bound to become tight when the labelling budget is small relative to the pool size and the proposal is not overly concentrated on a small subset of points (see \S\ref{app:sec:var_ppat_bound} for further discussion). Additionally, we empirically verify the suitability of $\lambda^\dagger$ as an alternative to $\lambda^\star$ in \S\ref{app:verifying_choice_of_lamba}.

Crucially, $\lambda^\dagger$ can be estimated online during active testing. As the denominator is known on the full pool, it suffices to estimate the numerator, which we do via its LURE estimate, giving the plug--in estimate
\begin{equation}
\label{eqn:plugin_lambda}    
\widehat{\lambda}_M = \frac{M^{-1}\sum_{m=1}^M V_m \ell_{I_m} c_{I_m}}{N^{-1}\sum_{i=1}^N c_i^2}.
\end{equation}
While using this plug--in estimate of $\lambda$ yields a biased estimator of the risk, we show in \S\ref{sec:app:pluginPPAT} that the resulting estimator remains consistent, so that its bias vanishes asymptotically. In practice, updating $\widehat{\lambda}_M$ after every label can be unstable, since each update changes the residualised loss used for sampling; we therefore update only after every $k$ newly collected labels, keeping $\lambda$ fixed in between.


%% file: theory/asymptotics.tex
\looseness=-1
While the previous sections have focused on point estimation of the risk, in practice it is typically beneficial to establish the estimator's asymptotic behavior and provide a measure of uncertainty around this estimate. After confirming the consistency of our estimator, we therefore turn to the construction of asymptotically valid confidence intervals (CIs) for PPAT -- an aspect that has been largely neglected in prior work on active testing.\footnote{A notable exception is \citet{berrada2025scaling}, who bootstrap the reweighted losses to obtain confidence intervals for LURE. However, the reweighted losses are not i.i.d.\ -- they arise from active sampling without replacement -- and their bootstrap resampling ignores this dependence, so the resulting intervals lack theoretical guarantees.}

\looseness=-1
All expectations and probabilities for the following result are over the active sampling process, conditionally on the finite test pool. For each budget $M$, let $N_M$ denote the pool size and $R_M=N_M^{-1}\sum_{i=1}^{N_M}\ell_i$ the test pool risk. We write $\widehat R_{M,\ppat}(\lambda)$ for the PPAT estimator at budget $M$ and tuning parameter $\lambda\in\mathbb R$, with $\widehat R_{M,\lure}=\widehat R_{M,\ppat}(0)$ the corresponding LURE estimator. We state the main result informally below; its full statement and proof are deferred to \S\ref{app:mathematical_results}.

\vspace{0.1cm}
\begin{thm}[Consistency and asymptotic normality of PPAT, informal]
\label{thm:lure_ppat_unified}
Fix $\lambda\in\mathbb R$. Suppose that our proposals satisfy a uniform overlap condition (i.e. for each $1\leq m \leq M$, there exists $\beta > 0$ s.t. $Q_m(i) \geq \beta/N_M$ for all $i \in S_m$) and that the pool ratio $N_M/M$ converges to some $\alpha>1$.  Then: 
\begin{enumerate}[leftmargin=*,topsep=2pt,itemsep=1pt]
    \item \emph{(Consistency.)} Assume the finite--pool averages $N_M^{-1}\sum_{i=1}^{N_M}\ell_i^2$ and $N_M^{-1}\sum_{i=1}^{N_M}\widetilde\ell_i^{\,2}$ are both bounded. Then we have $\widehat R_{M,\ppat}(\lambda) - R_M \to_p 0$.

    \item \emph{(Asymptotic normality.)} Assume the finite--pool averages $N_M^{-1}\sum_{i=1}^{N_M}\ell_i^4$ and $N_M^{-1}\sum_{i=1}^{N_M}\widetilde\ell_i^{\,4}$ are both bounded. Write $\zeta_i(\lambda)=\ell_i-\lambda(\widetilde\ell_i-\widetilde R_M)$ for the residualised loss, with $\widetilde R_M=N_M^{-1}\sum_{i=1}^{N_M}\widetilde\ell_i$, and let $\gamma_{M,m}=\tfrac{N_M(N_M-M)}{(N_M-m)(N_M-m+1)}$. Assume the predictable quadratic variation
    \begin{equation}
    \label{eq:sigmaM_PPAT}
    \sigma_M^2(\lambda)=\frac1M\sum_{m=1}^M\gamma_{M,m}^2\,s_m^2(\lambda),
    \qquad s_m^2(\lambda)=\frac{1}{N_M^2}\left[\sum_{i\in S_m}\frac{\zeta_i(\lambda)^2}{Q_m(i)}-\Bigl(\sum_{i\in S_m}\zeta_i(\lambda)\Bigr)^{2}\right],
    \end{equation}
    converges in probability to some $\sigma_{\ppat}^2(\lambda)\in(0,\infty)$. Then
    \begin{equation*}
    \sqrt M\bigl\{\widehat R_{M,\ppat}(\lambda)-R_M\bigr\}\;\Rightarrow\;\mathcal N\!\bigl(0,\sigma_{\ppat}^2(\lambda)\bigr),
    \end{equation*}
    and $\sigma_{\ppat}^2(\lambda)$ admits the plug--in estimator
    \begin{equation*}
    \widehat\sigma_M^2(\lambda)=\frac1M\sum_{m=1}^M\gamma_{M,m}^2\bigl(A_m(\lambda)-\widehat R_{M,\ppat}(\lambda)\bigr)^2,
    \qquad
    A_m(\lambda)=\frac{1}{N_M}\Bigl(\frac{\zeta_{I_m}(\lambda)}{Q_m(I_m)}+\sum_{t=1}^{m-1}\zeta_{I_t}(\lambda)\Bigr),
    \end{equation*}
    which is computable from the queried labels and satisfies $\widehat\sigma_M^2(\lambda)\to_p\sigma_{\ppat}^2(\lambda)$.
    
\end{enumerate}
Moreover, analogous consistency and asymptotic normality results hold when $\lambda$ is replaced by the plug--in estimate $\widehat\lambda_M$ from \eqref{eqn:plugin_lambda}.
\end{thm}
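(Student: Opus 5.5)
The plan is to represent $\widehat R_{M,\ppat}(\lambda)-R_M$ as a normalised sum of martingale differences adapted to $(\mathcal F_m)$, and then apply Chebyshev for consistency and a martingale CLT for normality. Since $\widehat R_{M,\ppat}(\lambda)$ is the LURE estimator applied to the residualised losses $\zeta_i(\lambda)$, and $N_M^{-1}\sum_i\zeta_i(\lambda)=R_M$, it suffices to treat a LURE estimator of generic losses $\zeta_i$.

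\textbf{Step 1 (martingale representation).} Define $A_m(\lambda)$ as in the statement. Conditionally on $\mathcal F_{m-1}$ it is an importance-sampling estimate of the unlabelled part of the pool plus the exactly known labelled part, so $\E[A_m\mid\mathcal F_{m-1}]=R_M$. A direct computation gives
\[
\Var(A_m\mid\mathcal F_{m-1})=N_M^{-2}\Bigl[\sum_{i\in S_m}\zeta_i^2/Q_m(i)-\Bigl(\sum_{i\in S_m}\zeta_i\Bigr)^2\Bigr]=s_m^2(\lambda).
\]
I would then verify, by rearranging the weights $V_m$ in \eqref{eq:lure-weight} (essentially the Abel-summation argument behind the unbiasedness proof in \S\ref{sec:app:lureunbiased}), the algebraic identity
\[
\widehat R_{M,\ppat}(\lambda)-R_M=\frac1M\sum_{m=1}^M\gamma_{M,m}\bigl(A_m(\lambda)-R_M\bigr).
\]
This identity is the form suggested by the predictable quadratic variation \eqref{eq:sigmaM_PPAT}. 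Setting $\xi_{M,m}:=M^{-1/2}\gamma_{M,m}(A_m-R_M)$ then gives a martingale difference array whose conditional variances sum to $\sigma_M^2(\lambda)$. I expect getting this identity exactly right, including the constants $\gamma_{M,m}$, to be the main technical obstacle; if the direct rearrangement fails, I would prove it by induction on $M$.

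\textbf{Step 2 (uniform bounds and consistency).} Since $N_M/M\to\alpha>1$, we have $\gamma_{M,m}\le N_M/(N_M-M+1)$, which is bounded uniformly in $m$ and $M$. The overlap condition gives
\[
s_m^2\le N_M^{-2}\sum_i\zeta_i^2\,N_M/\beta=\beta^{-1}N_M^{-1}\sum_i\zeta_i^2 .
\]
The right-hand side is bounded by the second-moment assumptions, because $\zeta_i^2\le 2\ell_i^2+4\lambda^2(\widetilde\ell_i^2+\widetilde R_M^2)$. Hence $\Var(\widehat R_{M,\ppat})=M^{-2}\sum_m\gamma_{M,m}^2\E s_m^2=O(1/M)$, and consistency follows from Chebyshev together with unbiasedness (Prop.~\ref{prop:cv-unbiased}).

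\textbf{Step 3 (CLT and variance estimator).} I would apply the martingale CLT of Hall--Heyde. The conditional-variance condition is exactly the assumed convergence $\sigma_M^2(\lambda)\to_p\sigma^2_\ppat(\lambda)$. For the Lindeberg condition I would verify Lyapunov's condition with fourth moments:
\[
\E\bigl[(\zeta_{I_m}/(N_MQ_m(I_m)))^4\mid\mathcal F_{m-1}\bigr]=\sum_{i\in S_m}\zeta_i^4/(N_M^4Q_m(i)^3)\le\beta^{-3}N_M^{-1}\sum_i\zeta_i^4 .
\]
The known-part term is at most $(N_M^{-1}\sum_i|\zeta_i|)^4$, which is also bounded. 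Thus $\E|A_m-R_M|^4\le C$, and $\sum_m\E\xi_{M,m}^4\le CM/M^2\to0$.

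For $\widehat\sigma_M^2$, I would write
\[
\frac1M\sum_m\gamma_{M,m}^2(A_m-R_M)^2-\sigma_M^2=\frac1M\sum_m\gamma_{M,m}^2\bigl[(A_m-R_M)^2-s_m^2\bigr].
\]
This is an average of martingale differences with bounded second moments (by the fourth-moment bound), so it is $O_p(M^{-1/2})$. Replacing $R_M$ by $\widehat R_{M,\ppat}$ changes the average by at most $2|\widehat R-R_M|\cdot O_p(1)+O\bigl((\widehat R-R_M)^2\bigr)=o_p(1)$, using Step 2 and Cauchy--Schwarz.

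\textbf{Step 4 (plug-in $\widehat\lambda_M$).} The estimator is affine in $\lambda$: $\widehat R_{M,\ppat}(\lambda)=\widehat R_{M,\lure}-\lambda\Clure$. The numerator of $\widehat\lambda_M$ is itself a LURE estimator of $N_M^{-1}\sum_i\ell_ic_i$, so Step 2 applied with losses $\ell_ic_i$ gives $\widehat\lambda_M-\lambda_M^\dagger\to_p0$. This needs the fourth-moment bounds, via $\ell_i^2c_i^2\le(\ell_i^4+c_i^4)/2$, and a denominator bounded away from zero. Since $\Clure$ has mean zero and, by Step 2, $\sqrt M\,\Clure=O_p(1)$, we get
\[
\sqrt M\bigl\{\widehat R_{M,\ppat}(\widehat\lambda_M)-\widehat R_{M,\ppat}(\lambda_M^\dagger)\bigr\}=-(\widehat\lambda_M-\lambda_M^\dagger)\sqrt M\,\Clure=o_p(1).
\]
Hence consistency and asymptotic normality transfer to the plug-in estimator (assuming $\lambda_M^\dagger$ converges). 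A similar argument, using that $A_m$ is also affine in $\lambda$, handles $\widehat\sigma^2_M(\widehat\lambda_M)$.
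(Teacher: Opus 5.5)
Your proposal is correct and follows essentially the same route as the paper: the running-average representation $\widehat R_M=\frac1M\sum_m\gamma_{M,m}A_{M,m}$ with $\sum_m\gamma_{M,m}=M$ (Lemmas~\ref{lem:lure_alt_rep} and~\ref{lem:gamma_sum}, proved by the coefficient-matching and telescoping you anticipate), the overlap bound for $L^2$-consistency, a Lyapunov-type martingale CLT, the martingale argument for $\widehat\sigma_M^2$, and the affine decomposition around $\lambda_M^\dagger$ for the plug-in case. The only difference is that you do not need to assume $\lambda_M^\dagger$ converges: Cauchy--Schwarz gives $|\lambda_M^\dagger|\le (N_M^{-1}\sum_i\ell_i^2/D_M)^{1/2}=O(1)$, so the paper applies the generic-array CLT directly to the $M$-dependent oracle array $z^\dagger_{M,i}=\ell_i-\lambda_M^\dagger c_{M,i}$, with the stabilisation assumption stated for that array.
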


\emph{Proof.} See \S\ref{sec:app:fixedlambdaPPAT}-\ref{sec:app:pluginPPAT}.

\looseness=-1
Crucially, Thm. \ref{thm:lure_ppat_unified} immediately yields the asymptotic $(1-\delta)$--CI
\begin{equation}
\label{eqn:asymptotic_valid_cis_main}
\widehat R_{M,\ppat}(\lambda) \pm z_{1-\delta/2}\,\widehat\sigma_M(\lambda)/\sqrt M,
\end{equation}
where $z_{1-\delta/2}$ denotes the $(1-\delta/2)$--quantile of the standard normal distribution. As LURE is the special case $\lambda=0$, this also yields, as a by--product, an asymptotic CI for LURE. 

Note that the uniform overlap condition is the usual positivity requirement for importance--weighted active testing \citep{farquhar2021statistical} and can be enforced by mixing with a uniform proposal \citep{kossen2021active}, while the moment conditions are bounded finite--pool averages requiring no distributional assumption, as we work conditionally on the test pool. Moreover, the condition $N_M/M\to\alpha>1$ is the usual finite--population asymptotic regime \citep{farquhar2021statistical}. The remaining assumption -- that the predictable quadratic variation $\sigma^2_M(\lambda)$ converges to a finite, strictly positive limit -- is the standard condition for the martingale central limit theorem and is needed only to show asymptotic normality; we discuss this assumption in detail in \S\ref{sec:app:assumption-discussion}. We provide empirical support for this assumption indirectly through the coverage experiments in \S\ref{sec:experiments}, where we show that PPAT can attain nominal coverage substantially faster than competing strategies, while also producing substantially narrower intervals.

%% file: experiments/experiments.tex
\vspace{-0.2cm}
We evaluate our approach on real--world regression and classification datasets.  We provide a description of our experimental setup and results below with full details in \S\ref{app:further_details_experimental_setup}.

\vspace{-0.3cm}
\input{experiments/experimental_setup}
\input{experiments/main_results}
\input{experiments/ablations}

%% file: experiments/experimental_setup.tex
\subsection{Experimental Setup}
\label{sub:experimental_setup}
\paragraph{Datasets.}
For regression, we use the following UCI datasets \citep{uci}: \texttt{Keggundirected}, \texttt{Keggdirected}, \texttt{Sml}, \texttt{Bike}.
For each dataset, we randomly sample a fixed training set of 250 points to train the model \(f\) whose risk we wish to estimate, and use the remaining points as the test pool. For classification, we use \texttt{CIFAR--10}, \texttt{CIFAR-100} \citep{krizhevsky2009learning}, and \texttt{Tiny--ImageNet} \citep{Le2015TinyIV}. We respect the original train/test splits, using the training set to train \(f\) and treating the test set as the test pool. Additional dataset details are given in \S\ref{app:further_details_experimental_setup}.

\paragraph{Models and proxies.}
\looseness=-1
For regression, \(f\) is a Gaussian process regression model with an RBF kernel whose hyperparameters are optimised using the marginal likelihood, and the active--testing surrogate is a Bayesian linear regression model with the default \texttt{Scikit-Learn} hyperparameters \citep{scikit-learn}. For classification, we follow the standard pretrained--encoder pipeline: inputs are mapped to CLIP embeddings \citep{radford2021learning}, and \(f\) is a linear classifier trained on these frozen embeddings. The surrogate is a Laplace-approximated Bayesian neural network \citep{bayesinterpolation,daxberger2021laplace}: an MLP on CLIP embeddings with three hidden layers of width 128 for \texttt{CIFAR-10} and \texttt{CIFAR-100}, and five hidden layers of width 128 for \texttt{Tiny-ImageNet}.

For our proxy, we use strong off-the-shelf predictors suited to each task.   For regression, we use the \texttt{TabPFN--2.5} foundation model \citep{grinsztajn2025tabpfn} and for classification we use the zero-shot predictions of the \texttt{ViT-L-14} CLIP model from the \texttt{sentence--transformers} library \citep{reimers-2019-sentence-bert}. Further implementation details and ablations over surrogate and proxy choices are deferred to \S\ref{app:further_details_experimental_setup} and~\ref{app:additional_results} respectively.

\paragraph{Baselines.}
\looseness=-1
We compare \colppat{} with \colure{} \citep{kossen2021active}, \ase{} \citep{kossen2022active}, and \random{}, which samples randomly without replacement and uses the unweighted estimator in \eqref{eqn:unweighted_estimator}. Within each experimental setting, all active methods are run with the same surrogate model. For \colppat{}, we report the fixed choices \(\lambda=1\) and \(\lambda=0.5\), as well as the plug--in estimate \(\widehat{\lambda}_M\), which we initialise at $0.5$ and update after every 100 acquired points. For \ase{}, we use 100 Monte Carlo samples to compute the acquisition scores; further details about the baselines are given in \S\ref{app:related_work} and \S\ref{app:further_details_experimental_setup}.

\paragraph{Active-testing setup and metrics.}
In all experiments, we use a labelling budget of \(M=500\) and run 1000 independent trials. For regression, the loss is squared error and the surrogate is updated after every newly acquired label. For classification, the loss is cross--entropy and, following prior work \citep{kossen2021active,kossen2022active}, we keep the surrogate fixed during acquisition. Following \cite{kossen2021active, kossen2022active}, we report the median squared error, where the error is the difference between the estimated risk and the true risk on the full test pool. Mean error, confidence--interval widths, and further ablations are provided in \S\ref{app:additional_results}.

%% file: experiments/main_results.tex
\subsection{PPAT Improves Risk Estimation}
\label{sub:main_results}
\paragraph{Regression.}
Fig.~\ref{fig:main_uci} shows that, across all datasets, \colppat{} achieves lower median squared error than \random{}, \colure{}, and \ase{} for all three choices of \(\lambda\). The only exception occurs at very small labelling budgets, where \ase{} can sometimes attain lower error, for example on \texttt{Keggundirected}. However, this comes at the cost of using a \textbf{biased} risk estimate.

\begin{figure}[!h]
    \centering
    \includegraphics[width=.75\textwidth]{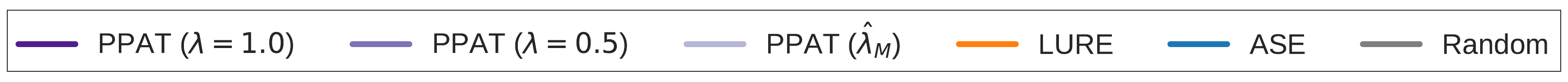}
    \vspace{0.75em}
    
    \begin{subfigure}[t]{0.22\textwidth}
        \centering
        \includegraphics[width=\linewidth]{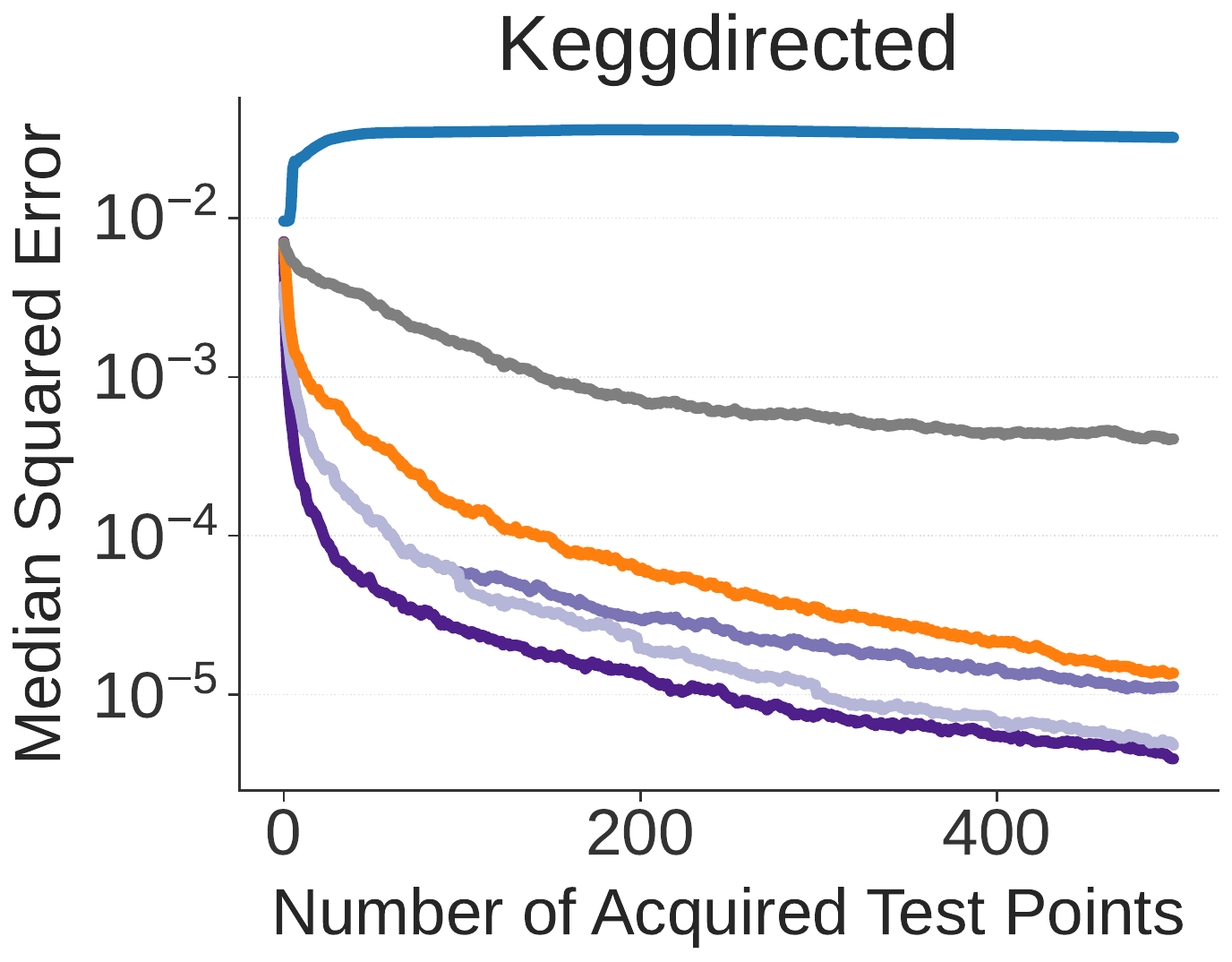}
        \label{fig:main_keggdir}
    \end{subfigure}
    \hspace{0.25cm}
    \begin{subfigure}[t]{0.22\textwidth}
        \centering
        \includegraphics[width=\linewidth]{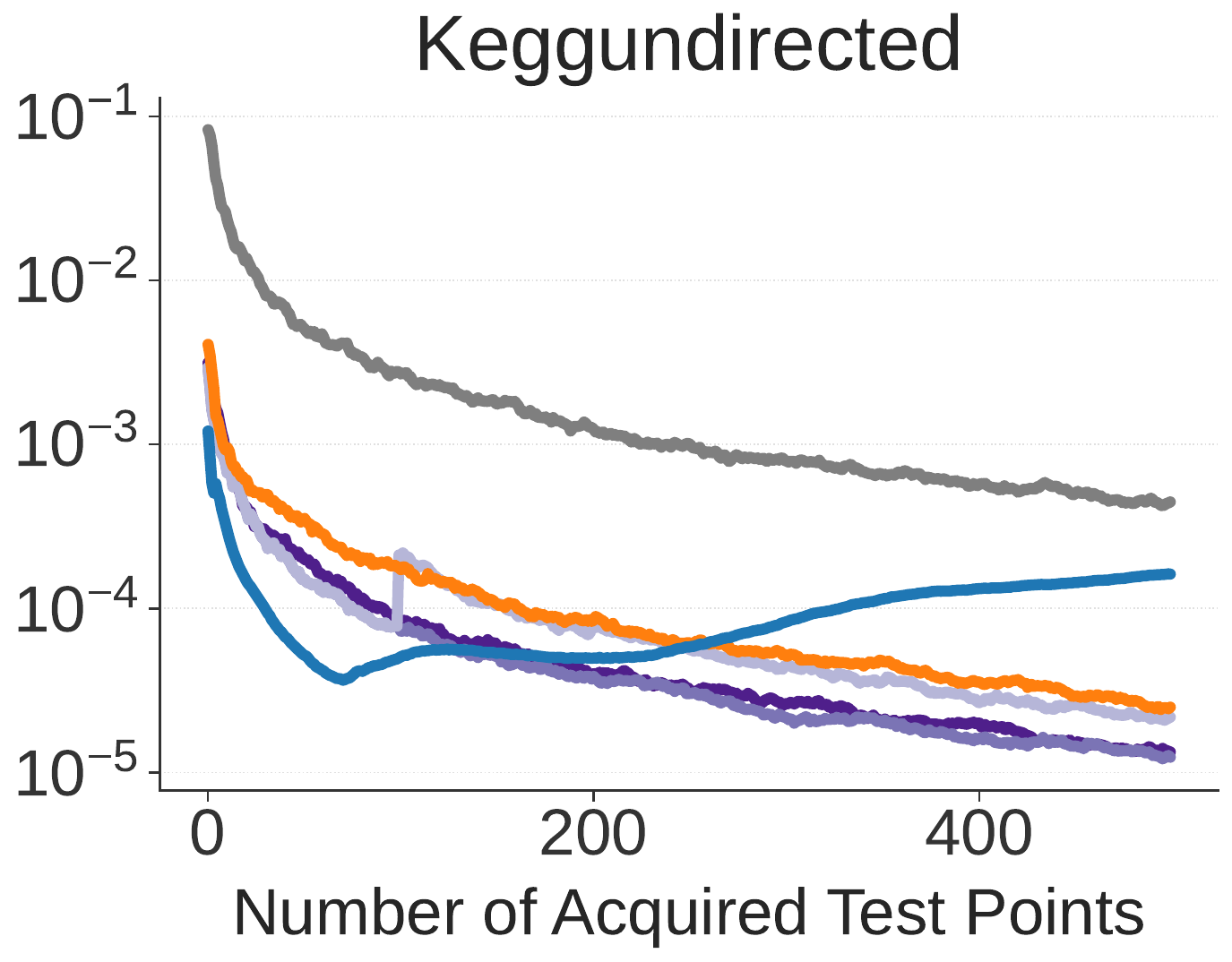}
        \label{fig:main_keggundir}
    \end{subfigure}
    \hspace{0.25cm}
    \begin{subfigure}[t]{0.22\textwidth}
        \centering
        \includegraphics[width=\linewidth]{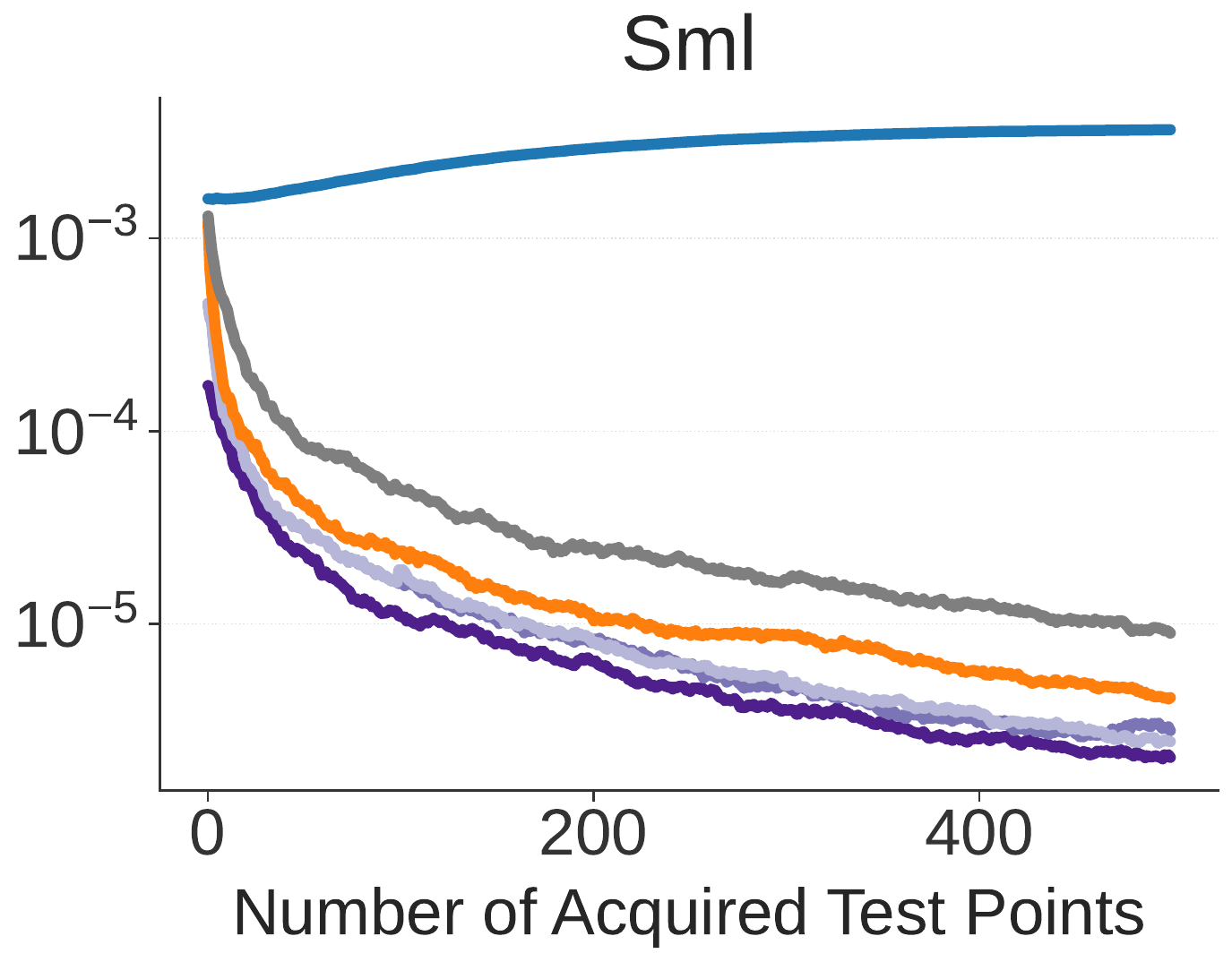}
        \label{fig:main_sml}
    \end{subfigure}
    \hspace{0.25cm}
    \begin{subfigure}[t]{0.22\textwidth}
        \centering
        \includegraphics[width=\linewidth]{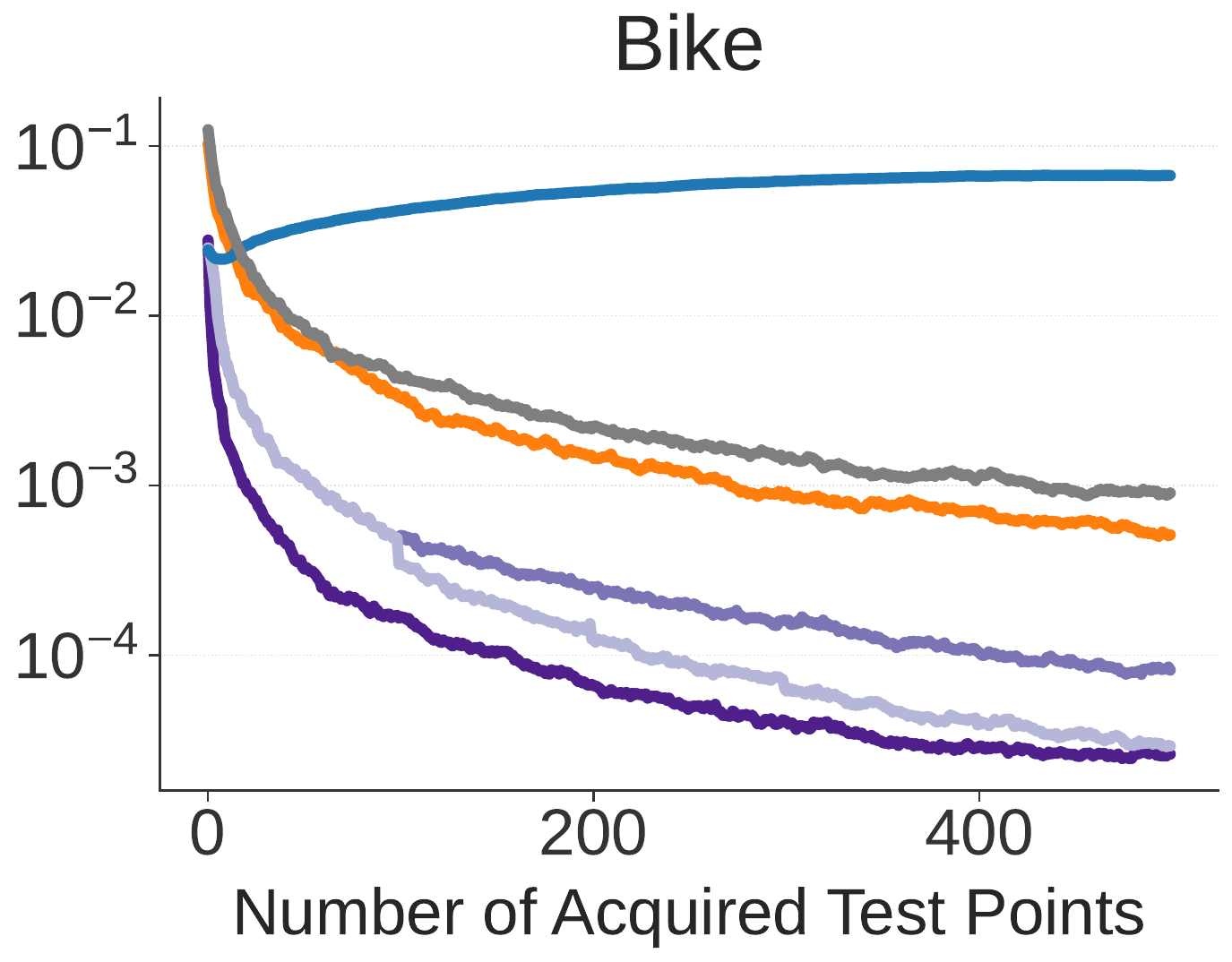}
        \label{fig:main_bike}
    \end{subfigure}
    \vspace{-0.5cm}
    \caption{    
    Regression experiments comparing \textcolor{violet}{\colppat{}}, \textcolor{orange}{\colure{}}, \ase{}, and \textcolor{gray}{\random{}}. Plots show the median squared error across 1000 trials.
    }
    \label{fig:main_uci}
\end{figure}

For the plug--in choice \(\widehat{\lambda}_M\), we occasionally observe a small transient increase in error at early labelling budgets (e.g. on \texttt{Keggundirected}). 
This is expected, since \(\widehat{\lambda}_M\) is updated from the actively collected labels during acquisition, which introduces finite--sample bias relative to fixed-\(\lambda\) \colppat{} and also changes the proposal used to sample future points. As the number of acquired labels grows, however, \(\widehat{\lambda}_M\) stabilises and this bias diminishes (see ~\S \ref{app:mean_errors}). Importantly, we also see that \colppat{} with $\widehat{\lambda}_M$ generally converges to \colppat{} with the best performing fixed $\lambda$, suggesting that targeting $\lambda^\dagger$ is an effective choice (see also \S\ref{app:verifying_choice_of_lamba}).

\paragraph{Classification.}
Fig.~\ref{fig:classification_datasets} shows the same pattern on the classification datasets: \colppat{} consistently achieves lower median squared error than \random{}, \colure{}, and \ase{} for all choices of \(\lambda\), with the exception of \texttt{Tiny-ImageNet}, where \colppat{} with \(\lambda = 1\) performs comparably to \random{}. We further note that \random{} outperforms \colure{} on these datasets, likely a result of the surrogate poorly tracking the losses. Indeed, since \colure{} builds its proposal, and hence its importance weights, from the surrogate's expected losses, a surrogate that poorly tracks the true losses can result in importance weights that inflate the variance of \colure{}. In contrast, \colppat{} is less affected by this issue: it importance--weights the residualised loss rather than the raw loss, and since the proxy explains part of the loss variation, the residuals have smaller spread, making the variance less sensitive to errors in the surrogate.

\begin{figure}[!h]
    \centering
    \includegraphics[width=.75\textwidth]{figures/uci/uci_main_legend.pdf}
    \vspace{0.75em}    

    \begin{subfigure}[t]{0.22\textwidth}
        \centering
        \includegraphics[width=\linewidth]{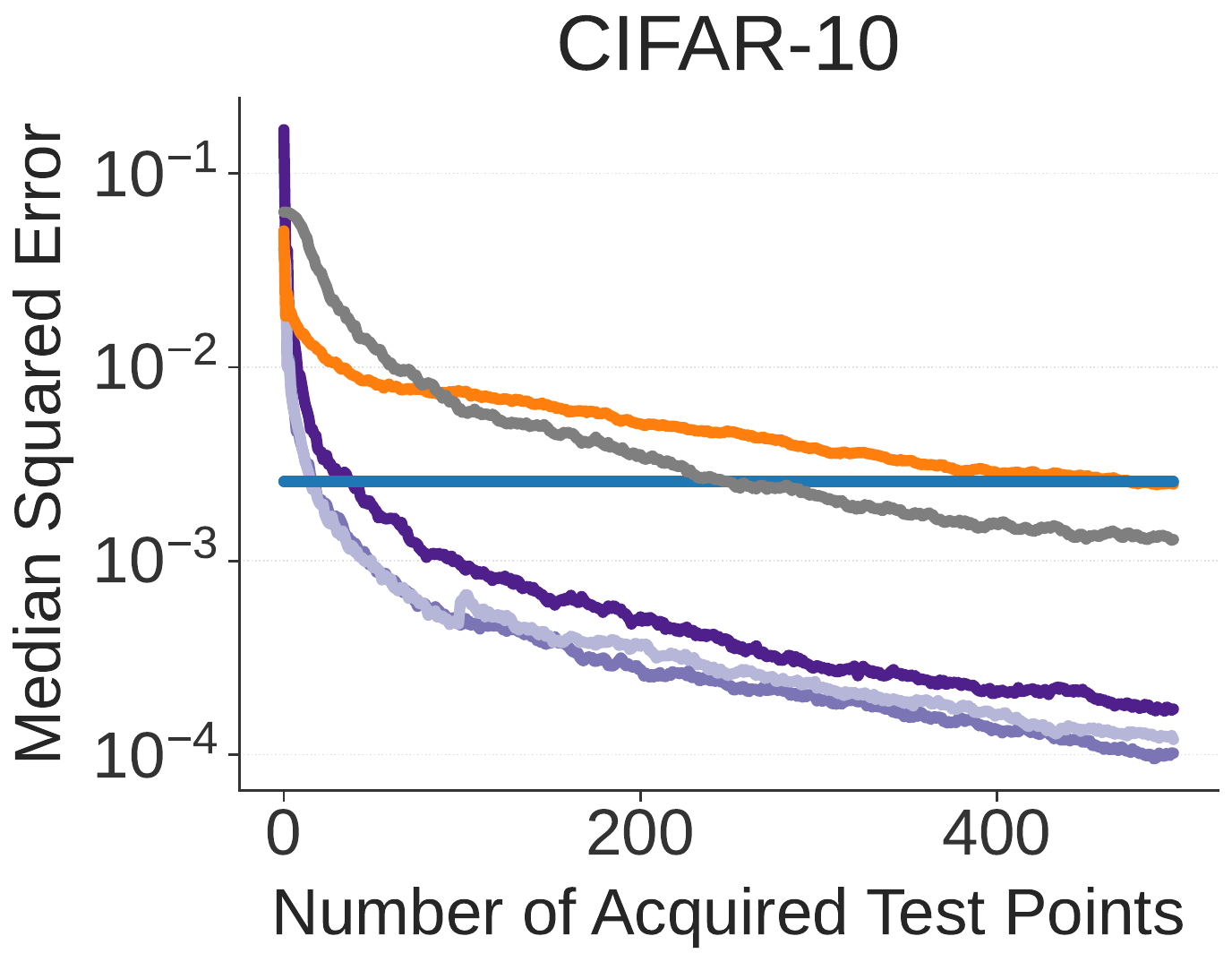}
        \label{fig:main_cifar10}
    \end{subfigure}
    \hspace{0.1cm}
    \begin{subfigure}[t]{0.22\textwidth}
        \centering
        \includegraphics[width=\linewidth]{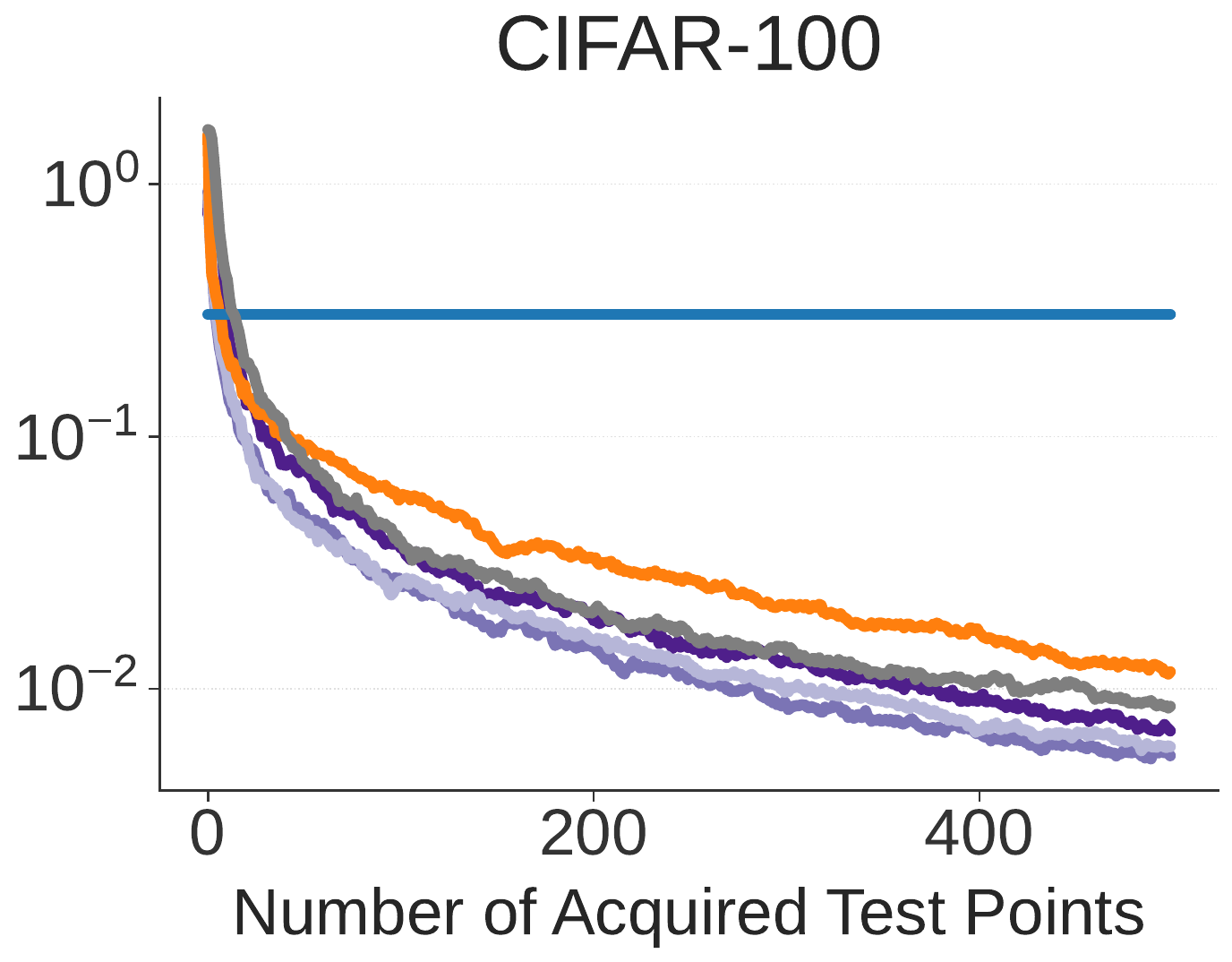}
        \label{fig:main_cifar100}
    \end{subfigure}
    \hspace{0.1cm}
    \begin{subfigure}[t]{0.22\textwidth}
        \centering
        \includegraphics[width=\linewidth]{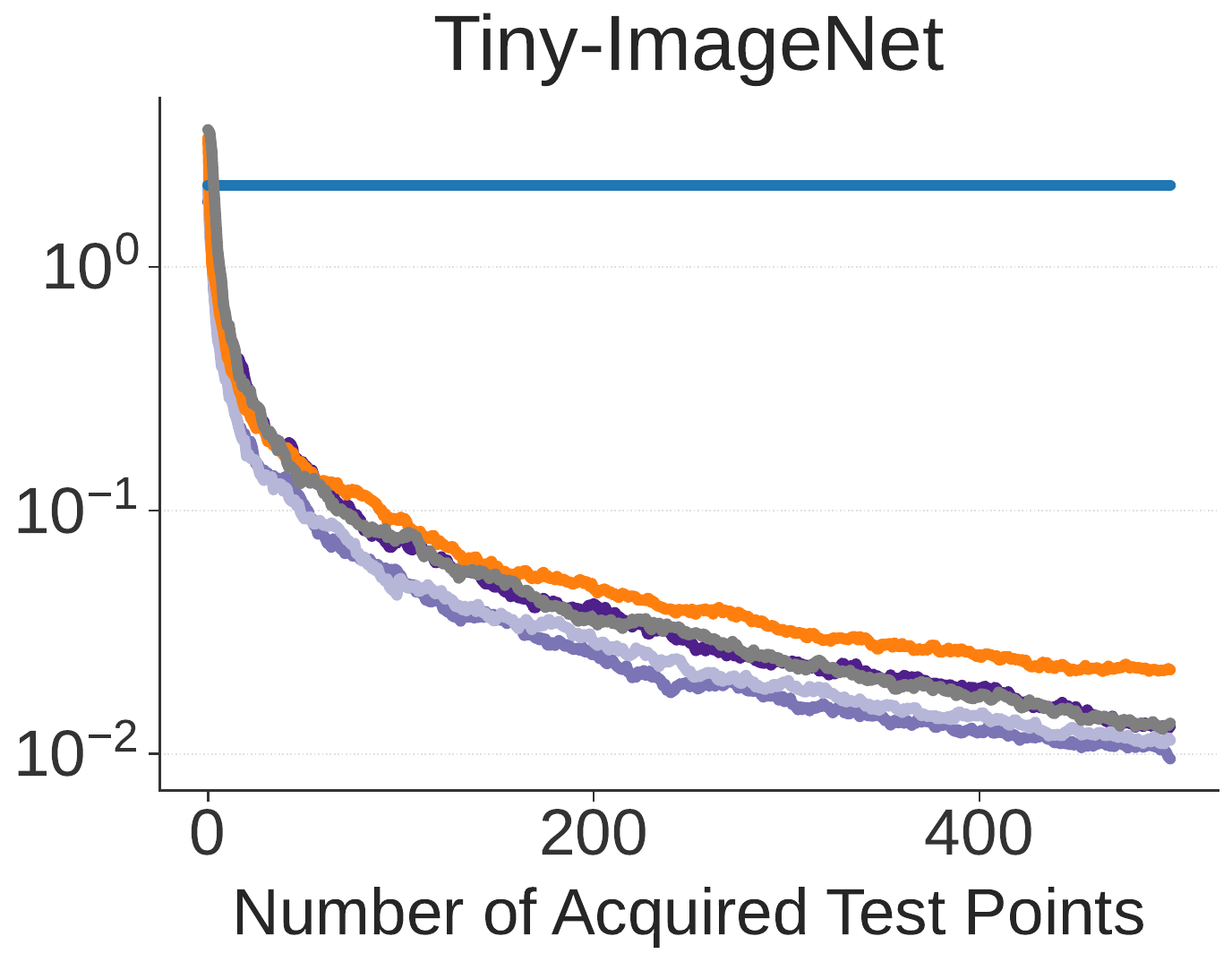}
        \label{fig:main_tiny_imagenet}
    \end{subfigure}    
    \vspace{-0.5cm}
    \caption{Classification experiments comparing \textcolor{violet}{\colppat{}}, \textcolor{orange}{\colure{}}, \textcolor{NavyBlue}{\ase{}}, and \textcolor{gray}{\random{}}. Plots show the median squared error across 1000 trials.}
\label{fig:classification_datasets}
\end{figure}

Overall, these results indicate that \colppat{} improves risk estimation not only for tabular regression, but also for more challenging, higher--dimensional image-classification tasks.

%% file: experiments/ablations.tex
\begin{figure}[!h]
    \centering    \includegraphics[width=.5\textwidth]{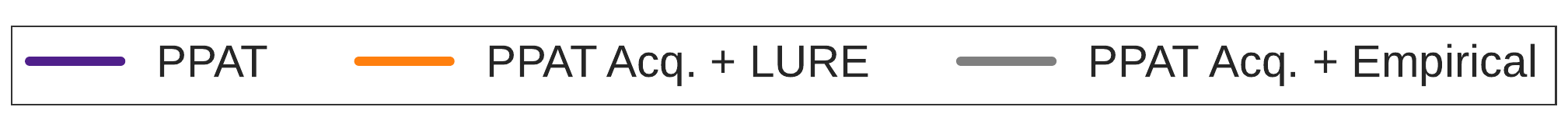}
    \vspace{0.75em}    
    
    \begin{subfigure}[t]{0.22\textwidth}
        \centering
        \includegraphics[width=\linewidth]{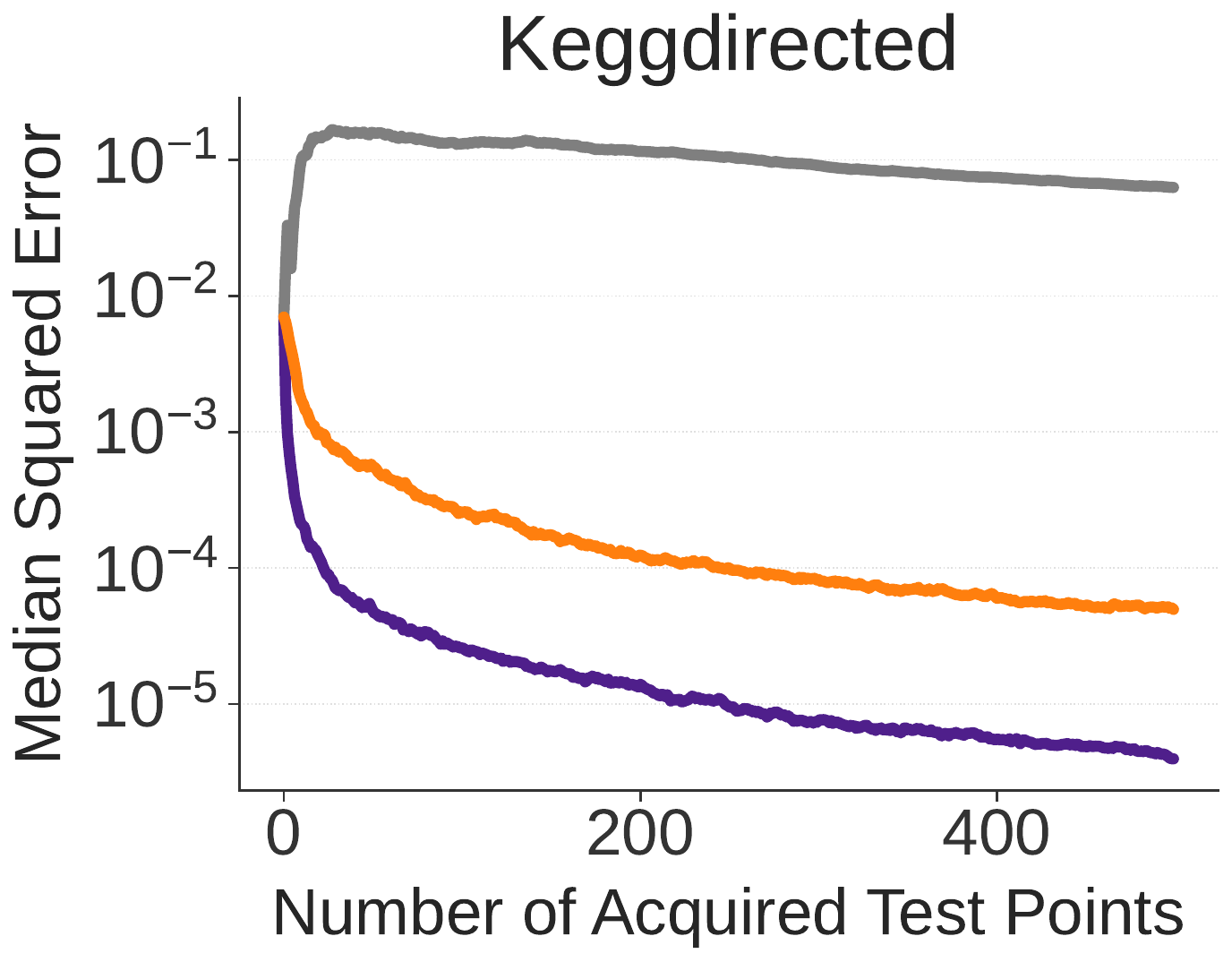}
        \label{fig:est_ablations_keggdir}
    \end{subfigure}
    \hspace{0.25cm}
    \begin{subfigure}[t]{0.22\textwidth}
        \centering
        \includegraphics[width=\linewidth]{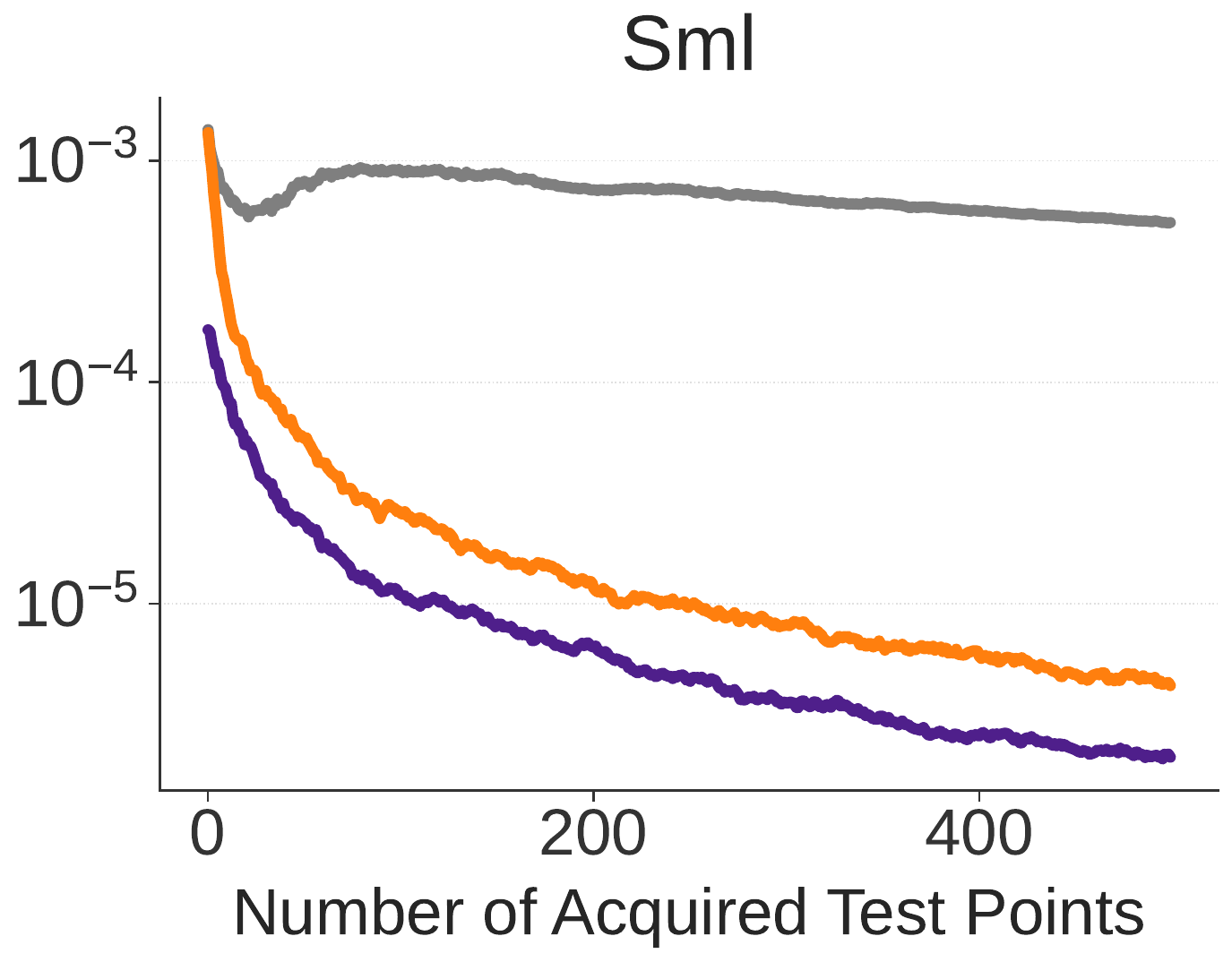}
        \label{fig:est_ablations_gas}
    \end{subfigure}
    \hspace{0.25cm}
    \begin{subfigure}[t]{0.22\textwidth}
        \centering        \includegraphics[width=\linewidth]{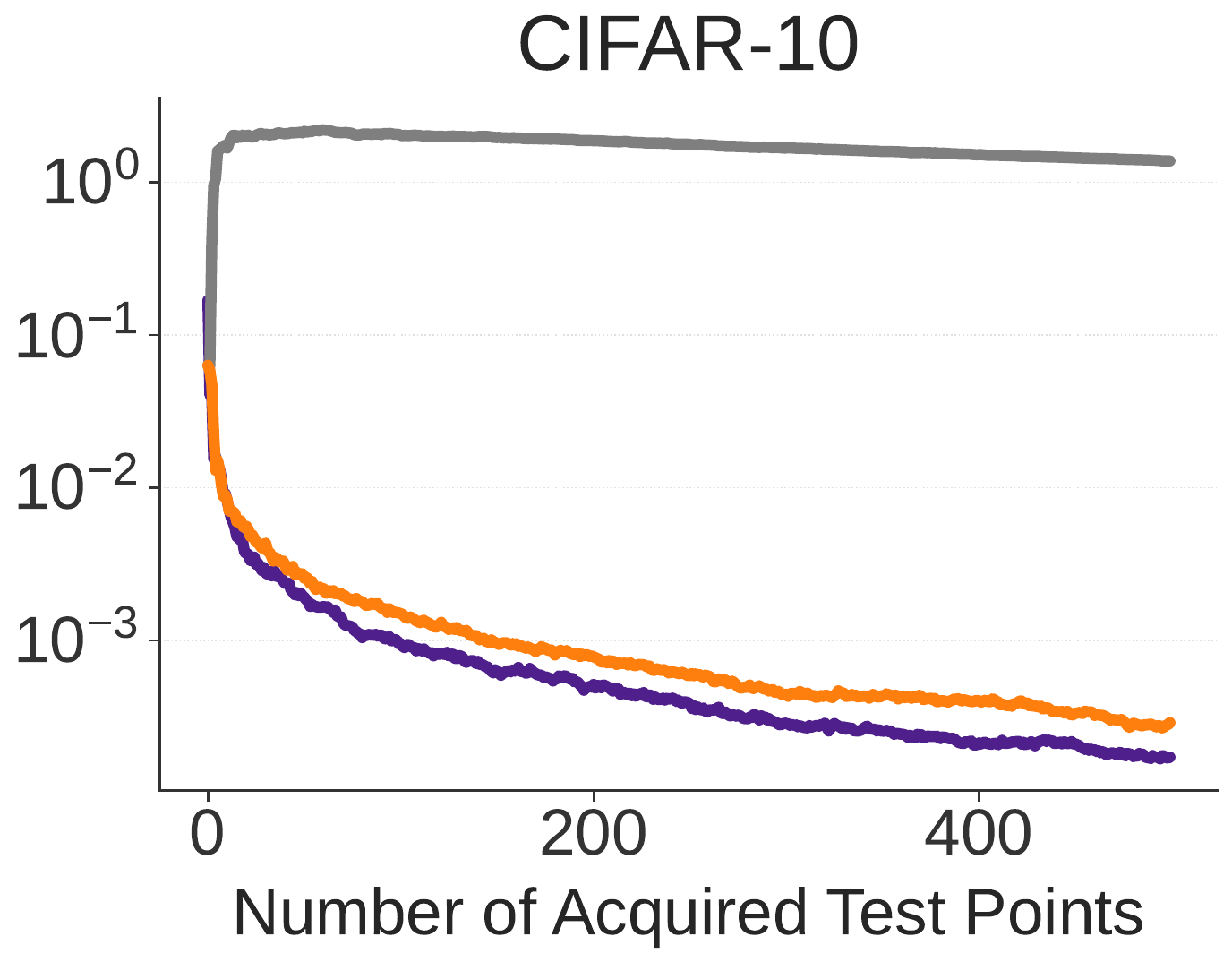}
        \label{fig:est_ablations_cifar10}
    \end{subfigure}
    \hspace{0.25cm}
    \begin{subfigure}[t]{0.22\textwidth}
        \centering
        \includegraphics[width=\linewidth]{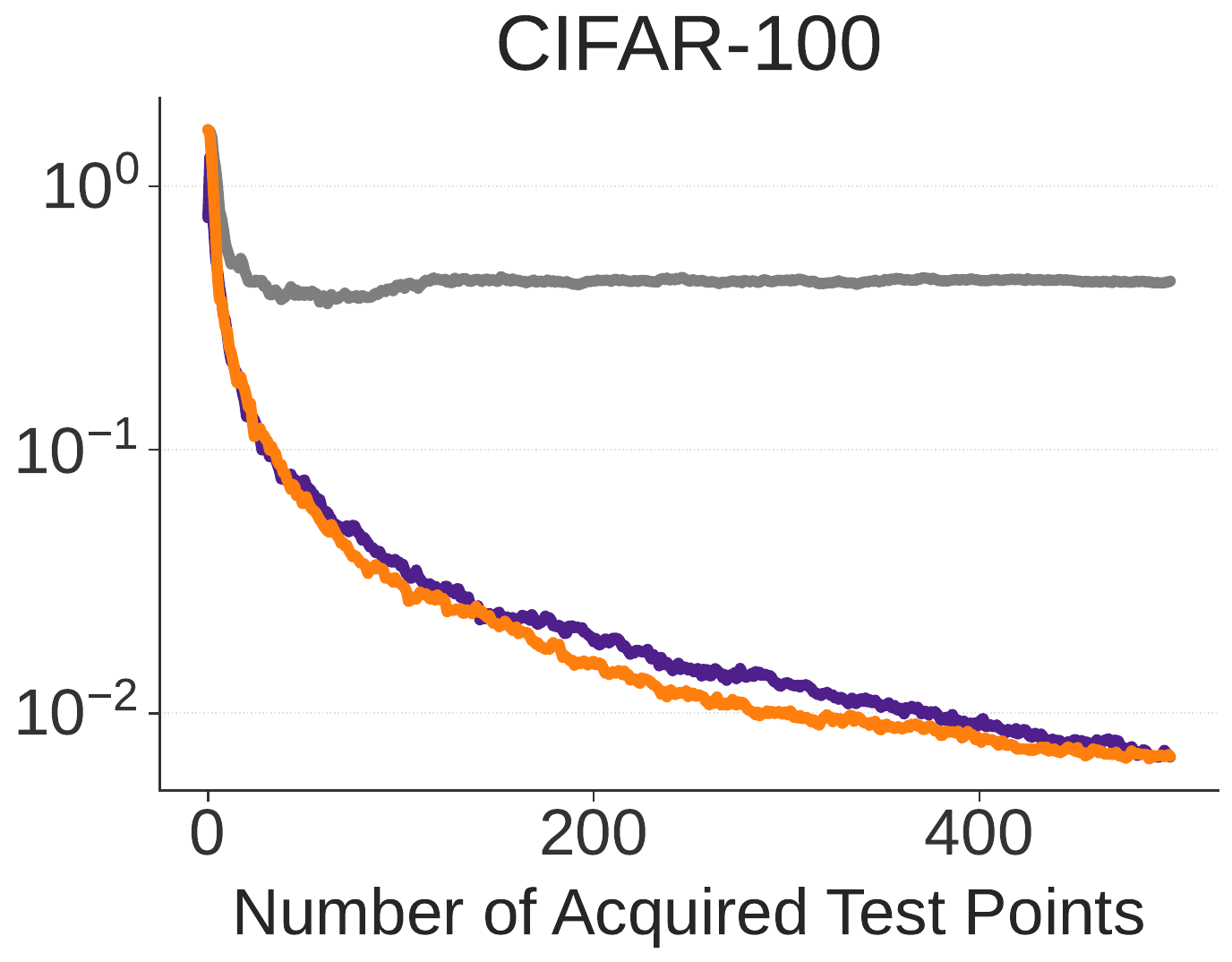}
        \label{fig:fig:main_bike}
    \end{subfigure}
    \vspace{-0.5cm}
    \caption{Experiments on UCI and classification datasets studying the influence of our estimator. We compare \textcolor{violet}{PPAT} with $\lambda=1$ to using the PPAT acquisition strategy, $Q^\mathrm{PPAT}$, but with the LURE estimate of the risk (\textcolor{orange}{PPAT Acq. + LURE}) and the empirical estimate of the risk (\textcolor{gray}{PPAT Acq. + Empirical}). Plots show the median squared error across 1000 trials.}
    \label{fig:effect_of_estimator}
\end{figure}

\subsection{Disentangling the Components of PPAT}
\label{sec:ablations}
Beyond the choice of $\lambda$, which we have discussed in \S\ref{sub:main_results} and \S\ref{app:verifying_choice_of_lamba}, there are two main components that can affect the behaviour of \colppat{}: the choice of acquisition strategy and the choice of the PPI--like estimator. To disentangle their contributions, we study the effect of using different estimators in Fig.~\ref{fig:effect_of_estimator} and different acquisition strategies in Fig.~\ref{fig:effect_of_aq} on four representative datasets.

The notable improvements observed over both  different acquisition strategies (Fig.~\ref{fig:effect_of_aq}) and different estimators (Fig.~\ref{fig:effect_of_estimator}) show that the gains of \colppat{} do not come merely from using a better, PPI--like estimator. Instead, they come from combining our PPI--like estimator with an acquisition strategy tailored to the corresponding residualised objective. 

\begin{figure}[!h]
    \centering    \includegraphics[width=.5\textwidth]{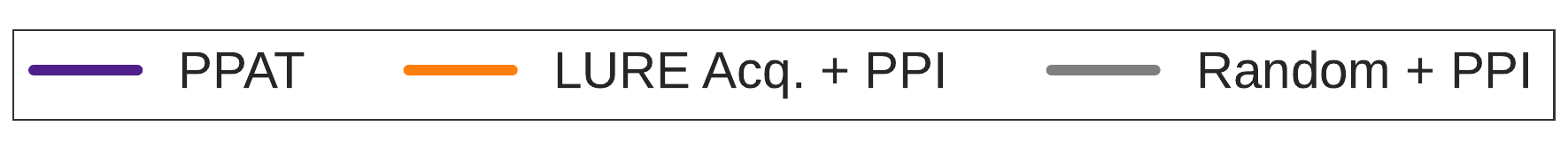}
    \vspace{0.75em}    
    
    \begin{subfigure}[t]{0.22\textwidth}
        \centering
        \includegraphics[width=\linewidth]{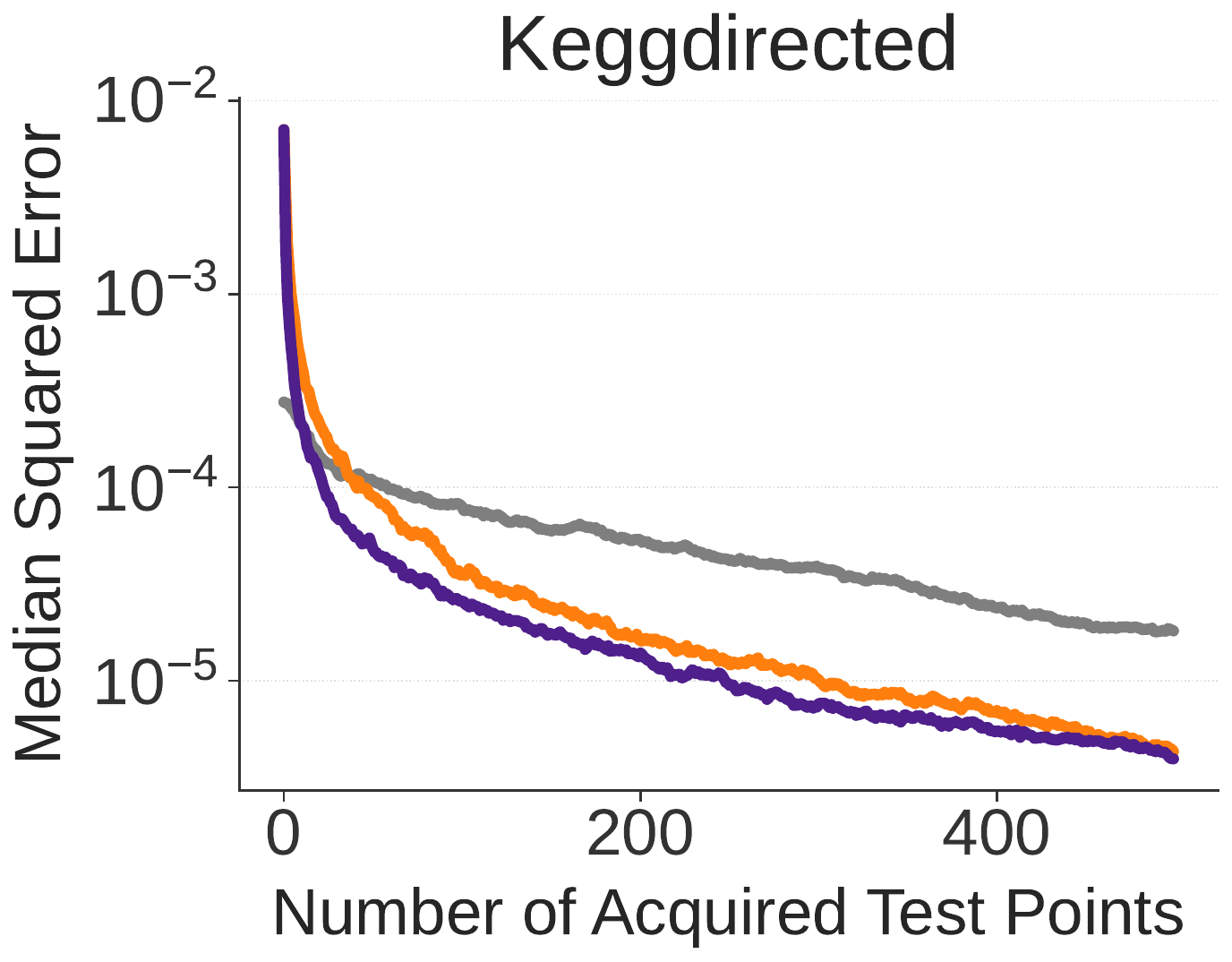}
        \label{fig:acq_ablations_keggdir}
    \end{subfigure}
    \hspace{0.25cm}
    \begin{subfigure}[t]{0.22\textwidth}
        \centering
        \includegraphics[width=\linewidth]{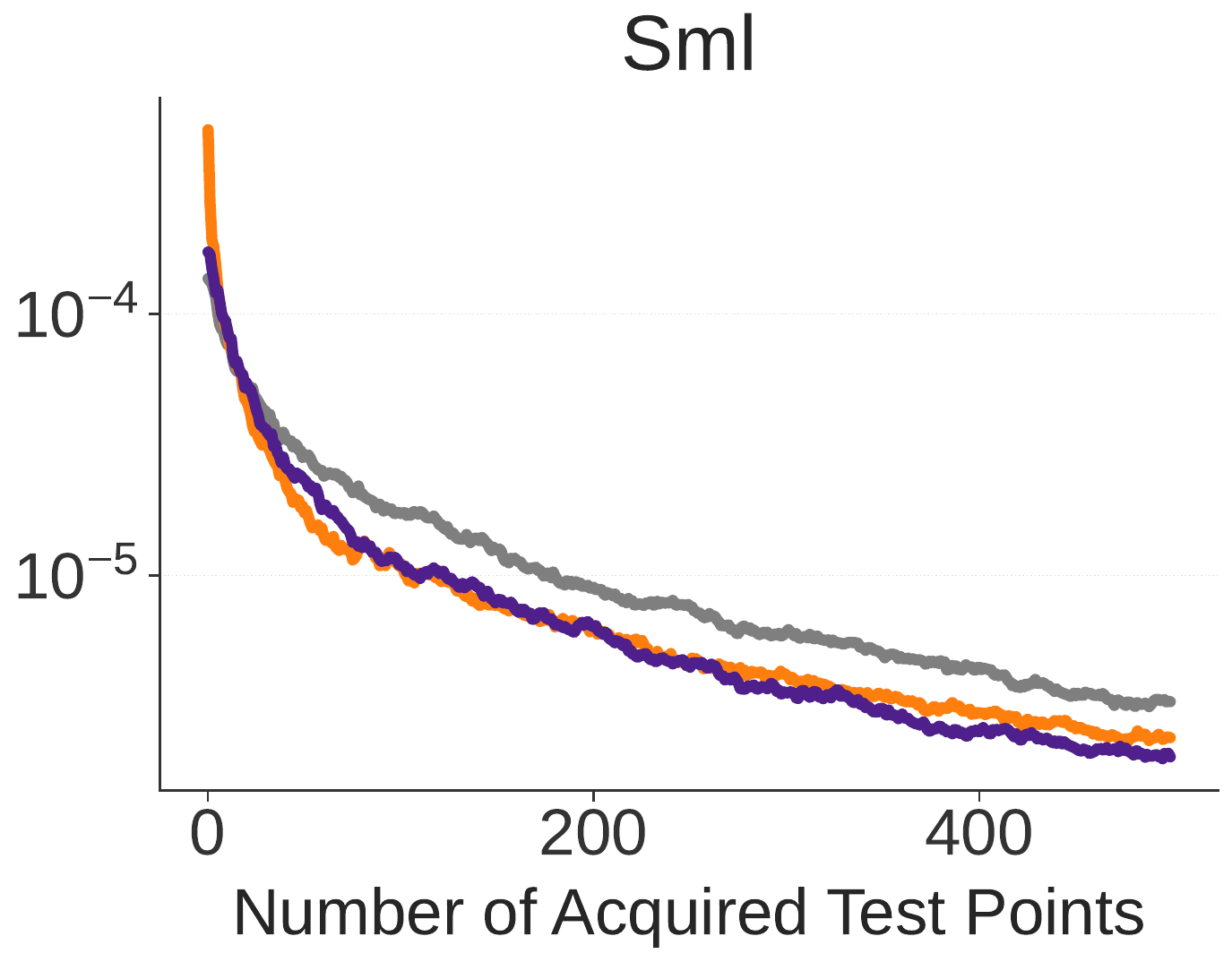}
        \label{fig:acq_ablations_gas}
    \end{subfigure}
    \hspace{0.25cm}
    \begin{subfigure}[t]{0.22\textwidth}
        \centering        \includegraphics[width=\linewidth]{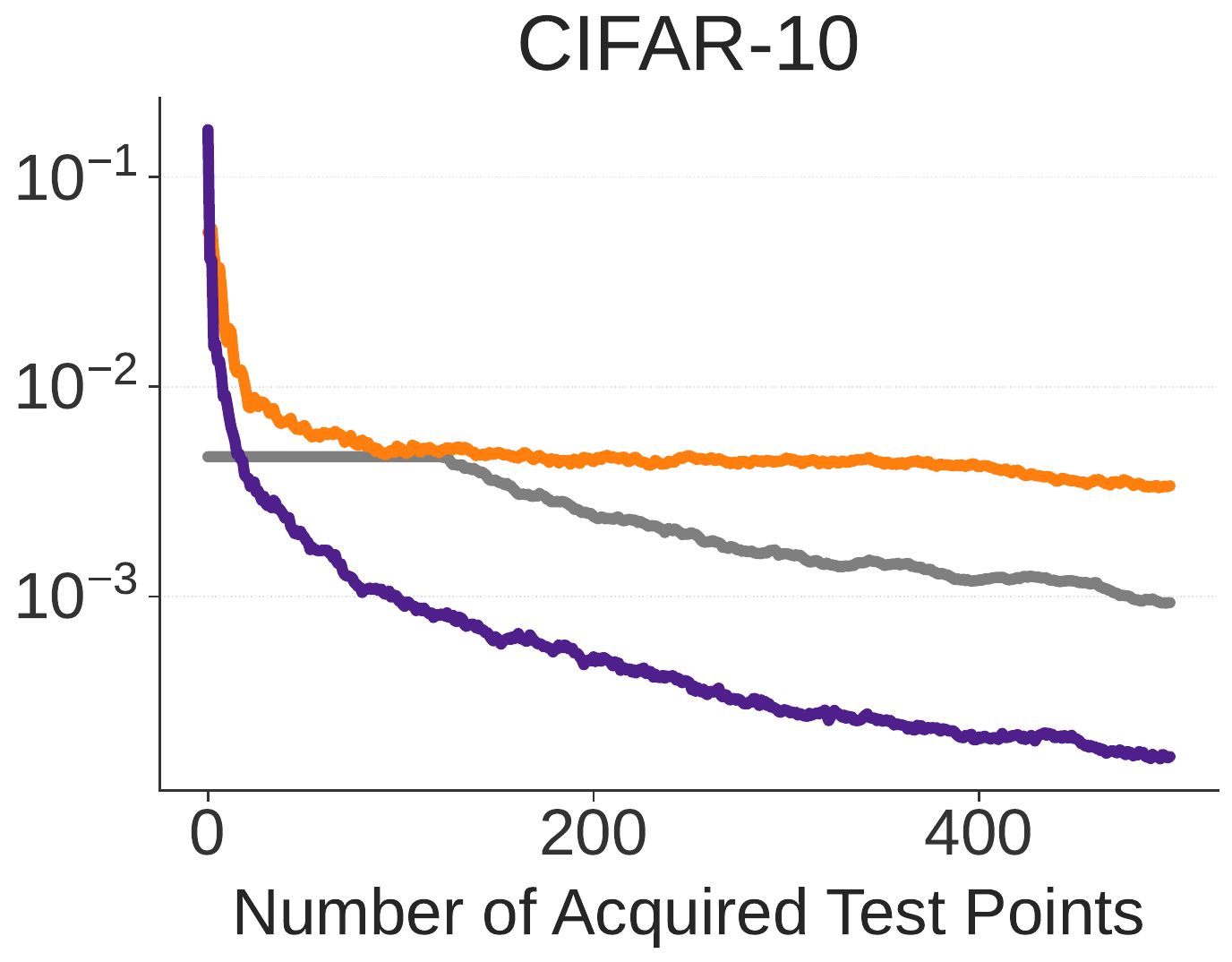}
        \label{fig:acq_ablations_cifar10}
    \end{subfigure}
    \hspace{0.25cm}
    \begin{subfigure}[t]{0.22\textwidth}
        \centering
        \includegraphics[width=\linewidth]{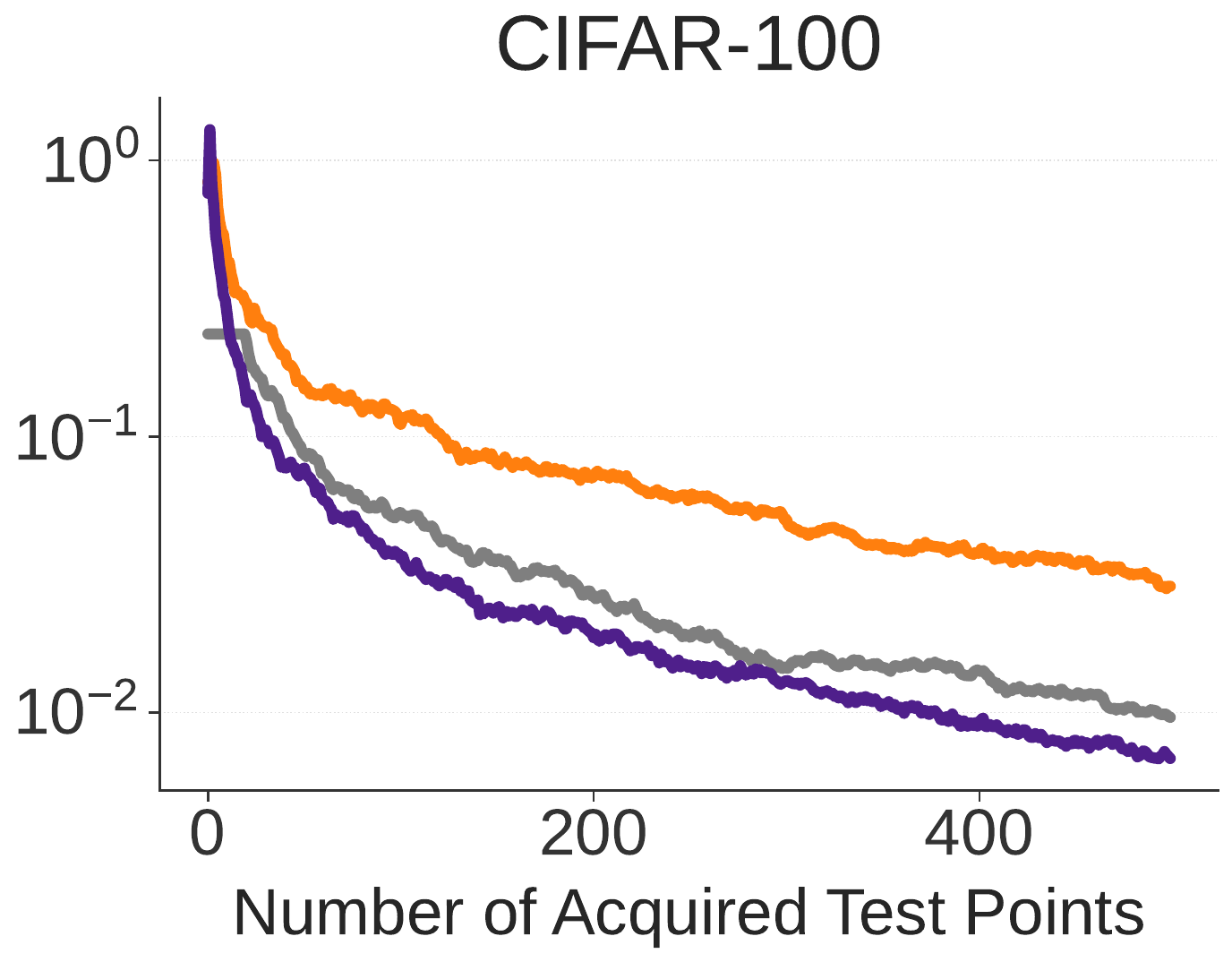}
        \label{fig:acq_ablations_cifar100}
    \end{subfigure}
    \vspace{-0.5cm}
    \caption{Experiments on UCI and classification datasets studying the influence of our acquisition strategy. We compare \textcolor{violet}{PPAT} with $\lambda=1$ to using the PPI estimator with the LURE acquisition strategy (\textcolor{orange}{LURE Acq. + PPI}) and random sampling (\textcolor{gray}{Random + PPI}). Plots show the median squared error across 1000 trials.}
    \label{fig:effect_of_aq}
\end{figure}

\subsection{Coverage}
\label{sub:coverage}
Beyond reducing estimation error, we also want our CIs to be well-calibrated: across repeated active-testing runs, a nominal $(1-\delta$)-CI should contain the true test-pool risk with frequency close to $(1-\delta)$. Fig. \ref{fig:coverage} shows the coverage for the CIs presented in \S\ref{sec:asymp_props_approx_cis} for \colppat{}, \colure{}, and \random{} on four representative datasets. \ase{} is excluded as it \textbf{cannot} provide asymptotically valid CIs without additional assumptions about $f$ and $\pi_m(\cdot|\bx)$. We see that on all datasets, our approach achieves the desired coverage level for all choices of $\lambda$. More importantly, it achieves the desired coverage at either the same or a significantly faster rate than all other approaches. Additionally, our approach attains significantly smaller widths for the confidence intervals than the competing approaches (see  \S\ref{app:ci_widths}).

\begin{figure}[!h]
    \centering    \includegraphics[width=.75\textwidth]{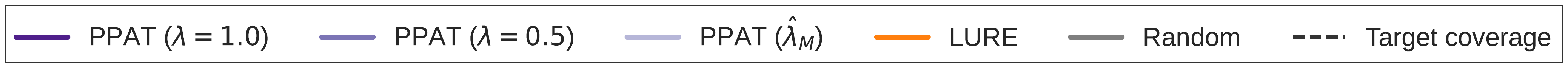
    }
    \vspace{0.75em}    
    
    \begin{subfigure}[t]{0.22\textwidth}
        \centering
        \includegraphics[width=\linewidth]{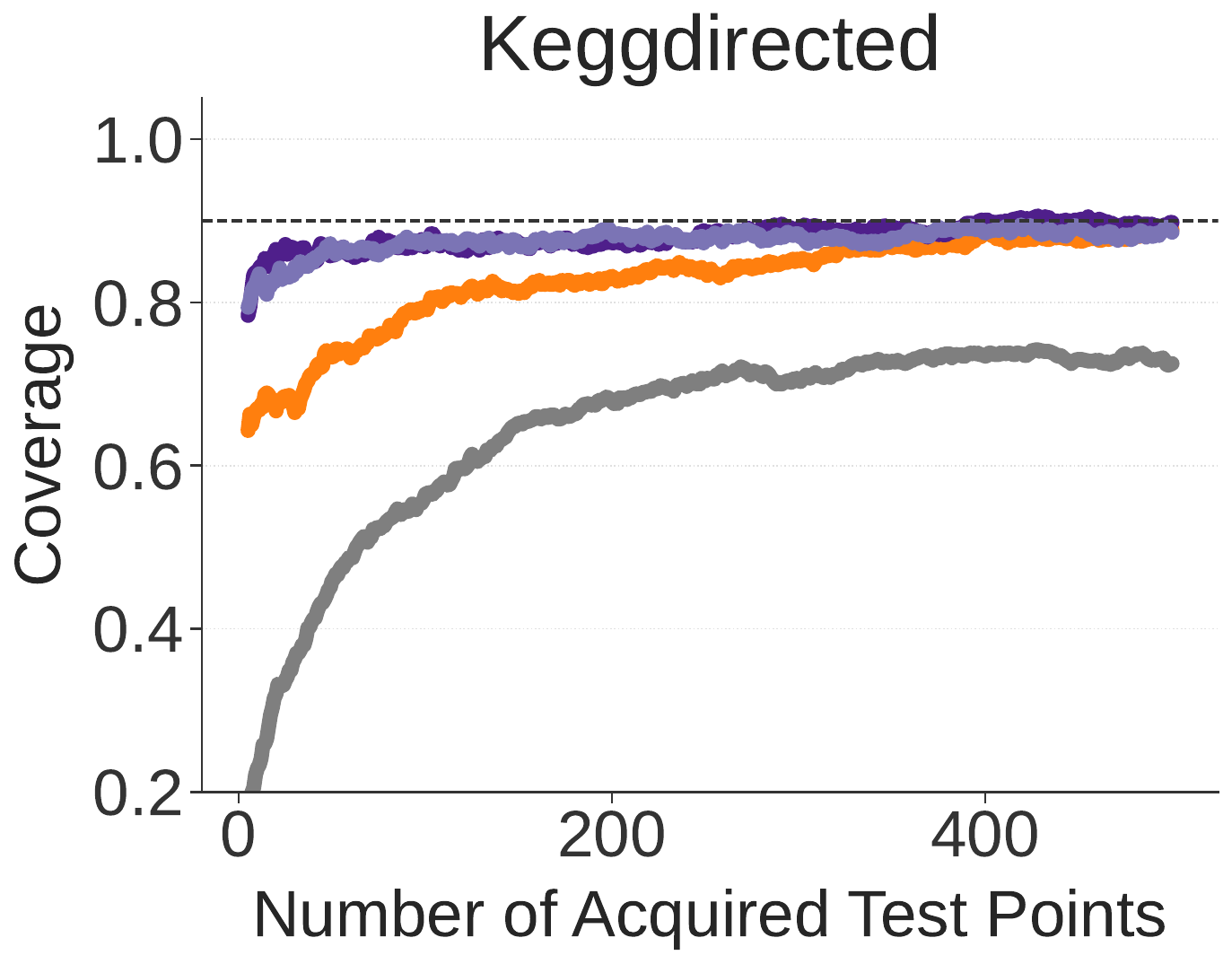}
        \label{fig:coverage_keggdir}
    \end{subfigure}
    \hspace{0.25cm}
    \begin{subfigure}[t]{0.22\textwidth}
        \centering
        \includegraphics[width=\linewidth]{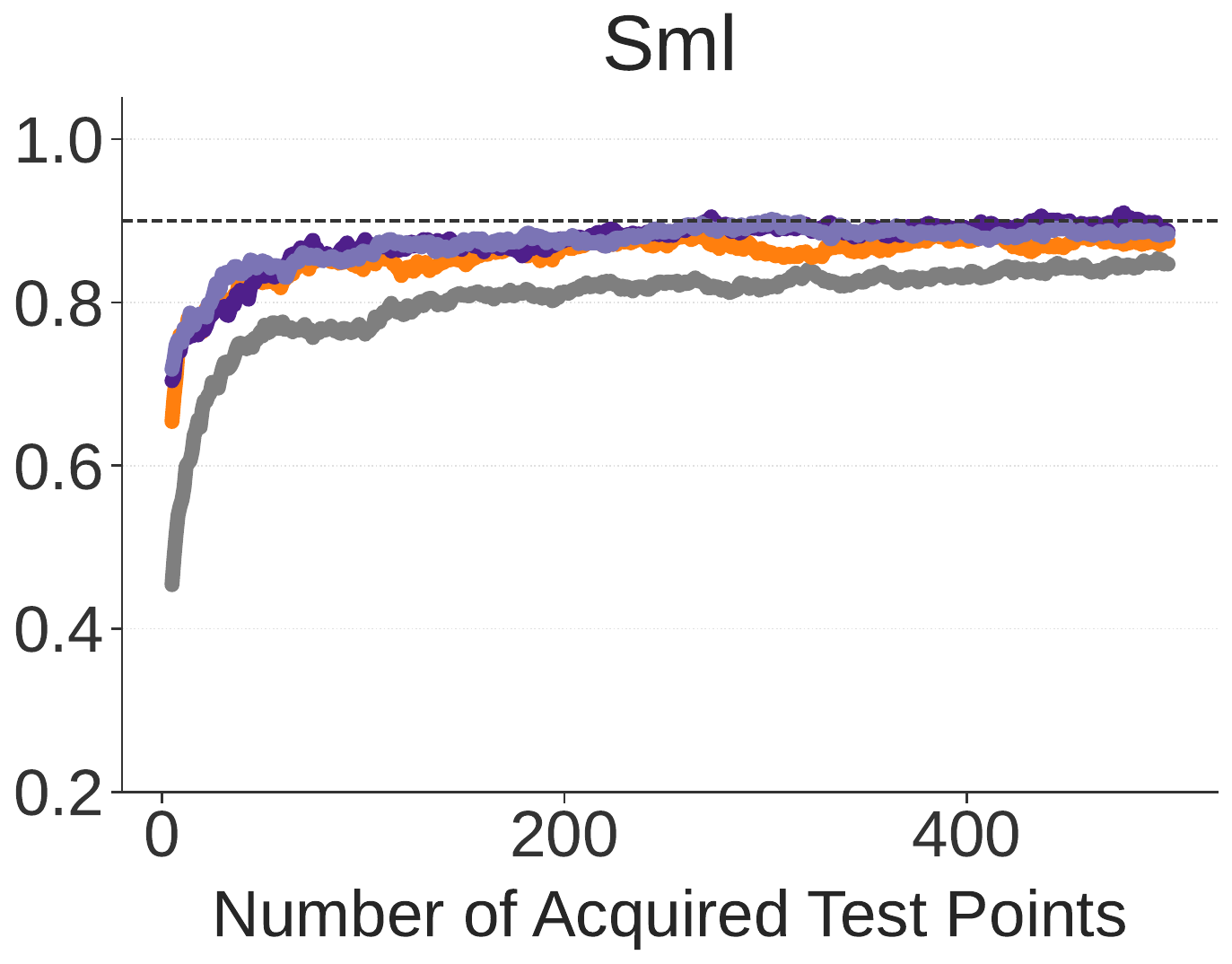}
        \label{fig:coverage_gas}
    \end{subfigure}
    \hspace{0.25cm}
    \begin{subfigure}[t]{0.22\textwidth}
        \centering        \includegraphics[width=\linewidth]{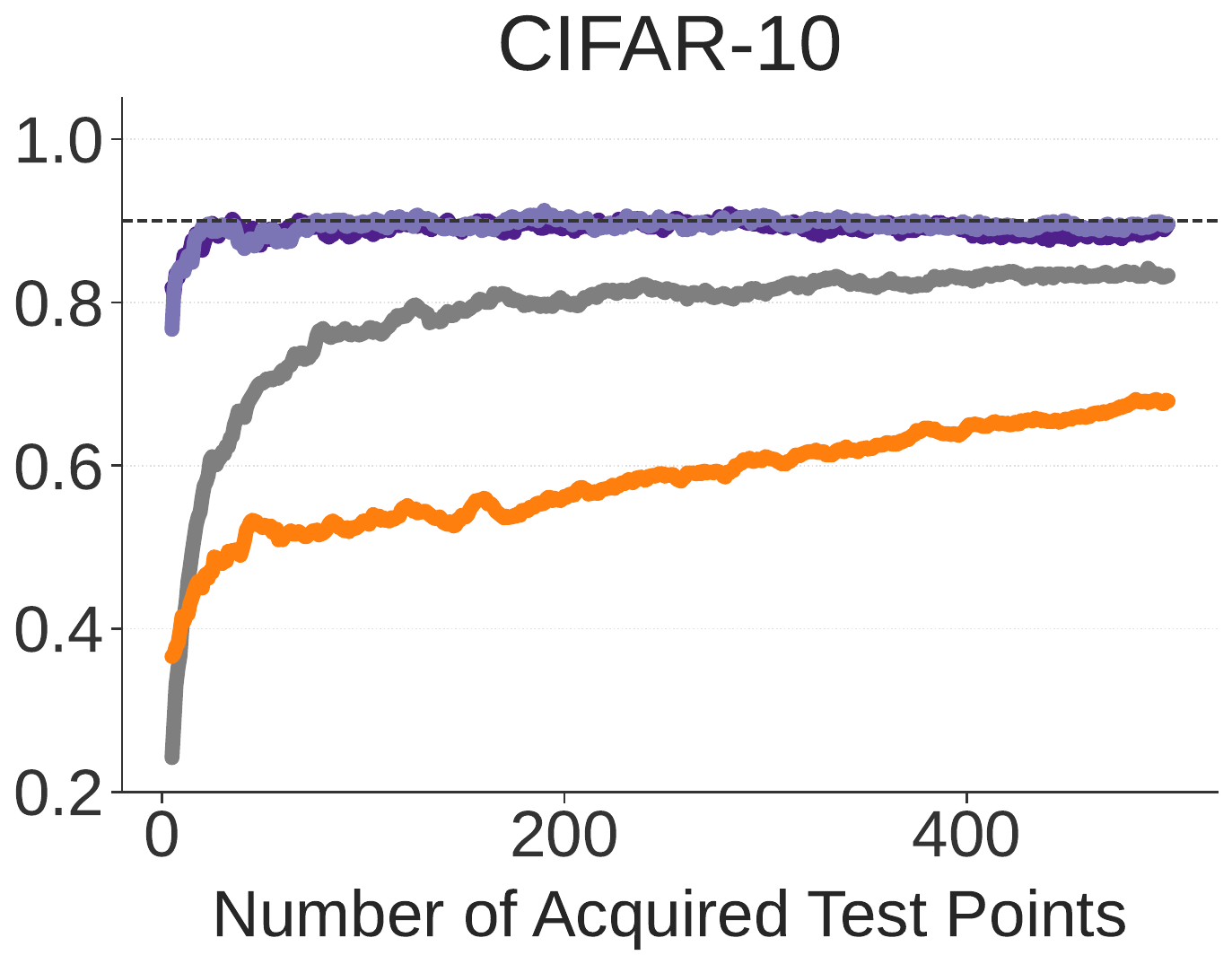}
        \label{fig:coverage_cifar10}
    \end{subfigure}
    \hspace{0.25cm}
    \begin{subfigure}[t]{0.22\textwidth}
        \centering
        \includegraphics[width=\linewidth]{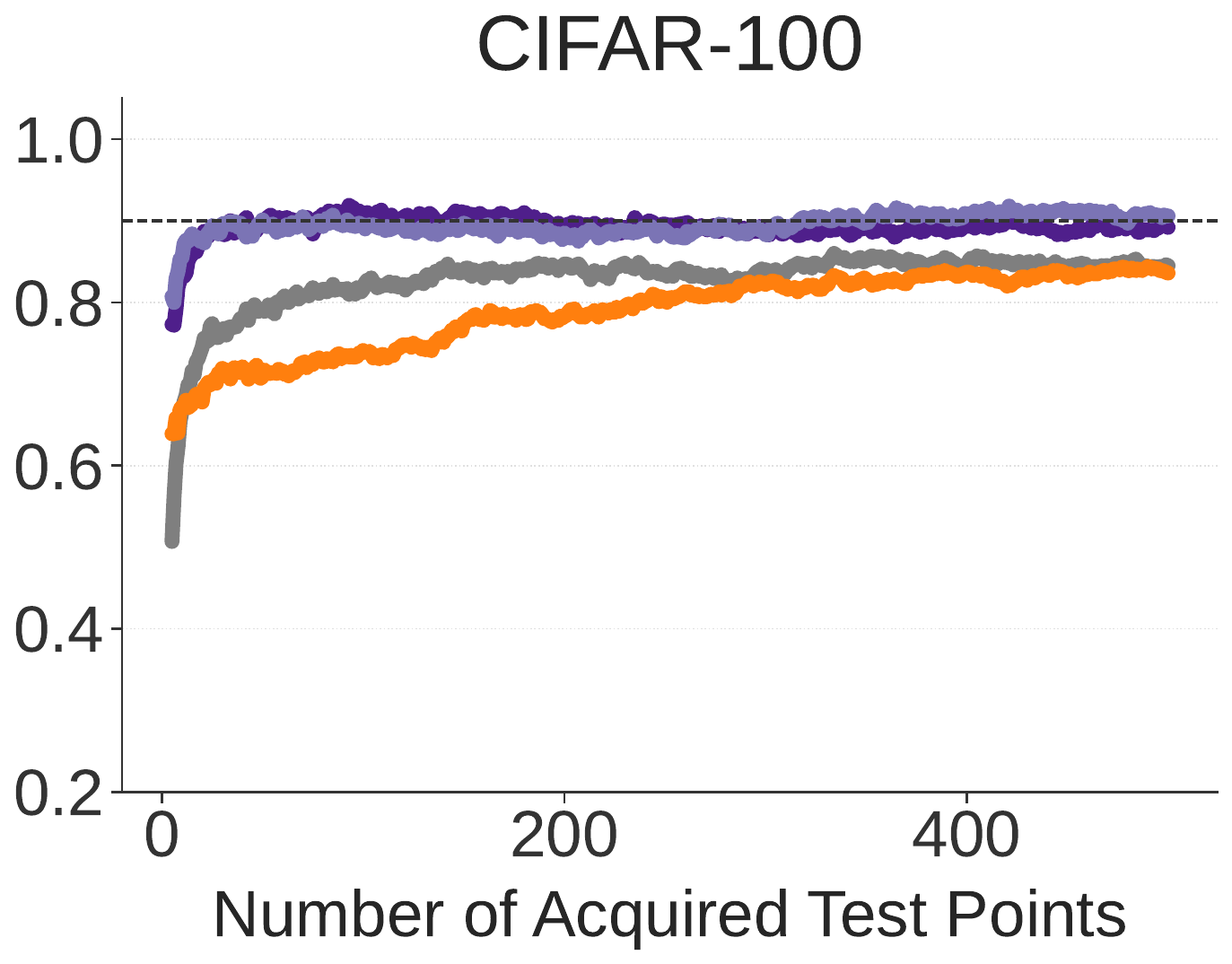}
        \label{fig:coverage_cifar100}
    \end{subfigure}
    \vspace{-0.5cm}
    \caption{Representative experiments on UCI and classification datasets for the coverage of \textcolor{violet}{PPAT}, \textcolor{gray}{Random}, \textcolor{orange}{LURE}. Plots show the coverage of asymptotic confidence intervals at a target error level of $\delta=0.1$ across 1000 trials. 
    The \textcolor{NavyBlue}{ASE} baseline is excluded as it does \textbf{not} provide asymptotically valid confidence intervals. 
    The width of the intervals are shown in Appendix \ref{app:ci_widths}.
    }
    \label{fig:coverage}
\end{figure}

%% file: related_work_main.tex
We review the most closely related work here, and defer a more extensive discussion to \S\ref{app:related_work}.

\paragraph{Active testing.}
PPAT builds directly on the active--testing line of work initiated by \cite{farquhar2021statistical}, who introduced the \emph{Levelled Unbiased Risk Estimator (LURE)}, and \cite{kossen2021active}, who developed it into a sample--efficient model--evaluation framework. A closely related approach is that of \emph{Active Surrogate Estimators (ASE)} \citep{kossen2022active}, which, like PPAT, exploit predictions over the entire pool. ASE, however, does so in a fundamentally different way: rather than correcting an importance--weighted estimator with observed labels, it learns a surrogate for the conditional loss and imputes losses across the pool via interpolation. While this can be successful at very low labelling budgets, ASE loses the theoretical guarantees of LURE, since surrogate misspecification or finite--sample estimation error enters the risk estimate directly. More recently, \cite{berrada2025scaling} scaled LURE--based active testing to large language model (LLM) evaluation using fixed, in--context--learning surrogates.
Our approach is complementary: it can incorporate the same in--context surrogates while additionally leveraging cheap black--box predictions to improve the estimator itself.

\paragraph{Earlier work on label--efficient evaluation.}
Earlier work considered more restricted sampling schemes. Stratification--based methods \citep{online_evaluating_classifiers, ji2021active, active_evaluation_classifier_large_datasets, kumar2018classifier} partitioned the test pool into strata, for example using a simple non–adaptive measure of model confidence, and sampled uniformly within each. PPAT is complementary here: once a stratification is fixed, it can be applied within each stratum to further improve label efficiency. A different line of work considered importance and Poisson sampling: \cite{active_risk_est} used importance sampling \emph{with} replacement, which is suboptimal relative to estimators that explicitly account for sampling without replacement (see Appendix~D of \cite{kossen2021active}); \cite{sample_efficient_model_evaluation} proposed a Poisson--sampling approach, but it is restricted to particular evaluation metrics, uses a non--adaptive acquisition rule, and relies on ratio--type estimates that are generally biased. \cite{nguyen2018active} studied a more specialised active--testing setting focused on human vetting of noisy labels.

%% file: future_work.tex
In this work, we introduced \textbf{Prediction--Powered Active Testing (PPAT)}, a label--efficient framework for risk estimation that combines the unbiased LURE estimator with a prediction--powered control variate. Rather than imputing labels, PPAT uses cheap proxy predictions to residualise the loss, preserving unbiasedness while reducing variance. 
We also established asymptotic normality for PPAT, yielding asymptotic confidence intervals that provide a principled quantification of uncertainty around the risk estimate. Our results show that PPAT improves risk estimation across tabular regression and image--classification tasks, while attaining target coverage with substantially fewer labels and producing narrower confidence intervals than competing approaches.

PPAT also suggests several natural directions for future work. One is to tighten our variance bound by identifying additional, practically reasonable assumptions on the proposals under which it more closely reflects the exact variance. Another is to complement our asymptotic confidence--interval theory with non--asymptotic guarantees that more directly characterise finite--budget behaviour. A further extension is to consider the setting where the proxy model is also updated as labels are acquired. Since PPAT only requires proxy losses over the pool and uses them through a centred control variate, our approach and theoretical results transfer naturally to this setting.


%% file: appendix/appendix.tex
\newpage

\section{Extended Related Work}
\label{app:related_work}
\input{related_work}

\newpage
\section{Further Details on Experimental Setup}
\label{app:further_details_experimental_setup}
\input{appendix/experimental_details}

\newpage
\input{appendix/additional_results/additional_results}

\newpage
\section{Additional Details}
\label{sec:additional_details}
\input{appendix/additional_details/explanation_of_general_surrogate_score}

\input{appendix/additional_details/additional_details}
\input{appendix/additional_details/ppi}
\input{appendix/additional_details/ase}

\newpage
\section{Mathematical Results}
\label{app:mathematical_results}
\input{appendix/mathematical_results/asymptoticresults}

%% file: related_work.tex
Here, we provide a more detailed discussion of related work.

\paragraph{Applications and extensions of active testing.}
Active testing has recently been extended to a range of settings. In the LLM evaluation setting, \cite{huang2026actracer} proposed \emph{AcTracer} which partitions the test pool via hidden--state representations of the target LLM and performs multi--stage sampling using online variance estimates and confidence traces. 
\cite{metaAT} addressed dense computer vision tasks such as segmentation and object detection. \cite{yu2023actively} incorporated active testing into model training, and \cite{ashury2024label, matsuura2023active} studied active testing for model selection.

\paragraph{Benchmark subsampling.}
A related but distinct line of work reduces evaluation cost by constructing smaller test sets from existing \emph{labelled} LLM benchmarks. \cite{maynez2023benchmarking} showed that uniform subsampling can preserve stable model rankings in some settings, and \cite{polo2024tinybenchmarks, saranathan2024dele, vivek2024anchor} developed more sophisticated dataset--reduction methods. These approaches are complementary to ours in their goal of reducing evaluation cost, but address a different setting: they assume an already labelled test set and aim to support comparisons across many models, whereas we study the problem of acquiring new labels to estimate the risk of a given model on a fixed unlabelled test pool.

\paragraph{Active measurement.}
Also related to our work is \emph{Active Measurement} (AM, \citep{hamilton2025active}), which also builds on LURE and combines model predictions with importance sampling without replacement, but targets scientific measurements (e.g., total bird counts) rather than a model's test risk. Our framework uses predictions in fundamentally different ways: AM uses AI predictions to define the proposal itself and iteratively refines the predictor with acquired labels, whereas PPAT uses a fixed proxy as a \emph{control variate} that residualises the loss, with the proposal constructed separately from a surrogate tailored to this residualised objective. 
Moreover, we note that AM assumes their proxy model can be updated as more labels are acquired, which is less general than our setup\footnote{Indeed, our approach and theoretical results naturally transfer to the setting where our proxy model can also be updated as more labels are acquired.}. In principle, AM's iterative proposal refinement and weighting schemes could also be combined with PPAT's control--variate estimator.

\paragraph{Active statistical inference.}
\emph{Active statistical inference} (ASI, \cite{zrnic2024active}) considers adaptive label sampling for statistical inference, proposing batch and sequential methods that yield valid confidence intervals for $M$--estimation problems. Recent work revisits the sequential mean-estimation setting and provides non--asymptotic guarantees for online policies that choose the probability of querying each ground--truth label \citep{sfyraki2026revisiting}. While risk estimation is a particular instance of $M$--estimation, their framework is not framed around model evaluation, and their sequential formulation is closer to online selective labelling than to pool--based acquisition: labels are acquired point--by--point from a stream rather than selected from the full remaining pool. This differs from our setting, where we focus on pool--based risk estimation and use black--box predictions as control variates within an unbiased active--testing estimator. We compare with ASI in \S\ref{app:comparison_with_asi} and find that we can outperform their approach in their ``active'' setting while achieving similar performance in the ``batch'' setting.

\paragraph{Prediction--powered inference and classical augmented estimators.}
\emph{Prediction--powered inference} (PPI, \cite{angelopoulos2023prediction}) was introduced to construct tighter confidence intervals in the semi--supervised setting by leveraging machine learning predictions, and has since been extended in several directions: PPI++ \citep{angelopoulos2023ppi++} provides a computationally efficient, loss--based formulation with a power--tuning parameter $\lambda$; Stratified PPI \citep{fisch2024stratified} improves efficiency via data stratification; and Cross--PPI \citep{zrnic2024cross} trains the predictor within the same prediction--powered pipeline.  PPI is also related to classical augmented estimators: in survey sampling, regression estimators use auxiliary variables to improve the efficiency of the estimator \citep{cochran1977sampling}; in causal inference, augmented inverse probability weighted estimators combine inverse--probability weighting with an outcome--model correction and are doubly robust when either the propensity model or the outcome model is correctly specified \citep{robins1994estimation,bang2005doubly}. 

Our work is related to these works in spirit but differs in both objective and setting: rather than semi--supervised inference with i.i.d.\ labelled and unlabelled data, we study active risk estimation over a finite test pool, using black--box predictions as a control variate inside an importance--weighted active--testing estimator.

%% file: appendix/experimental_details.tex
Here, we provide further details for the experiments in \S\ref{sec:experiments}.

\subsection{Section \ref{sub:main_results}}
\label{app:regression_exp_details}
\label{app:main_results_details}

Here, we provide details for the experiments in \S\ref{sub:main_results}.

\subsubsection{Regression Experiments}
\label{app:regression_exp-app_details}

\paragraph{Datasets.} Below, we provide further details on the UCI datasets used. We use $n$ to denote the full dataset size and $d$ to denote the number of features.
\begin{itemize}
    \item \texttt{Keggundirected} ($n=63{,}608$, $d=27$): A biology dataset derived from KEGG metabolic pathways represented as undirected reaction networks. The inputs are graph-level descriptors of pathway topology.

    \item \texttt{Keggdirected} ($n=48{,}827$, $d=20$): A related KEGG dataset where metabolic pathways are represented as directed relation networks. The features summarise structural properties of these directed pathway graphs.

    \item \texttt{Bike} ($n=17{,}379$, $d=17$): Hourly bike--rental data from the Capital Bikeshare system in 2011--2012. The covariates include weather, seasonal, and calendar information.

    \item \texttt{Sml} ($n=4{,}137$, $d=26$): A smart--home monitoring dataset collected over approximately 40 days in a domotic house. The inputs include indoor, outdoor, weather, humidity, lighting, and actuator measurements.
\end{itemize}

We load each of the above datasets from the \href{https://github.com/treforevans/uci_datasets}{\texttt{uci\_datasets}} repository. For each dataset, we randomly sample a fixed training set of 250 points which we use to train the model that we wish to evaluate. The remainder of the dataset is used as the test pool.

\paragraph{Models.} We choose $f$ to be a Gaussian process regression model with a Radial Basis Function (RBF) kernel. We optimise its hyperparameters using the marginal likelihood, with the lengthscale initialised at $1.0$. We use the \texttt{Scikit-Learn} implementation \citep{scikit-learn}.

For our surrogate model, we use a Bayesian linear regression model \citep{bayesridge} with the default hyperparameters from \texttt{Scikit-Learn} \citep{scikit-learn} and ablate with different choices of surrogate models in \S\ref{app:diff_surrogates}. We again use the \texttt{Scikit-Learn} implementation.

For the choice of $g$ for \colppat{}, we use the \texttt{TabPFN–2.5} \citep{grinsztajn2025tabpfn} foundation model and ablate with different choices of $g$ in \S\ref{app:diff_proxies}. To make predictions on our test pool, we use in--context learning \citep{grinsztajn2025tabpfn}, where we use our training data as the in--context data.

\paragraph{Baselines.}  We compare \colppat{} with \colure{} \citep{kossen2021active}, \ase{} \cite{kossen2022active}, and \random{} (sampling points randomly without replacement and using the unweighted estimator in \eqref{eqn:unweighted_estimator}). All approaches use the same surrogate model. For \colppat, we compare with the three different choices of $\lambda$ described in \S\ref{sub:lambda_estimation}.
To compute the acquisition scores for \ase{}, we use 100 Monte Carlo samples for each point: 10 samples from the posterior on the model parameters, and 10 samples from the model likelihood. See \S\ref{app:extended_background_ase} for more details about the \ase{}  acquisition function.

For \colppat{} and \colure{}, we compute the acquisition scores using the simplified formulas discussed in \S\ref{sub:simplification_proposal_mse_loss}, which can be readily computed using the mean prediction and predictive variance output by the surrogate model. For the \colppat{} plug--in estimate of $\lambda$, $\widehat{\lambda}_M$, we initialise it at $0.5$ and update it after every 100 acquired points using \eqref{eqn:plugin_lambda}.

\paragraph{Active testing setup and metrics.} We take our loss function to be the squared error . We use a budget of $M=500$ points and update our surrogate model after each newly acquired point. Each experiment is run for 1000 trials. Following \cite{kossen2021active, kossen2022active}, we report the median squared error. The mean error is presented in  \S\ref{app:mean_errors}. Here, the error is defined as the difference between our estimate of the risk and the true risk on the test pool.


\subsubsection{Classification Experiments}
\label{app:classification_exp_details}

\paragraph{Datasets.} Below, we provide further details about the classification dataset used. We provide the size of the train and test sets used as well as the resolution of the original images. Note that for \texttt{Tiny-ImageNet} we use the validation set as the test set, this is because the original test set is \emph{unlabelled}.
\begin{itemize}
    \item \texttt{CIFAR--10} (\cite{krizhevsky2009learning}, $50{,}000$ train, $10{,}000$ test; $32 \times 32 \times 3$): A natural--image classification dataset with $10$ object classes. Each class contains $6{,}000$ images.
    \item \texttt{CIFAR--100} (\cite{krizhevsky2009learning}, $50{,}000$ train, $10{,}000$ test; $32 \times 32 \times 3$): A more fine--grained version of \texttt{CIFAR--10} with $100$ object classes. Each class contains $600$ images, making the classification task harder.
    \item \texttt{Tiny--ImageNet} (\cite{Le2015TinyIV}, $100{,}000$ train, $10{,}000$ test; $64 \times 64 \times 3$): A reduced version of ImageNet with $200$ classes. It contains higher--resolution images and substantially more classes than \texttt{CIFAR-10} and \texttt{CIFAR-100}.
\end{itemize}
We load each dataset from the \href{https://github.com/pytorch/vision}{\texttt{torchvision}} \citep{torchvision2016} library.
For each dataset, we respect the original splits and use the provided training set and test set, where the test set is treated as the test pool and the training set is used to train $f$.
\paragraph{Models.} 
In line with standard practice in modern image--classification pipelines \citep{chen2020simple, grill2020bootstrap, radford2021learning, oquab2023dinov2, plested2026deep}, we use
a fixed pretrained image encoder to map each input \(\bx\) to an embedding
$e(\bx)$, and train lightweight prediction models on top of these embeddings. Here, the encoder $e$ is a map $e:\mathbb{R}^{d_1} \to \mathbb{R}^{d_2}$, where $d_1$ is the size of the input space, $d_2$ is the size of the embeddings and $d_2 \ll d_1$. In all of our
classification experiments, we use embeddings from a CLIP image encoder \citep{radford2021learning}. Specifically, we use the \texttt{ViT-B-32} encoder with model card \texttt{openai/clip-vit-base-patch32} from the \href{https://github.com/openai/CLIP/tree/main}{\texttt{CLIP}} library.

The model \(f\) whose risk we evaluate is a linear classifier trained on the CLIP
embeddings. This corresponds to the standard linear probing setup, where a linear
prediction head is trained on top of a frozen representation to evaluate the
usefulness of the learned features
\citep{alain2016understanding,kornblith2019better,chen2020simple}. For this, we use the logistic regression model class from \texttt{Scikit-Learn} \citep{scikit-learn}, where we set the maximum number of iterations to 300 and keep the remaining hyperparameters at their default values.

For the active--testing surrogate, it is important to use a model with well--calibrated
predictive uncertainties, since our acquisition rules depend on expectations
under the surrogate predictive distribution \(\pi_m(\cdot\mid \bx)\). We therefore use
a Laplace--approximated Bayesian neural network
\citep{bayesinterpolation, daxberger2021laplace}, implemented using the \href{https://github.com/runame/laplace-redux}{\texttt{laplace-redux}} library. The surrogate is an MLP trained on the CLIP embeddings, with three hidden layers
of width 128 for \texttt{CIFAR-10} and \texttt{CIFAR-100}, and five hidden layers
of width 128 for \texttt{Tiny-ImageNet}. We train the network for 200 epochs
using Adam \citep{kingma2014adam} with batch size 64, learning rate \(10^{-3}\), and weight decay
\(10^{-4}\). We then fit a last-layer Laplace approximation with a Kronecker--factored
structure \citep{ritter2018a}, optimising the prior precision by maximising the
marginal likelihood. For prediction, we use the linearised predictive distribution induced by the Laplace posterior, which gives an approximate Gaussian distribution over the latent network
outputs around the trained weights \citep{immer2021improving}. We then obtain class probabilities by Monte Carlo through the softmax link function using 100 samples. We again ablate with different surrogates in \S\ref{app:diff_surrogates}.

For the choice of $g$, we use the zero--shot predictions of a pretrained CLIP \citep{radford2021learning} model, specifically the \texttt{ViT-L-14} variant from the \texttt{sentence--transformers} library \citep{reimers-2019-sentence-bert} with model card \texttt{ViT-L-14::datacomp\_xl\_s13b\_b90k}. We again ablate with different choices of $g$ in \S\ref{app:diff_proxies}.
\paragraph{Baselines.} All our baselines use the same surrogate model and follow the same setup as for the regression datasets. To compute the acquisition scores for \ase{}, we use 100 samples from the posterior on the model parameters. Note that for classification problems we do not need to approximate the expectation over $Y$ as $y$ is discrete. 

\paragraph{Active testing setup and metrics.} We take our loss to be cross--entropy. The rest of our setup is the same as for the regression problems, with the exception that we do not update our surrogate model, in line with the image--classification experiments in \citep{kossen2021active, kossen2022active}.


\subsection{Section \ref{sec:ablations}}
\label{app:ablations_detials}

Here, we provide details for the experiments in Figs. \ref{fig:effect_of_estimator} and \ref{fig:effect_of_aq}.

\paragraph{The choice of the acquisition strategy.} The regression and classification experiments in Fig. \ref{fig:effect_of_aq} follow the same setup as detailed in \S\ref{app:regression_exp-app_details} and \ref{app:classification_exp_details}. The \textcolor{orange}{PPAT Acq. + LURE} approach uses the PPAT proposal \eqref{eqn:surrogate_proposal_2} and the LURE estimator \eqref{eq:lureestimator}; the \textcolor{gray}{PPAT Acq. + Empirical} approach again uses the same PPAT proposal but with the unweighted estimator \eqref{eqn:unweighted_estimator}; \colppat{} uses its standard estimator and acquisition rule. We use $\lambda=1$ for all the approaches.

\paragraph{The choice of the estimator.} The regression and classification experiments in Fig. \ref{fig:effect_of_estimator} follow the same setup as detailed in \S\ref{app:regression_exp-app_details} and \ref{app:classification_exp_details}. The \textcolor{orange}{LURE Acq. + PPI} approach uses the LURE proposal and the PPI estimator \eqref{eq:ppatestimator}\footnote{Note that we slightly abuse terminology here as \eqref{eq:ppatestimator} is not the PPI estimator described in \citep{angelopoulos2023prediction}, but instead the unbiased, importance weighted  version of it. For random sampling, the two coincide.}; the \textcolor{gray}{Random + PPI} approach uses random sampling in combination with the PPI estimator; \colppat{} uses its standard estimator and acquisition rule. We use $\lambda=1$ for all the approaches.


\subsection{Section \ref{sub:coverage}}
\label{app:coverage_details}

Here, we provide details for the coverage experiments in \S\ref{sub:coverage} (Fig. \ref{fig:coverage}). The regression and classification experiments follow the same setup as detailed in \S\ref{app:regression_exp-app_details} and \ref{app:classification_exp_details}. We use an error level of $\delta=0.1$ and compute the confidence intervals using \eqref{eqn:asymptotic_valid_cis}, noting that \random{} is a special case of \colure{} with $V_m=1, ~\forall m$. Coverage is computed with respect to the true test risk on the full pool for each labelling budget. We report the fraction of the 1000 trials for which the confidence interval contains this full--pool test risk. 


\subsection{Computational Resources and Licenses}
\label{app:resources}
All our experiments were run on a single NVIDIA H100 80GB GPU. Below, we provide the licenses for the packages and datasets used in this work.

\begin{table}[!h]
\centering
\footnotesize
\setlength{\tabcolsep}{3pt}
\renewcommand{\arraystretch}{1.12}
\caption{Software packages used in our experiments.}
\vspace{0.1cm}
\label{tab:software_licenses}

\begin{tabularx}{\linewidth}{p{0.18\linewidth}|p{0.32\linewidth}|p{0.15\linewidth}|X}
\hline
\textbf{Project} & \textbf{Citation} & \textbf{License} & \textbf{URL} \\
\hline
\texttt{NumPy} & \cite{harris2020array} & BSD & \url{https://numpy.org} \\
\hline
\texttt{SciPy} & \cite{virtanen2020scipy} & BSD & \url{https://scipy.org} \\
\hline
\texttt{scikit-learn} & \cite{scikit-learn} & BSD & \url{https://scikit-learn.org} \\
\hline
\texttt{Matplotlib} & \cite{hunter2007matplotlib} & PSF--based & \url{https://matplotlib.org} \\
\hline
\texttt{pandas} & \cite{pandas2020zenodo}; \cite{mckinney2010data} & BSD & \url{https://pandas.pydata.org} \\
\hline
\texttt{uci\_datasets} & \cite{evans_uci_datasets} & MIT & \url{https://github.com/treforevans/uci_datasets} \\
\hline
\texttt{CLIP} & \cite{radford2021learning} & MIT & \url{https://github.com/openai/CLIP} \\
\hline
\texttt{sentence-\allowbreak transformers} & \cite{reimers-2019-sentence-bert} & Apache-2.0 & \url{https://www.sbert.net} \\
\hline
\texttt{torchvision} & \cite{torchvision2016} & BSD & \url{https://pytorch.org/vision} \\
\hline
\texttt{laplace-redux} & \cite{daxberger2021laplace} & MIT & \url{https://github.com/runame/laplace-redux} \\
\hline
\texttt{wandb} & \cite{biewald2020experiment} & MIT & \url{https://wandb.ai} \\
\hline
\texttt{transformers} & \cite{wolf2020transformers} & Apache-2.0 & \url{https://huggingface.co/docs/transformers} \\
\hline
\texttt{timm} & \cite{wightman2019timm} & Apache-2.0 & \url{https://huggingface.co/docs/timm} \\
\end{tabularx}
\end{table}

%% file: appendix/additional_results/additional_results.tex
\begin{table}[!h]
\centering
\caption{Datasets used in our experiments.}
\label{tab:dataset_licenses}
\vspace{0.15cm}

\small
\setlength{\tabcolsep}{4pt}
\renewcommand{\arraystretch}{1.15}
\begin{tabular}{p{0.27\linewidth}|p{0.17\linewidth}|p{0.18\linewidth}|p{0.30\linewidth}}
\hline
\textbf{Dataset} & \textbf{Citation} & \textbf{License} & \textbf{URL} \\
\hline
\texttt{CIFAR-10}
&
\cite{krizhevsky2009learning}
&
Not specified
&
\url{https://www.cs.toronto.edu/~kriz/cifar.html}
\\
\hline
\texttt{CIFAR-100}
&
\cite{krizhevsky2009learning}
&
Not specified
&
\url{https://www.cs.toronto.edu/~kriz/cifar.html}
\\
\hline
\texttt{Tiny--ImageNet}
&
\cite{Le2015TinyIV}
&
Not specified
&
\url{http://cs231n.stanford.edu/tiny-imagenet-200.zip}
\\
\hline
\texttt{keggdirected}
&
\cite{naeem2011keggdirected}
&
CC BY 4.0
&
\url{https://doi.org/10.24432/C5CK52}
\\
\hline
\texttt{keggundirected}
&
\cite{naeem2011keggundirected}
&
CC BY 4.0
&
\url{https://doi.org/10.24432/C5G609}
\\
\hline
\texttt{bike}
&
\cite{fanaee2013bike}
&
CC BY 4.0
&
\url{https://doi.org/10.24432/C5W894}
\\
\hline
\texttt{sml}
&
\cite{romeu2014sml2010}
&
CC BY 4.0
&
\url{https://doi.org/10.24432/C5RS3S}
\\
\end{tabular}
\end{table}

\newpage
\section{Additional Results}
\label{app:additional_results}

\subsection{Empirical Verification of $\lambda^\dagger$}
\label{app:verifying_choice_of_lamba}

Here, we empirically validate the suitability of our target $\lambda^\dagger$. To do this, we compare \colppat{} run with the fixed value of $\lambda^\dagger$, where $\lambda^\dagger$ is computed on the full pool, with \colppat{} run with $\lambda \in \{0.1, 0.25, 0.5, 0.75, 1.0, 1.5, 2.5, 5.0\}$. For the experiments, we use one regression dataset (\texttt{Keggdirected}) and one classification dataset (\texttt{CIFAR-10}), and follow the same setup in \S\ref{app:regression_exp-app_details} and \S\ref{app:classification_exp_details}.

\begin{figure}[!h]
    \centering
    \includegraphics[width=.32\textwidth]{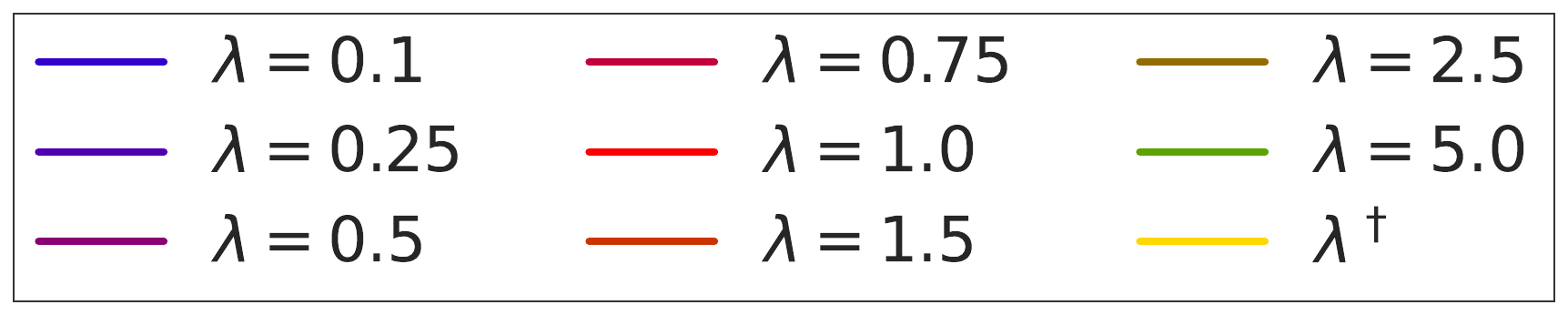}
    \vspace{0.75em} 
    
    \begin{subfigure}[t]{0.25\textwidth}
        \centering
        \includegraphics[width=\linewidth]{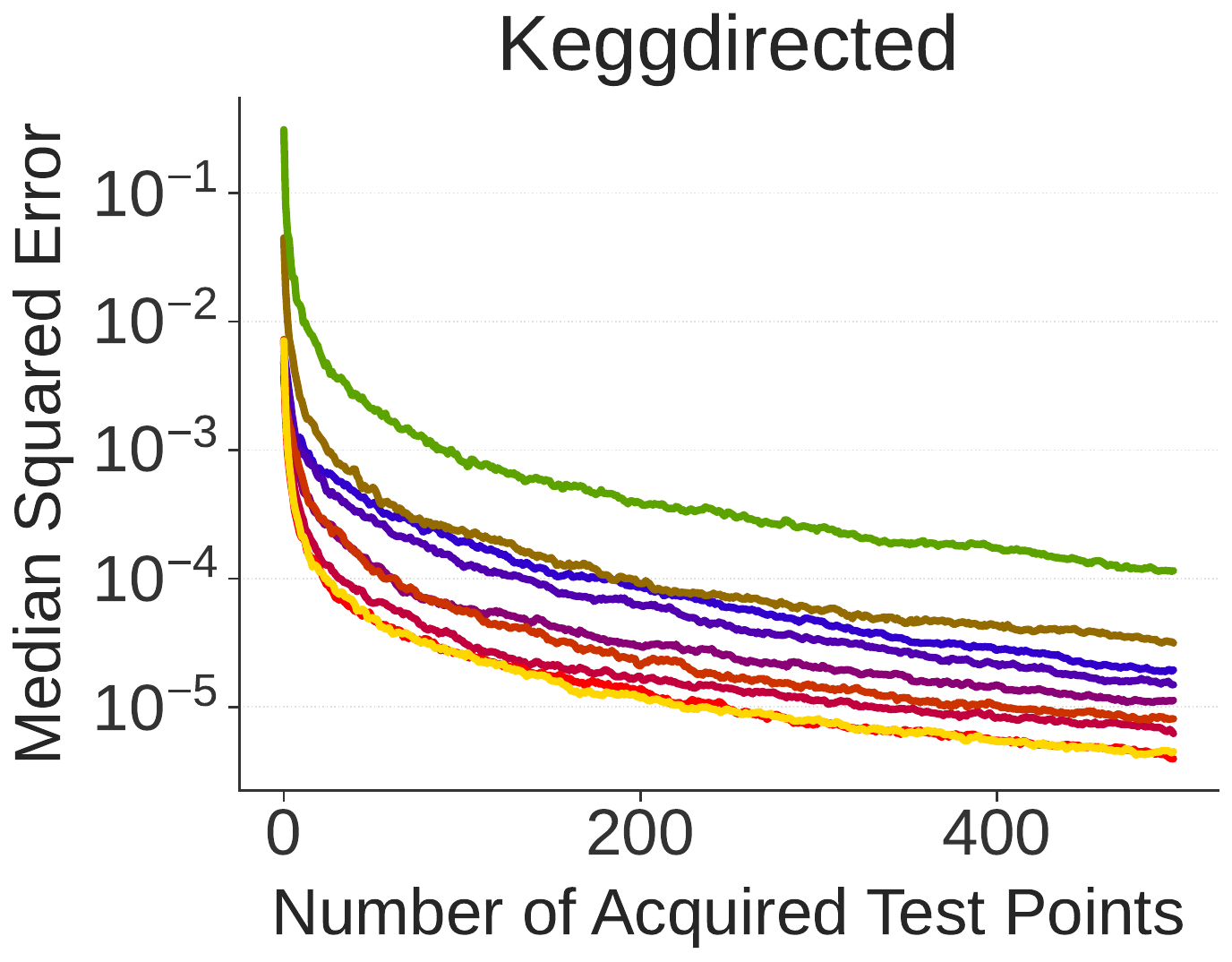}
        \label{fig:kegg_justifying_lambda_dagger}
    \end{subfigure}
    \hspace{0.5cm}
    \begin{subfigure}[t]{0.25\textwidth}
        \centering
        \includegraphics[width=\linewidth]{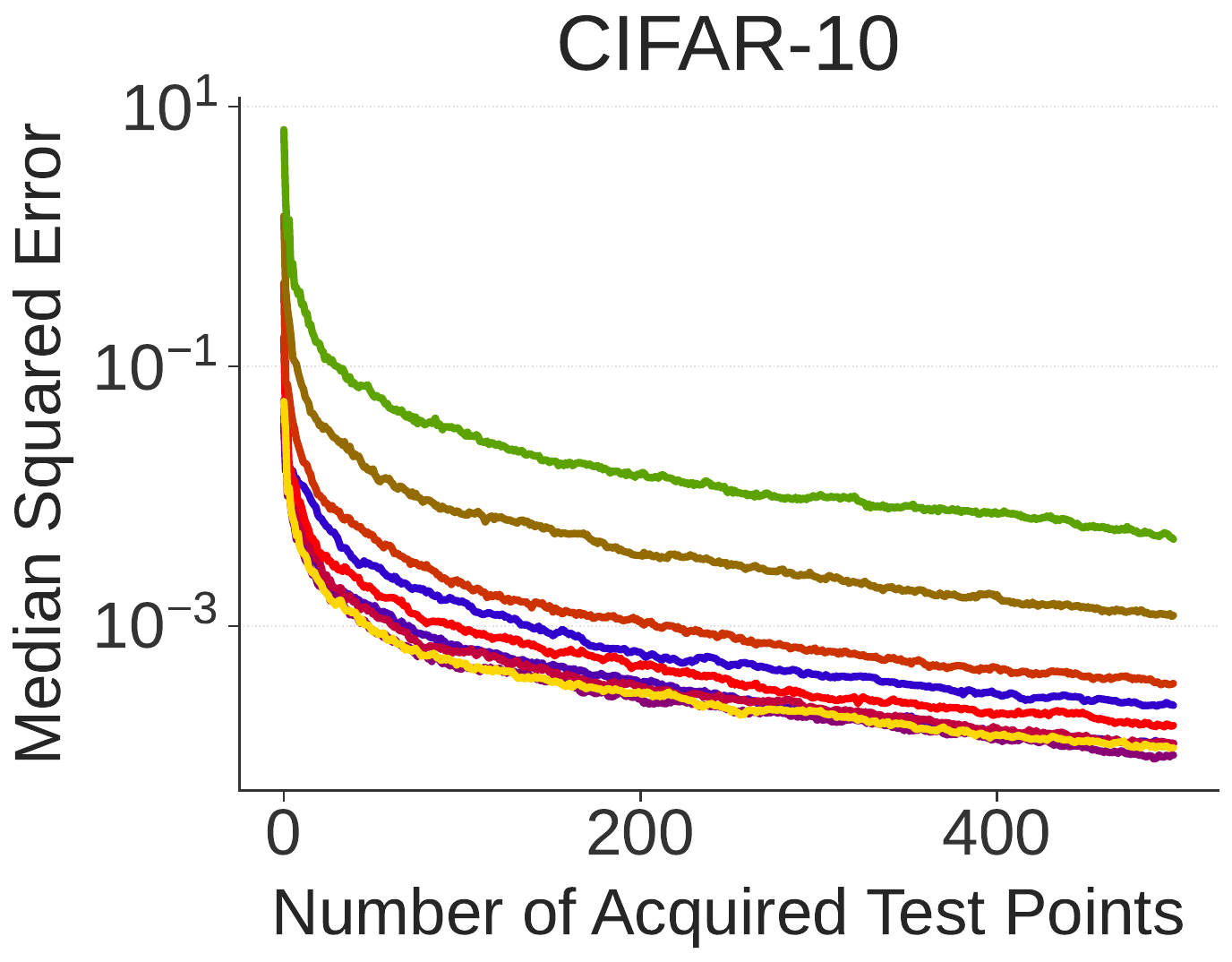}
        \label{fig:cifar_justfying_lambda_dagger}
    \end{subfigure}
    \caption{Plot of \colppat{} for a range of fixed $\lambda$ values and $\lambda^\dagger$ for the datasets \texttt{Keggdirected} and \texttt{CIFAR-10}. $\lambda^\dagger$ is computed on the full pool. The plots report the median squared error over 1000 trials.}
    \label{fig:justfying_lambda_dagger}
\end{figure}

Fig. \ref{fig:justfying_lambda_dagger} plots the median squared error across 1000 trials for all the different values of $\lambda$ described previously as well as $\lambda^\dagger$. As we can see, $\lambda^\dagger$ achieves the same median squared error as the best $\lambda$ for both datasets. Besides the discussion in \S\ref{sub:lambda_estimation}, this further supports the use of $\lambda^\dagger$ as a suitable $\lambda$ to target instead of the intractable $\lambda^\star$.


\subsection{Additional Ablations}
\label{app:further_ablations}

Here, we ablate with different proxy models and different surrogate models.


\subsubsection{Different Proxies}
\label{app:diff_proxies}

To ablate with different proxy models, we use one regression dataset (\texttt{Keggdirected}) and one classification dataset (\texttt{CIFAR-10}), and follow the same setup as in \S\ref{app:regression_exp-app_details} and \ref{app:classification_exp_details}. For the regression dataset, we compare with the \texttt{TabPFN--2.0} foundation model, where we make predictions on the pool using in--context learning with the context set being our training set in the same way as before. For the classification case, we compare with the \texttt{ViT-B-32} and \texttt{ViT-B-16} variants of the CLIP model from the \texttt{sentence-transformers} library\footnote{Their corresponding model cards are \texttt{ViT-B-32::datacomp\_xl\_s13b\_b90k} and \texttt{ViT-B-16::datacomp\_xl\_s13b\_b90k}} \citep{reimers-2019-sentence-bert}. 

From Fig. \ref{fig:diff_proxies}, we see that across both datasets, \colppat{} remains robust to different choices of the proxy model. Indeed, for all proxy choices, \colppat{} still retains the lowest median squared error for all choices of $\lambda$.

\begin{figure}[!h]
    \centering
    \includegraphics[width=.75\textwidth]{figures/uci/uci_main_legend.pdf}
    \vspace{0.75em} 
    
    \begin{subfigure}[t]{0.25\textwidth}
        \centering
        \includegraphics[width=\linewidth]{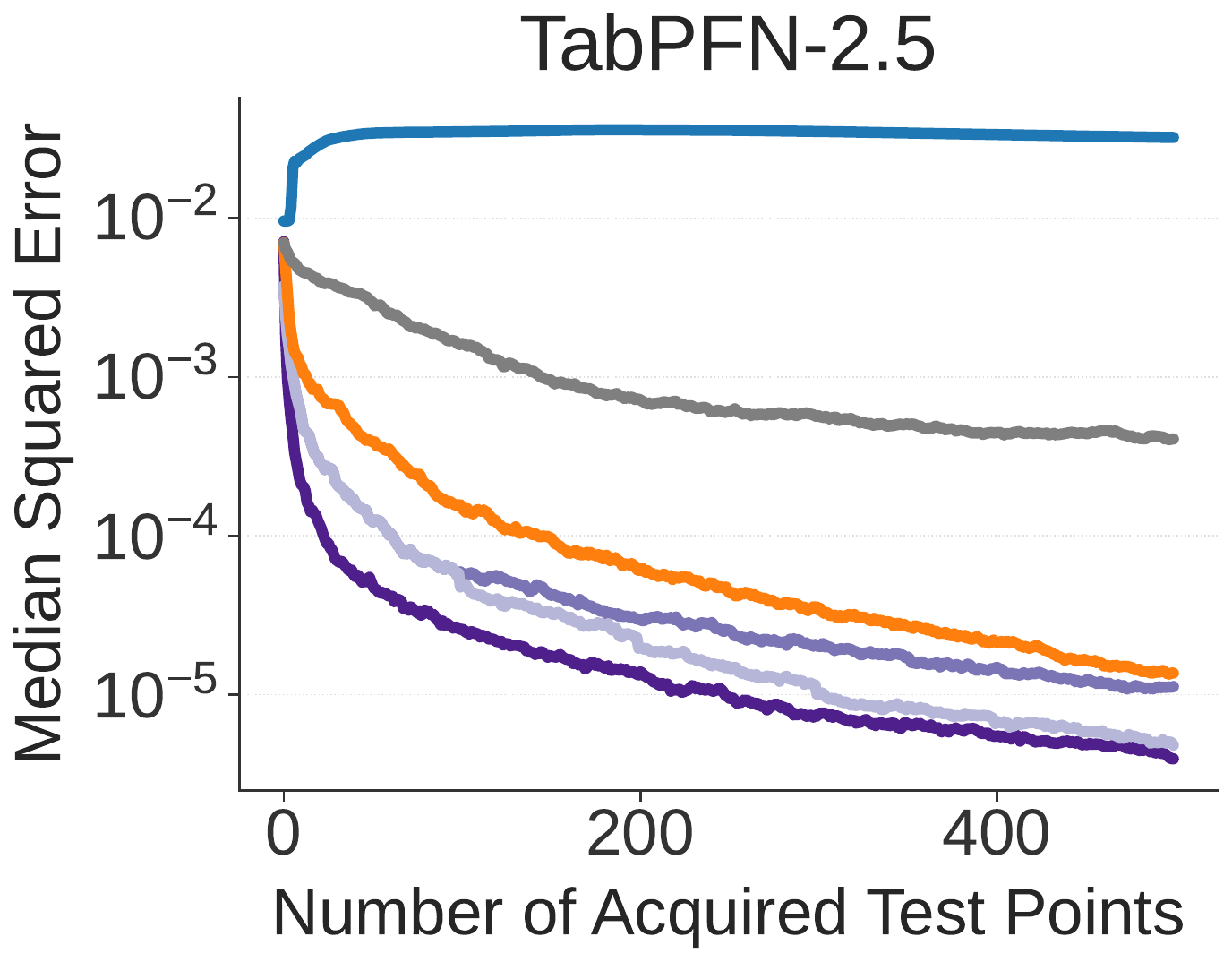}
        \label{fig:tabpfn2.5}
    \end{subfigure}
    \hspace{0.5cm}
    \begin{subfigure}[t]{0.25\textwidth}
        \centering
        \includegraphics[width=\linewidth]{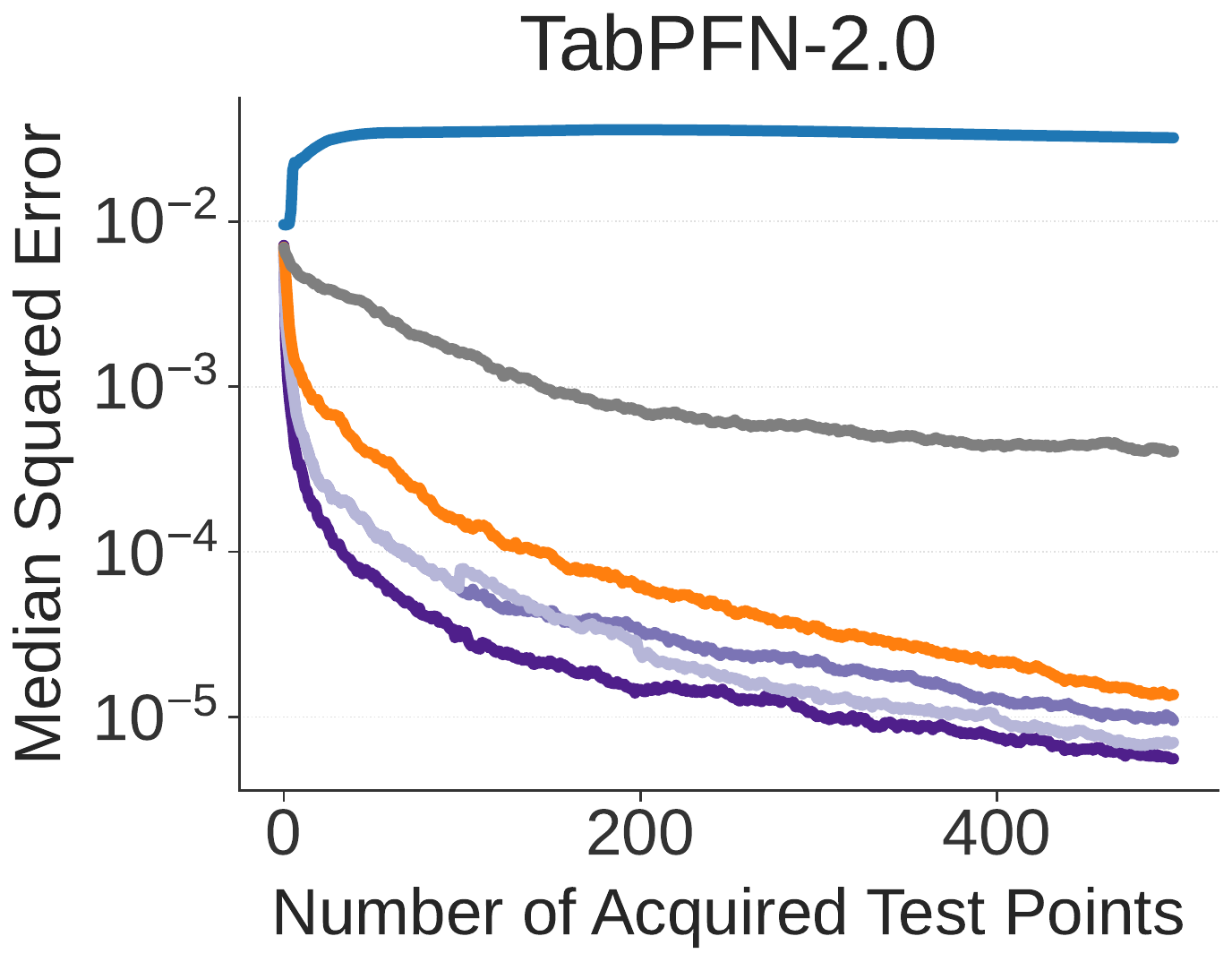}
        \label{fig:tabpfn2.0}
    \end{subfigure}
    \vspace{0.5em} 
    \\
    \begin{subfigure}[t]{0.25\textwidth}
        \centering
        \includegraphics[width=\linewidth]{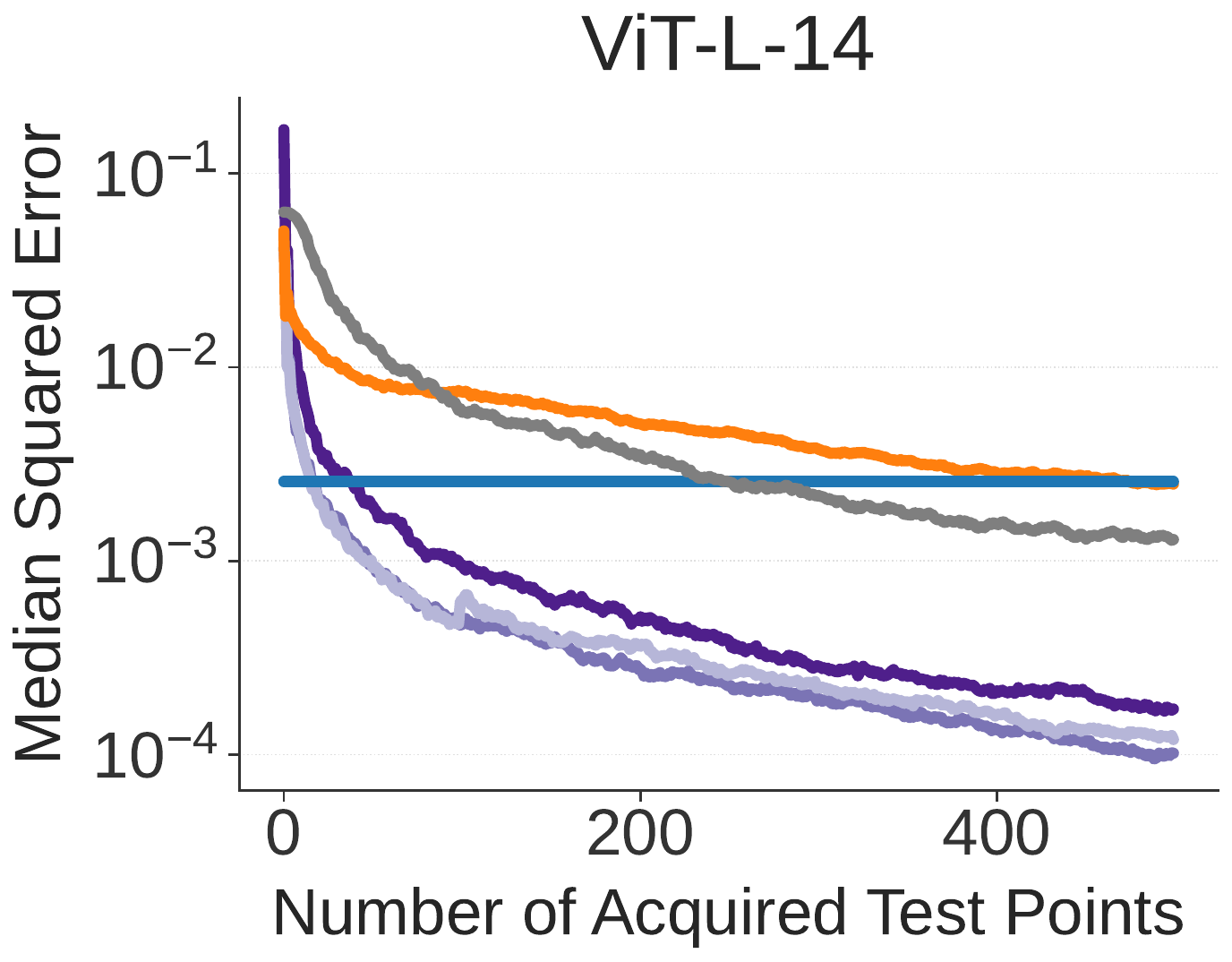}
        \label{fig:vit-l-14}
    \end{subfigure}
    \hspace{0.1cm}
    \begin{subfigure}[t]{0.25\textwidth}
        \centering
        \includegraphics[width=\linewidth]{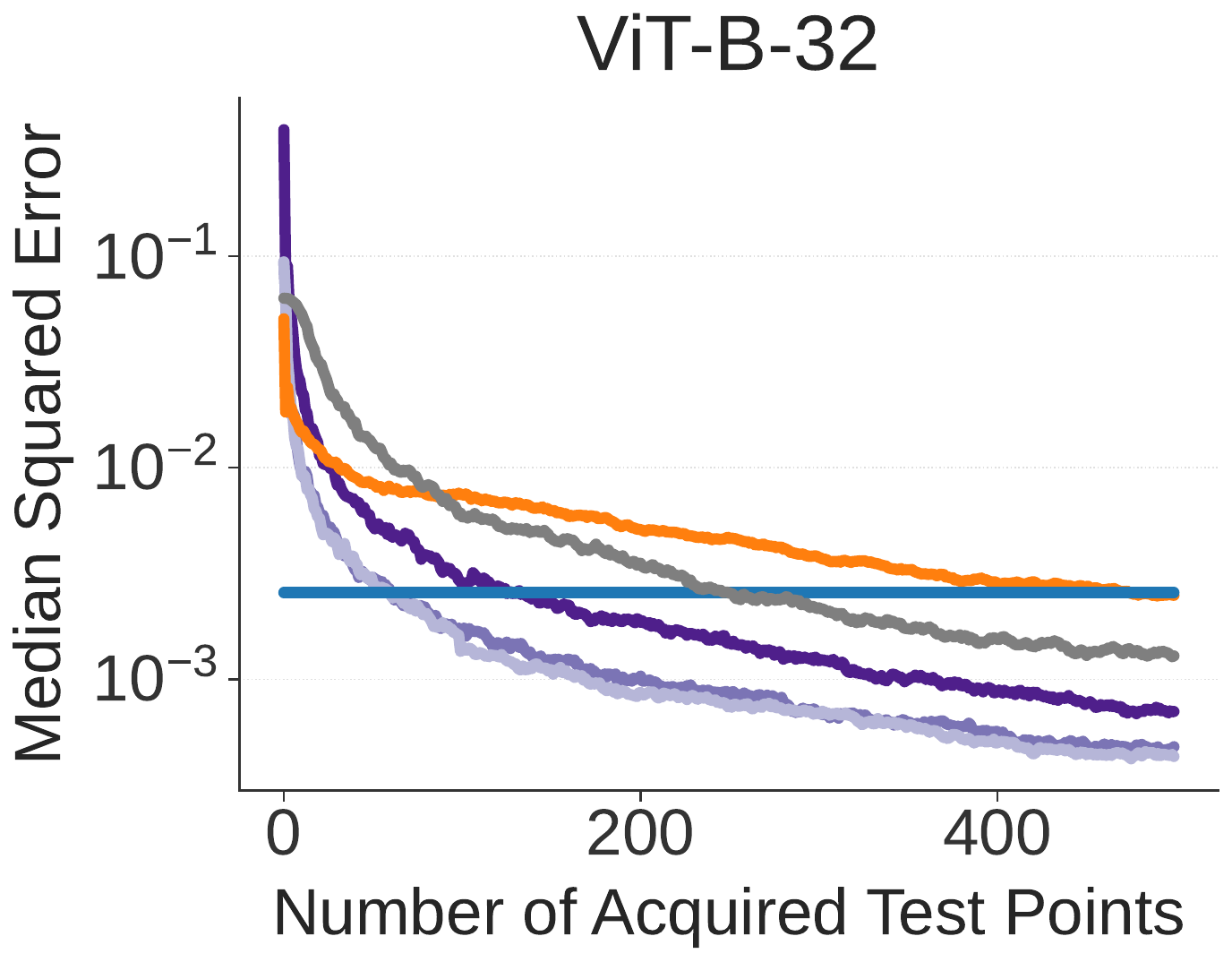}
        \label{fig:vit-b-32}
    \end{subfigure}
    \hspace{0.1cm}
    \begin{subfigure}[t]{0.25\textwidth}
        \centering
        \includegraphics[width=\linewidth]{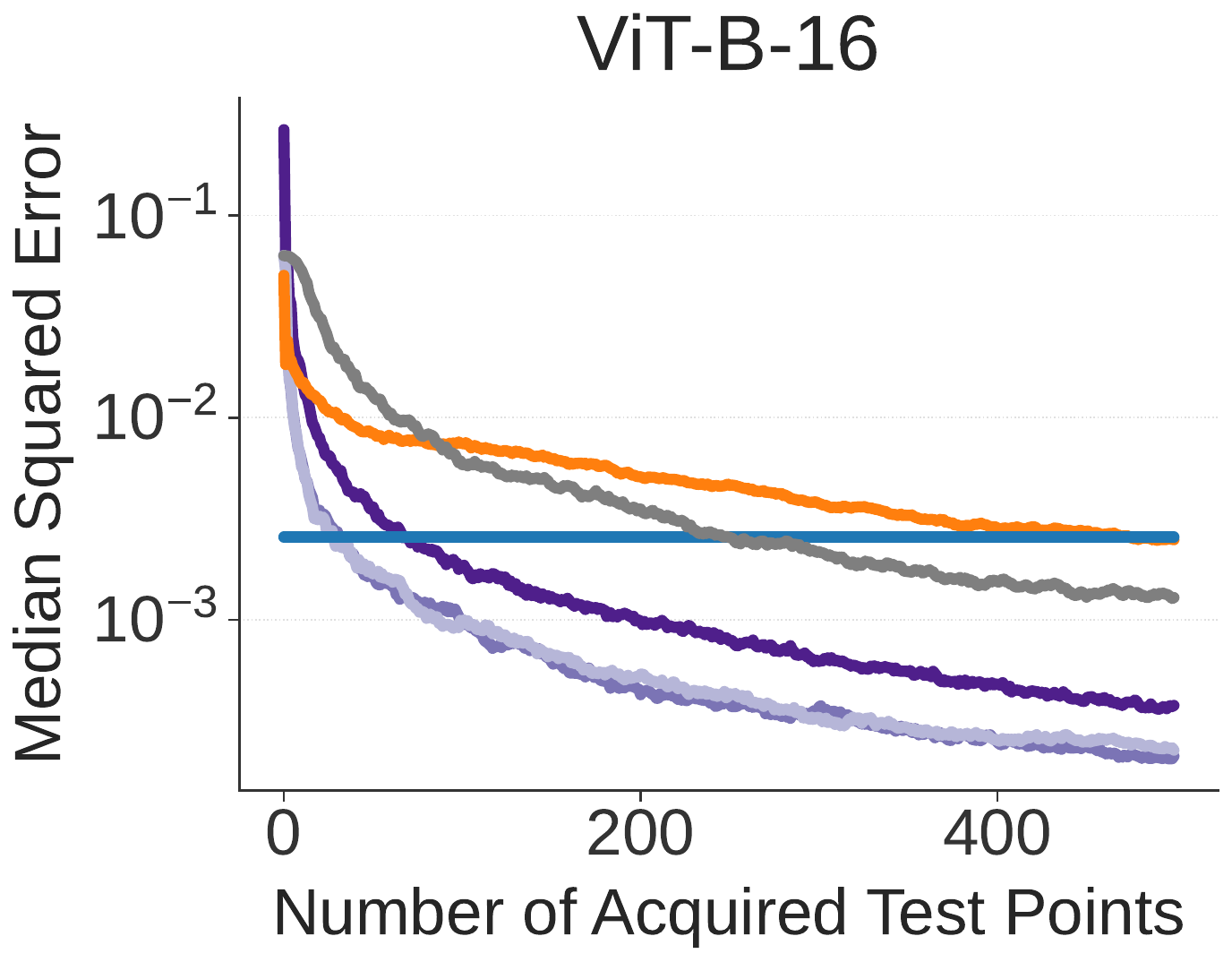}
        \label{fig:vit-b-16}
    \end{subfigure}    

    \caption{Ablation with different proxy models comparing \textcolor{violet}{PPAT} with \textcolor{gray}{Random}, \textcolor{NavyBlue}{ASE}, \textcolor{orange}{LURE}. \\ \textbf{Top row}: results on the \texttt{Keggdirected} dataset with \texttt{TabPFN--2.5} and \texttt{TabPFN--2.0} as the proxies. \\ \textbf{Bottom row}: results on the \texttt{CIFAR-10} dataset with the \texttt{Vit-L-14}, \texttt{Vit-B-32}, \texttt{Vit-B-16} CLIP models as the proxies.
    Plots show the median squared error across 1000 trials. 
    }
    \label{fig:diff_proxies}
\end{figure}

\subsubsection{Different Surrogate Models}
\label{app:diff_surrogates}

To ablate with different surrogate models, we use one regression dataset (\texttt{Keggdirected}) and one classification dataset (\texttt{CIFAR-10}), and follow the same setup as in \S\ref{app:regression_exp-app_details} and \ref{app:classification_exp_details}. For the regression dataset, we use a random forest regression model from \texttt{Scikit-Learn} \citep{scikit-learn} with 300 trees, with the remainder of the hyperparameters at their default values. Similarly, for the classification dataset, we use a random forest classification model from \texttt{Scikit-Learn} \citep{scikit-learn} with 500 trees, with the remainder of the hyperparameters at their default values.

From Fig. \ref{fig:diff_surr}, we see that across both datasets \colppat{} remains robust to different choices of surrogate models. Indeed, for all surrogate choices, \colppat{} still retains the lowest median squared error for all choices of $\lambda$, except for the random forest model on \texttt{CIFAR-10} where $\lambda=1$ performs similarly to \colure{}.

Moreover, on the \texttt{Keggdirected} dataset, \ase{} outperforms \random{} when using the random forest surrogate and improves as more labels are acquired. This suggests that the poor behaviour of \ase{} with the Bayesian linear model surrogate is not only due to active sampling itself, but also due to surrogate misspecification: because \ase{} estimates the risk through the surrogate predictive model, errors in this model can translate directly into bias in the risk estimate. The random forest surrogate appears to approximate the losses on this tabular regression task more accurately, reducing this bias enough for the additional acquired labels to improve the estimate. On \texttt{CIFAR--10}, however, \ase{} performs substantially worse than \random{} when using the random forest surrogate. This suggests that the random forest is not sufficiently expressive for this CLIP--embedding classification task, so the bias introduced by the \ase{} estimate dominates.

\begin{figure}[!h]
    \centering
    \includegraphics[width=.75\textwidth]{figures/uci/uci_main_legend.pdf}
    \vspace{0.75em} 
    
    \begin{subfigure}[t]{0.25\textwidth}
        \centering
        \includegraphics[width=\linewidth]{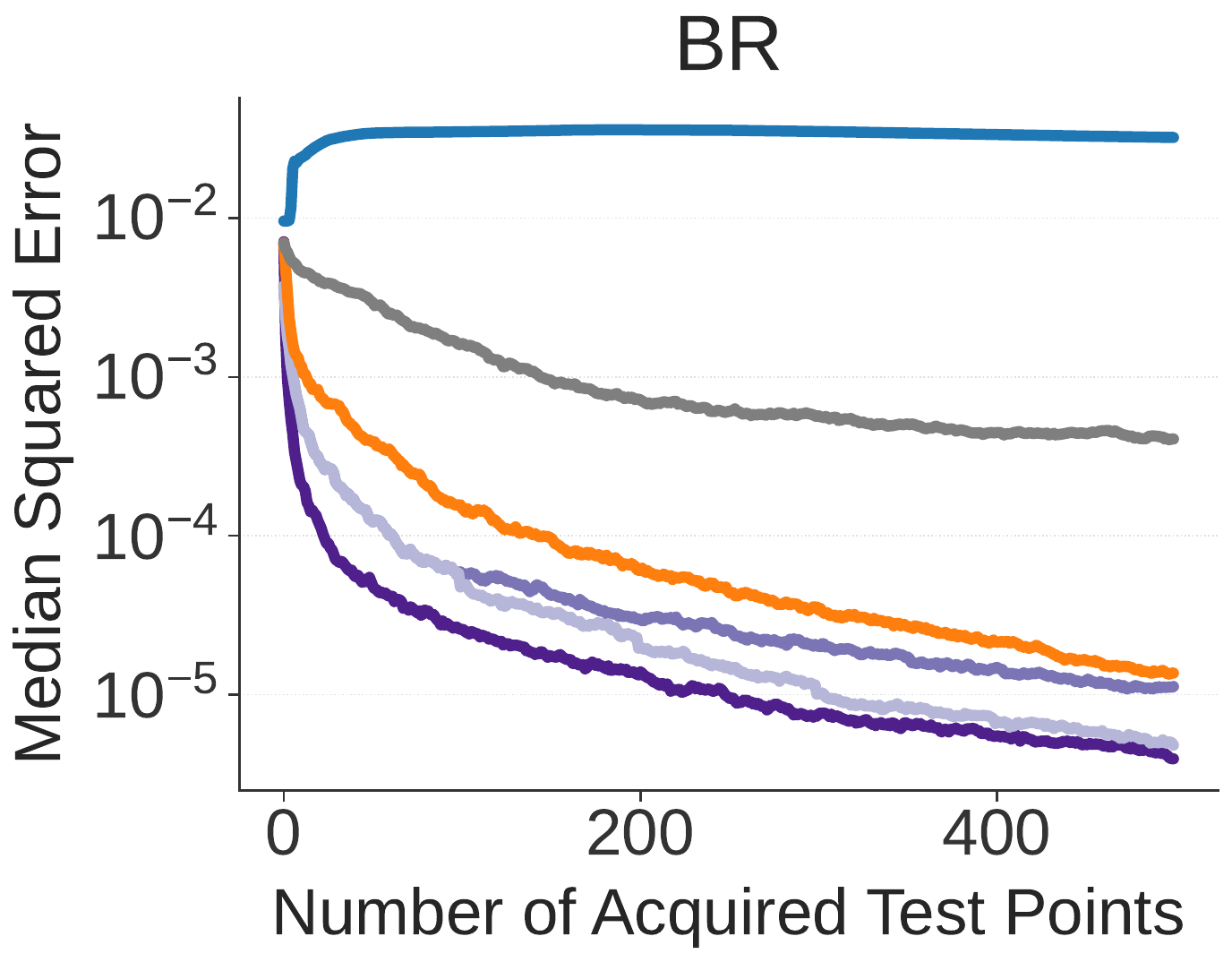}
        \label{fig:br_surr_kegg}
    \end{subfigure}
    \hspace{0.5cm}
    \begin{subfigure}[t]{0.25\textwidth}
        \centering
        \includegraphics[width=\linewidth]{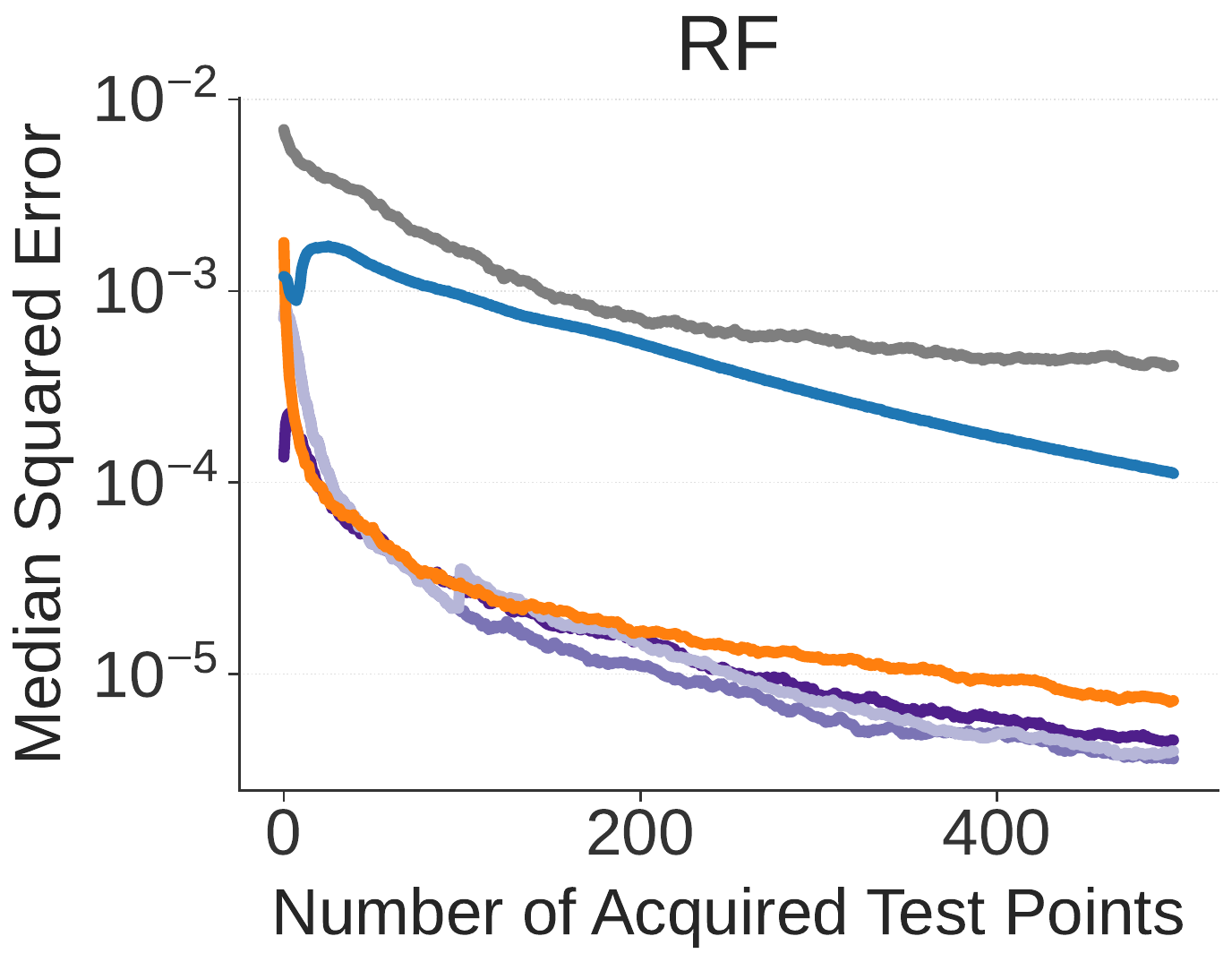}
        \label{fig:rf_surr_kegg}
    \end{subfigure}
    \vspace{0.5em} 
    \\
    \begin{subfigure}[t]{0.25\textwidth}
        \centering
        \includegraphics[width=\linewidth]{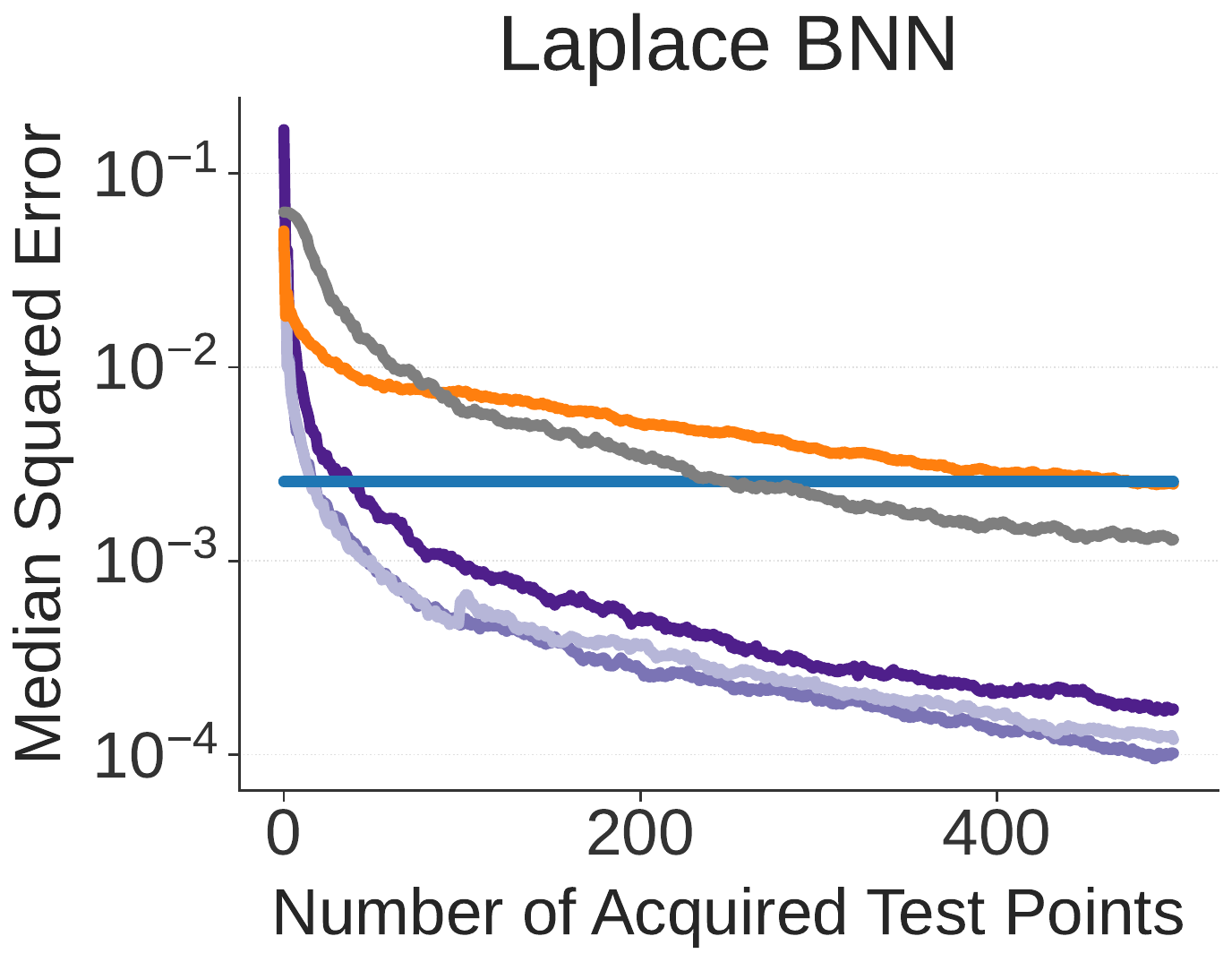}
        \label{fig:laplace_surr}
    \end{subfigure}
    \hspace{0.1cm}
    \begin{subfigure}[t]{0.25\textwidth}
        \centering
        \includegraphics[width=\linewidth]{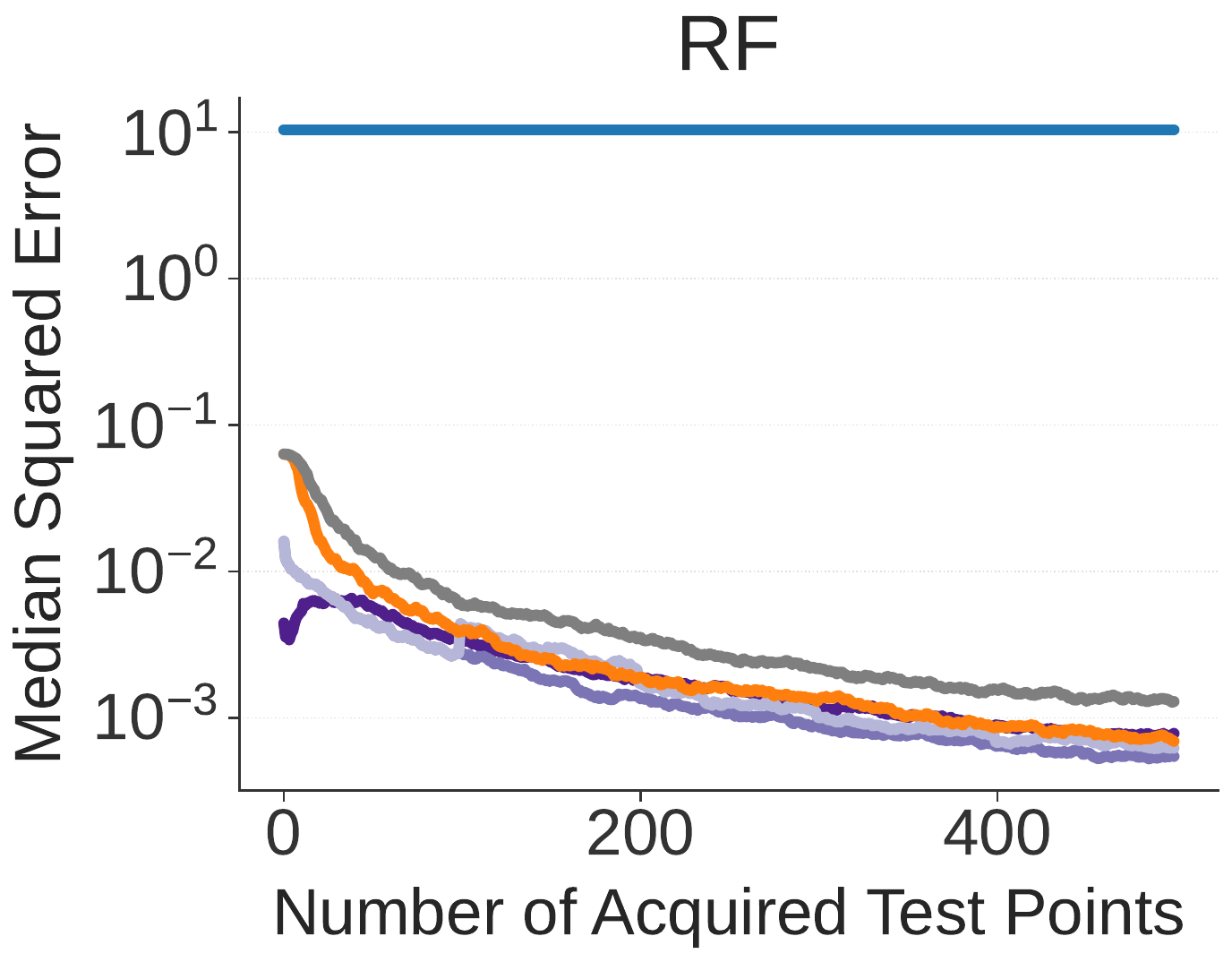}
        \label{fig:rf_surr_cifar}
    \end{subfigure}   

    \caption{Ablation with different surrogate models comparing \textcolor{violet}{PPAT} with \textcolor{gray}{Random}, \textcolor{NavyBlue}{ASE}, \textcolor{orange}{LURE}. \textbf{Top row}: results on the \texttt{Keggdirected} dataset with a  Bayesian linear regression (BR) and random forest (RF) model as the surrogate. 
    \textbf{Bottom row}: results on the \texttt{CIFAR-10} dataset with a Laplace BNN and random forest model as the surrogate.
    Plots show the median squared error across 1000 trials. Note that \textcolor{NavyBlue}{ASE} results in \textbf{biased} estimates of the risk.}
    \label{fig:diff_surr}
\end{figure}

\newpage
\subsection{Additional Metrics}
\label{app:additional_matrics}

Here, we present additional metrics for the experiments in \S\ref{sec:experiments}.

\subsubsection{Mean Error}
\label{app:mean_errors}

Figs. \ref{fig:main_uci_mean_err}, \ref{fig:classification_datasets_err}, \ref{fig:effect_of_estimator_err}, \ref{fig:effect_of_aq_err} show the mean error (bias) for the experiments in Figs.  \ref{fig:main_uci}, \ref{fig:classification_datasets}, \ref{fig:effect_of_estimator}, and \ref{fig:effect_of_aq} respectively. The mean error here is the mean difference between the estimate of the risk and the true test risk on the full pool.

\begin{figure}[!h]
    \centering    \includegraphics[width=.75\textwidth]{figures/uci/uci_main_legend.pdf}
    \vspace{0.75em}    
    
    \begin{subfigure}[t]{0.22\textwidth}
        \centering
        \includegraphics[width=\linewidth]{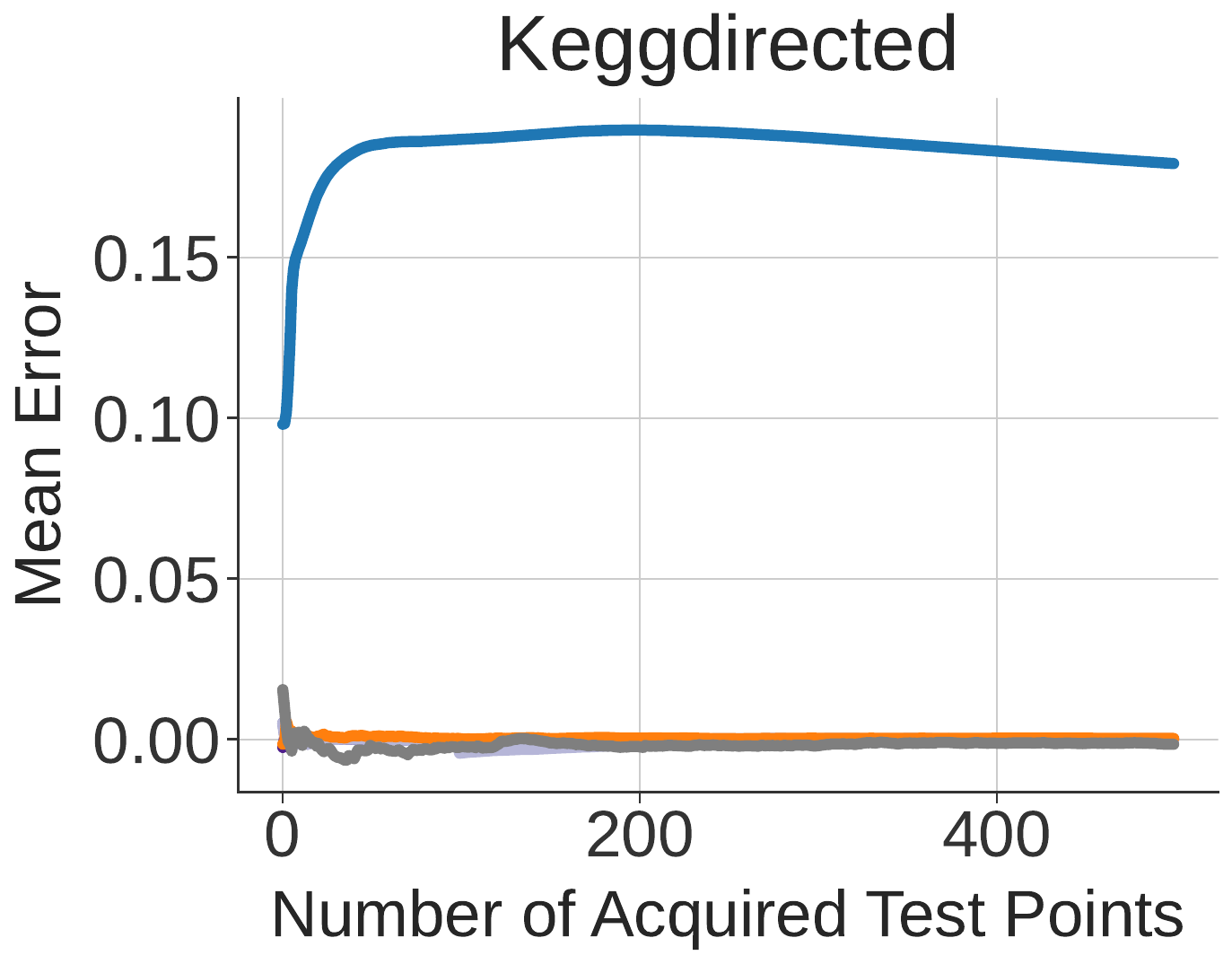}
        \label{fig:main_keggdir_err}
    \end{subfigure}
    \hspace{0.25cm}
    \begin{subfigure}[t]{0.22\textwidth}
        \centering
        \includegraphics[width=\linewidth]{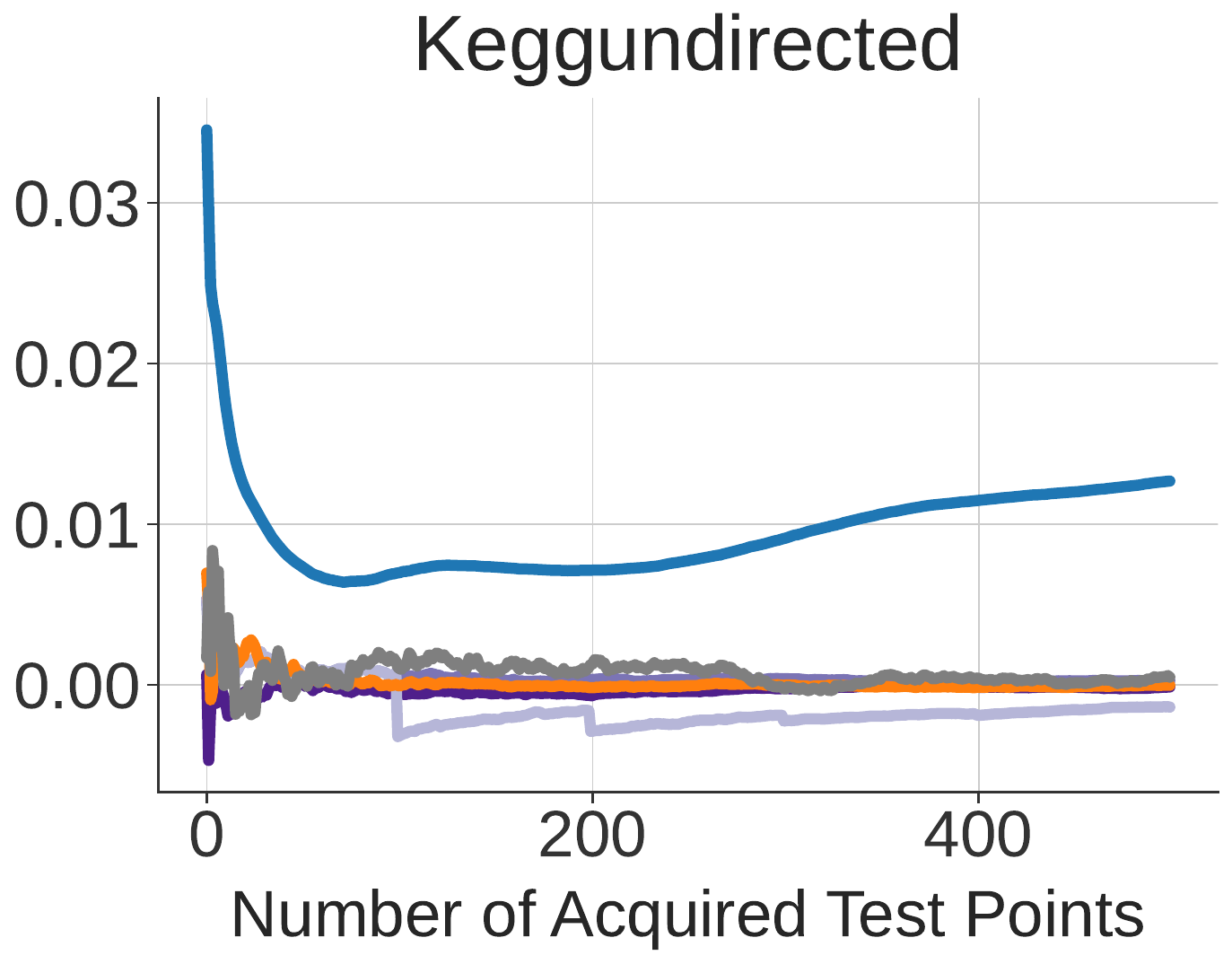}
        \label{fig:main_keggundir_err}
    \end{subfigure}
    \hspace{0.25cm}
    \begin{subfigure}[t]{0.22\textwidth}
        \centering        \includegraphics[width=\linewidth]{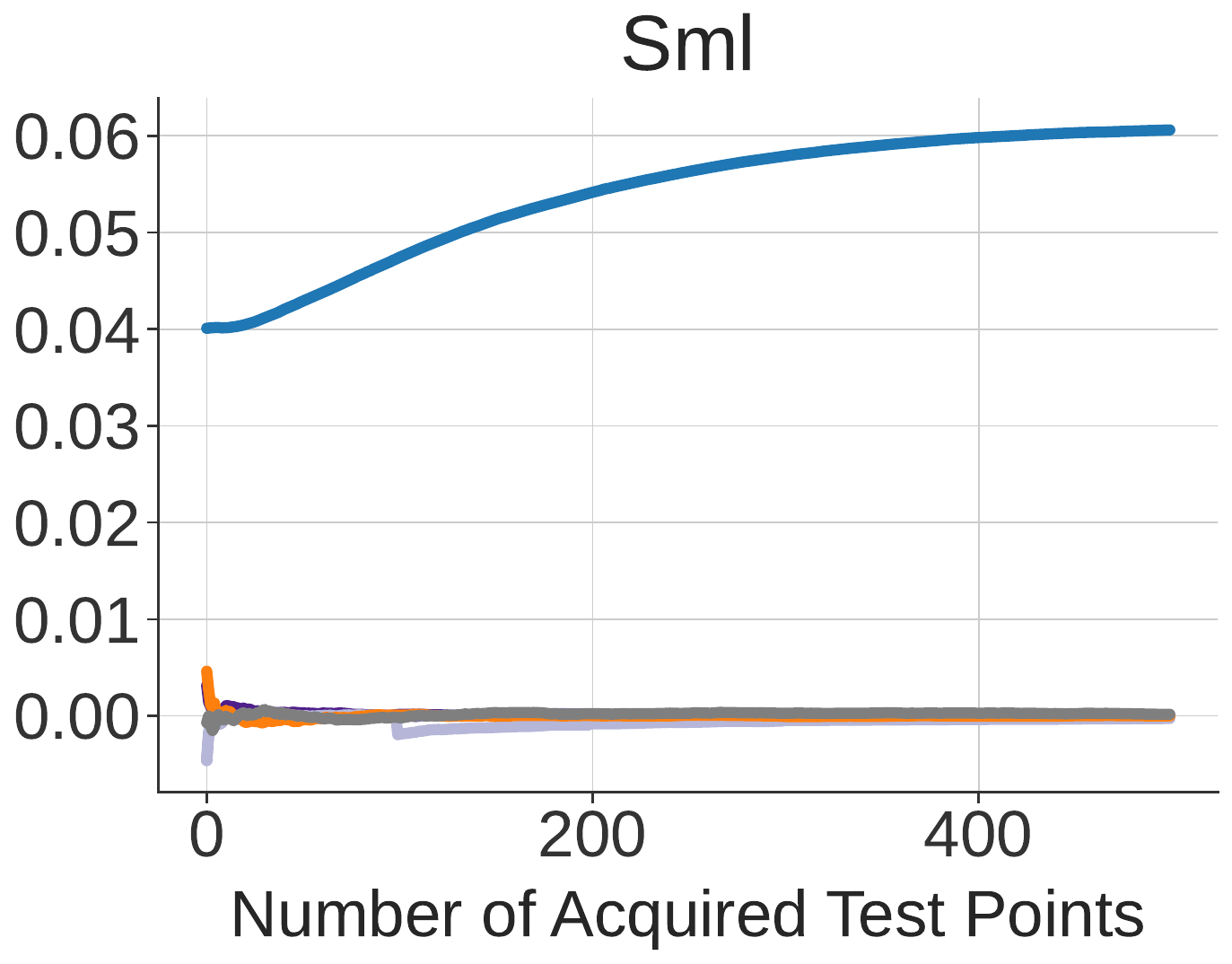}
        \label{fig:main_gas_err}
    \end{subfigure}
    \hspace{0.25cm}
    \begin{subfigure}[t]{0.22\textwidth}
        \centering
        \includegraphics[width=\linewidth]{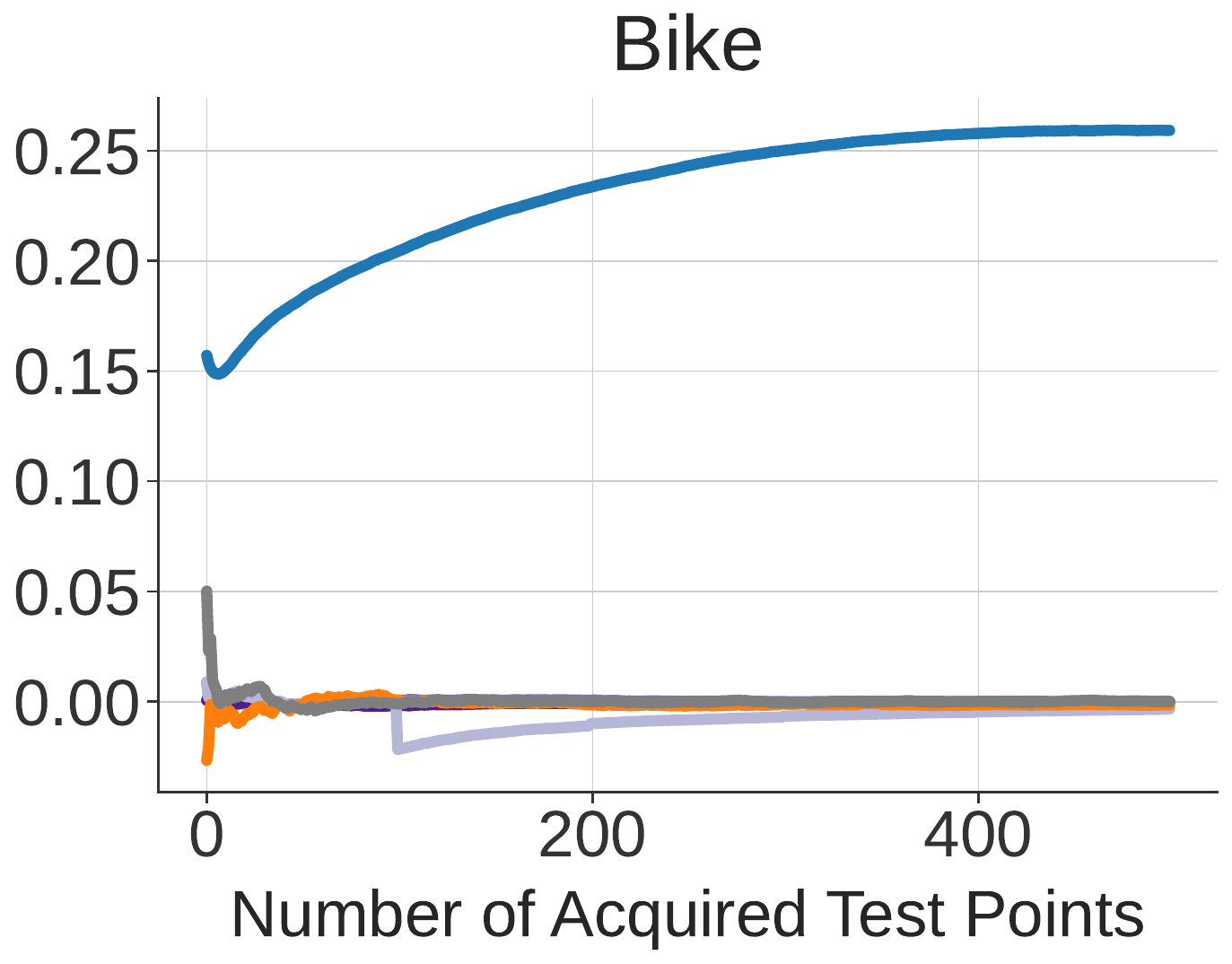}
        \label{fig:main_bike_err}
    \end{subfigure}

    \caption{Experiments on UCI datasets comparing \textcolor{violet}{PPAT} with \textcolor{gray}{Random}, \textcolor{NavyBlue}{ASE}, \textcolor{orange}{LURE}. Plots show the mean error (bias) across 1000 trials. Note that \textcolor{NavyBlue}{ASE} results in \textbf{biased} estimates of the risk.}
    \label{fig:main_uci_mean_err}
\end{figure}

\begin{figure}[!h]
    \centering    
    \includegraphics[width=.75\textwidth]{figures/uci/uci_main_legend.pdf}
    \vspace{0.75em}    
    
    \begin{subfigure}[t]{0.23\textwidth}
        \centering
        \includegraphics[width=\linewidth]{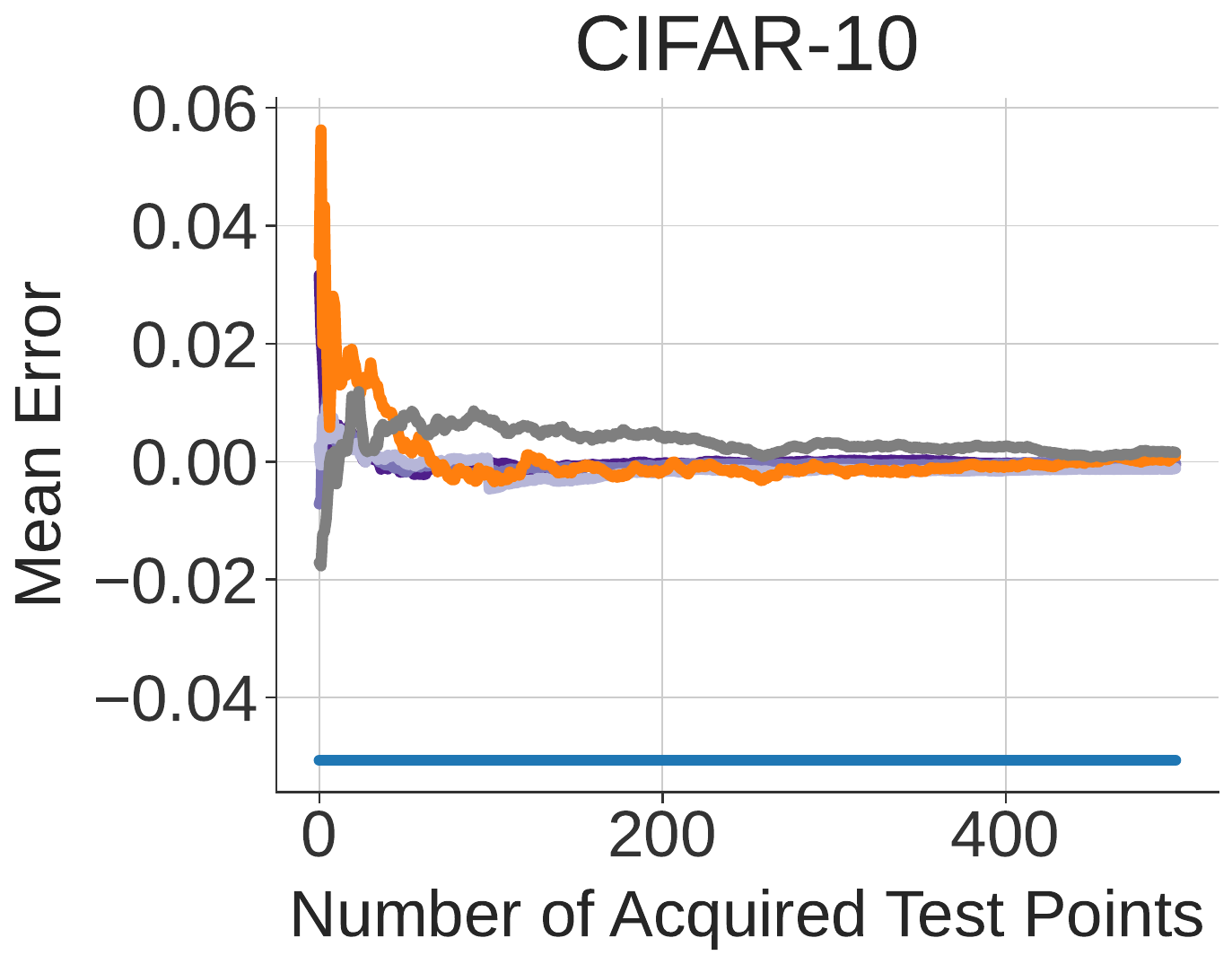}
        \label{fig:main_cifar10_err}
    \end{subfigure}
    \hspace{0.1cm}
    \begin{subfigure}[t]{0.23\textwidth}
        \centering
        \includegraphics[width=\linewidth]{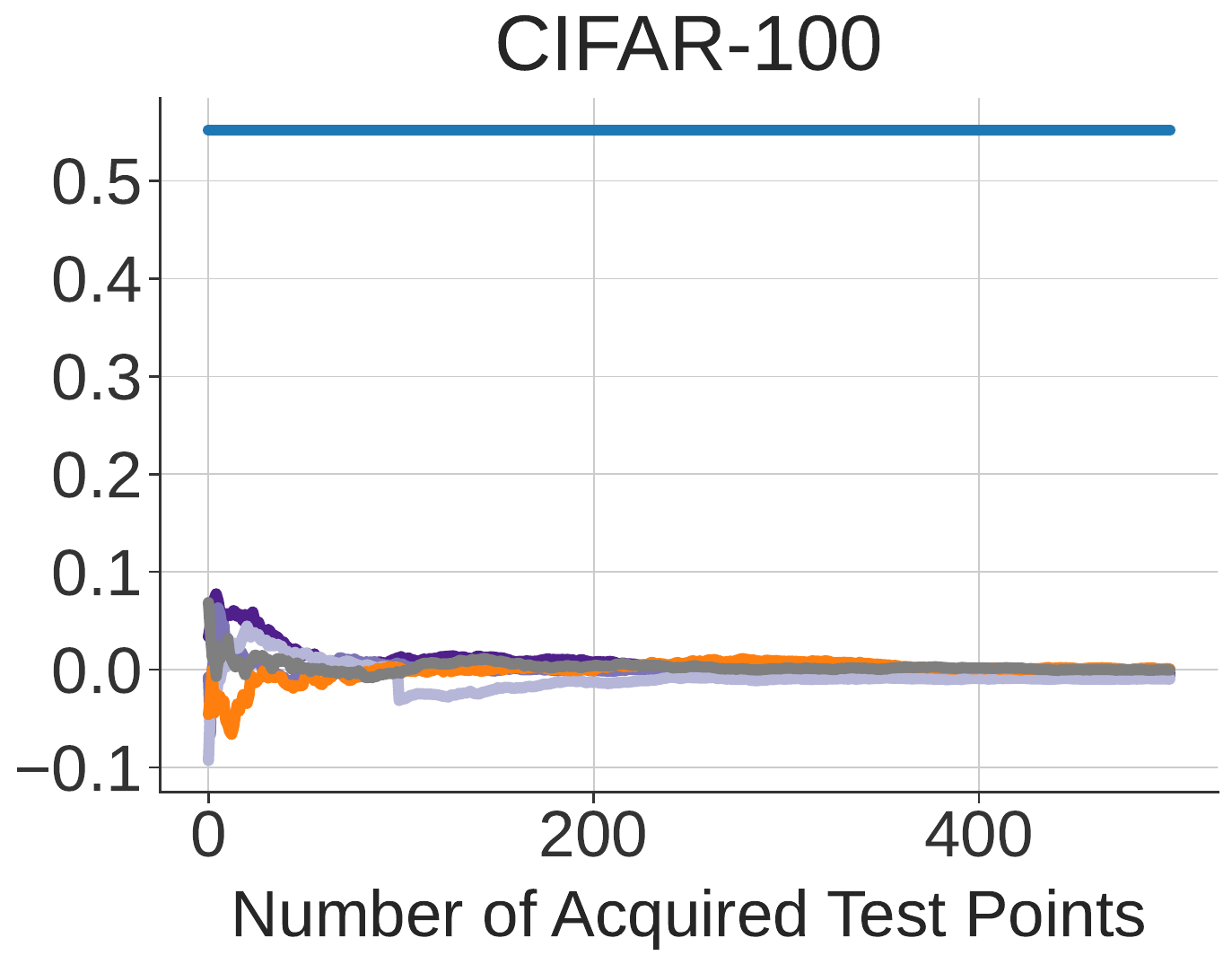}
        \label{fig:main_cifar100_err}
    \end{subfigure}
    \hspace{0.1cm}
    \begin{subfigure}[t]{0.23\textwidth}
        \centering
        \includegraphics[width=\linewidth]{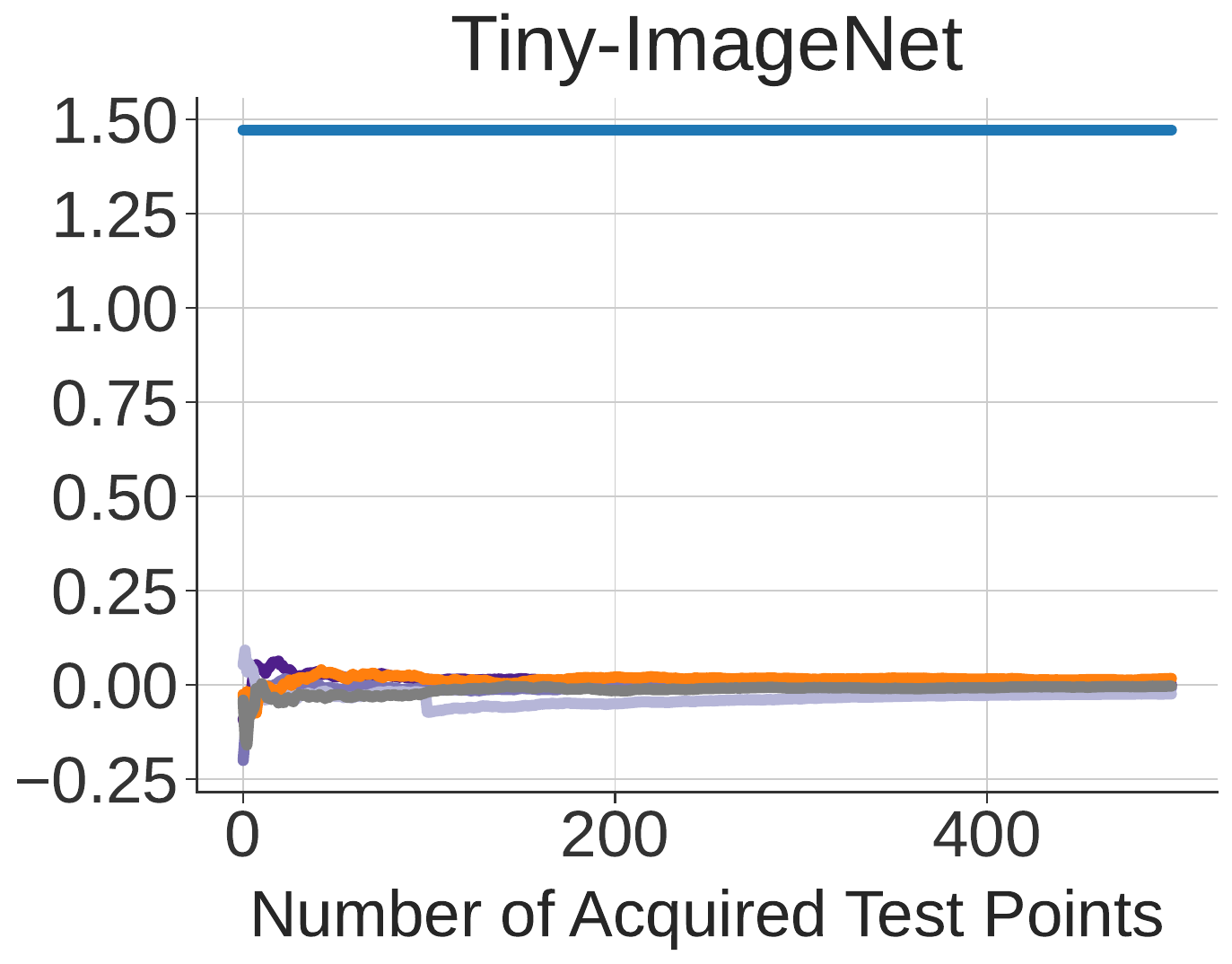}
        \label{fig:main_tiny_ignt_err}
    \end{subfigure}    

    \caption{Experiments on classification datasets comparing \textcolor{violet}{PPAT} with \textcolor{gray}{Random}, \textcolor{NavyBlue}{ASE}, \textcolor{orange}{LURE}. Plots show the mean error (bias) across 1000 trials. Note that \textcolor{NavyBlue}{ASE} results in \textbf{biased} estimates of the risk.}
    \label{fig:classification_datasets_err}
\end{figure}

Across Figs. \ref{fig:main_uci_mean_err} and \ref{fig:classification_datasets_err}, \ase{} exhibits persistent non-zero mean error, as expected: \ase{} estimates the risk through the surrogate predictive model, and any surrogate misspecification can therefore translate directly into bias in the resulting risk estimate. The plug-in version of \colppat{} can also introduce finite-sample bias because $\lambda$ is estimated from the actively acquired labels. However, this bias decreases as more labels are acquired, as expected from our theoretical results. 

\begin{figure}[!h]
    \centering    \includegraphics[width=.45\textwidth]{figures/ablations/diff_estimators_legend.pdf}
    \vspace{0.75em}    
    
    \begin{subfigure}[t]{0.22\textwidth}
        \centering
        \includegraphics[width=\linewidth]{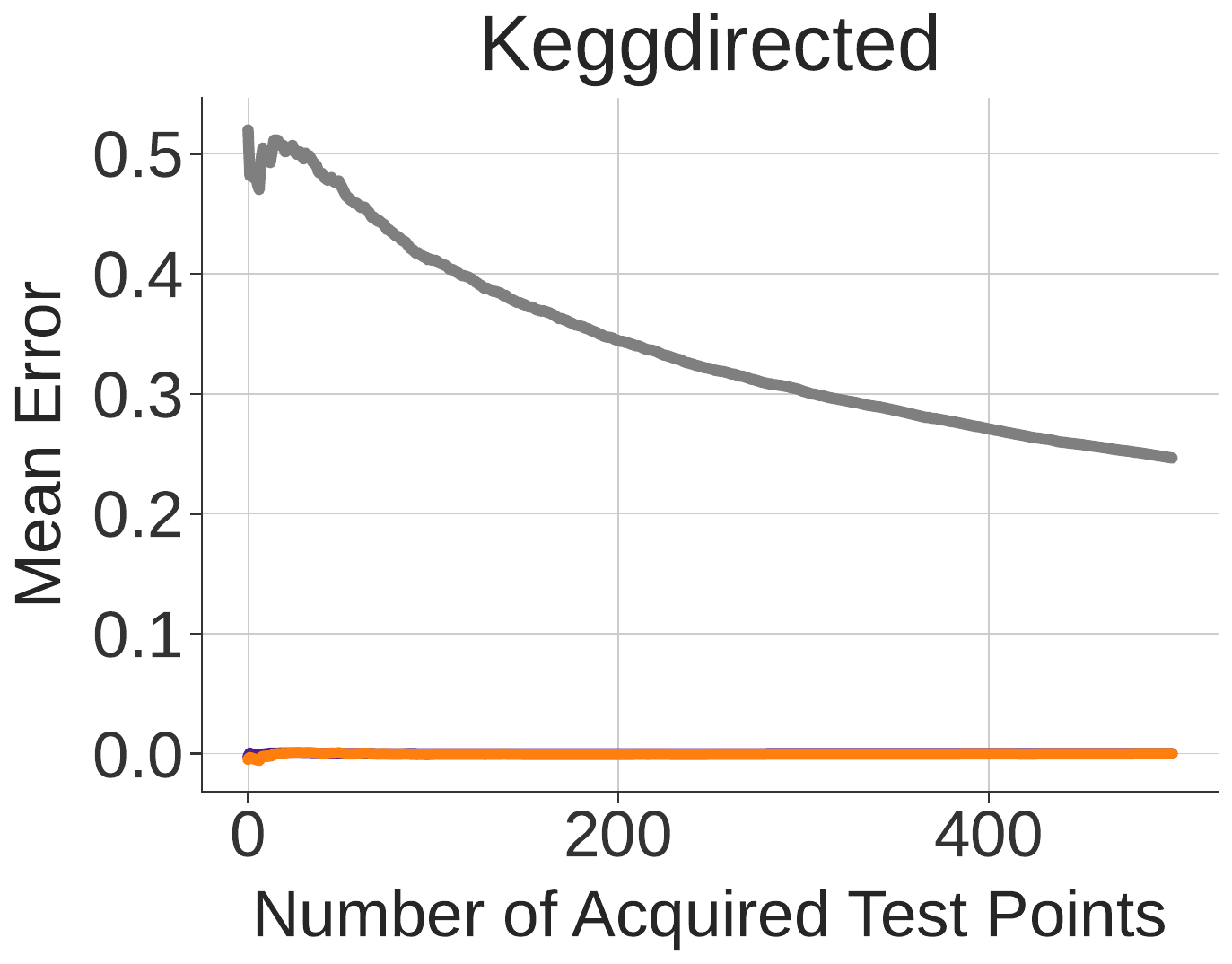}
        \label{fig:est_ablations_keggdir_err}
    \end{subfigure}
    \hspace{0.25cm}
    \begin{subfigure}[t]{0.22\textwidth}
        \centering
        \includegraphics[width=\linewidth]{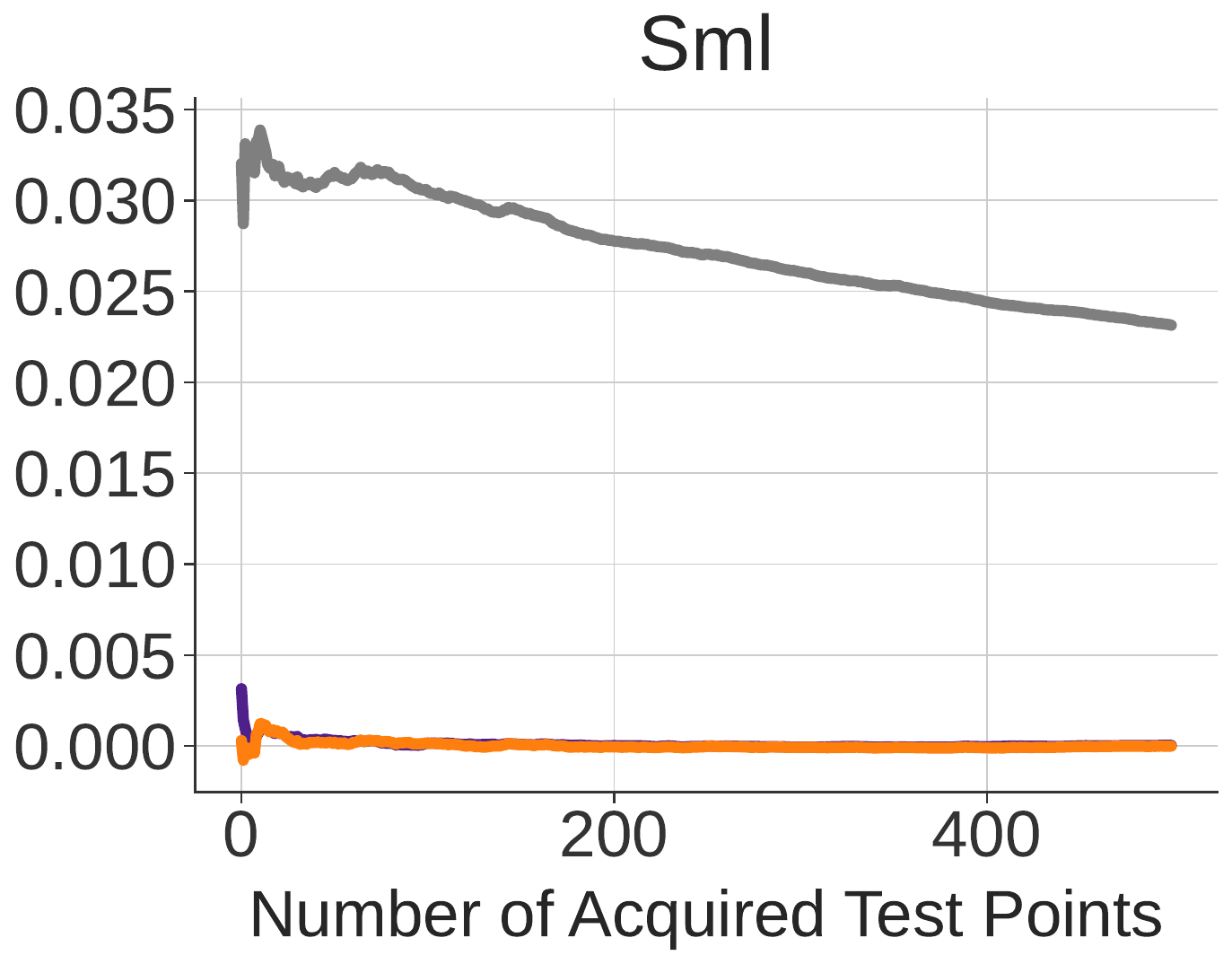}
        \label{fig:est_ablations_gas_err}
    \end{subfigure}
    \hspace{0.25cm}
    \begin{subfigure}[t]{0.22\textwidth}
        \centering        \includegraphics[width=\linewidth]{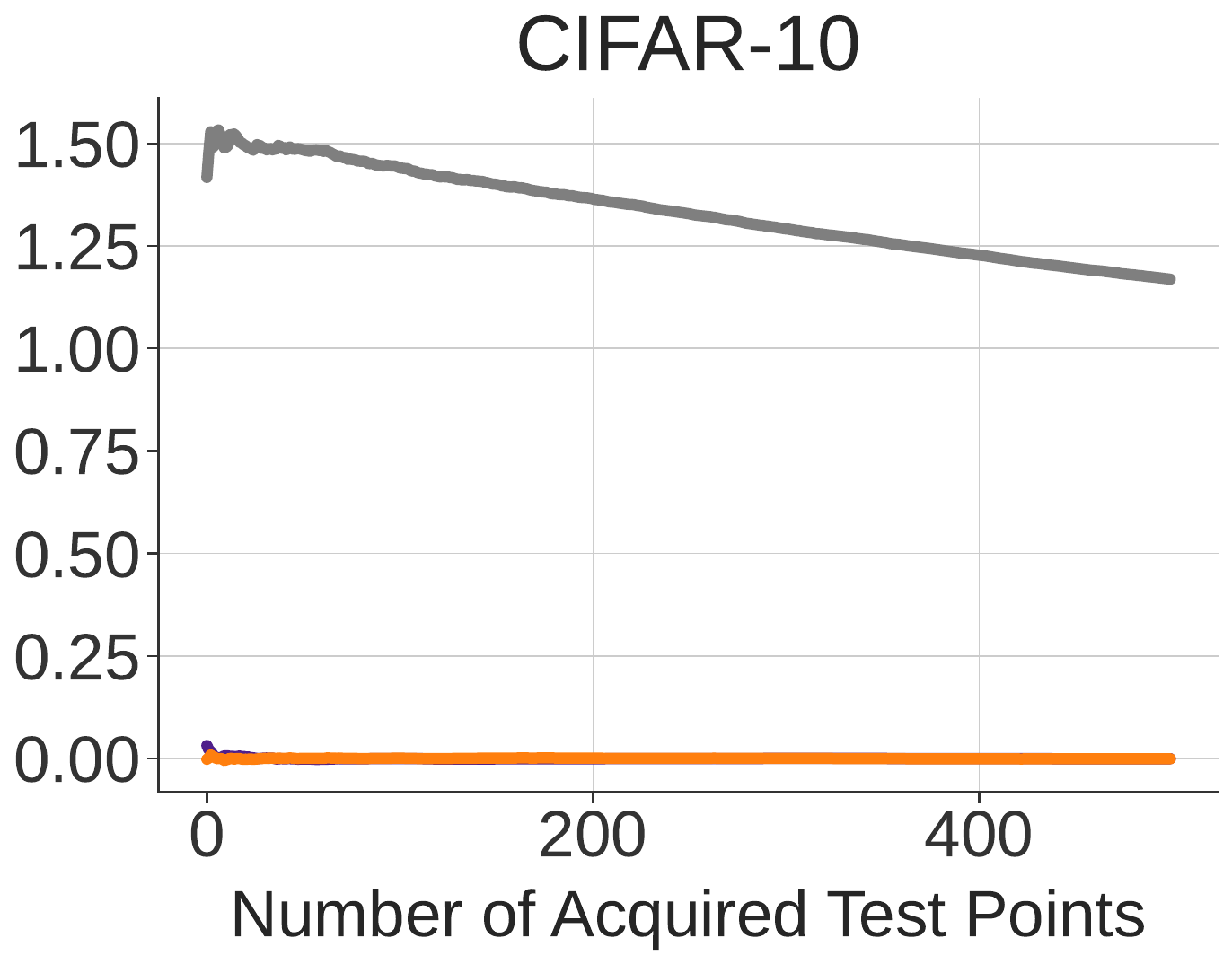}
        \label{fig:est_ablations_cifar10_err}
    \end{subfigure}
    \hspace{0.25cm}
    \begin{subfigure}[t]{0.22\textwidth}
        \centering
        \includegraphics[width=\linewidth]{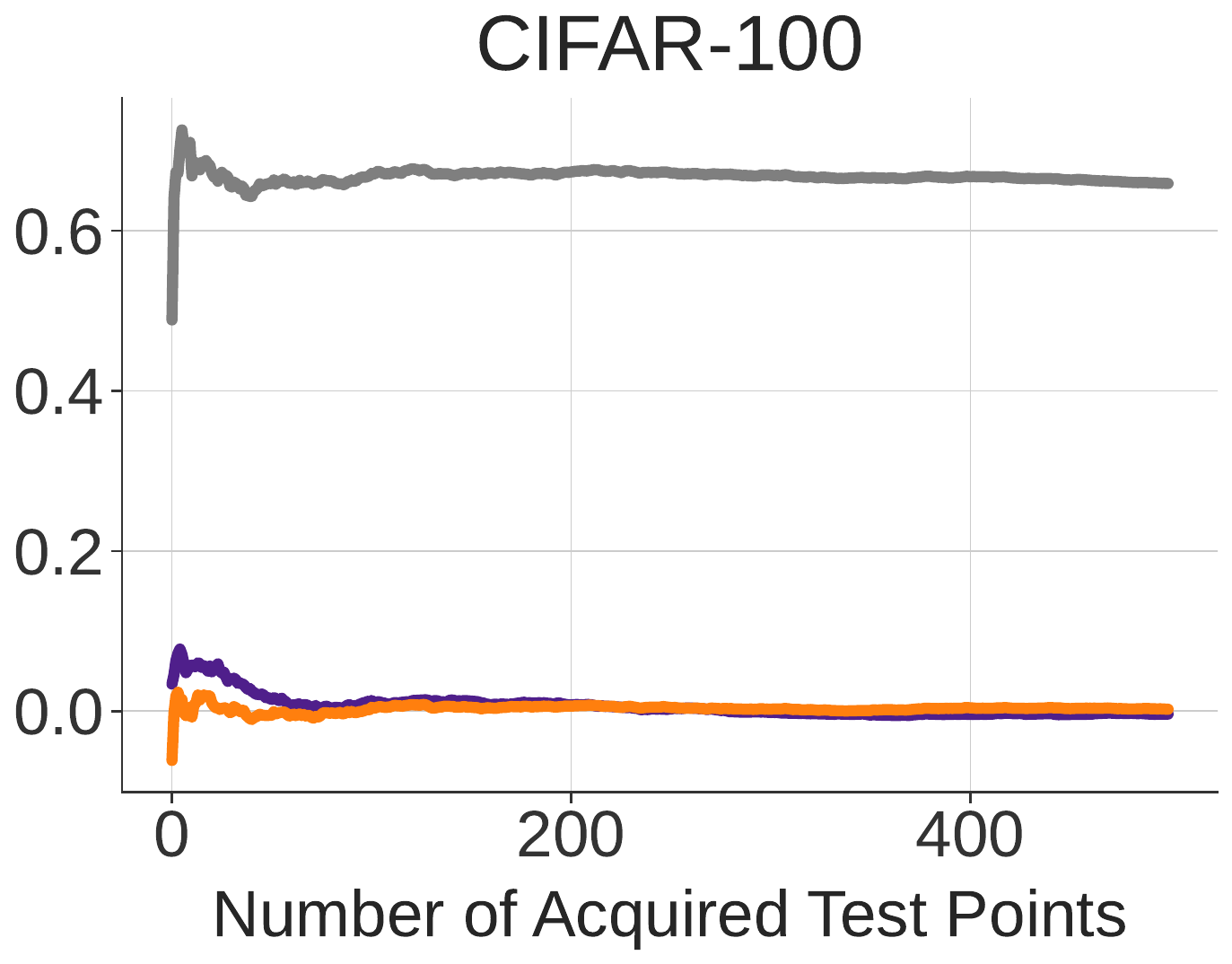}
        \label{fig:main_bike_err}
    \end{subfigure}

    \caption{Experiments on UCI and classification datasets studying the influence of our estimator. We compare \textcolor{violet}{PPAT} with $\lambda=1$ to using the PPAT acquisition strategy, $Q^\mathrm{PPAT}$, but with the LURE estimate of the risk (\textcolor{orange}{PPAT Acq. + LURE}) and the empirical estimate of the risk (\textcolor{gray}{PPAT Acq. + Empirical}). Plots show the mean error (bias) across 1000 trials. Note that \textcolor{gray}{PPAT Acq. + Empirical} results in \textbf{biased} estimates of the risk as the data is not randomly sampled.}
    \label{fig:effect_of_estimator_err}
\end{figure}

In Fig. \ref{fig:effect_of_estimator_err}, \textcolor{gray}{PPAT Acq. + Empirical} is biased for the expected reason: the data are selected using a non--uniform active proposal, but the estimator does not apply the LURE importance--weighting correction. Moreover, the ablations \textcolor{orange}{PPAT Acq. + LURE} in Fig. \ref{fig:effect_of_estimator_err} and \textcolor{orange}{LURE Acq. + PPI} in Fig. \ref{fig:effect_of_aq_err}, show a bias of approximately zero, which is expected as the estimators are still using the LURE importance--weighting correction. 

\begin{figure}[!h]
    \centering    \includegraphics[width=.45\textwidth]{figures/ablations/diff_acq_ablation.pdf}
    \vspace{0.75em}    
    
    \begin{subfigure}[t]{0.22\textwidth}
        \centering
        \includegraphics[width=\linewidth]{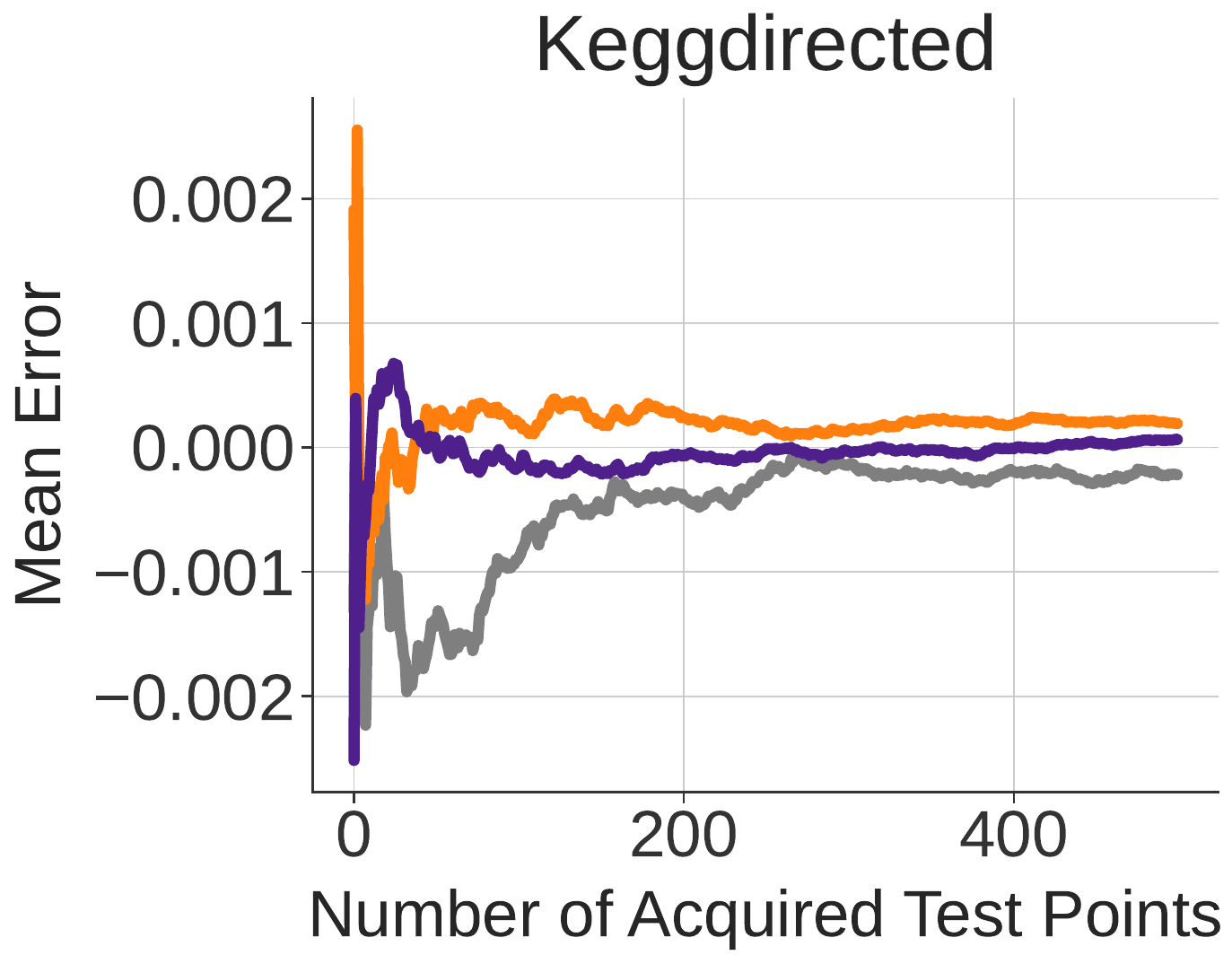}
        \label{fig:acq_ablations_keggdir_err}
    \end{subfigure}
    \hspace{0.25cm}
    \begin{subfigure}[t]{0.22\textwidth}
        \centering
        \includegraphics[width=\linewidth]{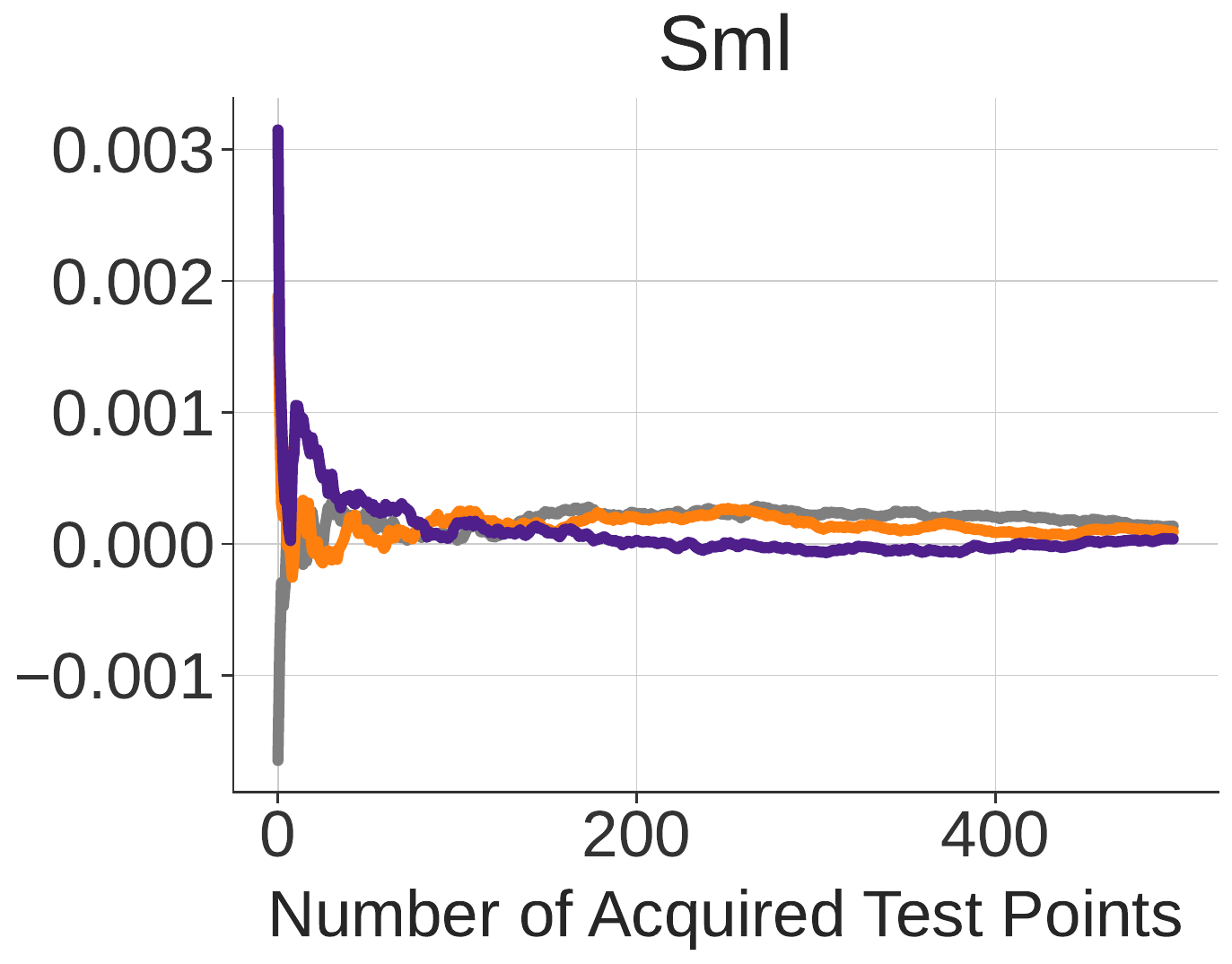}
        \label{fig:acq_ablations_gas_err}
    \end{subfigure}
    \hspace{0.25cm}
    \begin{subfigure}[t]{0.22\textwidth}
        \centering        \includegraphics[width=\linewidth]{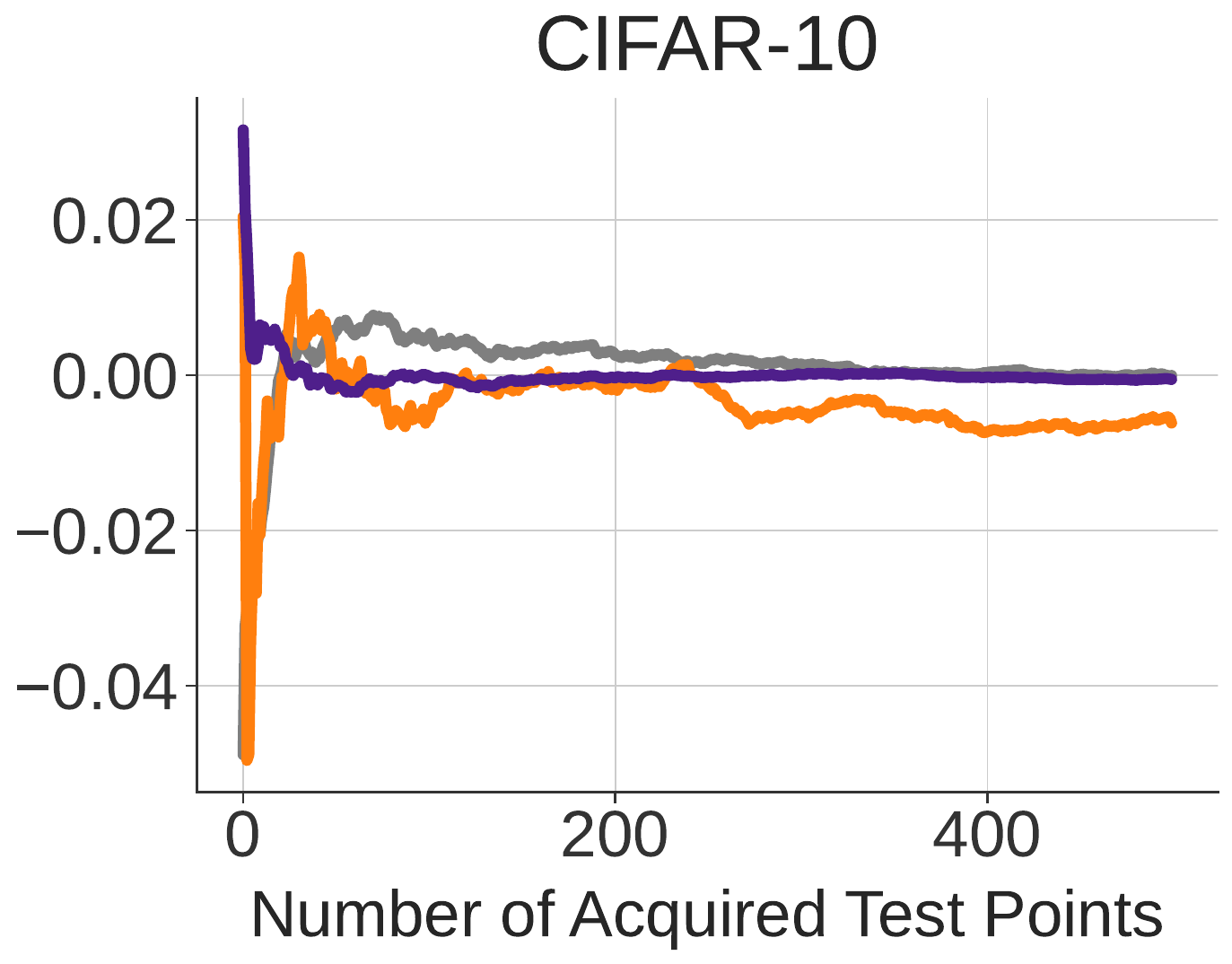}
        \label{fig:acq_ablations_cifar10_err}
    \end{subfigure}
    \hspace{0.25cm}
    \begin{subfigure}[t]{0.22\textwidth}
        \centering
        \includegraphics[width=\linewidth]{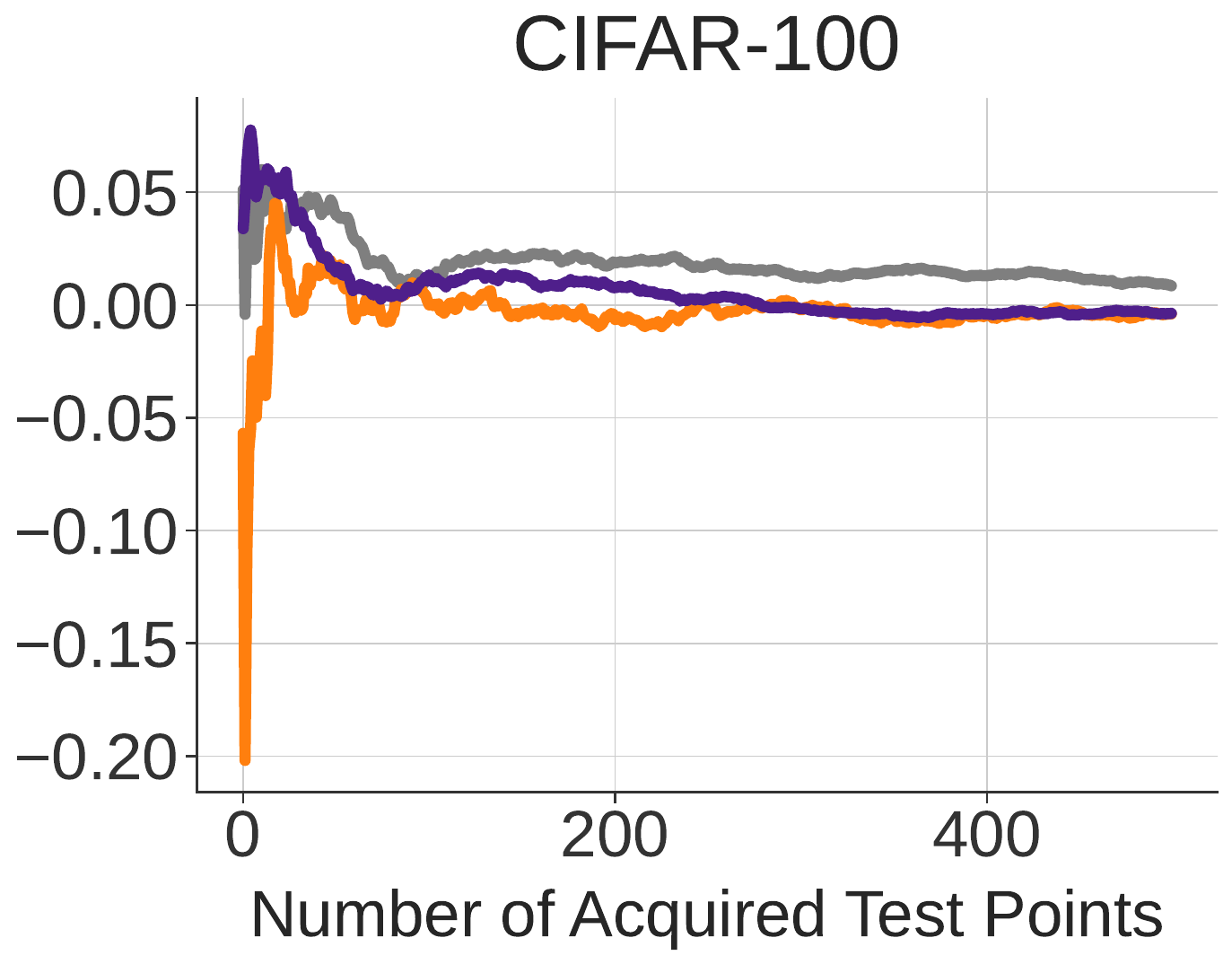}
        \label{fig:acq_ablations_cifar100_err}
    \end{subfigure}

    \caption{Experiments on UCI and classification datasets studying the influence of our acquisition strategy. We compare \textcolor{violet}{PPAT} with $\lambda=1$ to using the PPI estimator with the LURE acquisition strategy (\textcolor{orange}{LURE Acq. + PPI}) and random sampling (\textcolor{gray}{Random + PPI}). Plots show the mean error (bias) across 1000 trials.}
    \label{fig:effect_of_aq_err}
\end{figure}

\newpage
\subsubsection{Confidence Interval Widths}
\label{app:ci_widths}

Fig. \ref{fig:ci_widths} shows that the faster attainment of the target coverage level in Fig. \ref{fig:coverage} is not simply due to overly conservative confidence intervals. Across representative datasets, we see that \colppat{} achieves lower mean widths than competing approaches, while reaching the desired coverage level at a similar or substantially faster rate. The main exception is \texttt{CIFAR--100}, where \colppat{} with $\lambda=0.5$ performs similarly to \colure{} in terms of mean width. Overall, this suggests that the prediction--powered control variate improves not only point estimation, but also (frequentist) uncertainty quantification.

\begin{figure}[!h]
    \centering    \includegraphics[width=.8\textwidth]{figures/uci/uci_main_legend_coverage.pdf
    }
    \vspace{0.75em}    
    
    \begin{subfigure}[t]{0.22\textwidth}
        \centering
        \includegraphics[width=\linewidth]{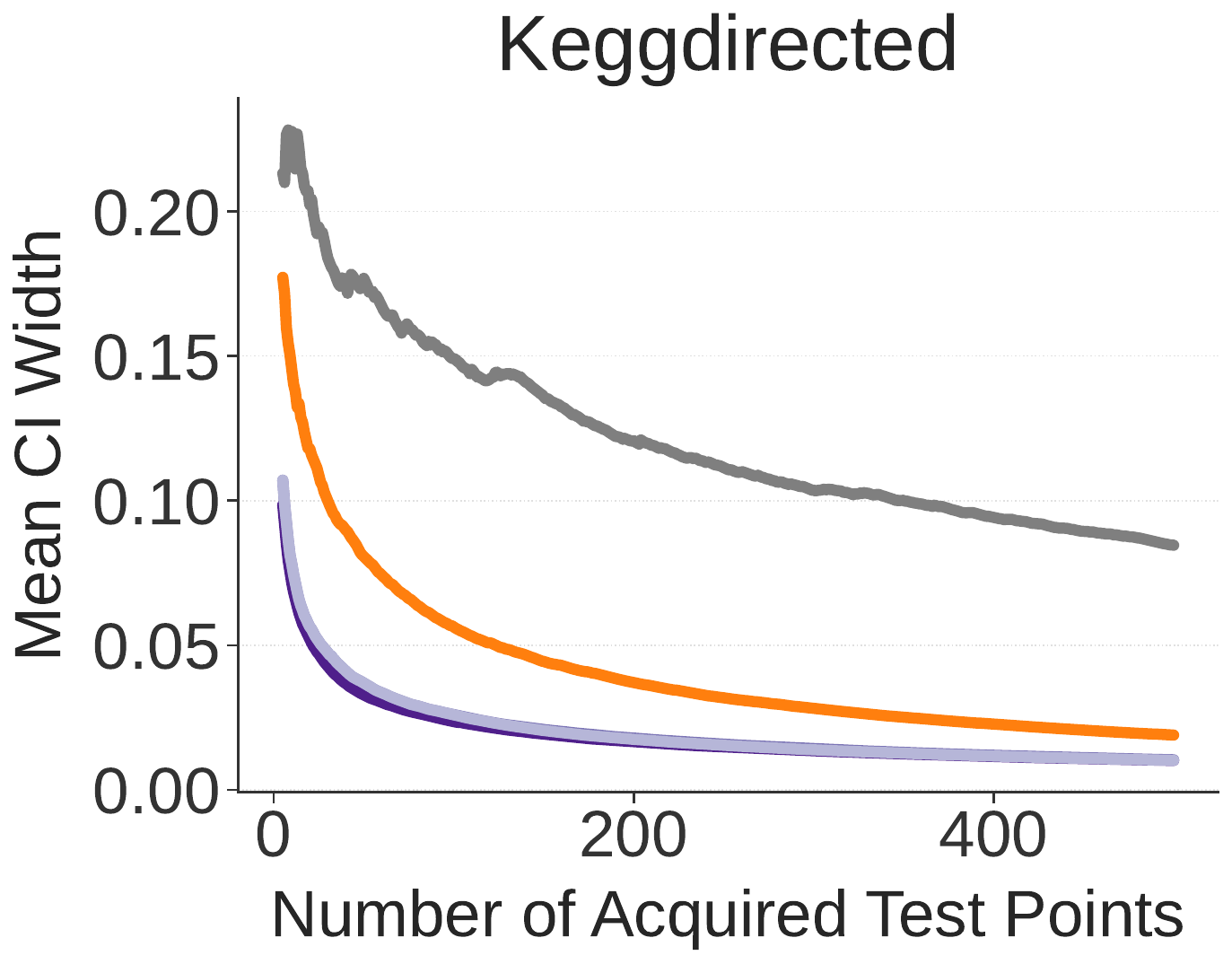}
        \label{fig:coverage_keggdir_width}
    \end{subfigure}
    \hspace{0.25cm}
    \begin{subfigure}[t]{0.22\textwidth}
        \centering
        \includegraphics[width=\linewidth]{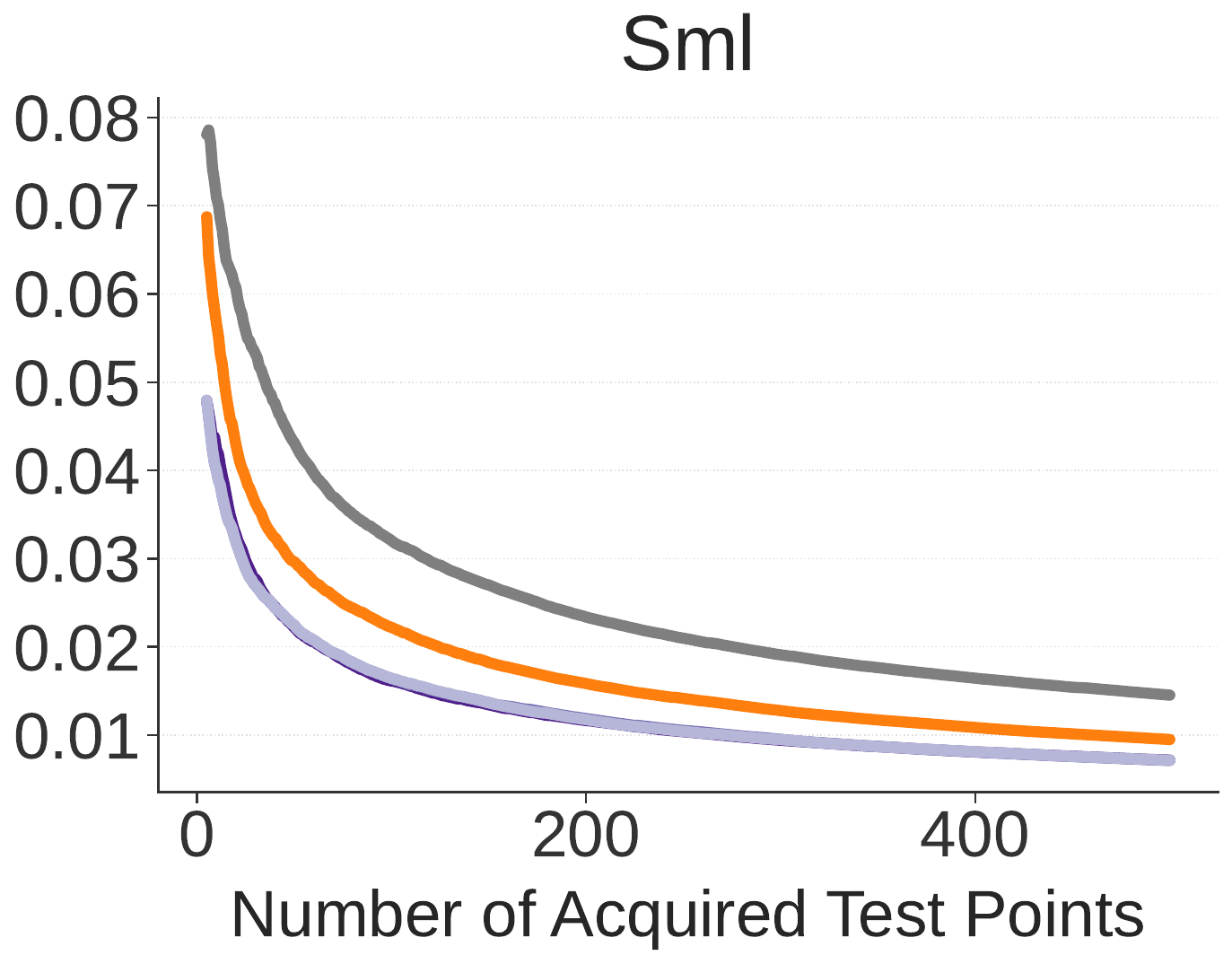}
        \label{fig:coverage_gas_width}
    \end{subfigure}
    \hspace{0.25cm}
    \begin{subfigure}[t]{0.22\textwidth}
        \centering        \includegraphics[width=\linewidth]{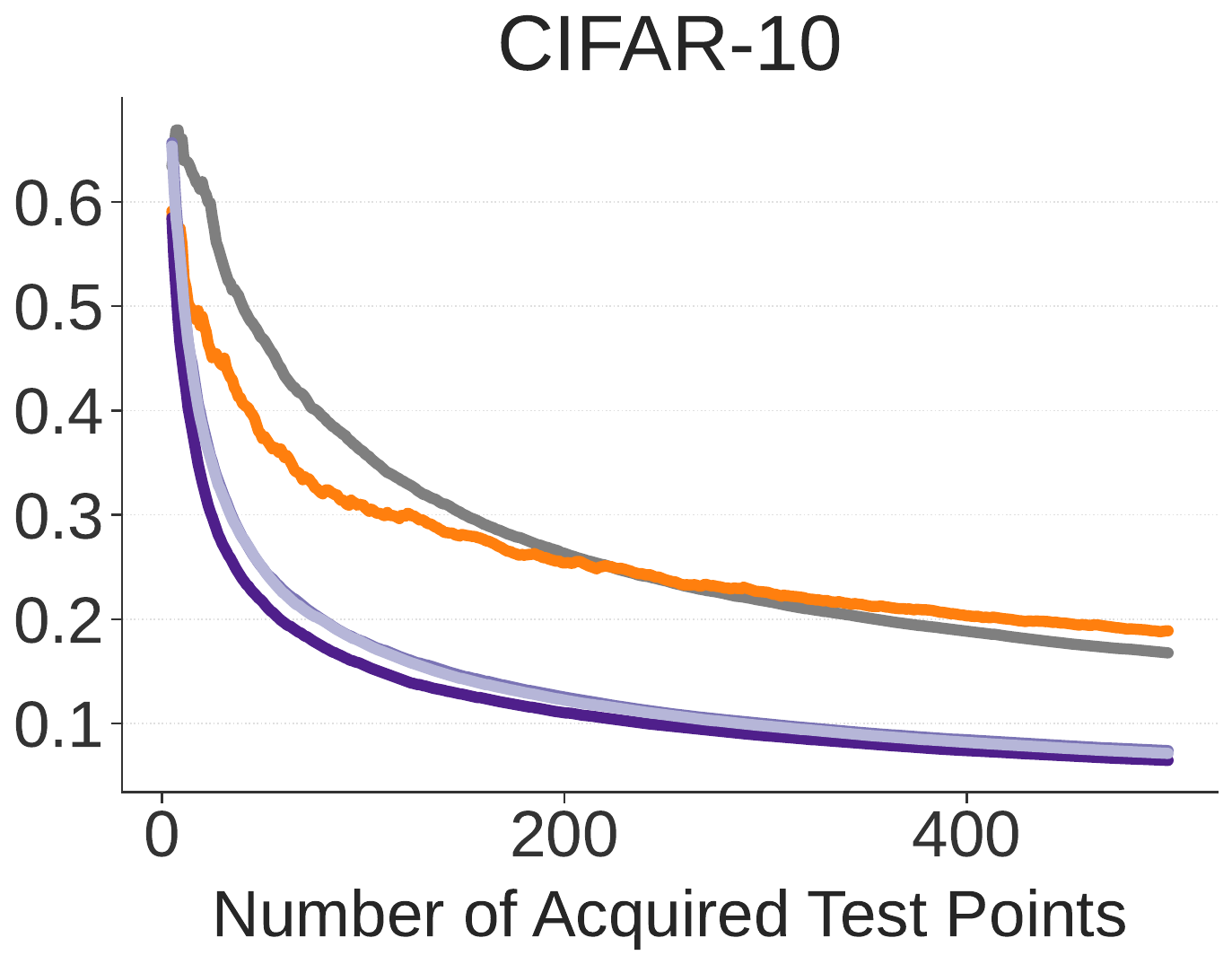}
        \label{fig:coverage_cifar10_width}
    \end{subfigure}
    \hspace{0.25cm}
    \begin{subfigure}[t]{0.22\textwidth}
        \centering
        \includegraphics[width=\linewidth]{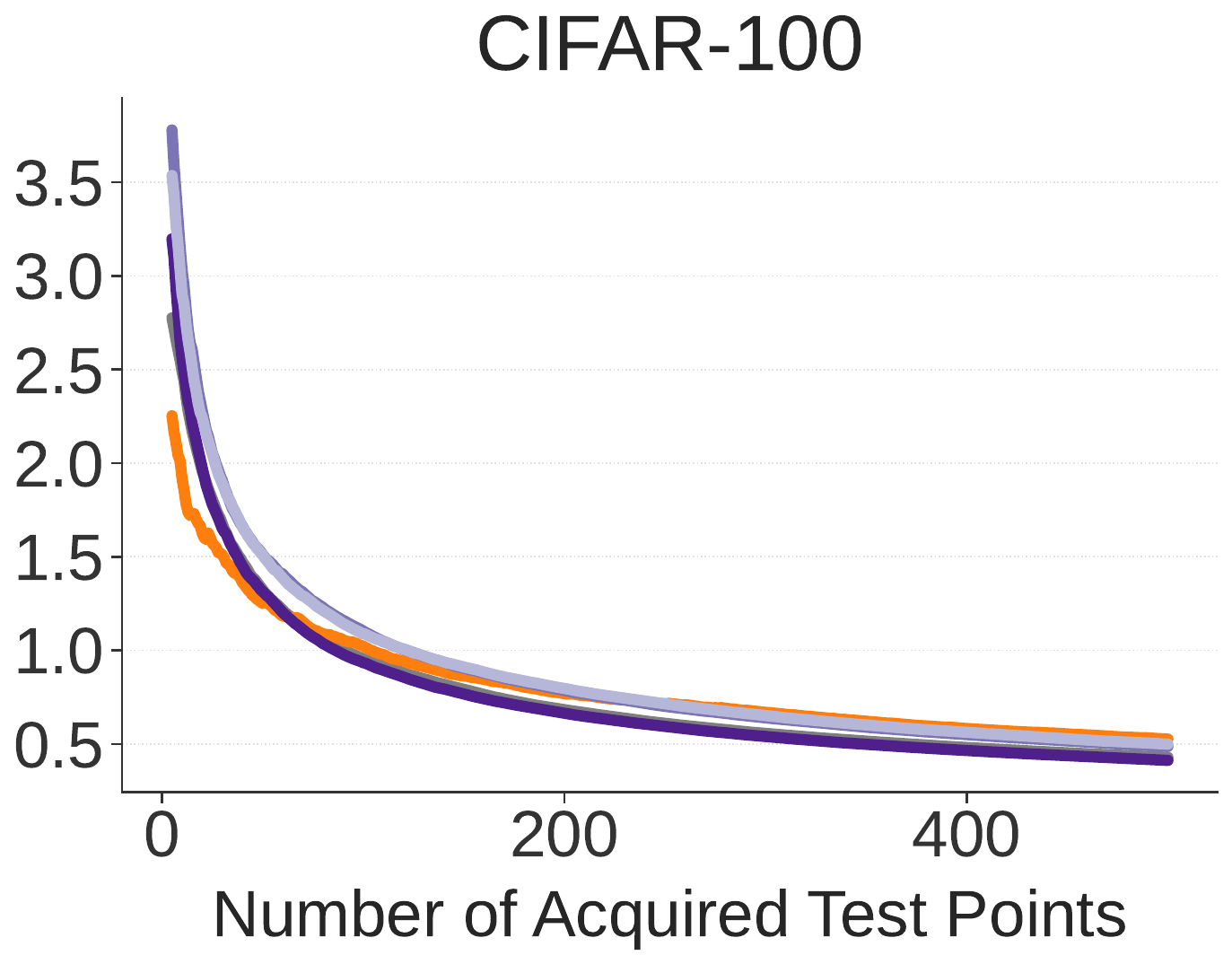}
        \label{fig:coverage_cifar100_width}
    \end{subfigure}

    \caption{Representative experiments on UCI and classification datasets for the mean widths of the confidence intervals of \textcolor{violet}{PPAT}, \textcolor{gray}{Random}, \textcolor{orange}{LURE}. Plots show the widths of asymptotic confidence intervals constructed with $\delta=0.1$ across 1000 trials. The \textcolor{NavyBlue}{ASE} baseline is excluded as it does \textbf{not} provide asymptotically valid confidence intervals.}
    \label{fig:ci_widths}
\end{figure}


\subsection{Additional Comparisons}
\label{app:add_comparisons}

Here, we present a broader set of comparative analyses against \colppat{}. 

\subsubsection{Further Comparisons with Prediction--Powered Inference}
\label{app:comparison_ppi}

Fig. \ref{fig:effect_of_aq_extended} extends Fig. \ref{fig:effect_of_aq} by additionally including the standard \random{} and \colure{} baselines, shown as dashed lines. 
Across all datasets, \textcolor{gray}{Random + PPI} outperforms both \random{} and \colure{}, further demonstrating that prediction--powered inference provides a strong baseline. Nevertheless, \colppat{} consistently achieves lower median squared error. This shows that the gains of \colppat{} do not come only from using a PPI--like estimator, but also from adaptively sampling points in a way that is tailored to reducing the variance of that estimator.

\begin{figure}[!h]
    \centering    \includegraphics[width=.65\textwidth]{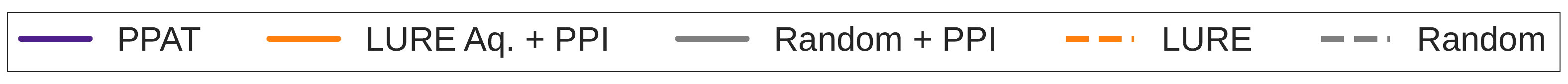}
    \vspace{0.75em}    
    
    \begin{subfigure}[t]{0.22\textwidth}
        \centering
        \includegraphics[width=\linewidth]{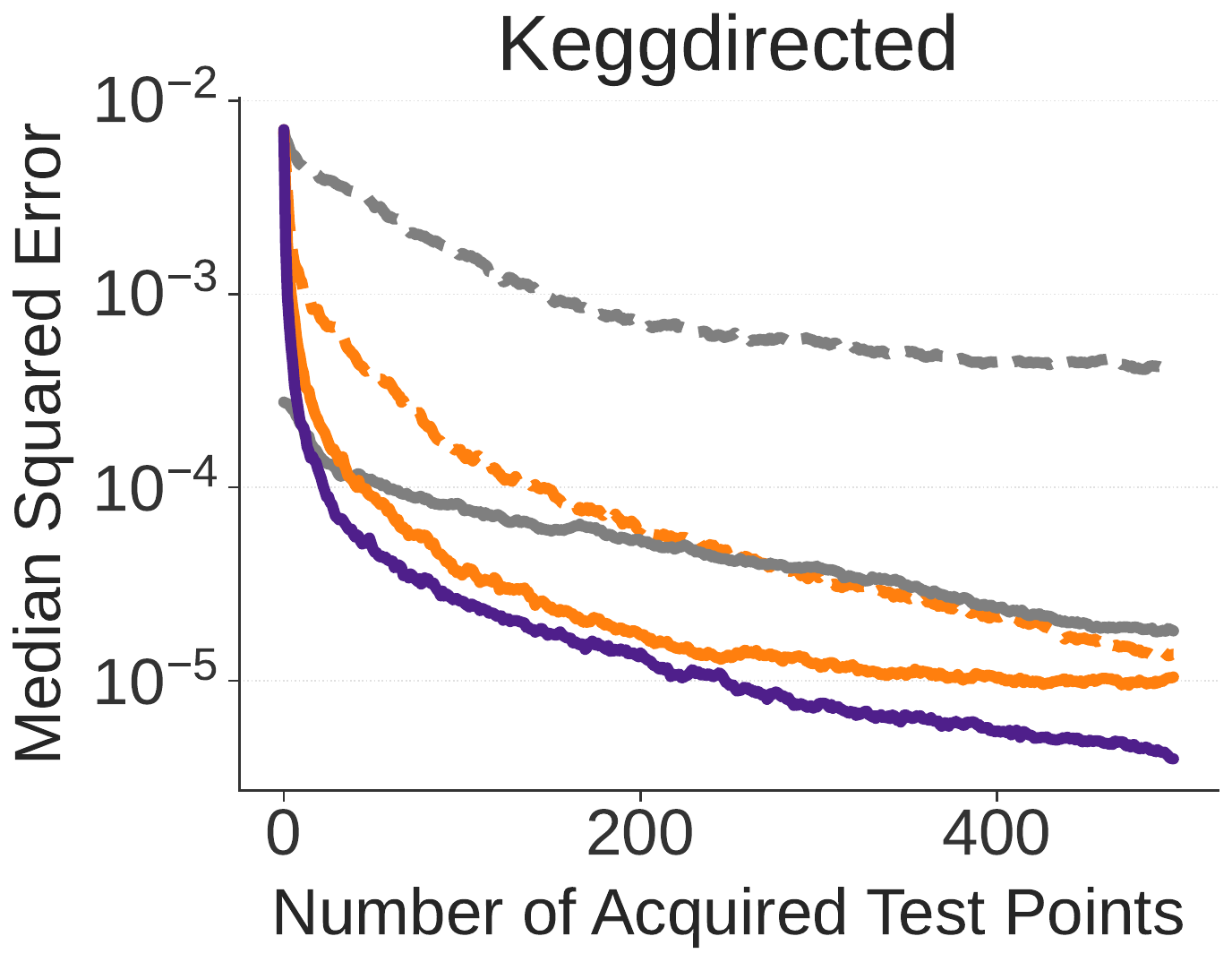}
        \label{fig:acq_ablations_keggdir_extended}
    \end{subfigure}
    \hspace{0.25cm}
    \begin{subfigure}[t]{0.22\textwidth}
        \centering
        \includegraphics[width=\linewidth]{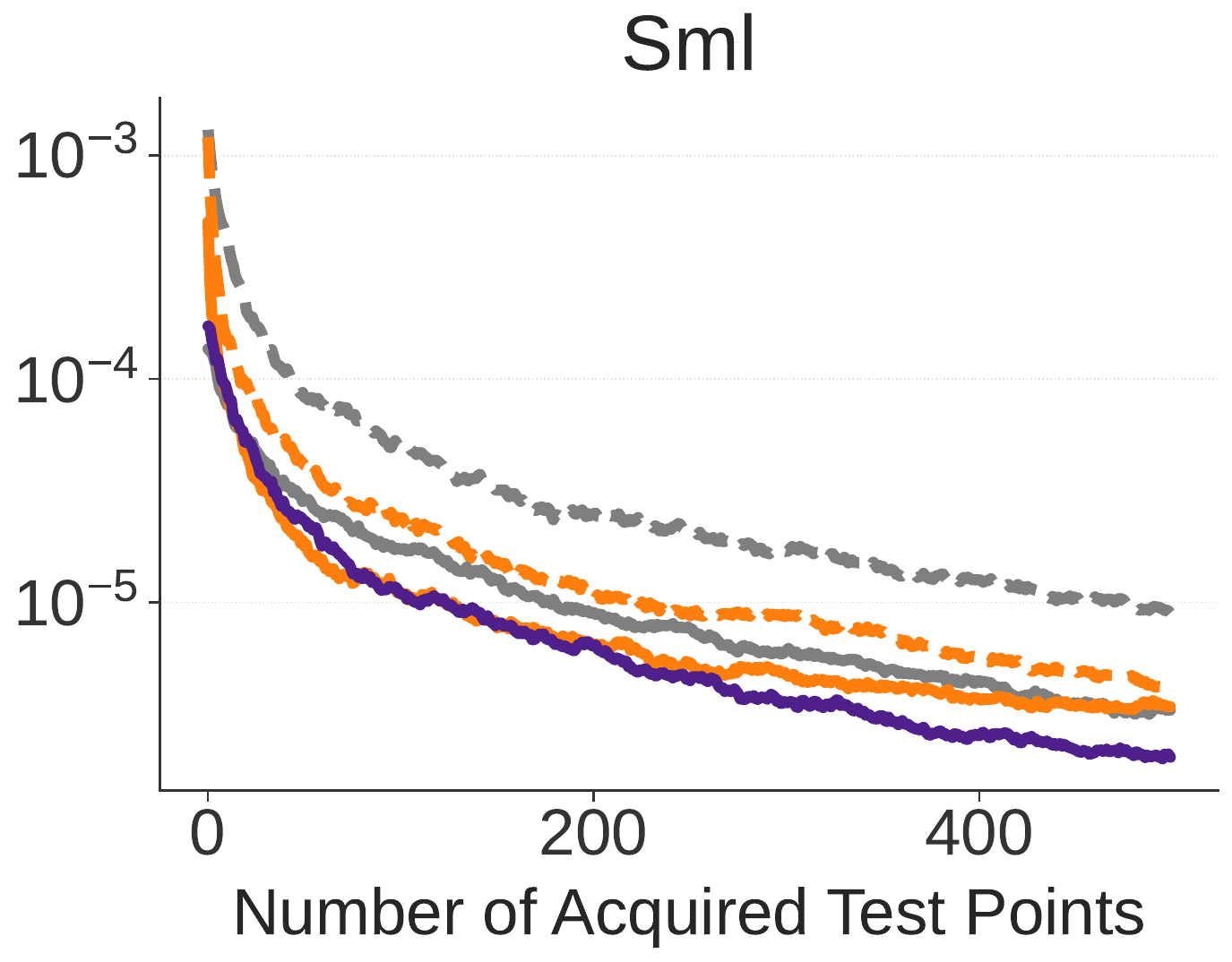}
        \label{fig:acq_ablations_gas_extended}
    \end{subfigure}
    \hspace{0.25cm}
    \begin{subfigure}[t]{0.22\textwidth}
        \centering        \includegraphics[width=\linewidth]{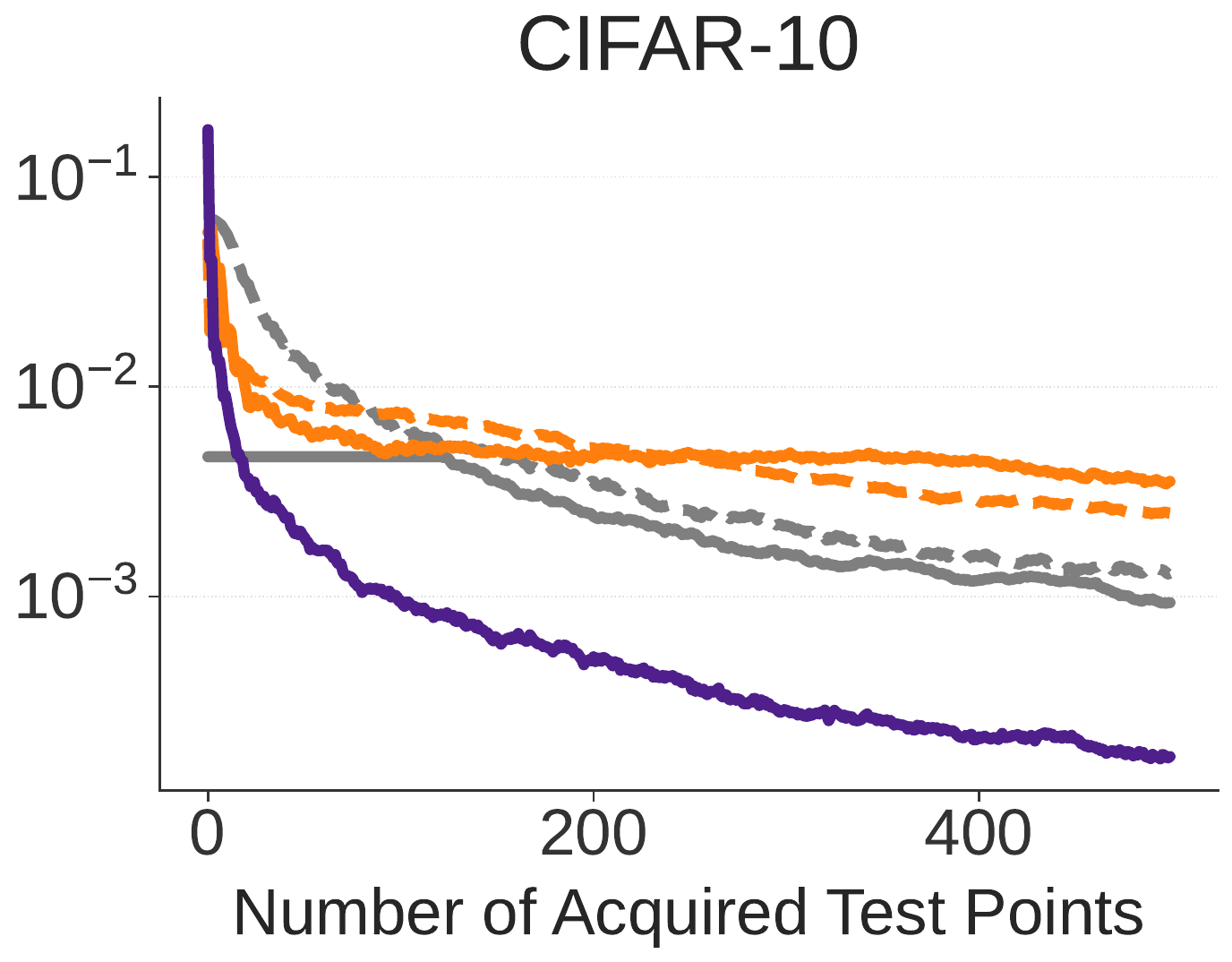}
        \label{fig:acq_ablations_cifar10_extended}
    \end{subfigure}
    \hspace{0.25cm}
    \begin{subfigure}[t]{0.22\textwidth}
        \centering
        \includegraphics[width=\linewidth]{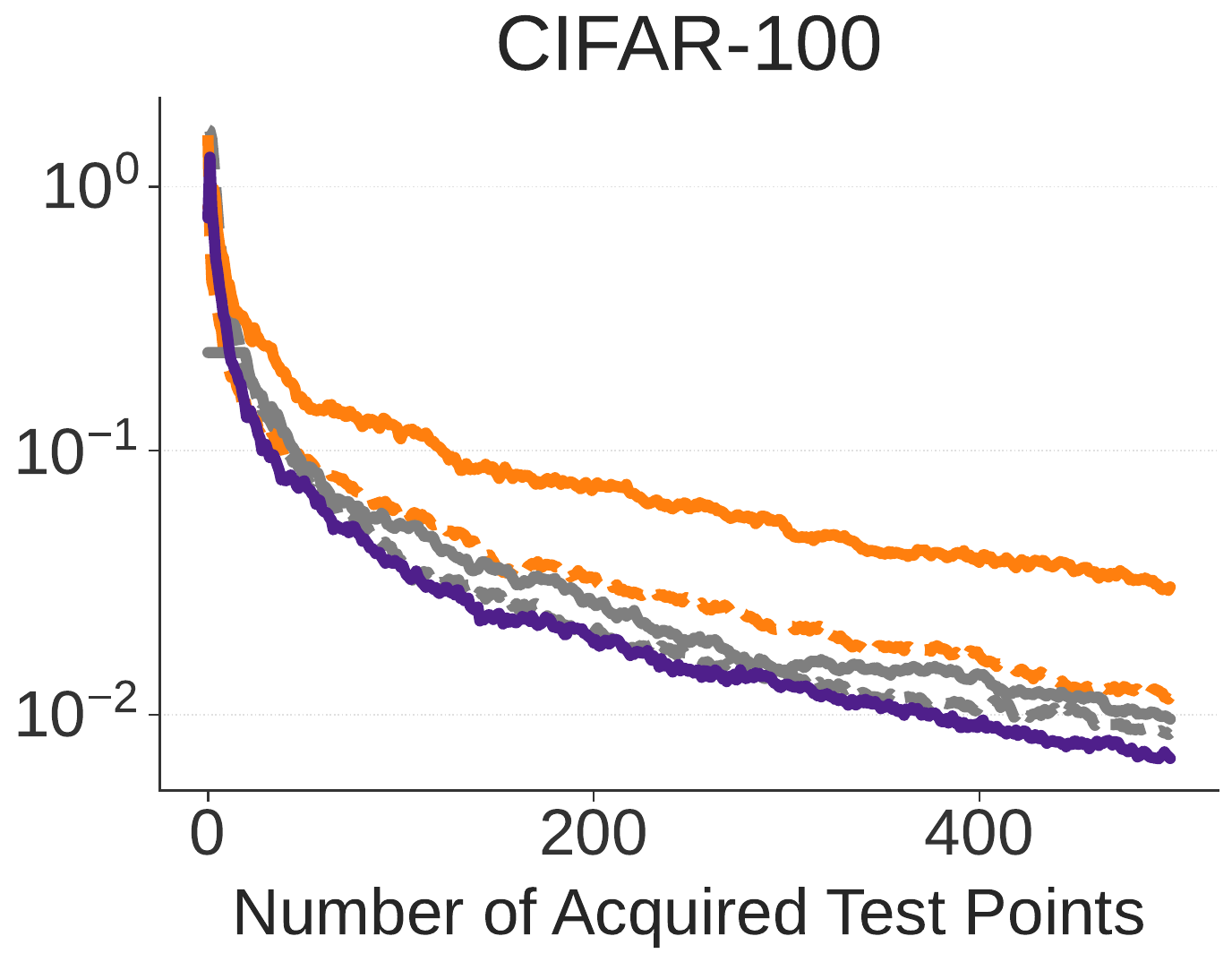}
        \label{fig:acq_ablations_cifar100_extended}
    \end{subfigure}

    \caption{Experiments on UCI and classification datasets studying the influence of our acquisition strategy. We compare \textcolor{violet}{PPAT} with $\lambda=1$ to using the PPI estimator with the LURE acquisition strategy (\textcolor{orange}{LURE Acq. + PPI}, \solidline{orange}) and random sampling (\textcolor{gray}{Random + PPI}, \solidline{gray}), as well as the standard \colure{} (\dashedline{orange}) and \random{} (\dashedline{gray}) baselines. Plots show the median squared error across 1000 trials. }
    \label{fig:effect_of_aq_extended}
\end{figure}

\subsubsection{Comparison with Active Surrogate Estimators Using the Proxy}
\label{app:comparison_ase_proxy}

In \S\ref{sec:experiments}, we compared \colppat{} with the original formulation of \ase{}. Here, we consider a related baseline that uses the proxy predictions directly. Specifically, since the proxy \(g\) is evaluated on the full test pool, we form the proxy losses \(\widetilde \ell_i\) and estimate the risk by their full-pool average. We refer to this baseline as \textcolor{red}{Proxy}. This tests whether the initial proxy model is already accurate enough to replace the unknown labels, rather than using the proxy only as a control variate as in \colppat{}.

This baseline is related to \ase{}, but is not directly comparable to the original \ase{} approach. In our setting, the proxy \(g\) can be a fixed black--box predictor: even if its predictions are available on the entire pool, we may not have access to posterior samples, ensemble members, or other measures of epistemic uncertainty needed to compute the XWED acquisition score. Like \ase{}, however, \textcolor{red}{Proxy} generally yields a biased estimate of the risk and does not inherit the unbiasedness guarantee of \colppat{}.

\begin{figure}[!h]
    \centering    \includegraphics[width=.75\textwidth]{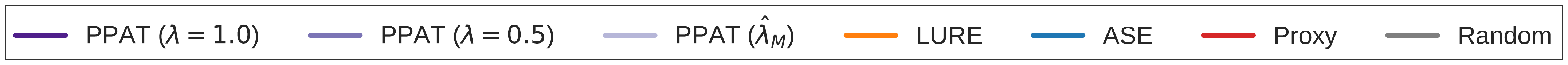}
    \vspace{0.75em}    
    
    \begin{subfigure}[t]{0.25\textwidth}
        \centering
        \includegraphics[width=\linewidth]{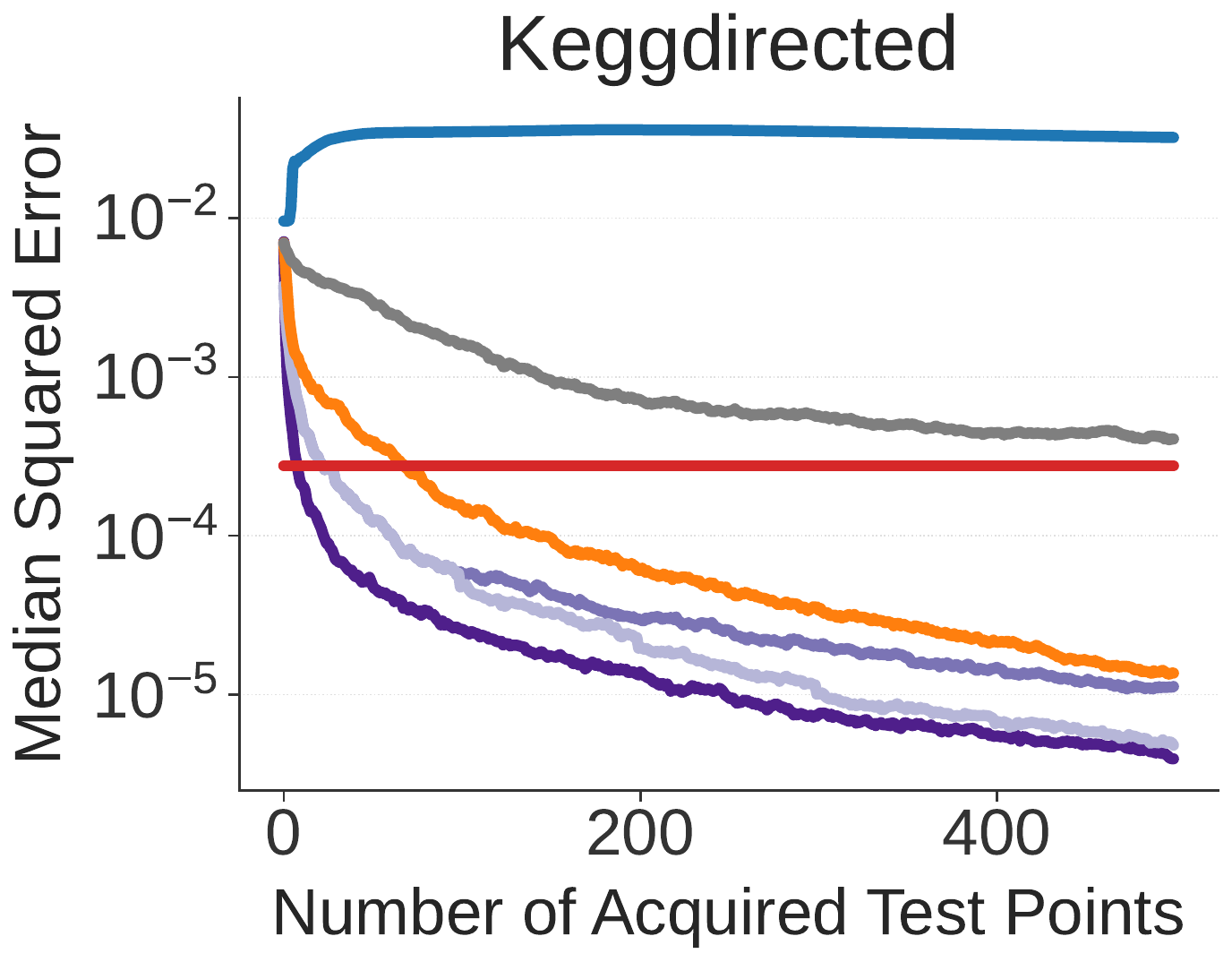}
        \label{fig:keggdir_with_proxy}
    \end{subfigure}
    \hspace{0.25cm}
    \begin{subfigure}[t]{0.25\textwidth}
        \centering
        \includegraphics[width=\linewidth]{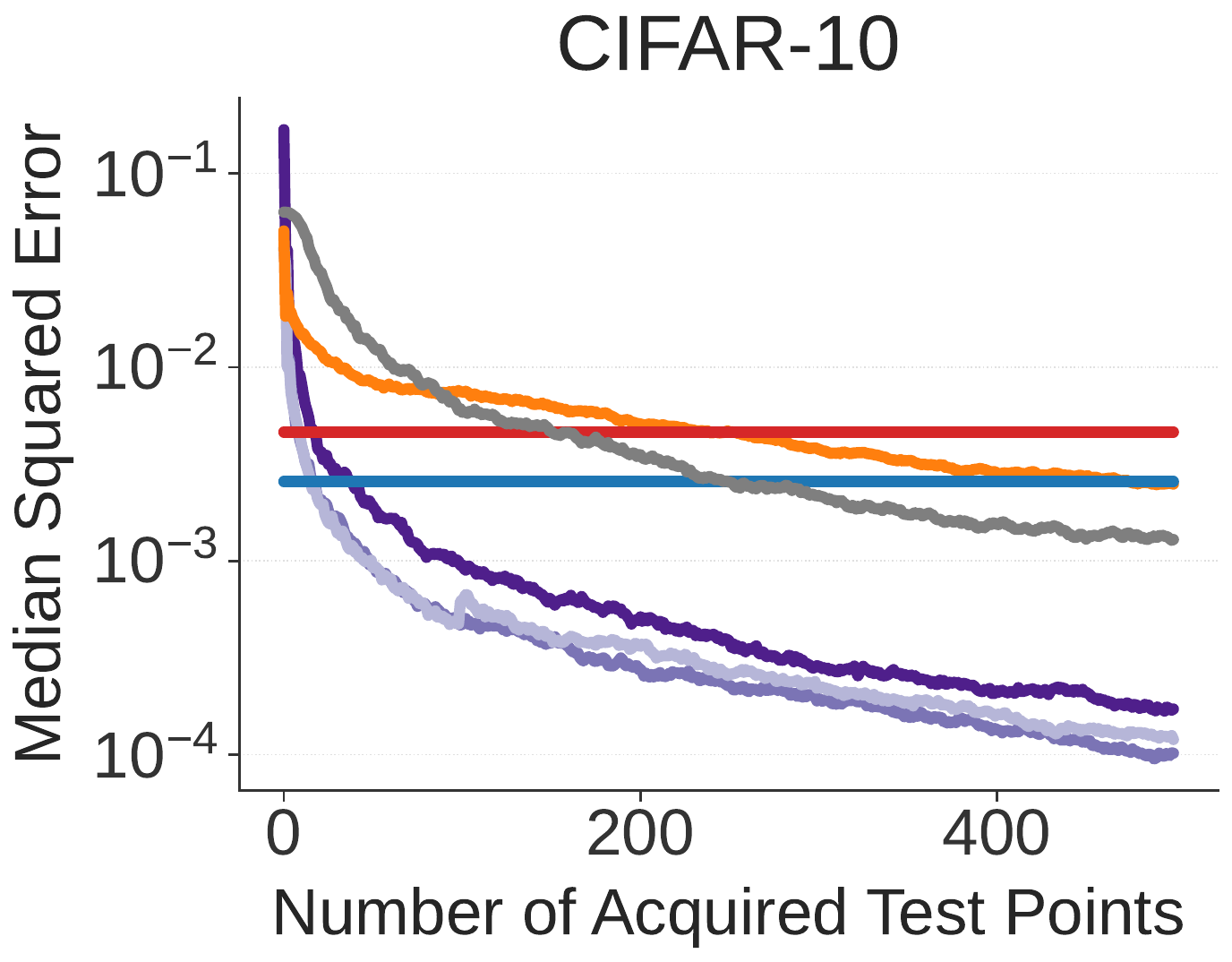}
        \label{fig:cifar10_with_proxy}
    \end{subfigure}
    \hspace{0.25cm}
    \caption{Experiments on \texttt{Keggdirected} and \texttt{CIFAR-10} comparing \textcolor{violet}{PPAT} with \textcolor{gray}{Random}, \textcolor{NavyBlue}{ASE}, \textcolor{red}{Proxy}, \textcolor{orange}{LURE}. Plots show the median squared error across 1000 trials. Note that \textcolor{NavyBlue}{ASE} and \textcolor{red}{Proxy} result in \textbf{biased} estimates of the risk.}
    \label{fig:experiments_with_proxy_losses}
\end{figure}

We report this comparison on \texttt{Keggdirected} and \texttt{CIFAR-10}, representing one regression dataset and one classification dataset. We use the same setup as in Figs. \ref{fig:main_uci} and \ref{fig:classification_datasets} which is described in detail in \S\ref{app:regression_exp-app_details} and \S\ref{app:classification_exp_details} respectively.

Fig. \ref{fig:experiments_with_proxy_losses} shows that \colppat{} continues to outperform the \textcolor{red}{Proxy} baseline for all choices of $\lambda$ on both datasets. This indicates that, although the proxy predictions contain useful information, directly treating them as labels is not sufficient to match the performance of our approach. Interestingly, the \textcolor{red}{Proxy} baseline performs better than \ase{} on the regression dataset, but worse than \ase{} on the classification dataset. A likely explanation is that, in the regression setting, the \texttt{TabPFN--2.5} proxy makes predictions using in--context learning with the task--specific training set, making its proxy losses relatively informative for the test pool. In contrast, in the classification setting, the CLIP proxy predictions are zero--shot and are not adapted to the specific downstream classification task, so the resulting proxy losses may be less aligned with the true losses.

\subsubsection{Comparison with Active Statistical Inference}
\label{app:comparison_with_asi}

Here, we compare \colppat{} with Active Statistical Inference (\textcolor{darkgreen}{ASI}, \cite{zrnic2024active}). Like before, we use one regression  (\texttt{Keggdirected}) and one classification dataset (\texttt{CIFAR-10}). For a fair comparison with \colppat{}, we use the ``batch'' setting of \textcolor{darkgreen}{ASI} for \texttt{CIFAR-10} and the ``active'' setting of \textcolor{darkgreen}{ASI} for \texttt{Keggdirected}, where we use the default hyperparameters from the paper for both\footnote{This includes the mixing hyperparameter $\tau$ and the finetuning batch size. We did not find any noticeable gains from further tuning of the hyperparameters.}. Note that for the ``active'' setting of \textcolor{darkgreen}{ASI}, the proxy model is updated  periodically as we collect more labels, whereas for us the proxy model is kept fixed. 

\begin{table}[!h]
\centering
\caption{Median squared error, mean confidence--interval width, and mean coverage on \texttt{Keggdirected} and \texttt{CIFAR--10} over 1000 trials, reported to two significant figures. $\pm$ indicates one standard error; standard errors below $1 \times 10^{-3}$ are reported as $0.00$. Note that \ase{} does not provide confidence intervals, so its width and coverage results are omitted.}
\label{tab:c43_summary}
\resizebox{\textwidth}{!}{%
\begin{tabular}{lccc|ccc}
\toprule
\multirow{2}{*}{Method} & \multicolumn{3}{c}{\texttt{Keggdirected}} & \multicolumn{3}{c}{\texttt{CIFAR-10}} \\
\cmidrule(lr){2-4} \cmidrule(lr){5-7}
& Med. squared error & Mean width & Mean coverage & Med. squared error & Mean width & Mean coverage \\
\midrule
\ase{} \citep{kossen2022active} & $3.2 \times 10^{-2} {\scriptstyle \pm 0.00}$ & --- & --- & $2.6 \times 10^{-3} {\scriptstyle \pm 0.00}$ & --- & --- \\
\colure{} \citep{kossen2021active} & $1.4 \times 10^{-5}{\scriptstyle \pm 0.00}$ & $1.9 \times 10^{-2}{\scriptstyle \pm 0.00}$ & $0.89{\scriptstyle \pm 0.0099}$ & $2.5 \times 10^{-3}{\scriptstyle \pm 0.00}$ & $1.9 \times 10^{-1}{\scriptstyle \pm 0.0047}$ & $0.68{\scriptstyle \pm 0.015}$ \\
\random{} & $4.1 \times 10^{-4}{\scriptstyle \pm 0.00}$ & $7.2 \times 10^{-1}{\scriptstyle \pm 0.013}$ & $0.084{\scriptstyle \pm 0.0018}$ & $1.3 \times 10^{-3} {\scriptstyle \pm 0.00}$ & $1.7 \times 10^{-1}{\scriptstyle \pm 0.0011}$ & $0.83{\scriptstyle \pm 0.010}$ \\
\textcolor{darkgreen}{ASI} \citep{zrnic2024active} & $4.5 \times 10^{-5}{\scriptstyle \pm 0.00}$ & $4.9 \times 10^{-2}{\scriptstyle \pm 0.00}$ & $0.99{\scriptstyle \pm 0.0024}$ & $9.8 \times 10^{-5}{\scriptstyle \pm 0.00}$ & $6.5 \times 10^{-2}{\scriptstyle \pm 0.0010}$ & $0.98{\scriptstyle \pm 0.0042}$ \\
\bottomrule
\bottomrule
\rowcolor{purple!10} \colppat{} ($\lambda=1$) & $4.0 \times 10^{-6}{\scriptstyle \pm 0.00}$ & $1.0 \times 10^{-2}{\scriptstyle \pm 0.00}$ & $0.90{\scriptstyle \pm 0.0096}$ & $1.7 \times 10^{-4}{\scriptstyle \pm 0.00}$ & $6.5 \times 10^{-2}{\scriptstyle \pm 0.00}$ & $0.90{\scriptstyle \pm 0.0097}$ \\
\rowcolor{purple!10} \colppat{} ($\lambda=0.5$) & $1.1 \times 10^{-5}{\scriptstyle \pm 0.00}$ & $1.0 \times 10^{-2}{\scriptstyle \pm 0.00}$ & $0.89{\scriptstyle \pm 0.010}$ & $1.0 \times 10^{-4}{\scriptstyle \pm 0.00}$ & $7.4 \times 10^{-2}{\scriptstyle \pm 0.00}$ & $0.90{\scriptstyle \pm 0.0097}$ \\
\rowcolor{purple!10} \colppat{} ($\lambda=\widehat \lambda_M$) & $4.8 \times 10^{-6}{\scriptstyle \pm 0.00}$ & $1.0 \times 10^{-2}{\scriptstyle \pm 0.00}$ & $0.88{\scriptstyle \pm 0.010}$ & $1.2 \times 10^{-4}{\scriptstyle \pm 0.00}$ & $7.1 \times 10^{-2}{\scriptstyle \pm 0.00}$ & $0.89{\scriptstyle \pm 0.010}$ \\
\bottomrule
\end{tabular}%
}
\end{table}

Following the original \textcolor{darkgreen}{ASI} paper, we report the final mean coverage and widths of the confidence intervals of both approaches, as well as the median squared error. From Table \ref{tab:c43_summary} we see that on \texttt{Keggdirected}, \colppat{} improves over \textcolor{darkgreen}{ASI} in both risk estimation and confidence–interval width while still attaining the desired coverage level: each of the three \colppat{} variants attains a lower median squared error and a substantially narrower mean interval width than \textcolor{darkgreen}{ASI}, whose intervals are moreover conservative relative to the nominal level. We believe this advantage stems from the pool–based nature of our acquisition strategy: because \colppat{} conditions on the full test pool, each acquisition is guided not only by the previously acquired labels but also by the remaining pool, allowing it to use more information when deciding which points to query. By contrast, the active variant of \textcolor{darkgreen}{ASI} used here acquires labels in a more streaming manner. On \texttt{CIFAR–10}, however, \textcolor{darkgreen}{ASI} performs comparably to, or slightly better than, \colppat{} in terms of median squared error and mean interval width, while \colppat{} attains coverage closer to the nominal level. This is unsurprising, since the batch variant of \textcolor{darkgreen}{ASI} used here is closely related to our approach: both are ultimately built on importance–weighted estimates of the risk but with different proposals.

%% file: appendix/additional_details/explanation_of_general_surrogate_score.tex
\subsection{On the Choice of the Surrogate Score in Section \ref{sub:active_testing_lure_estimator}}
\label{app:surrogate_score_choice}

Here, we discuss the choice of the surrogate score in \S\ref{sub:active_testing_lure_estimator}. This also explains our choice in \S\ref{sub:ppat_active_proposal}.

The myopic oracle proposal of \S\ref{sub:active_testing_lure_estimator} is given by
\begin{equation}
\label{eqn:oracle_proposal_general_app}
Q^\star_{m}(i)
\;\propto\;
\bigl|\ell_i\bigr|.
\end{equation}
Because the true label $y_i$ (and hence $\ell_i$) is unknown for $i \in S_m$,
$|\ell_i|$ must be replaced by a surrogate score computed from the surrogate
predictive distribution $\pi_m(\cdot \mid \bx_i)$. There are two natural candidates:
\begin{align}
    a^{\,\mathrm{abs}}_{m}(i)
    &:= \mathbb{E}_{Y \sim \pi_m(\cdot \mid \bx_i)}\!
        \bigl[\, \bigl| \mathcal{L}(f(\bx_i), Y) \bigr| \,\bigr],
        \label{eq:abs-surrogate-lure}
    \\[4pt]
    a_{m}(i)
    &:= \sqrt{\,
        \mathbb{E}_{Y \sim \pi_m(\cdot \mid \bx_i)}\!
        \bigl[\, \mathcal{L}(f(\bx_i), Y)^{2} \,\bigr]\,}.
        \label{eq:rms-surrogate-lure}
\end{align}
The score $a^{\,\mathrm{abs}}_{m}$ is obtained by applying $\pi_m$ directly to the oracle expression $|\ell_i|$; this is the score that one might expect on first reading. The score $a_{m}$ in \eqref{eq:rms-surrogate-lure} is the one we use. The two scores coincide only when
$\mathcal{L}(f(\bx_i), Y)$ is constant under $\pi_m(\cdot \mid \bx_i)$; otherwise, by Jensen's inequality applied to $|\cdot|$,
\begin{equation*}
    a^{\,\mathrm{abs}}_{m}(i) \;\leq\; a_{m}(i),
\end{equation*}
with strict inequality whenever the surrogate has nontrivial predictive uncertainty about the loss. Below, we discuss why  $a_{m}$ is the more principled surrogate of the oracle.

Firstly, note that the oracle proposal $Q^\star_{m}$ in \eqref{eqn:oracle_proposal_general_app} is derived by minimising the conditional sampling variance of LURE, which is upper-bounded by the second-moment objective

\begin{equation}
    \mathbb{E}\!\left[
        \left(\frac{\ell_{I_m}}{N\,Q_m(I_m)}\right)^{\!2}
        \;\Big|\; I_{1:m-1}
    \right]
    \;=\;
    \frac{1}{N^2}\sum_{i \in S_m} \frac{\ell_i^{2}}{Q_m(i)}.
    \label{eq:variance-objective-lure}
\end{equation}

See \S\ref{sec:app:myopic_oracle_proposal} for further details. Now, the objective on the right-hand side is \emph{quadratic} in the loss: the per--point contribution is $\ell_i^{2} / Q_m(i)$, not $|\ell_i|/Q_m(i)$. The closed--form oracle $Q^\star_{m}(i) \propto |\ell_i|$ arises only after solving this quadratic problem in closed form via Lagrange multipliers; the absolute value is a consequence of the optimisation, not of the objective.

When $\ell_i$ is unknown and a surrogate $\pi_m$ is introduced, the right
quantity to substitute into \eqref{eq:variance-objective-lure} is therefore the predictive expectation of $\ell_i^{2}$, i.e.\ the second moment, not a point estimate of $|\ell_i|$. Concretely, replacing $\ell_i^{2}$ by $\mathbb{E}_{\pi_m(\cdot | \mathbf{x_i})}[\mathcal{L}(f(\bx_i), Y)^2]$ in \eqref{eq:variance-objective-lure} and re-solving the same minimisation problem yields exactly
\begin{equation*}
    Q_{m}^{\mathrm{surr}}(i)
    \;\propto\;
    \sqrt{\mathbb{E}_{\pi_m(\cdot | \mathbf{x_i})}\!\bigl[\mathcal{L}(f(\bx_i), Y)^{2}\bigr]}
    \;=\;
    a_{m}(i).
\end{equation*}

In other words, $a_{m}$ is the surrogate score induced by the \emph{variance objective itself}, not by an after--the--fact substitution into the closed--form oracle.

%% file: appendix/additional_details/additional_details.tex

\subsection{Simplified Expressions for the Proposals of LURE and PPAT}
\label{sub:simplification_proposal_mse_loss}
Here, we describe the proposals used for LURE and PPAT in our experiments. We discuss the case for the MSE loss -- the case for the cross--entropy loss follows similarly.

\paragraph{LURE.} Let
$f_i := f(\mathbf{x}_i)$,
$\mu_{m,i} := \E_{\pi_m(\cdot\mid \mathbf{x}_i)}[y]$ and
$\sigma^2_{m,i} := \Var_{\pi_m(\cdot\mid \mathbf{x}_i)}(y).$
For the MSE loss, we follow \cite{kossen2021active} and use the proposal
\begin{equation*}
Q_m(i) \propto \E_{\pi_m(\cdot\mid \mathbf{x}_i)}\!\left[(f_i-y)^2\right] = \Var_{\pi_m(\cdot\mid \mathbf{x}_i)}(y)
+ \Bigl(\E_{\pi_m(\cdot\mid \mathbf{x}_i)}[y]-f_i\Bigr)^2 = \sigma^2_{m,i}+(\mu_{m,i}-f_i)^2.
\end{equation*}
Note that this is not the same as the proposal $Q_m(i) \propto \sqrt{\E_{\pi_m(\cdot \mid \mathbf{x}_i)}[(f_i - y)^4]}$ described in \S\ref{sub:active_testing_lure_estimator} and  \S\ref{app:surrogate_score_choice}; since we know that our loss is nonnegative, we can instead use the simpler $Q_m(i) \propto \E_{\pi_m(\cdot\mid \mathbf{x}_i)}\!\left[(f_i-y)^2\right]$.

From the above, we can see that under the MSE loss the LURE proposal has the standard bias--variance
decomposition form: it favours points for which either the surrogate is uncertain about the label, or the surrogate mean disagrees strongly with the prediction of the model being evaluated. 

\paragraph{PPAT.} Let  $c_i = \widetilde{\ell}_i - \widetilde{R}$, and $\widetilde{\ell}_i = (f_i-g(\bx_i))^2$. The proposal  $a_{m,\lambda}$ \eqref{eqn:ppat_score} admits the following decomposition
\begin{equation}
\label{eqn:ppat_proposal_bias_var}
\mathbb{E}_{\pi_m(\cdot \mid \bx_i)}
\!\left[
\bigl(\mathcal{L}(f(\bx_i),Y)-\lambda c_i\bigr)^2
\right]
=
\mathrm{Var}_{\pi_m(\cdot \mid \bx_i)}
\!\left(
\mathcal{L}(f(\bx_i),Y)
\right)
+
\left(
\mathbb{E}_{\pi_m(\cdot \mid x_i)}
\!\left[
L(f(x_i),Y)
\right]
-
\lambda c_i
\right)^2.    
\end{equation}
This shows that the PPAT proposal favours points for which either the surrogate is uncertain about the loss, or the surrogate mean loss is far from the proxy term $\lambda c_i$. 

Under the MSE loss,  \eqref{eqn:ppat_proposal_bias_var} becomes
\begin{equation*}
\E_{\pi_m(\cdot\mid \bx_i)}\!\left[\bigl((f_i-Y)^2-\lambda c_i\bigr)^2\right]
=
\Var_{\pi_m(\cdot\mid \bx_i)}\!\left((f_i-Y)^2\right)
+
\left(
\E_{\pi_m(\cdot\mid \bx_i)}[(f_i-Y)^2]-\lambda c_i
\right)^2.    
\end{equation*}

Now, using the standard bias--variance decomposition, the second term simplifies to
\begin{equation*}
\left(
\sigma^2_{m,i}+(\mu_{m,i}-f_i)^2-\lambda c_i
\right)^2.    
\end{equation*}
To simplify the first term, let $a_{m,i}=f_i-\mu_{m,i}$ and $z=Y-\mu_{m,i}$ so that $\E[z]=0$, $\Var(z)=\sigma^2_{m,i}$, and $(f_i-Y)^2=(a_{m,i}-z)^2$. Then we have:
\begin{equation*}
\Var\!\left((f_i-Y)^2\right)
=
\Var(z^2-2a_{m,i}z)
=
\bigl(\E[z^4]-\sigma_{m,i}^4\bigr)
+4a_{m,i}^2\sigma_{m,i}^2
-4a_{m,i}\E[z^3].    
\end{equation*}

Now, assuming the surrogate predictive distribution is Gaussian\footnote{This is the case for our experiments where we use a Bayesian linear regression model with Gaussian priors as the surrogate. It is also the case for a large family of models such as Gaussian process regression with a Gaussian likelihood and Bayesian linear regression with conjugate Gaussian priors.}, i.e. $Y\mid \bx_i \sim \mathcal{N}(\mu_{m,i},\sigma^2_{m,i})$,
then $\E[z^3]=0$ and $\E[z^4]=3\sigma_{m,i}^4$, giving
\begin{equation*}
\Var_{\pi_m(\cdot\mid \bx_i)}\!\left((f_i-Y)^2\right)
=
2\sigma_{m,i}^4 + 4\sigma_{m,i}^2(\mu_{m,i}-f_i)^2.    
\end{equation*}

Therefore, under the MSE loss and a Gaussian surrogate, the PPAT proposal simplifies to
\[
Q_{m,\lambda}^{\mathrm{PPAT}}(i)\propto
\sqrt{
2\sigma_{m,i}^4
+4\sigma_{m,i}^2(\mu_{m,i}-f_i)^2 +
\left(\sigma^2_{m,i}+(\mu_{m,i}-f_i)^2-\lambda c_i\right)^2
}.
\]

%% file: appendix/additional_details/ppi.tex
\subsection{Extended Background}
\label{app:extended_background}

%% file: appendix/additional_details/ase.tex
\subsubsection{Active Testing with Active Surrogate Estimators}
\label{app:extended_background_ase}

Here, we provide further details on the XWED acquisition function used in \cite{kossen2022active}.

At acquisition round $m$, let $\pi_m(\cdot \mid \mathbf{x})$ denote the surrogate predictive distribution used in the main text. When this surrogate is Bayesian, ensemble--based, or otherwise
represented as a mixture, we write $\phi_m$ for the fitted quantities that determine the surrogate at round $m$ -- for example, posterior hyperparameters, variational parameters, or the trained ensemble -- and let $\Theta$ denote the latent component of the surrogate predictive model. With this notation, the round-$m$ surrogate predictive distribution can be written as the posterior predictive mixture
\begin{equation*}
\pi_m(Y \mid \mathbf{x}) \equiv \pi_m(Y \mid \mathbf{x};\phi_m)
= \mathbb{E}_{\Theta \sim \pi_m(\cdot;\phi_m)}
\bigl[\pi_m(Y \mid \mathbf{x},\Theta;\phi_m)\bigr].
\end{equation*}
The XWED score is then given by
\begin{align*}
\mathrm{XWED}(\mathbf{x})
&=
\mathbb{E}_{Y \sim \pi_m(\cdot \mid \mathbf{x})}
\!\left[
-\mathcal{L}(f(\mathbf{x}), Y) \log \pi_m(Y \mid \mathbf{x})
\right]
 \\ &\quad -
\mathbb{E}_{\Theta \sim \pi_m(\cdot;\phi_m)}
\!\left[
\mathbb{E}_{Y \sim \pi_m(\cdot \mid \mathbf{x},\Theta; \phi_m)}
\!\left[
-\mathcal{L}(f(\mathbf{x}), Y) \log \pi_m(Y \mid \mathbf{x},\Theta;\phi_m)
\right]
\right].
\end{align*}
with the following acquisition rule at round $m$:
\begin{equation*}
i_m
=
\arg\max_{i \in S_m}
\mathrm{XWED}(\mathbf{x}_i).
\end{equation*}

%% file: appendix/mathematical_results/asymptoticresults.tex
\label{sec:app:asymptotics}

Here, we present the full statements and proofs for the results in the main body of our paper. Since the main body states results in a simplified form for readability, we first introduce the definitions needed to state these results formally in \S\ref{sec:app:defns}. All subsequent results in this section make use of these definitions.

We note that our finite-sample unbiasedness, variance, and \(L^2\)-consistency arguments (\S\ref{sec:app:luredef}--\ref{sec:app:lureconsistency}) closely follow the analysis of
\citet[Appendix B.4--B.7]{farquhar2021statistical}. The main difference is that we state the arguments conditionally on a fixed finite test pool and for general real-valued arrays \(\xi_{M,i}\), which generalises the results in \cite{farquhar2021statistical} to include both standard LURE (\(\xi_{M,i}=\ell_i\)) and PPAT (\(\xi_{M,i}=\ell_i-\lambda(\widetilde{\ell}_i-\tilde R_M)\)). In addition to this, \S\ref{sec:app:lureCLT} provides a new asymptotic normality result for the resulting LURE-style estimators.


\subsection{Definitions}
\label{sec:app:defns}
We first introduce the definition of an \emph{active proposal process} (APP), which formalises the adaptive sampling proposals considered throughout the paper.

\begin{definition}[Active proposal process, APP]
\label{def:admissible_active_proposal}
Let \(H_0\) denote all information available before querying the true labels.
This includes the inputs \(\bx_{1:N}\), model predictions, or
other side information, but it does not include the unqueried true labels. 

An admissible active proposal process (APP) \(Q_{1:M}=(Q_1,\ldots,Q_M)\) is generated
by a collection of measurable maps \(\eta_1,\ldots,\eta_M\) such that, at step
\(m\), the map \(\eta_m\) takes as input the observed history $H_0,\,
(I_1,y_{I_1}),\ldots,(I_{m-1},y_{I_{m-1}})$
and returns a probability distribution on \(S_m\) with full support. The
proposal used at step \(m\) is denoted by
\begin{equation*}
Q_m
=
\eta_m\!\left(
H_0,
(I_1,y_{I_1}),\ldots,(I_{m-1},y_{I_{m-1}})
\right),
\end{equation*}
and satisfies
\begin{equation*}
Q_m(i)=0 \quad \text{for } i\notin S_m,
\qquad
\sum_{i\in S_m}Q_m(i)=1,
\qquad
Q_m(i)>0 \quad \text{for } i\in S_m.    
\end{equation*}
The next queried index is sampled according to
\begin{equation*}
\Pr\!\left(
I_m=i
\mid
H_0,I_1,y_{I_1},\ldots,I_{m-1},y_{I_{m-1}}
\right)
=
Q_m(i),
\qquad i\in S_m.    
\end{equation*}
\end{definition}

We now formalise the definition of a surrogate model introduced in \S\ref{sub:active_testing_lure_estimator}.

\begin{definition}[Surrogate model]
Let  $\cF_m=\sigma(\{I_1,\ldots,I_m\})$. At round $m$, a surrogate model is an $\mathcal F_{m-1}$--measurable Markov kernel
$\pi_m(\cdot\mid x)$
on $\mathcal Y$. That is, for each input $x$, $\pi_m(\cdot\mid x)$ is a predictive distribution over the unknown label, constructed using only information available before round $m$.
\end{definition}

Typical examples include Bayesian neural networks, random forests and Gaussian processes.


\subsection{Generalising LURE with Triangular--Arrays}
\label{sec:app:luredef}
We begin by introducing a general triangular--array formulation that captures both the standard LURE estimator and the PPAT estimator as special cases, and we establish two algebraic identities -- a telescoping representation in terms of running averages and a normalisation identity for the finite--pool weights -- that underpin all subsequent results.

Let $(\bx_1,\bx_2,\ldots)$ be a sequence of inputs, with associated labels $(y_1,y_2,\ldots)$. Moreover, let $(N_1,N_2,\ldots)$ be a sequence of test pool sizes, with $N_M/M\to\alpha$ as $M\to\infty$ for some $\alpha>1$, and let $(\xi_{M,i})_{M\geq 1,i=1,\ldots,N_M}$ be a triangular array of reals, with $\xi_{M,i}=h_M(\bx_{1:N_M},y_i)$ for some function $h_M:\calX^{N_M}\times\calY\to \bbR$. Define $R_{M}$ as
$$R_{M}:=\frac{1}{N_M}\sum_{i=1}^{N_M} \xi_{M,i}.$$

\begin{rmrk}
\label{rmrk:array_special_cases}
In this paper, we consider either $\xi_{M,i}=\ell_i$ (standard LURE), where $\ell_i=\calL(f(\bx_i),y_i)$,  or $\xi_{M,i}=\ell_i-\lambda(\widetilde\ell_i-\widetilde R_M)$ (PPAT), where $\widetilde\ell_i=\calL(f(\bx_i),g(\bx_i))$ and \(\widetilde R_M=\frac{1}{N_M}\sum_{i=1}^{N_M} \widetilde\ell_i \). Note that in both cases, $R_M=\frac{1}{N_M}\sum_{i=1}^{N_M} \ell_i$. 
\end{rmrk}

Consider the LURE estimator of $R_M$ given by
$$
\widehat R_{M}=\frac{1}{M}\sum_{m=1}^{M} V_{M,m} \xi_{M,I_{M,m}},
$$
with $I_{M,m}\mid I_{M,1:m-1}\sim Q_{M,m}$, where, for each $M$, $Q_{M,1},\ldots,Q_{M,M}$ is a sequence of active proposals in the sense of Def. \ref{def:admissible_active_proposal}, and 
\begin{align}
V_{M,m} &= 1 + \frac{N_M-M}{N_M-m}\left(\frac{1}{(N_M-m+1)\,Q_{M,m}(I_{M,m})} - 1\right).
\end{align}

Throughout this section, for each \(M\), we treat the finite test pool $(\bx_1,y_1),\ldots,(\bx_{N_M},y_{N_M})$ as fixed. All probabilities, expectations, variances, and asymptotic statements
are with respect to the randomisation of the active sampling procedure only. Moreover, we write \(N\) for \(N_M\) whenever the dependence on \(M\) is clear and let $S_{M,m}=[N]\setminus\{I_{M,1},\ldots,I_{M,m-1}\}$
be the set of unqueried indices before round \(m\), and  $\mathcal F_{M,m}=\sigma(I_{M,1},\ldots,I_{M,m})$.

We first state a telescoping representation of LURE in terms of running averages.
\begin{lem}[Alternative representation of LURE]
\label{lem:lure_alt_rep}
For each \(M\), define
\[
\gamma_{M,m}
=
\frac{N(N-M)}{(N-m)(N-m+1)}
\]
and
\[
A_{M,m}
=
\frac1N
\left\{
\frac{\xi_{M,I_{M,m}}}{Q_{M,m}(I_{M,m})}
+
\sum_{t=1}^{m-1}\xi_{M,I_{M,t}}
\right\}.
\]
Then
\[
\widehat R_M
=
\frac1M\sum_{m=1}^M
\gamma_{M,m}A_{M,m}.
\]
\end{lem}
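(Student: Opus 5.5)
The identity is purely algebraic and holds pathwise, for every realisation of $I_{M,1:M}$, so no probabilistic argument is needed. I will expand $\sum_{m}\gamma_{M,m}A_{M,m}$, exchange the order of the double sum, and compare coefficients of each $\xi_{M,I_{M,t}}$ with $V_{M,t}$.

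First, split
\[
\sum_{m=1}^M\gamma_{M,m}A_{M,m}
=\sum_{m=1}^M\frac{\gamma_{M,m}}{N}\,\frac{\xi_{M,I_{M,m}}}{Q_{M,m}(I_{M,m})}
+\sum_{m=1}^M\frac{\gamma_{M,m}}{N}\sum_{t=1}^{m-1}\xi_{M,I_{M,t}}.
\]
In the second sum, swap the order of summation. The coefficient of $\xi_{M,I_{M,t}}$ then becomes $\sum_{m=t+1}^M\gamma_{M,m}/N$.

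The key observation is the partial fraction identity
\[
\frac{\gamma_{M,m}}{N}=\frac{N-M}{(N-m)(N-m+1)}=(N-M)\Bigl(\frac{1}{N-m}-\frac{1}{N-m+1}\Bigr).
\]
The tail sum therefore telescopes:
\[
\sum_{m=t+1}^M\frac{\gamma_{M,m}}{N}=(N-M)\Bigl(\frac{1}{N-M}-\frac{1}{N-t}\Bigr)=1-\frac{N-M}{N-t}.
\]
This is valid for $t<M$. For $t=M$ the sum is empty and the formula correctly gives $0$.

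Next, rewrite $V_{M,t}$ from its definition as
\[
V_{M,t}=1-\frac{N-M}{N-t}+\frac{N-M}{(N-t)(N-t+1)\,Q_{M,t}(I_{M,t})}
=\Bigl(1-\frac{N-M}{N-t}\Bigr)+\frac{\gamma_{M,t}}{N\,Q_{M,t}(I_{M,t})}.
\]
Hence the total coefficient of $\xi_{M,I_{M,t}}$ in $\sum_m\gamma_{M,m}A_{M,m}$ equals $V_{M,t}$. Dividing by $M$ gives $\widehat R_M=\frac1M\sum_m\gamma_{M,m}A_{M,m}$.

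The only step requiring care is the telescoping of $\gamma_{M,m}/N$, including the boundary term at $t=M$. The remainder is bookkeeping, and we use $M<N$ so that all denominators are nonzero.
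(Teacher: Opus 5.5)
Your proposal is correct and follows essentially the same route as the paper: expand $\sum_m \gamma_{M,m}A_{M,m}$, swap the order of summation, and collect the coefficient of each $\xi_{M,I_{M,t}}$. As in the paper, you telescope $\sum_{m=t+1}^M \gamma_{M,m}/N$ to $1-\frac{N-M}{N-t}$ and identify the resulting total coefficient with $V_{M,t}$. Your explicit check of the empty-sum case $t=M$ is a small addition the paper leaves implicit.
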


\begin{proof}
We expand
\[
\sum_{m=1}^M\gamma_{M,m}A_{M,m}
=
\sum_{m=1}^M
\gamma_{M,m}
\left[
\frac{\xi_{M,I_{M,m}}}{NQ_{M,m}(I_{M,m})}
+
\frac1N\sum_{t=1}^{m-1}\xi_{M,I_{M,t}}
\right].
\]
Collect the coefficient of a fixed sampled value
\(\xi_{M,I_{M,t}}\), where \(t\in\{1,\ldots,M\}\). This coefficient is
\[
\frac{\gamma_{M,t}}{NQ_{M,t}(I_{M,t})}
+
\sum_{m=t+1}^M\frac{\gamma_{M,m}}{N}.
\]
The first term is given by
\[
\frac{\gamma_{M,t}}{NQ_{M,t}(I_{M,t})}
=
\frac{N-M}{(N-t)(N-t+1)Q_{M,t}(I_{M,t})},
\]
and for the second term we have
\[
\sum_{m=t+1}^M\frac{\gamma_{M,m}}{N}
=
\sum_{m=t+1}^M
\frac{N-M}{(N-m)(N-m+1)}.
\]

Now, using
\[
\frac1{k(k+1)}=\frac1k-\frac1{k+1},
\]

and setting \(k=N-m\), this telescopes to:
\[
\sum_{m=t+1}^M
\frac{N-M}{(N-m)(N-m+1)}
=
(N-M)
\sum_{k=N-M}^{N-t-1}
\frac1{k(k+1)}
=
1-\frac{N-M}{N-t}.
\]

The coefficient of $\xi_{M,I_{M,t}}$ is therefore given by:
\begin{align*}
&1-\frac{N-M}{N-t}
+
\frac{N-M}{(N-t)(N-t+1)Q_{M,t}(I_{M,t})}. \\
&=1+
\frac{N-M}{N-t}
\left\{
\frac{1}{(N-t+1)Q_{M,t}(I_{M,t})}-1
\right\} \\
&=V_{M,t}.
\end{align*}

Finally, we have
\begin{equation*}
\sum_{m=1}^M\gamma_{M,m}A_{M,m}
=
\sum_{t=1}^M
V_{M,t}\xi_{M,I_{M,t}},    
\end{equation*}
and dividing by $M$ gives the desired result.
\end{proof}

\vspace{0.25cm}
We now state a normalisation identity for the finite-pool weights.
\begin{lem}[Telescoping identity]
\label{lem:gamma_sum}
For each \(M\), we have $\sum_{m=1}^M\gamma_{M,m}=M.$
\end{lem}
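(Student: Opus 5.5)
The identity $\sum_{m=1}^M\gamma_{M,m}=M$ follows from the same partial-fraction telescoping used in the proof of Lemma~\ref{lem:lure_alt_rep}, so I will reuse that computation rather than start afresh. The cleanest route is to write
\[
\gamma_{M,m}=N(N-M)\,\frac{1}{(N-m)(N-m+1)}
\]
and apply $\frac{1}{k(k+1)}=\frac1k-\frac1{k+1}$ with $k=N-m$. This is legitimate because $M<N$, so $k\ge N-M\ge 1$ for every $m\le M$.

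Summing over $m=1,\dots,M$ corresponds to $k$ running from $N-M$ up to $N-1$. The sum therefore telescopes:
\[
\sum_{m=1}^M\frac{1}{(N-m)(N-m+1)}=\sum_{k=N-M}^{N-1}\Bigl(\frac1k-\frac1{k+1}\Bigr)=\frac{1}{N-M}-\frac1N=\frac{M}{N(N-M)}.
\]
Multiplying by the prefactor $N(N-M)$ gives exactly $M$, which is the claim.

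As an alternative check, the intermediate identity in the proof of Lemma~\ref{lem:lure_alt_rep},
\[
\sum_{m=t+1}^M\frac{\gamma_{M,m}}{N}=1-\frac{N-M}{N-t},
\]
gives the result directly if it is extended to $t=0$, since the right-hand side becomes $1-\frac{N-M}{N}=\frac MN$. I expect no real obstacle here; the only point needing care is the indexing of the endpoints $k=N-M$ and $k=N-1$.
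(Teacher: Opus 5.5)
Your proof is correct and matches the paper's argument: both write $\gamma_{M,m}=N(N-M)/\{(N-m)(N-m+1)\}$, substitute $k=N-m$, telescope $\sum_{k=N-M}^{N-1}\bigl(\tfrac1k-\tfrac1{k+1}\bigr)=\tfrac1{N-M}-\tfrac1N$, and multiply by $N(N-M)$ to obtain $M$. Your alternative check, extending the identity from the proof of Lemma~\ref{lem:lure_alt_rep} to $t=0$, is also valid, since that derivation carries over verbatim to $t=0$.
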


\begin{proof}
Using the definition of \(\gamma_{M,m}\),
\[
\sum_{m=1}^M\gamma_{M,m}
=
N(N-M)
\sum_{m=1}^M
\frac1{(N-m)(N-m+1)}.
\]

Setting \(k=N-m\), we have:
\[
\sum_{m=1}^M
\frac1{(N-m)(N-m+1)}
=
\sum_{k=N-M}^{N-1}
\frac1{k(k+1)}.
\]

Now, using the identity
\[
\frac1{k(k+1)}=\frac1k-\frac1{k+1},
\]
the sum telescopes:
\[
\sum_{k=N-M}^{N-1}
\frac1{k(k+1)}
=
\frac1{N-M}-\frac1N.
\]

Therefore
\[
\sum_{m=1}^M\gamma_{M,m}
=
N(N-M)
\left(
\frac1{N-M}-\frac1N
\right)
=
M.
\]
\end{proof}


\subsection{Finite--Sample Unbiasedness and Variance}
\label{sec:app:lureunbiased}
Using the running-average representation of Lemma \ref{lem:lure_alt_rep}, we now establish two finite--sample properties of our general formulation of LURE: that it is unbiased for the test--pool risk under any APP, and that its variance admits a clean decomposition into a sum of conditional variances of the importance--weighted increments. Using this, we then prove Prop. \ref{prop:cv-unbiased} from the main text
\begin{prop}[Unbiasedness and finite-sample variance]
\label{prop:lure_unbiased_variance}
Assume that we have an APP in the sense of Def. \ref{def:admissible_active_proposal}. Then, for each \(m=1,\ldots,M\), $\mathbb E[A_{M,m}\mid \mathcal F_{M,m-1}]=R_M$, and, consequently, $\mathbb E[A_{M,m}]=R_M$. Moreover, we have that
\begin{equation*}
\operatorname{Var}(A_{M,m})
=
\frac1{N^2}
\mathbb E\!\left[
\operatorname{Var}
\left(
\frac{\xi_{M,I_{M,m}}}{Q_{M,m}(I_{M,m})}
\;\middle|\;
\mathcal F_{M,m-1}
\right)
\right],    
\end{equation*}
and for $m\neq k$, we have $\operatorname{Cov}(A_{M,m},A_{M,k})=0.$ It therefore follows that
\begin{equation*}
\mathbb E[\widehat R_M]=R_M    
\end{equation*}
and
\[
\operatorname{Var}(\widehat R_M)
=
\frac1{M^2}
\sum_{m=1}^M
\gamma_{M,m}^2\operatorname{Var}(A_{M,m}).
\]
Equivalently,
\[
\operatorname{Var}(\widehat R_M)
=
\frac1{M^2N^2}
\sum_{m=1}^M
\gamma_{M,m}^2
\mathbb E\!\left[
\operatorname{Var}
\left(
\frac{\xi_{M,I_{M,m}}}{Q_{M,m}(I_{M,m})}
\;\middle|\;
\mathcal F_{M,m-1}
\right)
\right].
\]
\end{prop}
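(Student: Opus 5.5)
The plan is to work with the running-average representation of Lemma~\ref{lem:lure_alt_rep}, $\widehat R_M=\frac1M\sum_{m}\gamma_{M,m}A_{M,m}$, and to show that the centred terms $A_{M,m}-R_M$ form a martingale difference sequence with respect to $(\mathcal F_{M,m})_m$. The unbiasedness and the variance decomposition then follow directly.

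\textbf{Step 1 (conditional mean).} Conditionally on $\mathcal F_{M,m-1}$, the set $S_{M,m}$ is fixed, and so is the proposal $Q_{M,m}$; the latter holds because it is a measurable function of the past indices, and the labels are fixed in our conditional-on-pool setting. The sum $\sum_{t<m}\xi_{M,I_{M,t}}$ is likewise fixed. By full support of the APP,
\[
\mathbb E\Bigl[\tfrac{\xi_{M,I_{M,m}}}{Q_{M,m}(I_{M,m})}\Bigm|\mathcal F_{M,m-1}\Bigr]=\sum_{i\in S_{M,m}}\xi_{M,i}.
\]
Adding $\sum_{t<m}\xi_{M,I_{M,t}}$, which is exactly the sum of $\xi_{M,i}$ over the complement of $S_{M,m}$, gives $\sum_{i=1}^N\xi_{M,i}$. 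Dividing by $N$ yields $\mathbb E[A_{M,m}\mid\mathcal F_{M,m-1}]=R_M$, and the tower property gives $\mathbb E[A_{M,m}]=R_M$.

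\textbf{Step 2 (variance of each term).} By the law of total variance,
\[
\operatorname{Var}(A_{M,m})=\mathbb E[\operatorname{Var}(A_{M,m}\mid\mathcal F_{M,m-1})]+\operatorname{Var}(\mathbb E[A_{M,m}\mid\mathcal F_{M,m-1}]).
\]
The second term vanishes because the conditional mean is the constant $R_M$. In the first term, the added sum is $\mathcal F_{M,m-1}$-measurable and therefore drops out of the conditional variance. What remains is $N^{-2}$ times the conditional variance of $\xi_{M,I_{M,m}}/Q_{M,m}(I_{M,m})$, which is the displayed formula.

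\textbf{Step 3 (zero covariances).} Take $k<m$. Then $A_{M,k}$ is $\mathcal F_{M,k}\subseteq\mathcal F_{M,m-1}$-measurable, so
\[
\mathbb E[(A_{M,k}-R_M)(A_{M,m}-R_M)]=\mathbb E\bigl[(A_{M,k}-R_M)\,\mathbb E[A_{M,m}-R_M\mid\mathcal F_{M,m-1}]\bigr]=0.
\]
All quantities are bounded, since the pool is finite and $Q_{M,m}>0$ on a finite set, so no integrability issues arise.

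\textbf{Step 4 (assembling).} Linearity together with Lemma~\ref{lem:gamma_sum} gives $\mathbb E[\widehat R_M]=\frac1M\sum_m\gamma_{M,m}R_M=R_M$. Expanding $\operatorname{Var}(\widehat R_M)$ as $M^{-2}\sum_{m,k}\gamma_{M,m}\gamma_{M,k}\operatorname{Cov}(A_{M,m},A_{M,k})$, the cross terms vanish by Step~3. The diagonal terms, together with Step~2, give both stated expressions.

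Prop.~\ref{prop:cv-unbiased} then follows by taking $\xi_{M,i}=\ell_i-\lambda(\widetilde\ell_i-\widetilde R_M)$, whose pool average equals $R_M$ by Remark~\ref{rmrk:array_special_cases}.

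The main subtlety is Step~1. One must be careful that the proposal may depend on previously observed labels. This is harmless because labels are deterministic given the pool, so $\mathcal F_{M,m-1}=\sigma(I_{M,1:m-1})$ already determines $Q_{M,m}$. Everything else is routine martingale bookkeeping.
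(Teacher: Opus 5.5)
Your proof is correct and follows essentially the same route as the paper. Both arguments get the conditional mean $R_M$ from the full-support importance-weighting identity, apply the law of total variance with a deterministic conditional mean, kill the cross-covariances by conditioning on $\mathcal F_{M,m-1}$, and assemble via Lemmas~\ref{lem:lure_alt_rep} and~\ref{lem:gamma_sum}; your remarks on boundedness and on why $Q_{M,m}$ is $\mathcal F_{M,m-1}$-measurable despite depending on observed labels simply make explicit what the paper leaves implicit.
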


\begin{proof}
Fix \(m \in 1,\ldots,M\). Conditional on \(\mathcal F_{M,m-1}\), the set \(S_{M,m}\), the
proposal \(Q_{M,m}\), and the previously sampled values
\(\xi_{M,I_{M,1}},\ldots,\xi_{M,I_{M,m-1}}\) are fixed. 
Moreover, by admissibility we know that
\[
\mathbb P(I_{M,m}=i\mid\mathcal F_{M,m-1})
=
Q_{M,m}(i),
\qquad i\in S_{M,m}.
\]

Therefore, the conditional expectation of the importance-weighted sampled value is
\[
\begin{aligned}
\mathbb E\!\left[
\frac{\xi_{M,I_{M,m}}}{NQ_{M,m}(I_{M,m})}
\;\middle|\;
\mathcal F_{M,m-1}
\right]
&=
\sum_{i\in S_{M,m}}
Q_{M,m}(i)
\frac{\xi_{M,i}}{NQ_{M,m}(i)}
\\
&=
\frac1N\sum_{i\in S_{M,m}}\xi_{M,i}.
\end{aligned}
\]

Adding the contribution from the previously sampled indices gives
\[
\begin{aligned}
\mathbb E[A_{M,m}\mid \mathcal F_{M,m-1}]
&=
\frac1N\sum_{i\in S_{M,m}}\xi_{M,i}
+
\frac1N\sum_{t=1}^{m-1}\xi_{M,I_{M,t}}
\\
&=
\frac1N\sum_{i=1}^N\xi_{M,i}
\\
&=
R_M.
\end{aligned}
\]

Taking expectations then gives \(\mathbb E[A_{M,m}]=R_M\). Now, noting that 
\begin{equation*}
A_{M,m}
=
\frac1N
\frac{\xi_{M,I_{M,m}}}{Q_{M,m}(I_{M,m})}
+
\frac1N\sum_{t=1}^{m-1}\xi_{M,I_{M,t}},    
\end{equation*}
where the second term is \(\mathcal F_{M,m-1}\)--measurable, we have 
\[
\operatorname{Var}(A_{M,m}\mid \mathcal F_{M,m-1})
=
\frac1{N^2}
\operatorname{Var}
\left(
\frac{\xi_{M,I_{M,m}}}{Q_{M,m}(I_{M,m})}
\;\middle|\;
\mathcal F_{M,m-1}
\right).
\]

Since $\mathbb E[A_{M,m}\mid\mathcal F_{M,m-1}]=R_M$ is deterministic, the law of total variance then yields the desired variance formula:
\begin{align*}
\operatorname{Var}(A_{M,m})
 &=
\mathbb E[
\operatorname{Var}(A_{M,m}\mid \mathcal F_{M,m-1})] \\
&= \frac1{N^2} \mathbb E\left[
\operatorname{Var}\left(\frac{\xi_{M,I_{M,m}}}{Q_{M,m}(I_{M,m})}\;\middle|\;\mathcal F_{M,m-1}\right)\right]
\end{align*}

\vspace{0.25cm}
Suppose now that $m<k$. Since \(A_{M,m}\) is \(\mathcal F_{M,k-1}\)--measurable, we have
\[
\begin{aligned}
\operatorname{Cov}(A_{M,m},A_{M,k})
&=
\mathbb E[(A_{M,m}-R_M)(A_{M,k}-R_M)]
\\
&=
\mathbb E\!\left[
(A_{M,m}-R_M)
\mathbb E[A_{M,k}-R_M\mid \mathcal F_{M,k-1}]
\right]
\\
&=0.
\end{aligned}
\]
The case \(k<m\) follows by symmetry.

\vspace{0.25cm}
Finally, using Lemma~\ref{lem:lure_alt_rep} and \ref{lem:gamma_sum} gives:
\begin{align*}
\mathbb E[\widehat R_M] &= \frac1M\sum_{m=1}^M\gamma_{M,m}\mathbb E[A_{M,m}] \\ 
&= \frac{R_M}{M}\sum_{m=1}^M\gamma_{M,m} \\
&= R_M.
\end{align*}
The variance formula follows similarly by using Lemma~\ref{lem:lure_alt_rep} again and the fact that the $A_{M,m}$'s are pairwise uncorrelated.
\end{proof}

\vspace{0.25cm}
Prop. \ref{prop:cv-unbiased} follows as an immediate corollary to the above proposition. We restate the proposition more formally below.
\begin{cor}[Proposition~\ref{prop:cv-unbiased}]
\label{cor:ppat_unbiasedness} 
Consider the setup of \S\ref{sec:app:luredef} and fix any \(\lambda \in \mathbb R\). Then, for any APP,
\begin{equation*}
\mathbb E\!\left[\widehat R^{\mathrm{PPAT}}_M(\lambda)\right] = R_M. \end{equation*}
\end{cor}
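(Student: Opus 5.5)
This follows by specialising Proposition~\ref{prop:lure_unbiased_variance} to the PPAT array of Remark~\ref{rmrk:array_special_cases}. Fix $\lambda\in\mathbb R$ and set
\[
\xi_{M,i}=\ell_i-\lambda(\widetilde\ell_i-\widetilde R_M), \qquad i=1,\ldots,N_M .
\]
We first check that this is an admissible triangular array of the form $\xi_{M,i}=h_M(\bx_{1:N_M},y_i)$. The loss $\ell_i=\calL(f(\bx_i),y_i)$ depends only on $\bx_i$ and $y_i$. The proxy loss $\widetilde\ell_i=\calL(f(\bx_i),g(\bx_i))$ and the centring term $\widetilde R_M=N_M^{-1}\sum_{j}\widetilde\ell_j$ are functions of the inputs $\bx_{1:N_M}$ alone, since $f$ and $g$ are fixed. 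Hence $\xi_{M,i}$ is a deterministic function of $(\bx_{1:N_M},y_i)$. Under our convention that the pool is fixed, it is simply a fixed real number.

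Next we identify the estimator. Substituting this $\xi_{M,i}$ into the general LURE estimator $\widehat R_M=M^{-1}\sum_m V_{M,m}\xi_{M,I_{M,m}}$ gives exactly $\widehat R^{\mathrm{PPAT}}_M(\lambda)$ from \eqref{eq:ppatestimator}. The weights $V_{M,m}$ depend only on the proposals and the sampled indices, not on the values $\xi$, so they coincide with the PPAT weights. Proposition~\ref{prop:lure_unbiased_variance} holds for any APP and any real array. It therefore gives $\mathbb E[\widehat R^{\mathrm{PPAT}}_M(\lambda)]=N_M^{-1}\sum_i\xi_{M,i}$.

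It remains to compute this target. The centring term averages to zero over the pool:
\[
\frac{1}{N_M}\sum_{i=1}^{N_M}\xi_{M,i}
=\frac{1}{N_M}\sum_{i=1}^{N_M}\ell_i-\lambda\Bigl(\frac{1}{N_M}\sum_{i=1}^{N_M}\widetilde\ell_i-\widetilde R_M\Bigr)
=R_M .
\]
This is the claim, and it holds for every $\lambda$.

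The only point needing care is that $\widetilde R_M$ is computed from the entire pool, so it is not sample-dependent. It must be treated as part of the fixed array and not as a random quantity, and this is precisely what the dependence of $h_M$ on all of $\bx_{1:N_M}$ allows. The argument uses no condition on the proposals beyond the APP assumptions, which guarantee $Q_{M,m}(i)>0$ on $S_{M,m}$ so that the weights are well defined.
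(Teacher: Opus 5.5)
Your proposal is correct and follows essentially the same route as the paper. You apply Proposition~\ref{prop:lure_unbiased_variance} to the residualised array $\xi_{M,i}=\ell_i-\lambda(\widetilde\ell_i-\widetilde R_M)$, identify the resulting LURE estimator with $\widehat R^{\mathrm{PPAT}}_M(\lambda)$, and use that the centring term has zero pool average, so the target is $R_M$; your explicit check that $\widetilde R_M$ is a fixed pool-level quantity, giving the array the required form $h_M(\bx_{1:N_M},y_i)$, is a detail the paper leaves implicit.
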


\begin{proof}
This follows immediately from Prop.~\ref{prop:lure_unbiased_variance}. Indeed, fix \(\lambda \in \mathbb R\) and apply Prop.~\ref{prop:lure_unbiased_variance} with the triangular array
\begin{equation*}
\xi_{M,i}
=
\ell_i-\lambda c_{M,i}
=
\ell_i-\lambda(\widetilde\ell_i-\widetilde R_M).    
\end{equation*}

The corresponding LURE estimator is exactly
\begin{equation*}
\widehat R^{\mathrm{PPAT}}_M(\lambda)
= \frac{1}{M}\sum_{m=1}^M V_{M,m}\xi_{M,I_{M,m}}.    
\end{equation*}

Moreover, the finite-pool mean of this array is
\begin{equation*}
\frac{1}{N}\sum_{i=1}^N \xi_{M,i}
=
\frac{1}{N}\sum_{i=1}^N \ell_i
-
\lambda
\frac{1}{N}\sum_{i=1}^N(\widetilde\ell_i-\widetilde R_M)
=
R_M,    
\end{equation*}
since \(\frac{1}{N}\sum_{i=1}^N(\widetilde\ell_i-\widetilde R_M)=0\). Prop.~\ref{prop:lure_unbiased_variance} therefore gives
\begin{equation*}
\mathbb E\!\left[\widehat R^{\mathrm{PPAT}}_M(\lambda)\right]
= R_M,    
\end{equation*}
as claimed.
\end{proof}

We now more formally state and prove Prop. \ref{prop:var-quadratic} from the main text.
\begin{prop}[Proposition \ref{prop:var-quadratic}]
\label{prop:ppat-variance}
Consider the setup of \S\ref{sec:app:luredef} and fix an APP $Q_{M,1:M}$. Let $c_{M,i}=\widetilde\ell_i-\widetilde R_M$ and define
\begin{equation*}
\widehat R^{\mathrm{LURE}}_M=\frac1M\sum_{m=1}^M V_{M,m}\,\ell_{I_{M,m}},
\qquad
\widehat C^{\mathrm{LURE}}_M=\frac1M\sum_{m=1}^M V_{M,m}\,c_{M,I_{M,m}} .    
\end{equation*}
Then, for every $\lambda\in\mathbb R$,
\begin{equation*}
\operatorname{Var}\!\big(\widehat R^{\mathrm{PPAT}}_M(\lambda)\big)
=\operatorname{Var}\!\big(\widehat R^{\mathrm{LURE}}_M\big)
-2\lambda\,\operatorname{Cov}\!\big(\widehat R^{\mathrm{LURE}}_M,\widehat C^{\mathrm{LURE}}_M\big)
+\lambda^2\operatorname{Var}\!\big(\widehat C^{\mathrm{LURE}}_M\big).    
\end{equation*}

Thus, 
$\operatorname{Var}(\widehat R^{\mathrm{PPAT}}_M(\lambda))<\operatorname{Var}(\widehat R^{\mathrm{LURE}}_M)$
whenever $\lambda\in\big(\min\{0,2\lambda^\star\},\,\max\{0,2\lambda^\star\}\big)$ and $\operatorname{Var}(\widehat C^{\mathrm{LURE}}_M)>0$, where
\begin{equation*}
\lambda^\star=\frac{\operatorname{Cov}(\widehat R^{\mathrm{LURE}}_M,\widehat C^{\mathrm{LURE}}_M)}
{\operatorname{Var}(\widehat C^{\mathrm{LURE}}_M)}.
\end{equation*}

In particular, $\lambda^\star$ minimises the variance of $\widehat R^{\mathrm{PPAT}}_M(\lambda)$.
\end{prop}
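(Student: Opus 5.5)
The plan is to exploit linearity of the PPAT estimator in $\lambda$ and then treat the variance as a scalar quadratic.

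First, since the weights $V_{M,m}$ do not depend on the array $\xi_{M,i}$, expanding $\xi_{M,i}=\ell_i-\lambda c_{M,i}$ in the definition of the LURE-type estimator gives the pathwise identity
\[
\widehat R^{\mathrm{PPAT}}_M(\lambda)=\widehat R^{\mathrm{LURE}}_M-\lambda\,\widehat C^{\mathrm{LURE}}_M .
\]
All quantities are bounded random variables: the pool is finite and fixed, and the APP has full support on $S_{M,m}$, so each $V_{M,m}$ takes finitely many values. Hence second moments exist. Bilinearity of covariance then gives the stated quadratic expression for $\operatorname{Var}(\widehat R^{\mathrm{PPAT}}_M(\lambda))$ immediately. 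Prop.~\ref{prop:lure_unbiased_variance} is not needed beyond noting that $\widehat C^{\mathrm{LURE}}_M$ has mean zero, which is itself not required for the variance identity.

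Next, write $a=\operatorname{Var}(\widehat C^{\mathrm{LURE}}_M)>0$ and $b=\operatorname{Cov}(\widehat R^{\mathrm{LURE}}_M,\widehat C^{\mathrm{LURE}}_M)$, so that $\lambda^\star=b/a$. The difference from LURE is
\[
\operatorname{Var}(\widehat R^{\mathrm{PPAT}}_M(\lambda))-\operatorname{Var}(\widehat R^{\mathrm{LURE}}_M)=\lambda^2 a-2\lambda b=a\,\lambda(\lambda-2\lambda^\star).
\]
Since $a>0$, this is strictly negative exactly when $\lambda$ lies strictly between the roots $0$ and $2\lambda^\star$, i.e.\ in $(\min\{0,2\lambda^\star\},\max\{0,2\lambda^\star\})$. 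When $\lambda^\star=0$ this interval is empty and the claim is vacuous.

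Finally, complete the square:
\[
a\lambda^2-2b\lambda=a(\lambda-\lambda^\star)^2-b^2/a .
\]
This shows the variance is a strictly convex quadratic uniquely minimised at $\lambda^\star$, with minimum value $\operatorname{Var}(\widehat R^{\mathrm{LURE}}_M)-b^2/a=(1-\rho^2)\operatorname{Var}(\widehat R^{\mathrm{LURE}}_M)$.

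There is no real obstacle here. The only point needing care is the first step: the identity must hold pathwise because the same indices $I_{M,m}$ and weights $V_{M,m}$ enter both estimators, which is true since both are computed from a single run under the fixed APP.
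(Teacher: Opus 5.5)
Your proof is correct and follows essentially the same route as the paper: the pathwise identity $\widehat R^{\mathrm{PPAT}}_M(\lambda)=\widehat R^{\mathrm{LURE}}_M-\lambda\widehat C^{\mathrm{LURE}}_M$, bilinearity of variance, and the factorisation $a\lambda(\lambda-2\lambda^\star)$ of a strictly convex parabola with roots $0$ and $2\lambda^\star$. Your completion of the square (which also yields the $(1-\rho^2)$ form) and your check that second moments are finite under the fixed-pool APP are small additions that the paper leaves implicit.
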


\begin{proof}
Fix $\lambda\in\mathbb R$. By Remark \ref{rmrk:array_special_cases}, $\widehat R^{\mathrm{PPAT}}_M(\lambda)$ is the LURE
estimator associated with the residualised array
$\zeta_{M,i}(\lambda)=\ell_i-\lambda c_{M,i}$. That is, 
\begin{align*}
\widehat R^{\mathrm{PPAT}}_M(\lambda)&=\frac1M\sum_{m=1}^M V_{M,m}\,\zeta_{M,I_{M,m}}(\lambda) \\    
&=\frac1M\sum_{m=1}^M V_{M,m}\big(\ell_{I_{M,m}}-\lambda c_{M,I_{M,m}}\big) \\[0.75em]
&=\widehat R^{\mathrm{LURE}}_M-\lambda\,\widehat C^{\mathrm{LURE}}_M. 
\end{align*}

Now, by the properties of variance we have
\begin{equation*}
 \operatorname{Var}\!\big(\widehat R^{\mathrm{PPAT}}_M(\lambda)\big)
=\operatorname{Var}\!\big(\widehat R^{\mathrm{LURE}}_M\big)
-2\lambda\,\operatorname{Cov}\!\big(\widehat R^{\mathrm{LURE}}_M,\widehat C^{\mathrm{LURE}}_M\big)
+\lambda^2\operatorname{Var}\!\big(\widehat C^{\mathrm{LURE}}_M\big),   
\end{equation*}
as claimed.

\vspace{0.25cm}
For the second part, write $V_C:=\operatorname{Var}(\widehat C^{\mathrm{LURE}}_M)>0$ and
$b:=\operatorname{Cov}(\widehat R^{\mathrm{LURE}}_M,\widehat C^{\mathrm{LURE}}_M)$, and set
\begin{equation*}
\phi(\lambda):=\operatorname{Var}\!\big(\widehat R^{\mathrm{PPAT}}_M(\lambda)\big)
-\operatorname{Var}\!\big(\widehat R^{\mathrm{LURE}}_M\big)
=V_C\,\lambda^2-2b\,\lambda
=V_C\,\lambda\big(\lambda-2\lambda^\star\big),    
\end{equation*}
where $\lambda^\star=b/V_C$. This is a strictly
convex parabola with roots $0$ and $2\lambda^\star$, so $\phi(\lambda)<0$ if and only if
$\lambda$ lies strictly between these roots, i.e.
$\lambda\in(\min\{0,2\lambda^\star\},\max\{0,2\lambda^\star\})$; this is the stated variance--reduction condition. Examining the derivatives of $\phi$ then gives $\lambda^\star$ as the minimiser 
of $\operatorname{Var}\!\big(\widehat R^{\mathrm{PPAT}}_M(\lambda)\big)$, as required.
\end{proof}


\subsubsection{PPAT Variance Bound}
\label{app:sec:var_ppat_bound}
Here, we formally state and prove the variance bound of Prop.~\ref{prop:ppat-bound}.
Throughout we fix the budget $M$ and the finite test pool, and write $N$ for $N_M$. Recall from Remark~\ref{rmrk:array_special_cases} that, for a fixed $\lambda\in\mathbb{R}$, the PPAT estimator is exactly the LURE estimator associated with the triangular array $\zeta_{M,i}(\lambda)\;=\;\ell_i-\lambda\,c_{M,i}$.
That is, by the running--average representation of Lemma~\ref{lem:lure_alt_rep},
\begin{equation*}
  \widehat R^{\mathrm{PPAT}}_M(\lambda)
  \;=\;
  \frac1M\sum_{m=1}^M V_{M,m}\,\zeta_{M,I_{M,m}}(\lambda)
  \;=\;
  \frac1M\sum_{m=1}^M \gamma_{M,m}\,A_{M,m},    
\end{equation*}
where $\gamma_{M,m}$ and $A_{M,m}$ are as defined there for the array
$\zeta_{M,\cdot}(\lambda)$. 

\begin{prop}[Variance bound for PPAT]
\label{prop:ppat_var_bound}
Suppose we are given an APP satisfying a uniform overlap condition: there exists $\beta>0$ such that
\begin{equation*}
  Q_{M,m}(i)\;\ge\;\frac{\beta}{N},
  \qquad m=1,\dots,M,\ \ i\in S_{M,m}.    
\end{equation*}
Then, for every $\lambda\in\mathbb{R}$,
\begin{equation*}
  \operatorname{Var}\!\bigl(\widehat R^{\mathrm{PPAT}}_M(\lambda)\bigr)
  \;\le\;
  \frac{1}{\beta M}\,\frac{N}{N-M+1}
  \left(\frac1N\sum_{i=1}^N(\ell_i-\lambda c_{M,i})^2\right).    
\end{equation*}
Whenever $\tfrac1N\sum_{i=1}^N c_{M,i}^2>0$, the right--hand side is minimised over $\lambda$ by
\begin{equation*}
\lambda^{\dagger}
\;=\;
\frac{N^{-1}\sum_{i=1}^N \ell_i c_{M,i}}{N^{-1}\sum_{i=1}^N c_{M,i}^2}.    
\end{equation*}
\end{prop}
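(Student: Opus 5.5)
Apply Prop.~\ref{prop:lure_unbiased_variance} to the residualised array $\zeta_{M,i}(\lambda)=\ell_i-\lambda c_{M,i}$. This gives
\begin{equation*}
\operatorname{Var}\bigl(\widehat R^{\mathrm{PPAT}}_M(\lambda)\bigr)
=\frac{1}{M^2N^2}\sum_{m=1}^M\gamma_{M,m}^2\,
\mathbb E\!\left[\operatorname{Var}\!\left(\frac{\zeta_{M,I_{M,m}}}{Q_{M,m}(I_{M,m})}\,\middle|\,\mathcal F_{M,m-1}\right)\right].
\end{equation*}
The plan is to bound each conditional variance deterministically, then sum the weights $\gamma_{M,m}^2$ in closed form.

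\textbf{Step 1 (per-round bound).} Bound each conditional variance by the conditional second moment and use the overlap condition:
\begin{equation*}
\operatorname{Var}\!\left(\frac{\zeta_{M,I_{M,m}}}{Q_{M,m}(I_{M,m})}\,\middle|\,\mathcal F_{M,m-1}\right)
\le \sum_{i\in S_{M,m}}\frac{\zeta_{M,i}^2}{Q_{M,m}(i)}
\le \frac{N}{\beta}\sum_{i\in S_{M,m}}\zeta_{M,i}^2
\le \frac{N}{\beta}\sum_{i=1}^N\zeta_{M,i}^2 .
\end{equation*}
This bound is free of the randomness, so the outer expectation disappears.

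\textbf{Step 2 (combine).} Writing $D:=N^{-1}\sum_{i=1}^N\zeta_{M,i}^2$, Step 1 gives
\begin{equation*}
\operatorname{Var}\bigl(\widehat R^{\mathrm{PPAT}}_M(\lambda)\bigr)\le \frac{D}{\beta M^2}\sum_{m=1}^M\gamma_{M,m}^2 .
\end{equation*}

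\textbf{Step 3 (sum of squared weights).} This is the only real computation, and the main obstacle. The terms $\gamma_{M,m}=N(N-M)/\{(N-m)(N-m+1)\}$ increase in $m$, and the largest is
\begin{equation*}
\gamma_{M,M}=\frac{N}{N-M+1}.
\end{equation*}
So $\sum_m\gamma_{M,m}^2\le \gamma_{M,M}\sum_m\gamma_{M,m}=M\,N/(N-M+1)$, using Lemma~\ref{lem:gamma_sum}. Substituting into Step 2 gives exactly the stated bound $\frac{1}{\beta M}\frac{N}{N-M+1}D$. If the crude bound on $\gamma_{M,m}$ did not suffice, the fallback would be an exact telescoping of $\gamma_{M,m}^2$, but here the max-times-sum bound already matches the constant in the statement.

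\textbf{Step 4 (minimisation).} The prefactor does not depend on $\lambda$, so minimising the bound means minimising
\begin{equation*}
D(\lambda)=\frac1N\sum_{i=1}^N(\ell_i-\lambda c_{M,i})^2 .
\end{equation*}
This is a strictly convex quadratic in $\lambda$ whenever $N^{-1}\sum_i c_{M,i}^2>0$. Setting the derivative to zero gives $\lambda^\dagger$ as stated.
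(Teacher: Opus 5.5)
Your proposal is correct and follows the paper's proof essentially step for step. Both arguments apply the variance decomposition of Prop.~\ref{prop:lure_unbiased_variance} to the residualised array, bound the conditional variance by the conditional second moment and then by $(N/\beta)\sum_{i=1}^N\zeta_{M,i}^2$ via overlap, use $\sum_m\gamma_{M,m}^2\le\gamma_{M,M}\sum_m\gamma_{M,m}=MN/(N-M+1)$ with Lemma~\ref{lem:gamma_sum}, and minimise the $\lambda$-dependent quadratic $D(\lambda)$.
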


\begin{proof}
Fix $\lambda\in\mathbb{R}$, abbreviate $\zeta_i:=\zeta_{M,i}(\lambda)=\ell_i-\lambda c_{M,i}$,
and set $D(\lambda):=\tfrac1N\sum_{i=1}^N\zeta_i^2=\tfrac1N\sum_{i=1}^N(\ell_i-\lambda c_{M,i})^2$.

By Prop.~\ref{prop:lure_unbiased_variance}, the running averages
$A_{M,1},\dots,A_{M,M}$ are pairwise uncorrelated and satisfy
\begin{align*}
\mathbb{E}[A_{M,m}\mid\mathcal F_{M,m-1}]&=R_M,  \\
\operatorname{Var}\!\bigl(\widehat R^{\mathrm{PPAT}}_M(\lambda)\bigr)
&=\frac{1}{M^2}\sum_{m=1}^M \gamma_{M,m}^2\,\operatorname{Var}(A_{M,m}), \\
\operatorname{Var}(A_{M,m})
&=\frac{1}{N^2}\,
\mathbb{E}\!\left[\operatorname{Var}\!\left(
\frac{\zeta_{I_{M,m}}}{Q_{M,m}(I_{M,m})}\,\middle|\,\mathcal F_{M,m-1}\right)\right].
\end{align*}

We first bound $\operatorname{Var}(A_{M,m})$ uniformly in $m$. For this, fix an arbitrary $m$ and condition on $\mathcal F_{M,m-1}$.
Bounding the conditional variance by the conditional second moment and evaluating the latter gives
\begin{align*}
\operatorname{Var}\!\left(\frac{\zeta_{I_{M,m}}}{Q_{M,m}(I_{M,m})}\,\middle|\,\mathcal F_{M,m-1}\right)
\;&\le\;
\mathbb{E}\!\left[\frac{\zeta_{I_{M,m}}^2}{Q_{M,m}(I_{M,m})^2}\,\middle|\,\mathcal F_{M,m-1}\right] \\
\;&=\;
\sum_{i\in S_{M,m}} Q_{M,m}(i)\,\frac{\zeta_i^2}{Q_{M,m}(i)^2} \\
\;&=\;
\sum_{i\in S_{M,m}}\frac{\zeta_i^2}{Q_{M,m}(i)}.    
\end{align*}

Now, the overlap condition implies $1/Q_{M,m}(i)\le N/\beta$ for every $i\in S_{M,m}$. As the summands are nonnegative and $S_{M,m}\subseteq\{1,\dots,N\}$,
\[
  \sum_{i\in S_{M,m}}\frac{\zeta_i^2}{Q_{M,m}(i)}
  \;\le\;
  \frac{N}{\beta}\sum_{i\in S_{M,m}}\zeta_i^2
  \;\le\;
  \frac{N}{\beta}\sum_{i=1}^N\zeta_i^2 .
\]
The right--hand side is deterministic, so taking expectations and dividing by $N^2$ yields
\begin{equation*}
  \operatorname{Var}(A_{M,m})
  \;\le\;
  \frac{1}{N^2}\cdot\frac{N}{\beta}\sum_{i=1}^N\zeta_i^2
  \;=\;
  \frac1\beta\left(\frac1N\sum_{i=1}^N\zeta_i^2\right)
  \;=\;
  \frac{D(\lambda)}{\beta},    
\end{equation*}
a bound independent of $m$.

\vspace{0.25cm}
We now aggregate over the different rounds. The weights
$\gamma_{M,m}=\frac{N(N-M)}{(N-m)(N-m+1)}$ are positive, and since $(N-m)(N-m+1)$ is
decreasing in $m$ on $\{1,\dots,M\}$, $\gamma_{M,m}$ is increasing in $m$; its maximum is
therefore attained at $m=M$,
\begin{equation*}
  \max_{1\le m\le M}\gamma_{M,m}
  =\gamma_{M,M}
  =\frac{N(N-M)}{(N-M)(N-M+1)}
  =\frac{N}{N-M+1}.    
\end{equation*}

Using $\gamma_{M,m}^2\le\gamma_{M,m}\cdot\max_{1\le k\le M}\gamma_{M,k}$ together with the identity $\sum_{m=1}^M\gamma_{M,m}=M$ of Lemma~\ref{lem:gamma_sum}, gives
\begin{equation*}
\sum_{m=1}^M\gamma_{M,m}^2
\;\le\;
\Bigl(\max_{1\le k\le M}\gamma_{M,k}\Bigr)\sum_{m=1}^M\gamma_{M,m}
\;=\;
\frac{N}{N-M+1}\,M.    
\end{equation*}

Combining this with the variance decomposition above then gives
\begin{align*}
  \operatorname{Var}\!\bigl(\widehat R^{\mathrm{PPAT}}_M(\lambda)\bigr)
  =\frac{1}{M^2}\sum_{m=1}^M\gamma_{M,m}^2\,\operatorname{Var}(A_{M,m})
  \;&\le\;
  \frac{1}{M^2}\cdot\frac{N}{N-M+1}\,M\cdot\frac{D(\lambda)}{\beta} \\ 
  &=\frac{1}{\beta M}\,\frac{N}{N-M+1}\,D(\lambda),    
\end{align*}
as claimed.

\vspace{0.25cm}
Finally, only $D(\lambda)$ depends on $\lambda$. Expanding $D(\lambda)$ gives
\begin{equation*}
D(\lambda)
=\frac1N\sum_{i=1}^N\ell_i^2
-2\lambda\,\frac1N\sum_{i=1}^N\ell_i c_{M,i}
+\lambda^2\,\frac1N\sum_{i=1}^N c_{M,i}^2,    
\end{equation*}
a quadratic in $\lambda$ with leading coefficient $\tfrac1N\sum_i c_{M,i}^2$. When
$\tfrac1N\sum_i c_{M,i}^2>0$ it is strictly convex, and its derivative vanishes at the unique point
\begin{equation*}
\lambda^{\dagger}
=\frac{N^{-1}\sum_{i=1}^N\ell_i c_{M,i}}{N^{-1}\sum_{i=1}^N c_{M,i}^2}.    
\end{equation*}

As the constant multiplying $D(\lambda)$ is independent of $\lambda$, $\lambda^{\dagger}$ minimises the upper bound.
\end{proof}


\subsubsection{Tightness of the PPAT Variance Bound}
\label{app:sec:tightness_var_ppat_bound}
\input{appendix/mathematical_results/tightness_var_bound_v3}


\subsection{Myopic Oracle Proposals}
\label{sec:app:myopic_oracle_proposal}
Here, we consider choosing proposals that minimise the variance of our estimator. As minimising the full--horizon variance is intractable, we instead target a tractable \emph{myopic} proxy. We state our result for the general triangular array $\xi_{M,i}$ considered above; Prop.~\ref{prop:ppat_myopic_oracle} in the main text then follows immediately by taking $\xi_{M,i}=\ell_i-\lambda(\widetilde{\ell}_i-\widetilde R)$.

Recall from \S\ref{sec:app:luredef}-\ref{sec:app:lureunbiased} that, for a general triangular array $\xi_{M,i}$, 
the estimator $\widehat R_M=\tfrac1M\sum_{m=1}^M V_{M,m}\,\xi_{M,I_{M,m}}$ admits the running--average representation of Lemma~\ref{lem:lure_alt_rep} and the variance decomposition
\begin{equation*}
\operatorname{Var}(\widehat R_M)
=\frac1{M^2}\sum_{m=1}^M \gamma_{M,m}^2\,\operatorname{Var}(A_{M,m})
\end{equation*}
of Prop.~\ref{prop:lure_unbiased_variance}. Since the sum
$\sum_{t=1}^{m-1}\xi_{M,I_{M,t}}$ in $A_{M,m}$ is $\mathcal F_{M,m-1}$--measurable and
$\gamma_{M,m}$ is a deterministic constant, the only part of the round--$m$ contribution that depends on the proposal $Q_{M,m}$ is the conditional variance of the importance--weighted increment,
\begin{equation*}
\operatorname{Var}\!\big(A_{M,m}\mid \mathcal F_{M,m-1}\big)
=\frac1{N_M^2}\,
\operatorname{Var}\!\left(\frac{\xi_{M,I_{M,m}}}{Q_{M,m}(I_{M,m})}\,\Big|\,\mathcal F_{M,m-1}\right).
\end{equation*}
The full--horizon variance--minimising proposal is intractable, because the choice of $Q_{M,m}$ also shapes the laws of all later pools $S_{M,m+1},\dots$. Following \S\ref{sub:ppat_active_proposal},
we therefore adopt a \emph{myopic} objective: at each round $m$, conditionally on the history $\mathcal F_{M,m-1}$, we choose the proposal minimising this conditional variance. The next proposition solves this problem in closed form for an arbitrary array.

\begin{prop}[Myopic oracle proposal]
\label{prop:myopic_oracle_general}
Fix $M\ge 1$ and a round $m\in\{1,\dots,M\}$, and condition on $\mathcal F_{M,m-1}$, so that the remaining pool $S_{M,m}$ and the values $(\xi_{M,i})_{i\in S_{M,m}}$ are fixed. Assume $\sum_{j\in S_{M,m}}|\xi_{M,j}|>0$. Then, among all proposals  supported on $S_{M,m}$, the conditional variance
\begin{equation*}
\mathcal V(Q_{M,m})
:=\operatorname{Var}\!\left(
\frac{\xi_{M,I_{M,m}}}{N_M\,Q_{M,m}(I_{M,m})}\,\Big|\,\mathcal F_{M,m-1}\right)
\end{equation*}
is minimised by
\begin{equation*}
Q^\star_{M,m}(i)=\frac{|\xi_{M,i}|}{\sum_{j\in S_{M,m}}|\xi_{M,j}|},
\end{equation*}
with minimal value
\begin{equation*}
\mathcal V\big(Q^\star_{M,m}\big)
=\frac1{N_M^2}\left[
\Big(\sum_{i\in S_{M,m}}|\xi_{M,i}|\Big)^2
-\Big(\sum_{i\in S_{M,m}}\xi_{M,i}\Big)^2\right]\ \ge\ 0 .
\end{equation*}
\end{prop}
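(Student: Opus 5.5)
The plan is to write the conditional variance as a second moment minus a squared mean, and to observe that the mean does not depend on the proposal. Conditionally on $\mathcal F_{M,m-1}$, the index $I_{M,m}$ takes value $i\in S_{M,m}$ with probability $Q_{M,m}(i)$. Exactly as in the proof of Prop.~\ref{prop:lure_unbiased_variance},
\begin{equation*}
\mathbb E\!\left[\frac{\xi_{M,I_{M,m}}}{N_M Q_{M,m}(I_{M,m})}\,\Big|\,\mathcal F_{M,m-1}\right]=\frac1{N_M}\sum_{i\in S_{M,m}}\xi_{M,i},
\end{equation*}
for every admissible proposal. Hence
\begin{equation*}
\mathcal V(Q_{M,m})=\frac1{N_M^2}\left[\sum_{i\in S_{M,m}}\frac{\xi_{M,i}^2}{Q_{M,m}(i)}-\Big(\sum_{i\in S_{M,m}}\xi_{M,i}\Big)^2\right].
\end{equation*}
Minimising $\mathcal V$ is therefore equivalent to minimising $\Phi(Q):=\sum_{i\in S_{M,m}}\xi_{M,i}^2/Q(i)$ over probability vectors on $S_{M,m}$.

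For the minimisation, I will use Cauchy--Schwarz rather than Lagrange multipliers, since it gives global optimality directly. For any $Q$ with $\sum_i Q(i)=1$, write $|\xi_{M,i}|=\bigl(|\xi_{M,i}|/\sqrt{Q(i)}\bigr)\sqrt{Q(i)}$. Then
\begin{equation*}
\Big(\sum_{i\in S_{M,m}}|\xi_{M,i}|\Big)^2\le\sum_{i\in S_{M,m}}\frac{\xi_{M,i}^2}{Q(i)}\cdot\sum_{i\in S_{M,m}}Q(i)=\Phi(Q).
\end{equation*}
Equality holds iff $|\xi_{M,i}|/\sqrt{Q(i)}\propto\sqrt{Q(i)}$, i.e.\ $Q(i)\propto|\xi_{M,i}|$. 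Normalising gives $Q^\star_{M,m}$, which is well defined because $\sum_j|\xi_{M,j}|>0$. Substituting $Q^\star_{M,m}$ gives $\Phi(Q^\star_{M,m})=(\sum_i|\xi_{M,i}|)^2$, which is the stated minimal value. Its nonnegativity follows from the triangle inequality $|\sum_i\xi_{M,i}|\le\sum_i|\xi_{M,i}|$.

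The main obstacle is a technicality about support. If some $\xi_{M,i}=0$, then $Q^\star_{M,m}(i)=0$, which violates the full-support requirement of Def.~\ref{def:admissible_active_proposal}. I will handle this by taking the minimisation over all probability vectors on $S_{M,m}$, with the convention that terms with $\xi_{M,i}=0$ contribute $0$ to $\Phi$. Points with $Q(i)=0$ and $\xi_{M,i}\neq0$ are excluded, since they would make $\Phi=\infty$; for such a $Q$ the estimator would no longer be unbiased, so the mean computation above would also fail. The Cauchy--Schwarz bound then only needs to be applied over $\{i:Q(i)>0\}$, which contains every $i$ with $\xi_{M,i}\ne0$.

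I will add a remark that, in the full-support class, the infimum $(\sum_i|\xi_{M,i}|)^2$ is approached but not attained, for example by $\varepsilon$-mixtures with the uniform distribution. This connects to the mixing construction used in \S\ref{sub:ppat_active_proposal}. Prop.~\ref{prop:ppat_myopic_oracle} then follows by taking $\xi_{M,i}=\ell_i-\lambda(\widetilde\ell_i-\widetilde R)$.
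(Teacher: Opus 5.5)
Your proof is correct and follows the paper's argument essentially step for step. Both show the conditional mean does not depend on the proposal, reduce the problem to minimising $\sum_{i} \xi_{M,i}^2/Q(i)$, and apply Cauchy--Schwarz with $a_i=|\xi_{M,i}|/\sqrt{Q(i)}$ and $b_i=\sqrt{Q(i)}$. Your treatment of indices with $\xi_{M,i}=0$ is a correct refinement of a point the paper's proof passes over: there $Q^\star_{M,m}$ vanishes, so the minimum is attained only once zero mass is allowed on such points, and is merely approached within the full-support class.
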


\begin{proof}
Condition throughout on $\mathcal F_{M,m-1}$. To lighten notation, let $S=S_{M,m}$, $\xi_i=\xi_{M,i}$, $Q=Q_{M,m}$, $I=I_{M,m}$ and $N=N_M$. Set $W:=\xi_I/(N\,Q(I))$.

Firstly, note that the conditional mean of $W$ does not depend on $Q$. Indeed, 
\begin{equation*}
\mathbb E[W\mid\mathcal F_{M,m-1}] = \sum_{i\in S}Q(i)\,\frac{\xi_i}{N\,Q(i)} =\frac1N\sum_{i\in S}\xi_i.
\end{equation*}

Now, because $\mathcal V(Q)=\mathbb E[W^2\mid\mathcal F_{M,m-1}] -\big(\mathbb E[W\mid\mathcal F_{M,m-1}]\big)^2$ and the second term is constant in $Q$, minimising $\mathcal V(Q)$ is equivalent to minimising the conditional second moment
\begin{equation*}
\mathbb E[W^2\mid\mathcal F_{M,m-1}]
=\sum_{i\in S}Q(i)\,\frac{\xi_i^2}{N^2\,Q(i)^2}
=\frac1{N^2}\sum_{i\in S}\frac{\xi_i^2}{Q(i)} .
\end{equation*}

Applying the Cauchy--Schwarz inequality with $a_i=|\xi_i|/\sqrt{Q(i)}$ and $b_i=\sqrt{Q(i)}$ then gives
\begin{equation*}
\Big(\sum_{i\in S}|\xi_i|\Big)^2
=\Big(\sum_{i\in S}a_ib_i\Big)^2
\le\Big(\sum_{i\in S}a_i^2\Big)\Big(\sum_{i\in S}b_i^2\Big)
=\Big(\sum_{i\in S}\frac{\xi_i^2}{Q(i)}\Big)
\underbrace{\sum_{i\in S}Q(i)}_{=1} .
\end{equation*}
Hence $\sum_{i\in S}\xi_i^2/Q(i)\ge\big(\sum_{i\in S}|\xi_i|\big)^2$. Since $b_i>0$, equality holds iff $a_i=c\,b_i$ for some constant $c$, i.e.\ $Q(i)=|\xi_i|/c$ for all $i\in S$. Noting that $\sum_{i\in S}Q(i)=1$, $c$ is forced to be $c=\sum_{j\in S}|\xi_j|$, giving
\begin{equation*}
Q^\star(i)=\frac{|\xi_i|}{\sum_{j\in S}|\xi_j|},\qquad i\in S,
\end{equation*}
which is a valid distribution because $\sum_{j\in S}|\xi_j|>0$. 

Now, substituting $Q^\star$ into the second moment yields $\frac1{N^2}\big(\sum_{i\in S}|\xi_i|\big)^2$; subtracting the constant squared mean $\frac1{N^2}\big(\sum_{i\in S}\xi_i\big)^2$ gives the stated minimal value of $\mathcal V$, which is non-negative by the triangle inequality $\big|\sum_i\xi_i\big|\le\sum_i|\xi_i|$.
\end{proof}

\vspace{0.25cm}
Prop. \ref{prop:ppat_myopic_oracle} follows as an immediate corollary to Prop. \ref{prop:myopic_oracle_general}.
\begin{cor}[Proposition~\ref{prop:ppat_myopic_oracle}]
\label{cor:ppat-myopic}
Consider the setup of \S\ref{sec:app:luredef} and fix $\lambda \in \mathbb{R}$. Take the PPAT array
\begin{equation*}
\xi_{M,i} = \ell_i - \lambda\bigl(\widetilde\ell_i - \tilde R_M\bigr)    
\end{equation*}
of Remark~\ref{rmrk:array_special_cases}, where $\widetilde\ell_i = L(f(x_i), g(x_i))$ and
$\widetilde R_M = \tfrac{1}{N_M}\sum_{i=1}^{N_M}\widetilde\ell_i$. Fix a round $m \in \{1,\dots,M\}$ and condition on $\mathcal F_{M,m-1}$ and assume $\sum_{j \in S_{M,m}}\bigl|\ell_j - \lambda(\widetilde\ell_j - \tilde R_M)\bigr| > 0$. Then, among all proposals supported on $S_{M,m}$, the myopic proxy
\begin{equation*}
\mathcal V(Q_{M,m})
:= \Var\!\left(
\frac{\ell_{I_{M,m}} - \lambda\bigl(\widetilde\ell_{I_{M,m}} - \widetilde R_M\bigr)}
{N_M\, Q_{M,m}(I_{M,m})}
\;\middle|\; \mathcal F_{M,m-1}\right)    
\end{equation*}
is minimised by
\begin{equation*}
Q^\star_{M,m,\lambda}(i)
= \frac{\bigl|\ell_i - \lambda(\widetilde\ell_i - \widetilde R_M)\bigr|}
{\sum_{j \in S_{M,m}}\bigl|\ell_j - \lambda(\widetilde\ell_j - \widetilde R_M)\bigr|},
\end{equation*}

This is exactly the myopic oracle proposal of Proposition ~\ref{prop:ppat_myopic_oracle}.
\end{cor}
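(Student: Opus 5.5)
This is a direct specialisation of Prop.~\ref{prop:myopic_oracle_general}, so the plan is to set up the triangular array correctly and invoke that result; no new optimisation argument is needed. Fix $\lambda\in\mathbb R$ and $M$, and take the array $\xi_{M,i}=\ell_i-\lambda(\widetilde\ell_i-\widetilde R_M)$ of Remark~\ref{rmrk:array_special_cases}.

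The first step is to check that this is a legitimate array in the sense of \S\ref{sec:app:luredef}. We need $\xi_{M,i}=h_M(\bx_{1:N_M},y_i)$ for some function $h_M$. The term $\ell_i$ depends only on $\bx_i$ and $y_i$. The proxy losses $\widetilde\ell_j=\calL(f(\bx_j),g(\bx_j))$ depend only on the inputs, so $\widetilde R_M$ is a deterministic function of $\bx_{1:N_M}$. Hence $\xi_{M,i}$ has the required form. Moreover, once we condition on $\mathcal F_{M,m-1}$, the remaining pool $S_{M,m}$ and the values $(\xi_{M,i})_{i\in S_{M,m}}$ are fixed. This is because the test pool itself is treated as fixed throughout.

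The second step is to identify the objective. For this array, the myopic proxy $\mathcal V(Q_{M,m})$ in the corollary is literally the conditional variance $\operatorname{Var}\bigl(\xi_{M,I_{M,m}}/(N_M Q_{M,m}(I_{M,m}))\mid\mathcal F_{M,m-1}\bigr)$ of Prop.~\ref{prop:myopic_oracle_general}. The nondegeneracy assumption $\sum_{j\in S_{M,m}}|\xi_{M,j}|>0$ is exactly the hypothesis stated in the corollary. Applying Prop.~\ref{prop:myopic_oracle_general} then gives the minimiser $Q^\star_{M,m}(i)=|\xi_{M,i}|/\sum_{j\in S_{M,m}}|\xi_{M,j}|$. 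Substituting the definition of $\xi_{M,i}$ yields the stated $Q^\star_{M,m,\lambda}$.

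The final step is to note that this coincides with Prop.~\ref{prop:ppat_myopic_oracle} once the $M$-indexing is dropped, that is, with $\widetilde R=\widetilde R_M$ and $S_m=S_{M,m}$. The proxy \eqref{eqn:ppat_myopic_proposal_proxy} is the same conditional variance. There is no real obstacle here. The only point requiring care is the observation that the centring constant $\widetilde R_M$ does not depend on any unqueried labels or on the sampling history. This is what makes the array fixed given $\mathcal F_{M,m-1}$, and it is what allows the general result to be applied verbatim.
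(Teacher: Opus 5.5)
Your proposal is correct and takes the same route as the paper. Like the paper, you apply Prop.~\ref{prop:myopic_oracle_general} to the array $\xi_{M,i}=\ell_i-\lambda(\widetilde\ell_i-\widetilde R_M)$, identify $\mathcal V(Q_{M,m})$ with the general conditional variance, note that the nondegeneracy condition is assumed, and read off the minimiser; your one addition, which the paper leaves implicit, is checking that $\widetilde R_M$ depends only on the inputs, so the array has the required form $h_M(\bx_{1:N_M},y_i)$.
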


\begin{proof}
Fix $\lambda \in \mathbb{R}$. This is an application of
Prop.~\ref{prop:myopic_oracle_general} to the triangular array
$\xi_{M,i} = \ell_i - \lambda(\tilde\ell_i - \tilde R_M)$ of
Remark~\ref{rmrk:array_special_cases}. With this array,
$\xi_{M,I_{M,m}} = \ell_{I_{M,m}} - \lambda(\tilde\ell_{I_{M,m}} - \tilde R_M)$, so the conditional
variance $\mathcal V(Q_{M,m})$ of Prop.~\ref{prop:myopic_oracle_general} is precisely the
residualised myopic proxy in the statement, and the condition
$\sum_{j \in S_{M,m}} |\xi_{M,j}| > 0$ holds by assumption. Prop.~\ref{prop:myopic_oracle_general} therefore yields that, among all proposals supported on $S_{M,m}$, the minimiser is
\begin{equation*}
Q^\star_{M,m}(i)
= \frac{|\xi_{M,i}|}{\sum_{j \in S_{M,m}} |\xi_{M,j}|}
= \frac{\bigl|\ell_i - \lambda(\tilde\ell_i - \tilde R_M)\bigr|}
{\sum_{j \in S_{M,m}}\bigl|\ell_j - \lambda(\tilde\ell_j - \tilde R_M)\bigr|}
=: Q^\star_{M,m,\lambda}(i),    
\end{equation*}
which is the stated proposal. 
\end{proof}

\vspace{0.25cm}
Note that setting $\lambda=0$ in Corr. \ref{cor:ppat-myopic} recovers the LURE oracle $Q^\star_m(i)\propto|\ell_i|$ of
\S\ref{sub:active_testing_lure_estimator}.


\subsection{Asymptotics}
\label{sec:app:general_asymptotics}

We now establish the asymptotic guarantees underlying the confidence intervals used in the main text.  We first work with our general triangular-array version of the LURE estimator, proving consistency, 
asymptotic normality, and the validity of a plug-in variance estimate for constructing confidence  intervals. We then specialise these results to PPAT with a fixed value of $\lambda$, before extending  the argument to the plug-in choice $\widehat{\lambda}_M$. We conclude by discussing the main assumptions required for these asymptotic results.

\subsubsection{Consistency of the General LURE Estimator}
\label{sec:app:lureconsistency}
Here we establish that, under a uniform overlap condition on our proposals and a mild second--moment--like condition on the finite--pool array, the general LURE estimator is $L^2$--consistent for the test--pool risk; the proof proceeds by bounding the conditional variance of each running average using the overlap condition and aggregating across rounds.

\begin{thm}[\(L^2\)-consistency]
\label{thm:lure_l2_consistency}
Assume that \(N_M/M\to\alpha>1\). Assume also that there exists \(\beta>0\)
such that, for all sufficiently large \(M\),
\[
Q_{M,m}(i)\ge \frac{\beta}{N_M},
\qquad
m=1,\ldots,M,\quad i\in S_{M,m}.
\]
Then, for every \(m=1,\ldots,M\),
\[
\operatorname{Var}(A_{M,m})
\le
\frac1{\beta N_M}
\sum_{i=1}^{N_M}\xi_{M,i}^2
=
\frac1\beta
\left(
\frac1{N_M}\sum_{i=1}^{N_M}\xi_{M,i}^2
\right).
\]
If, in addition $\frac1{N_M}\sum_{i=1}^{N_M}\xi_{M,i}^2=O(1)$, then
\begin{equation*}
\mathbb E[(\widehat R_M-R_M)^2]\to 0.    
\end{equation*}
More generally, the same conclusion holds if $\frac1{N_M}\sum_{i=1}^{N_M}\xi_{M,i}^2=o(M)$.
\end{thm}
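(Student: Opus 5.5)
The plan is to reuse the variance machinery already established for the general array, essentially repeating the argument of Prop.~\ref{prop:ppat_var_bound} with $\xi_{M,i}$ in place of the residualised losses, and then letting $M\to\infty$.

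\textbf{Per-round bound.} Fix $m$ and condition on $\mathcal F_{M,m-1}$. By Prop.~\ref{prop:lure_unbiased_variance},
\[
\operatorname{Var}(A_{M,m})=N_M^{-2}\,\mathbb E\bigl[\operatorname{Var}(\xi_{M,I_{M,m}}/Q_{M,m}(I_{M,m})\mid\mathcal F_{M,m-1})\bigr].
\]
Bounding the conditional variance by the conditional second moment gives $\sum_{i\in S_{M,m}}\xi_{M,i}^2/Q_{M,m}(i)$. The overlap condition gives $1/Q_{M,m}(i)\le N_M/\beta$, so this is at most $(N_M/\beta)\sum_{i=1}^{N_M}\xi_{M,i}^2$. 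This bound is deterministic, so taking expectations and dividing by $N_M^2$ yields the first claimed inequality, uniformly in $m$, for all $M$ large enough that overlap holds.

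\textbf{Aggregation.} By unbiasedness, $\mathbb E[(\widehat R_M-R_M)^2]=\operatorname{Var}(\widehat R_M)$. Prop.~\ref{prop:lure_unbiased_variance} gives $\operatorname{Var}(\widehat R_M)=M^{-2}\sum_m\gamma_{M,m}^2\operatorname{Var}(A_{M,m})$. As in the proof of Prop.~\ref{prop:ppat_var_bound}, $\gamma_{M,m}$ is increasing in $m$, so
\[
\sum_m\gamma_{M,m}^2\le\gamma_{M,M}\sum_m\gamma_{M,m}=\frac{N_M M}{N_M-M+1}
\]
by Lemma~\ref{lem:gamma_sum}. Hence
\[
\mathbb E[(\widehat R_M-R_M)^2]\le\frac{1}{\beta M}\,\frac{N_M}{N_M-M+1}\,\Bigl(\frac1{N_M}\sum_i\xi_{M,i}^2\Bigr).
\]

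\textbf{Asymptotics.} The only genuinely asymptotic step is controlling the factor $N_M/(N_M-M+1)$. Since $N_M/M\to\alpha>1$, this factor equals $(N_M/M)/(N_M/M-1+1/M)$, which converges to $\alpha/(\alpha-1)<\infty$ and is therefore bounded for large $M$. This is where $\alpha>1$ is essential: with $\alpha=1$, $\gamma_{M,M}$ could blow up. The bound is then $O(1/M)\cdot\frac1{N_M}\sum_i\xi_{M,i}^2$. This tends to $0$ when the pool average is $O(1)$, and more generally whenever it is $o(M)$, which gives both conclusions. No step looks hard; the main care point is checking that the overlap condition holds only for sufficiently large $M$, which suffices for a limit statement.
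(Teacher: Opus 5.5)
Your proposal is correct and follows the paper's proof essentially step for step: you bound the conditional variance by the conditional second moment, apply the overlap condition, and use unbiasedness with the variance decomposition of Prop.~\ref{prop:lure_unbiased_variance}, where $N_M/M\to\alpha>1$ keeps the $\gamma_{M,m}$ bounded. The only cosmetic difference is the aggregation: you use $\sum_m\gamma_{M,m}^2\le\gamma_{M,M}\sum_m\gamma_{M,m}$ together with Lemma~\ref{lem:gamma_sum}, as in Prop.~\ref{prop:ppat_var_bound}, which gives the slightly sharper constant $N_M/(N_M-M+1)$, whereas the paper bounds each $\gamma_{M,m}^2$ by $C_\gamma^2$; both yield an $O(M^{-1})$ factor in front of $N_M^{-1}\sum_i\xi_{M,i}^2$.
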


\begin{proof}
To simplify our notation, we write \(N=N_M\). From the proof of
Prop.~\ref{prop:lure_unbiased_variance}, we have

\[
\operatorname{Var}(A_{M,m})
=
\mathbb E[
\operatorname{Var}(A_{M,m}\mid \mathcal F_{M,m-1})
].
\]

Moreover,
\[
\operatorname{Var}(A_{M,m}\mid \mathcal F_{M,m-1})
\le
\mathbb E\!\left[
\left.
\left(
\frac{\xi_{M,I_{M,m}}}{NQ_{M,m}(I_{M,m})}
\right)^2
\right|
\mathcal F_{M,m-1}
\right].
\]

This gives:
\[
\begin{aligned}
\operatorname{Var}(A_{M,m}\mid \mathcal F_{M,m-1})
&\le
\sum_{i\in S_{M,m}}
Q_{M,m}(i)
\frac{\xi_{M,i}^2}{N^2Q_{M,m}(i)^2}
\\
&=
\frac1{N^2}
\sum_{i\in S_{M,m}}
\frac{\xi_{M,i}^2}{Q_{M,m}(i)}.
\end{aligned}
\]

Now, using the overlap condition  we get $\frac1{Q_{M,m}(i)}\le \frac{N}{\beta}$ and consequently:
\begin{equation*}
\operatorname{Var}(A_{M,m}\mid \mathcal F_{M,m-1})
\le
\frac1{N^2}\frac{N}{\beta}
\sum_{i\in S_{M,m}}\xi_{M,i}^2
\le
\frac1{\beta N}
\sum_{i=1}^N\xi_{M,i}^2.    
\end{equation*}
Taking expectations then gives
\begin{equation*}
\operatorname{Var}(A_{M,m})
\le \frac1{\beta N} \sum_{i=1}^N\xi_{M,i}^2.    
\end{equation*}

Moreover, since \(N_M/M\to\alpha>1\), the sequence $\max_{1\le m\le M}\gamma_{M,m}$ is bounded. Indeed,
\[
\gamma_{M,m}
=
\frac{N(N-M)}{(N-m)(N-m+1)}
\le
\frac{N}{N-M+1},
\]
and
\[
\frac{N}{N-M+1}\to\frac{\alpha}{\alpha-1}<\infty.
\]

Finally, using Prop.~\ref{prop:lure_unbiased_variance}, there exists a constant
\(C_\gamma<\infty\) such that, for all sufficiently large \(M\),
\[
\begin{aligned}
\mathbb E[(\widehat R_M-R_M)^2]
&=
\operatorname{Var}(\widehat R_M)
\\
&=
\frac1{M^2}
\sum_{m=1}^M
\gamma_{M,m}^2\operatorname{Var}(A_{M,m})
\\
&\le
\frac1{M^2}
\sum_{m=1}^M
C_\gamma^2
\frac1{\beta N}
\sum_{i=1}^N\xi_{M,i}^2
\\
&=
\frac{C_\gamma^2}{\beta M}
\left(
\frac1N\sum_{i=1}^N\xi_{M,i}^2
\right).
\end{aligned}
\]

If \(N^{-1}\sum_i\xi_{M,i}^2=O(1)\), the right--hand side is \(O(M^{-1})\),
and hence tends to zero. More generally, it tends to zero whenever
\(N^{-1}\sum_i\xi_{M,i}^2=o(M)\).
\end{proof}


\subsubsection{Asymptotic Normality and Confidence Intervals for the General LURE Estimator}
\label{sec:app:lureCLT}
Building on the martingale structure exposed in \S\ref{sec:app:lureunbiased}, we now establish a new central limit theorem for $\sqrt{M}(\widehat{R}_M - R_M)$ under the uniform overlap condition and a $(2+\delta)$-moment condition, and we show that the asymptotic variance can be consistently estimated from the queried labels alone, yielding the asymptotically valid confidence intervals reported in \S\ref{sec:asymp_props_approx_cis}. Our result is obtained by applying a martingale central limit theorem for triangular arrays \citep[Theorem 35.12]{Billingsley1995}.

\begin{thm}[Martingale CLT for LURE]
\label{thm:lure_clt}
Assume \(N_M/M\to\alpha>1\). Assume that our APP satisfies a uniform overlap condition:
\[
Q_{M,m}(i)\ge \frac{\beta}{N_M},
\qquad
m=1,\ldots,M,\quad i\in S_{M,m},
\]
for all sufficiently large \(M\). For \(m=1,\ldots,M\), define $s_{M,m}^2=\operatorname{Var}(A_{M,m}\mid \mathcal F_{M,m-1})$ and $\sigma_M^2
= \frac1M \sum_{m=1}^M \gamma_{M,m}^2s_{M,m}^2.$

Assume that $\sigma_M^2\to_p \sigma^2$  for some \(0<\sigma^2<\infty\). Assume also that, for some \(\delta>0\),
\[
\frac1{N_M}\sum_{i=1}^{N_M}|\xi_{M,i}|^{2+\delta}=O(1).
\]
Then
\[
\sqrt M(\widehat R_M-R_M)
\Rightarrow
N(0,\sigma^2).
\]
\end{thm}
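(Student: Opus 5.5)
We use the running-average representation of Lemma~\ref{lem:lure_alt_rep} to write $\sqrt M(\widehat R_M-R_M)$ as a sum of martingale differences, and then apply the martingale CLT for triangular arrays \citep[Theorem 35.12]{Billingsley1995}. By Lemma~\ref{lem:gamma_sum}, $\sum_m\gamma_{M,m}=M$, so
\[
\sqrt M(\widehat R_M-R_M)=\sum_{m=1}^M X_{M,m},\qquad X_{M,m}:=\frac{\gamma_{M,m}}{\sqrt M}\,(A_{M,m}-R_M).
\]
By Prop.~\ref{prop:lure_unbiased_variance}, $\mathbb E[A_{M,m}\mid\mathcal F_{M,m-1}]=R_M$. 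Hence $(X_{M,m},\mathcal F_{M,m})$ is a martingale difference array. Its conditional variances satisfy
\[
\sum_m \mathbb E[X_{M,m}^2\mid\mathcal F_{M,m-1}]=\frac1M\sum_m\gamma_{M,m}^2 s_{M,m}^2=\sigma_M^2.
\]
By assumption this converges in probability to $\sigma^2$, which is exactly the quadratic-variation condition of the theorem. It therefore remains to verify the conditional Lindeberg condition.

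\textbf{Lyapunov step.} We check the stronger Lyapunov condition $\sum_m\mathbb E|X_{M,m}|^{2+\delta}\to0$. As in the proof of Thm.~\ref{thm:lure_l2_consistency}, $\gamma_{M,m}\le N/(N-M+1)\to\alpha/(\alpha-1)$, so these weights are uniformly bounded by some $C_\gamma$. Then
\[
|X_{M,m}|^{2+\delta}\le C_\gamma^{2+\delta}M^{-1-\delta/2}|A_{M,m}-R_M|^{2+\delta}.
\]
Write $A_{M,m}-R_M=W_m-\mathbb E[W_m\mid\mathcal F_{M,m-1}]$ with $W_m=\xi_{M,I_{M,m}}/(NQ_{M,m}(I_{M,m}))$; this holds because the past-sum part of $A_{M,m}$ cancels against its conditional mean. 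By the $c_r$ and conditional Jensen inequalities, it suffices to bound $\mathbb E[|W_m|^{2+\delta}\mid\mathcal F_{M,m-1}]$. Using overlap, $1/Q\le N/\beta$,
\[
\mathbb E[|W_m|^{2+\delta}\mid\mathcal F_{M,m-1}]=\sum_{i\in S_{M,m}}\frac{|\xi_{M,i}|^{2+\delta}}{N^{2+\delta}Q(i)^{1+\delta}}\le \frac{1}{\beta^{1+\delta}}\cdot\frac1N\sum_{i=1}^N|\xi_{M,i}|^{2+\delta}=O(1).
\]
Summing over $m$ then gives
\[
\sum_m\mathbb E|X_{M,m}|^{2+\delta}=O(M\cdot M^{-1-\delta/2})=O(M^{-\delta/2})\to0.
\]
Lyapunov implies Lindeberg (also in its conditional, in-probability form) via $\mathbb E[X^2\mathbf 1\{|X|>\epsilon\}]\le\epsilon^{-\delta}\mathbb E|X|^{2+\delta}$.

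\textbf{Conclusion.} Billingsley's theorem then yields $\sum_m X_{M,m}\Rightarrow N(0,\sigma^2)$.

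\textbf{Main obstacle.} The step needing most care is matching the exact form of Billingsley's Theorem 35.12. It is stated with variance normalised to a constant limit, and the conditional variance sum must converge in probability rather than almost surely. If the version at hand requires a deterministic limit, we use that $\sigma^2$ is a constant, which the hypothesis provides. We also need the filtration to be nested within each row, which holds since $\mathcal F_{M,m-1}\subseteq\mathcal F_{M,m}$. The moment bound itself is routine once overlap is used to convert $Q^{-(1+\delta)}$ into $(N/\beta)^{1+\delta}$.
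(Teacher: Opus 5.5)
Your proposal is correct and follows the paper's proof essentially step for step. You use the same martingale-difference array $\frac{\gamma_{M,m}}{\sqrt M}(A_{M,m}-R_M)$, identify its predictable quadratic variation with $\sigma_M^2$ in the same way, and give the same overlap-based bound: $\mathbb E[|A_{M,m}-R_M|^{2+\delta}\mid\mathcal F_{M,m-1}]$ is at most a constant times $N_M^{-1}\sum_i|\xi_{M,i}|^{2+\delta}$, which yields an $O(M^{-\delta/2})$ Lyapunov sum. The only cosmetic difference is that you verify the unconditional Lyapunov/Lindeberg condition while the paper states the conditional one; because the conditional moment bound is deterministic, both follow at once, so either form of the martingale CLT is covered.
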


\begin{proof}
Firstly, define the martingale--difference array
$\Delta_{M,m}=\frac{\gamma_{M,m}}{\sqrt M}(A_{M,m}-R_M)$. By Proposition~\ref{prop:lure_unbiased_variance}, we have
\[
\mathbb E[A_{M,m}-R_M\mid \mathcal F_{M,m-1}]=0,
\]
and thus
\[
\mathbb E[\Delta_{M,m}\mid \mathcal F_{M,m-1}]=0.
\]

Now, using Lemma~\ref{lem:lure_alt_rep} and Lemma~\ref{lem:gamma_sum}, we get
\[
\begin{aligned}
\sqrt M(\widehat R_M-R_M)
&=
\sqrt M
\left[
\frac1M\sum_{m=1}^M\gamma_{M,m}A_{M,m}
-
R_M
\right]
\\
&=
\frac1{\sqrt M}
\sum_{m=1}^M
\gamma_{M,m}(A_{M,m}-R_M)
\\
&=
\sum_{m=1}^M\Delta_{M,m}.
\end{aligned}
\]

Here, the predictable quadratic variation is given by
\[
\sum_{m=1}^M
\mathbb E[\Delta_{M,m}^2\mid \mathcal F_{M,m-1}]
=
\frac1M
\sum_{m=1}^M
\gamma_{M,m}^2s_{M,m}^2
=
\sigma_M^2.
\]
Now, by assumption, we have $\sigma_M^2\to_p\sigma^2$. It remains to verify the conditional Lindeberg condition.

\vspace{0.5cm}
Let $p=2+\delta$. As in the proof of Thm.~\ref{thm:lure_l2_consistency}, the sequence
\(\max_{1\le m\le M}\gamma_{M,m}\) is bounded. Moreover,
\[
A_{M,m}-R_M
=
\frac{\xi_{M,I_{M,m}}}{NQ_{M,m}(I_{M,m})}
-
\frac1N\sum_{i\in S_{M,m}}\xi_{M,i}.
\]

Using \(|a-b|^p\le 2^{p-1}(|a|^p+|b|^p)\), we have
\[
\begin{aligned}
\mathbb E[
|A_{M,m}-R_M|^p
\mid \mathcal F_{M,m-1}]
&\le
2^{p-1}
\mathbb E\!\left[
\left.
\left|
\frac{\xi_{M,I_{M,m}}}{NQ_{M,m}(I_{M,m})}
\right|^p
\right|
\mathcal F_{M,m-1}
\right]
\\
&\quad+
2^{p-1}
\left|
\frac1N\sum_{i\in S_{M,m}}\xi_{M,i}
\right|^p.
\end{aligned}
\]

For the first term, our overlap condition gives
\[
\begin{aligned}
\mathbb E\!\left[
\left.
\left|
\frac{\xi_{M,I_{M,m}}}{NQ_{M,m}(I_{M,m})}
\right|^p
\right|
\mathcal F_{M,m-1}
\right]
&=
\frac1{N^p}
\sum_{i\in S_{M,m}}
\frac{|\xi_{M,i}|^p}{Q_{M,m}(i)^{p-1}}
\\
&\le
\frac1{N^p}
\left(\frac{N}{\beta}\right)^{p-1}
\sum_{i\in S_{M,m}}|\xi_{M,i}|^p
\\
&\le
\beta^{1-p}
\left(
\frac1N\sum_{i=1}^N|\xi_{M,i}|^p
\right).
\end{aligned}
\]

For the second term, we have
\[
\left|
\frac1N\sum_{i\in S_{M,m}}\xi_{M,i}
\right|^p
\le
\left(
\frac1N\sum_{i=1}^N|\xi_{M,i}|
\right)^p
\le
\frac1N\sum_{i=1}^N|\xi_{M,i}|^p.
\]

We therefore have that there exists a constant \(C_{p,\beta}<\infty\) such that
\[
\mathbb E[
|A_{M,m}-R_M|^p
\mid \mathcal F_{M,m-1}]
\le
C_{p,\beta}
\left(
\frac1N\sum_{i=1}^N|\xi_{M,i}|^p
\right).
\]

Consequently, for another finite constant \(C<\infty\), we have
\[
\begin{aligned}
\sum_{m=1}^M
\mathbb E[
|\Delta_{M,m}|^p
\mid \mathcal F_{M,m-1}]
&=
\sum_{m=1}^M
\frac{\gamma_{M,m}^p}{M^{p/2}}
\mathbb E[
|A_{M,m}-R_M|^p
\mid \mathcal F_{M,m-1}]
\\
&\le
C M^{1-p/2}
\left(
\frac1N\sum_{i=1}^N|\xi_{M,i}|^p
\right)
\\
&=
C M^{-\delta/2}
\left(
\frac1N\sum_{i=1}^N|\xi_{M,i}|^{2+\delta}
\right)
\\
&\to 0.
\end{aligned}
\]
Hence, for every \(\varepsilon>0\),
\[
\sum_{m=1}^M
\mathbb E[
\Delta_{M,m}^2
\mathbf 1\{|\Delta_{M,m}|>\varepsilon\}
\mid \mathcal F_{M,m-1}]
\le
\varepsilon^{-\delta}
\sum_{m=1}^M
\mathbb E[
|\Delta_{M,m}|^{2+\delta}
\mid \mathcal F_{M,m-1}]
\to 0.
\]
This is the conditional Lindeberg condition.

\vspace{0.25cm}
We can now apply the martingale central limit theorem for triangular arrays: if a martingale--difference array has predictable quadratic variation converging
in probability to \(\sigma^2\) and satisfies the conditional Lindeberg
condition, then its sum converges in distribution to \(N(0,\sigma^2)\). Concretely, we have
\[
\sum_{m=1}^M\Delta_{M,m}
\Rightarrow
N(0,\sigma^2).
\]
Using
$\sum_{m=1}^M\Delta_{M,m}
=
\sqrt M(\widehat R_M-R_M)$,
then gives our desired result.
\end{proof}

\vspace{0.25cm}
\begin{rmrk}[Explicit form of the predictable variance]
\label{rmrk:explicit_form_predictable_var}
The conditional variance \(s_{M,m}^2\) can be written explicitly as
\[
s_{M,m}^2
=
\operatorname{Var}
\left(
\frac{\xi_{M,I_{M,m}}}{NQ_{M,m}(I_{M,m})}
\;\middle|\;
\mathcal F_{M,m-1}
\right).
\]
Equivalently,
\[
s_{M,m}^2
=
\frac1{N^2}
\left[
\sum_{i\in S_{M,m}}
\frac{\xi_{M,i}^2}{Q_{M,m}(i)}
-
\left(
\sum_{i\in S_{M,m}}\xi_{M,i}
\right)^2
\right].
\]
Thus, the asymptotic variance in Thm.~\ref{thm:lure_clt} is the limit of
\[
\sigma_M^2
=
\frac1M
\sum_{m=1}^M
\gamma_{M,m}^2
\frac1{N^2}
\left[
\sum_{i\in S_{M,m}}
\frac{\xi_{M,i}^2}{Q_{M,m}(i)}
-
\left(
\sum_{i\in S_{M,m}}\xi_{M,i}
\right)^2
\right].
\]
\end{rmrk}

\vspace{0.25cm}
\begin{thm}[Consistent estimation of the CLT variance and studentized CLT]
\label{thm:lure_variance_estimator}
Assume the conditions of Thm.~\ref{thm:lure_clt}. In addition, assume $\frac1{N_M}\sum_{i=1}^{N_M}|\xi_{M,i}|^4=O(1)$.
Moreover, define
\[
\widehat\sigma_M^2
=
\frac1M
\sum_{m=1}^M
\gamma_{M,m}^2
(A_{M,m}-\widehat R_M)^2.
\]
Then $\widehat\sigma_M^2\to_p\sigma^2$. Consequently, we have
\[
\frac{\sqrt M(\widehat R_M-R_M)}{\widehat\sigma_M}
\Rightarrow
N(0,1).
\]
An approximate \((1-\eta)\)--level confidence interval for \(R_M\) is given by:
\begin{equation}
\label{eqn:asymptotic_valid_cis}
\widehat R_M
\pm
\Phi^{-1}(1-\eta/2)
\frac{\widehat\sigma_M}{\sqrt M},
\end{equation}
where $\Phi$ is the CDF of the standard normal distribution.
\end{thm}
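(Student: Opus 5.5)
The plan is to decompose $\widehat\sigma_M^2-\sigma_M^2$ into a martingale part and bias terms, and show each is $o_p(1)$. Given $\sigma_M^2\to_p\sigma^2$, this yields $\widehat\sigma_M^2\to_p\sigma^2$. The studentized CLT then follows from Thm.~\ref{thm:lure_clt} and Slutsky's lemma, using $\sigma^2>0$ so that $\widehat\sigma_M\to_p\sigma$ with $\sigma>0$. The interval \eqref{eqn:asymptotic_valid_cis} is then immediate.

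Write $D_{M,m}:=A_{M,m}-R_M$, so that $\mathbb E[D_{M,m}\mid\mathcal F_{M,m-1}]=0$ and $\mathbb E[D_{M,m}^2\mid\mathcal F_{M,m-1}]=s_{M,m}^2$ by Prop.~\ref{prop:lure_unbiased_variance}. Expanding $(A_{M,m}-\widehat R_M)^2=(D_{M,m}-(\widehat R_M-R_M))^2$ gives
\begin{equation*}
\widehat\sigma_M^2=\frac1M\sum_{m=1}^M\gamma_{M,m}^2D_{M,m}^2-\frac{2(\widehat R_M-R_M)}{M}\sum_{m=1}^M\gamma_{M,m}^2D_{M,m}+(\widehat R_M-R_M)^2\frac1M\sum_{m=1}^M\gamma_{M,m}^2 .
\end{equation*}
We bound the three terms separately, using throughout that $\gamma_{M,m}\le N/(N-M+1)$ is uniformly bounded (as in Thm.~\ref{thm:lure_l2_consistency}).

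The third term is $O(1)\cdot(\widehat R_M-R_M)^2=O_p(M^{-1})$ by Thm.~\ref{thm:lure_l2_consistency}; the fourth-moment assumption gives a bounded second moment by Jensen. For the middle term, $\frac1M\sum_m\gamma_{M,m}^2D_{M,m}$ is a normalised sum of bounded-weight martingale differences. Its second moment is $M^{-2}\sum_m\gamma_{M,m}^4\mathbb E[s_{M,m}^2]=O(M^{-1})$, again by the bound $\operatorname{Var}(A_{M,m})\le\beta^{-1}N^{-1}\sum_i\xi_{M,i}^2$. Hence the middle term is $O_p(M^{-1/2})\cdot O_p(M^{-1/2})$.

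The main term is handled by writing
\begin{equation*}
\frac1M\sum_m\gamma_{M,m}^2D_{M,m}^2-\sigma_M^2=\frac1M\sum_m\gamma_{M,m}^2\bigl(D_{M,m}^2-s_{M,m}^2\bigr).
\end{equation*}
The summands $\gamma_{M,m}^2(D_{M,m}^2-s_{M,m}^2)$ are martingale differences, so they are uncorrelated. The second moment of this sum is therefore at most $M^{-2}\sum_m\gamma_{M,m}^4\mathbb E[D_{M,m}^4]$.

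The key step is a uniform fourth-moment bound on $D_{M,m}$. I will reuse the computation from the Lindeberg step of Thm.~\ref{thm:lure_clt} with $p=4$. The overlap condition gives
\begin{equation*}
\mathbb E\bigl[|D_{M,m}|^4\mid\mathcal F_{M,m-1}\bigr]\le C_{4,\beta}\,\frac1N\sum_{i=1}^N|\xi_{M,i}|^4=O(1),
\end{equation*}
uniformly in $m$. This is exactly where the extra assumption $N^{-1}\sum_i|\xi_{M,i}|^4=O(1)$ enters: the $1/Q^{3}\le(N/\beta)^3$ factor cancels against the $N^{-4}$ prefactor, leaving an average of fourth powers. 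The second moment of the main-term difference is then $O(M^{-1})\to0$, and Chebyshev gives convergence in probability to $0$.

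I expect this fourth-moment control to be the only real obstacle. The remaining bookkeeping is routine, and combining the three bounds with $\sigma_M^2\to_p\sigma^2$ finishes the proof.
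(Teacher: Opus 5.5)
Your proposal is correct and takes essentially the same route as the paper. The paper likewise shows $\frac1M\sum_m\gamma_{M,m}^2(A_{M,m}-R_M)^2-\sigma_M^2\to_p0$ as a martingale sum using the same overlap-based fourth-moment bound, then replaces $R_M$ by $\widehat R_M$ through the same three-term expansion; the one cosmetic difference is that you bound the cross term $\frac1M\sum_m\gamma_{M,m}^2D_{M,m}$ directly by its second moment (an $O_p(M^{-1/2})$ rate), whereas the paper only shows it is $O_p(1)$ via Cauchy--Schwarz, which suffices since $\widehat R_M-R_M=o_p(1)$.
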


\begin{proof}
Define the infeasible realised quadratic variation
\[
\widetilde\sigma_M^2
:=
\frac1M
\sum_{m=1}^M
\gamma_{M,m}^2
(A_{M,m}-R_M)^2.
\]
We first show that $\widetilde\sigma_M^2-\sigma_M^2\to_p0$. Let
\[
Y_{M,m}
=
\frac{\gamma_{M,m}^2}{M}
\left[
(A_{M,m}-R_M)^2-s_{M,m}^2
\right].
\]
Then $\mathbb E[Y_{M,m}\mid \mathcal F_{M,m-1}]=0$.
Thus \(\sum_{m=1}^M Y_{M,m}\) is a martingale. Since
\(\max_m\gamma_{M,m}\) is bounded and, by the same overlap argument used in Thm.~\ref{thm:lure_clt},
\[
\mathbb E[
|A_{M,m}-R_M|^4
\mid \mathcal F_{M,m-1}]
\le
C
\left(
\frac1N\sum_{i=1}^N|\xi_{M,i}|^4
\right),
\]
we have
\[
\mathbb E\left[
(\widetilde\sigma_M^2-\sigma_M^2)^2
\right]
\le
\frac{C}{M}
\left(
\frac1N\sum_{i=1}^N|\xi_{M,i}|^4
\right)
\to 0.
\]

Consequently, $\widetilde\sigma_M^2-\sigma_M^2\to_p0$.
Now, since \(\sigma_M^2\to_p\sigma^2\), it therefore follows that $\widetilde\sigma_M^2\to_p\sigma^2$. 

\vspace{0.25cm}
\looseness=-1
It now remains to replace \(R_M\) by \(\widehat R_M\). Let
$D_{M,m}=A_{M,m}-R_M$. Then $A_{M,m}-\widehat R_M = D_{M,m}-(\widehat R_M-R_M)$ and, consequently,
\[
\widehat\sigma_M^2-\widetilde\sigma_M^2
=
-2(\widehat R_M-R_M)
\frac1M\sum_{m=1}^M\gamma_{M,m}^2D_{M,m}
+
(\widehat R_M-R_M)^2
\frac1M\sum_{m=1}^M\gamma_{M,m}^2.
\]
By Thm.~\ref{thm:lure_l2_consistency}\footnote{This applies here as the fourth
moment condition implies the second moment condition.}, we have $\widehat R_M-R_M=o_p(1)$.
Note that we also have:
\begin{align}
\frac1M\sum_{m=1}^M\gamma_{M,m}^2&=O(1), \label{eqn:thmg8_o_1}\\
\frac1M\sum_{m=1}^M\gamma_{M,m}^2D_{M,m}
&=O_p(1),\label{eqn:thmg8_o_p1}
\end{align}

where \eqref{eqn:thmg8_o_1} follows from the boundedness of $\gamma_{M,m}$, and  \eqref{eqn:thmg8_o_p1} follows from \(\widetilde\sigma_M^2\to_p\sigma^2\) and an application of the Cauchy-Schwarz inequality:
\[
\left|
\frac1M\sum_{m=1}^M\gamma_{M,m}^2D_{M,m}
\right|
\le
\left(
\frac1M\sum_{m=1}^M\gamma_{M,m}^2
\right)^{1/2}
\left(
\frac1M\sum_{m=1}^M\gamma_{M,m}^2D_{M,m}^2
\right)^{1/2}.
\]
Consequently, we have $\widehat\sigma_M^2-\widetilde\sigma_M^2=o_p(1)$. Combining this with \(\widetilde\sigma_M^2\to_p\sigma^2\) then yields:
\[
\widehat\sigma_M^2\to_p\sigma^2.
\]

The studentised CLT follows from Thm.~\ref{thm:lure_clt} and Slutsky's
theorem.
\end{proof}



\subsubsection{Fixed--$\lambda$ PPAT}
\label{sec:app:fixedlambdaPPAT}
Here, we prove Thm. \ref{thm:lure_ppat_unified} for the fixed-$\lambda$ case; the plug-in case is treated in \S\ref{sec:app:pluginPPAT}. This follows as an immediate corollary to the theorems in \S\ref{sec:app:lureconsistency}-\ref{sec:app:lureCLT}. The key observation is that, for any fixed $\lambda\in\mathbb R$, PPAT is exactly the
general LURE estimator of \S\ref{sec:app:luredef} applied to the residualised triangular array
\[
  \zeta_{M,i}(\lambda)=\ell_i-\lambda\,c_{M,i},
  \qquad c_{M,i}=\widetilde\ell_i-\widetilde R_M,
\]
whose finite--pool mean is the test--pool risk $R_M$; the consistency, central--limit,
and variance--estimation results then transfer directly once the moment conditions on
this array are translated into separate conditions on the true and proxy losses.

Recall from Remark \ref{rmrk:explicit_form_predictable_var} that, for the array $\zeta_{M,\cdot}(\lambda)$, the predictable
quadratic variation $\sigma_M^2$ of Thm. \ref{thm:lure_clt} takes the explicit form
\begin{equation}
  \label{eq:sigma2M-fixed}
  \sigma_M^2(\lambda)
  =\frac{1}{M}\sum_{m=1}^{M}\gamma_{M,m}^2\,
    \frac{1}{N_M^2}
    \left[\sum_{i\in S_{M,m}}\frac{\zeta_{M,i}(\lambda)^2}{Q_{M,m}(i)}
    -\Bigl(\sum_{i\in S_{M,m}}\zeta_{M,i}(\lambda)\Bigr)^{\!2}\right].
\end{equation}

\vspace{0.25cm}
\begin{cor}[Fixed--$\lambda$ PPAT, Theorem \ref{thm:lure_ppat_unified}]
\label{cor:fixed-lambda-ppat}
Consider the setup of \S\ref{sec:app:luredef} and fix $\lambda\in\mathbb R$. Assume that
the pool ratio satisfies $N_M/M\to\alpha>1$ and that our APP satisfies a uniform overlap condition: there exists $\beta>0$ such that,
for all sufficiently large $M$,
\begin{equation*}
Q_{M,m}(i)\ge\frac{\beta}{N_M},\qquad m=1,\dots,M,\ \ i\in S_{M,m}.  
\end{equation*}
Then the following hold.
\begin{enumerate}
\item \textup{(Consistency.)} Assume
$\tfrac{1}{N_M}\sum_{i=1}^{N_M}\ell_i^2=O(1)$ and
$\tfrac{1}{N_M}\sum_{i=1}^{N_M}\widetilde\ell_i^{\,2}=O(1)$, then
\begin{equation*}
\widehat R^{\mathrm{PPAT}}_M(\lambda)-R_M \ \xrightarrow{\ p\ }\ 0.  
\end{equation*}
\item \textup{(Asymptotic normality and confidence intervals.)} Assume, additionally, 
$\tfrac{1}{N_M}\sum_{i=1}^{N_M}\ell_i^4=O(1)$ and
$\tfrac{1}{N_M}\sum_{i=1}^{N_M}\tilde\ell_i^{\,4}=O(1)$, and the predictable quadratic
variation \eqref{eq:sigma2M-fixed} satisfies
$\sigma_M^2(\lambda)\xrightarrow{p}\sigma^2_{\mathrm{PPAT}}(\lambda)$ for some $\sigma^2_{\mathrm{PPAT}}(\lambda)\in(0,\infty)$, then
\begin{equation*}
\sqrt{M}\,\bigl\{\widehat R^{\mathrm{PPAT}}_M(\lambda)-R_M\bigr\}
\ \Rightarrow\ \mathcal N\!\bigl(0,\sigma^2_{\mathrm{PPAT}}(\lambda)\bigr).    
\end{equation*}

Moreover, writing $A_{M,m}(\lambda)$ for the running average of Lemma \ref{lem:lure_alt_rep} applied to the
array $\zeta_{M,\cdot}(\lambda)$, i.e.
\begin{equation*}
  A_{M,m}(\lambda)
  =\frac{1}{N_M}\left(\frac{\zeta_{M,I_{M,m}}(\lambda)}{Q_{M,m}(I_{M,m})}
  +\sum_{t=1}^{m-1}\zeta_{M,I_{M,t}}(\lambda)\right),    
\end{equation*}
the plug--in estimator
\begin{equation*}
  \widehat\sigma_M^2(\lambda)
  =\frac{1}{M}\sum_{m=1}^{M}\gamma_{M,m}^2
   \bigl(A_{M,m}(\lambda)-\widehat R^{\mathrm{PPAT}}_M(\lambda)\bigr)^2    
\end{equation*}
satisfies $\widehat\sigma_M^2(\lambda)\xrightarrow{p}\sigma^2_{\mathrm{PPAT}}(\lambda)$, and hence
\begin{equation*}
\frac{\sqrt{M}\,\bigl\{\widehat R^{\mathrm{PPAT}}_M(\lambda)-R_M\bigr\}}
   {\widehat\sigma_M(\lambda)}
\ \Rightarrow\ \mathcal N(0,1).    
\end{equation*}

In particular,
\begin{equation*}
\widehat R^{\mathrm{PPAT}}_M(\lambda)\ \pm\
\Phi^{-1}\!\bigl(1-\tfrac{\eta}{2}\bigr)\,\frac{\widehat\sigma_M(\lambda)}{\sqrt{M}}    
\end{equation*}
is an asymptotic $(1-\eta)$--level confidence interval for $R_M$, where $\Phi$ is the CDF for the standard normal distribution.
\end{enumerate}
\end{cor}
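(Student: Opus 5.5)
The plan is to apply Theorems~\ref{thm:lure_l2_consistency}, \ref{thm:lure_clt} and \ref{thm:lure_variance_estimator} directly to the residualised triangular array $\xi_{M,i}=\zeta_{M,i}(\lambda)=\ell_i-\lambda c_{M,i}$. By Remark~\ref{rmrk:array_special_cases}, the general LURE estimator built from this array is exactly $\widehat R^{\mathrm{PPAT}}_M(\lambda)$. Its finite--pool mean is $R_M$, since $\sum_i c_{M,i}=0$. The array also has the required form $h_M(\bx_{1:N_M},y_i)$, because $\widetilde\ell_i$ and $\widetilde R_M$ depend only on the inputs. The overlap condition and $N_M/M\to\alpha>1$ are assumed verbatim in the corollary. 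The only real work is therefore to convert the separate moment conditions on $\ell$ and $\widetilde\ell$ into moment conditions on $\zeta$.

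For consistency, I would bound $\zeta_{M,i}^2\le 2\ell_i^2+2\lambda^2c_{M,i}^2$ and $c_{M,i}^2\le 2\widetilde\ell_i^{\,2}+2\widetilde R_M^2$. Jensen's inequality gives $\widetilde R_M^2\le N_M^{-1}\sum_i\widetilde\ell_i^{\,2}=O(1)$. Together these yield $N_M^{-1}\sum_i\zeta_{M,i}^2=O(1)$. Theorem~\ref{thm:lure_l2_consistency} then gives $L^2$ convergence, and Chebyshev's inequality turns this into convergence in probability.

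For asymptotic normality, I would argue in the same way with fourth powers. Using $|a+b|^4\le 8(a^4+b^4)$ and $\widetilde R_M^4\le N_M^{-1}\sum_i\widetilde\ell_i^{\,4}$ gives $N_M^{-1}\sum_i\zeta_{M,i}^4=O(1)$. This is the $(2+\delta)$ moment condition of Theorem~\ref{thm:lure_clt} with $\delta=2$, and it is also the fourth-moment condition of Theorem~\ref{thm:lure_variance_estimator}. By Remark~\ref{rmrk:explicit_form_predictable_var}, the quantity \eqref{eq:sigma2M-fixed} is exactly $\sigma_M^2$ of Theorem~\ref{thm:lure_clt} for this array, so the assumed convergence $\sigma_M^2(\lambda)\to_p\sigma^2_{\mathrm{PPAT}}(\lambda)\in(0,\infty)$ supplies the remaining hypothesis. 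Theorem~\ref{thm:lure_clt} then gives the CLT. Theorem~\ref{thm:lure_variance_estimator} gives consistency of $\widehat\sigma_M^2(\lambda)$, the studentised CLT, and the interval, since $A_{M,m}(\lambda)$ is precisely the running average of Lemma~\ref{lem:lure_alt_rep} for $\zeta$.

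None of these steps is a genuine obstacle. The point to be careful about is that $c_{M,i}$ depends on $M$ through $\widetilde R_M$. This is why the moment transfer has to control $\widetilde R_M$ explicitly via Jensen's inequality, and why the general triangular-array formulation is needed rather than a fixed sequence.
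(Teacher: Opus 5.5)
Your proposal is correct and matches the paper's proof essentially step for step. Both identify $\widehat R^{\mathrm{PPAT}}_M(\lambda)$ as the general LURE estimator for the array $\zeta_{M,i}(\lambda)$, whose pool mean is $R_M$, and both transfer the second- and fourth-moment bounds from $\ell_i$ and $\widetilde\ell_i$ to $\zeta_{M,i}(\lambda)$ using $|a-b|^p\le 2^{p-1}(|a|^p+|b|^p)$ together with Jensen's inequality for $\widetilde R_M$; both then invoke Theorems~\ref{thm:lure_l2_consistency}, \ref{thm:lure_clt} (with $\delta=2$) and \ref{thm:lure_variance_estimator}. Your explicit check that the array has the form $h_M(\bx_{1:N_M},y_i)$ is a small point the paper leaves implicit.
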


\begin{proof}
By Remark \ref{rmrk:array_special_cases}, for any fixed $\lambda\in\mathbb R$ the PPAT estimator
$\widehat R^{\mathrm{PPAT}}_M(\lambda)$ is exactly the generic LURE estimator of
\S\ref{sec:app:luredef} applied to the array $\xi_{M,i}=\zeta_{M,i}(\lambda)=\ell_i-\lambda\,c_{M,i}$.
Since $c_{M,i}=\tilde\ell_i-\tilde R_M$ has zero pool average,
$\tfrac{1}{N_M}\sum_{i=1}^{N_M} c_{M,i}=0$, the finite--pool mean of this array is
\[
  \frac{1}{N_M}\sum_{i=1}^{N_M}\zeta_{M,i}(\lambda)
  =\frac{1}{N_M}\sum_{i=1}^{N_M}\ell_i
   -\lambda\,\frac{1}{N_M}\sum_{i=1}^{N_M} c_{M,i}
  =R_M .
\]
Hence the quantity ``$R_M$'' appearing in Thms. \ref{thm:lure_l2_consistency}, \ref{thm:lure_clt} and \ref{thm:lure_variance_estimator} for this array is precisely the test--pool risk, and it only remains to translate the moment conditions on $\xi_{M,i}$ into conditions on $\ell_i$ and $\tilde\ell_i$.

\medskip
It remains to translate the moment assumptions on $\xi_{M,i}$ into the corresponding assumptions on $\ell_i$ and $\tilde{\ell}_i$. Fix $p\ge 1$. Since $c_{M,i}=\tilde\ell_i-\tilde R_M$ and
$\tilde R_M$ is the pool average of the $\tilde\ell_i$, Jensen's inequality gives
$|\tilde R_M|^p\le \tfrac{1}{N_M}\sum_{j}|\tilde\ell_j|^p$, so that, using
$|a-b|^p\le 2^{p-1}(|a|^p+|b|^p)$,
\[
  \frac{1}{N_M}\sum_{i=1}^{N_M}|c_{M,i}|^p
  \le 2^{p-1}\!\left(\frac{1}{N_M}\sum_{i=1}^{N_M}|\tilde\ell_i|^p+|\tilde R_M|^p\right)
  \le C_p\,\frac{1}{N_M}\sum_{i=1}^{N_M}|\tilde\ell_i|^p ,
\]
for a constant $C_p<\infty$. As $\lambda$ is fixed, applying the same inequality to
$\zeta_{M,i}(\lambda)=\ell_i-\lambda c_{M,i}$ yields
\begin{equation}
  \label{eq:moment-translation}
  \frac{1}{N_M}\sum_{i=1}^{N_M}|\zeta_{M,i}(\lambda)|^p
  \le C_{p,\lambda}\left(\frac{1}{N_M}\sum_{i=1}^{N_M}|\ell_i|^p
  +\frac{1}{N_M}\sum_{i=1}^{N_M}|\tilde\ell_i|^p\right),
\end{equation}
for a constant $C_{p,\lambda}<\infty$.

\medskip
We first establish consistency. Taking $p=2$ in \eqref{eq:moment-translation}, the assumptions
$\tfrac{1}{N_M}\sum\ell_i^2=O(1)$ and $\tfrac{1}{N_M}\sum\tilde\ell_i^{\,2}=O(1)$ imply
$\tfrac{1}{N_M}\sum\xi_{M,i}^2=O(1)$. Thm. \ref{thm:lure_l2_consistency} then gives
$\mathbb E[(\widehat R^{\mathrm{PPAT}}_M(\lambda)-R_M)^2]\to 0$, and $L^2$ convergence implies $\widehat R^{\mathrm{PPAT}}_M(\lambda)-R_M\xrightarrow{p}0$.

\medskip
We now turn to the central limit theorem. Taking $p=4$ in \eqref{eq:moment-translation}, the
fourth--moment assumptions give $\tfrac{1}{N_M}\sum|\xi_{M,i}|^4=O(1)$, and in particular
the $(2+\delta)$--moment condition $\tfrac{1}{N_M}\sum|\xi_{M,i}|^{2+\delta}=O(1)$ with
$\delta=2$. The assumed stabilisation
$\sigma_M^2(\lambda)\xrightarrow{p}\sigma^2_{\mathrm{PPAT}}(\lambda)\in(0,\infty)$ is exactly
the predictable--quadratic--variation condition of Thm. \ref{thm:lure_clt} for the array
$\zeta_{M,\cdot}(\lambda)$. Thm. \ref{thm:lure_clt}
therefore yields
\begin{equation*}
  \sqrt{M}\bigl\{\widehat R^{\mathrm{PPAT}}_M(\lambda)-R_M\bigr\}
  \ \Rightarrow\ \mathcal N\!\bigl(0,\sigma^2_{\mathrm{PPAT}}(\lambda)\bigr).    
\end{equation*}

\medskip
Finally, we address estimation of the asymptotic variance and the asymptotic confidence interval. The bound
$\tfrac{1}{N_M}\sum|\xi_{M,i}|^4=O(1)$ established above is precisely the additional condition required by Thm. \ref{thm:lure_variance_estimator}. Applying that theorem to the array
$\zeta_{M,\cdot}(\lambda)$ shows that the plug--in estimator is consistent, $\widehat\sigma_M^2(\lambda)\xrightarrow{p}\sigma^2_{\mathrm{PPAT}}(\lambda)$, and, by Slutsky's theorem, gives both the studentised limit and the asymptotic $(1-\eta)$--level confidence interval for $R_M$.
\end{proof}

\vspace{0.25cm}
\begin{rmrk}
Setting $\lambda=0$ gives $\zeta_{M,i}(0)=\ell_i$ and
$\widehat R^{\mathrm{PPAT}}_M(0)=\widehat R^{\mathrm{LURE}}_M$, so
Corr.~\ref{cor:fixed-lambda-ppat} specialises to the corresponding consistency,
asymptotic normality, and confidence--interval statements for LURE.
\end{rmrk}


\subsubsection{Plug--in PPAT}
\label{sec:app:pluginPPAT}
Finally, we extend the results in \S \ref{sec:app:fixedlambdaPPAT} for the plug--in PPAT estimator, where the tuning 
parameter $\lambda$ is replaced by the data-driven estimate $\widehat{\lambda}_M$ in \S\ref{sub:lambda_estimation}. The proof compares the plug--in estimator with the PPAT
estimator that uses the finite-pool oracle coefficient $\lambda_M^\dagger$. We show
that replacing $\lambda_M^\dagger$ by $\widehat{\lambda}_M$ changes the estimator
only by $o_p(M^{-1/2})$. Thus, at the $\sqrt{M}$ rate, the plug--in estimator has the same asymptotic behaviour as this oracle estimator: it is consistent, satisfies the same central limit theorem, and yields asymptotically valid confidence intervals when the variance is estimated from the data.

To simplify our notation, for any given triangular array $a_{M,1:N_M}$ we will write
\begin{equation*}
\bar a_M = \frac1{N_M}\sum_{i=1}^{N_M}a_{M,i}, \qquad \widehat\mu_M(a) = \frac1M\sum_{m=1}^M V_{M,m}a_{M,I_{M,m}}.    
\end{equation*}
\looseness=-1
In this way, $\widehat\mu_M(a)$ is the LURE estimator applied to the array $a$. As before, define 
$$c_{M,i} = \widetilde\ell_i-\widetilde R_M, \qquad \widetilde R_M = \frac1{N_M}\sum_{i=1}^{N_M}\widetilde\ell_i,$$
and note that $\bar c_M=0$. Moreover, let 
$$D_M = \frac1{N_M}\sum_{i=1}^{N_M}c_{M,i}^2, \qquad G_M = \frac1{N_M}\sum_{i=1}^{N_M}\ell_i c_{M,i},$$ 
such that the oracle finite-pool coefficient in \S\ref{sub:lambda_estimation} can be written as $\lambda_M^\dagger = \frac{G_M}{D_M}$, whenever \(D_M>0\).

As discussed in \S\ref{sub:lambda_estimation}, $c_{M,i}$ is known for all test points so $D_M$ can be computed exactly. On the other hand, we estimate $G_M$ via LURE,
\begin{equation*}
\widehat G_M = \widehat\mu_M(\ell c)
= \frac1M\sum_{m=1}^M V_{M,m}\ell_{I_{M,m}}c_{M,I_{M,m}},    
\end{equation*}
and set $\widehat\lambda_M = \frac{\widehat G_M}{D_M}$. Using the notation defined earlier, the plug-in PPAT estimator is
\begin{equation*}
\widehat R_M^{\mathrm{PPAT}}(\widehat\lambda_M) = \widehat\mu_M(\ell) - \widehat\lambda_M\widehat\mu_M(c).    
\end{equation*}

We now formally state and prove the consistency and asymptotic normality of the plug-in PPAT estimator. We prove the confidence interval result as a corollary in Corr. \ref{cor:plugin_ppat_studentized}.

\begin{thm}[Consistency and asymptotic normality of plug-in PPAT, Theorem~\ref{thm:lure_ppat_unified}]
\label{thm:plugin_ppat_clt}
Assume that the pool ratio satisfies $N_M/M\to\alpha>1$ and that the APP satisfies a uniform overlap condition: there exists
$\beta>0$ such that, for all sufficiently large $M$,
\begin{equation*}
Q_{M,m}(i)\ge \frac{\beta}{N_M},
\qquad
m=1,\ldots,M,\quad i\in S_{M,m}.
\end{equation*}
Assume also that there exists $d>0$ such that, for all sufficiently large $M$,
\begin{equation*}
D_M=\frac1{N_M}\sum_{i=1}^{N_M}c_{M,i}^2\ge d.
\end{equation*}
Then the following hold.
\begin{enumerate}
\item \emph{(Consistency.)} Assume
\begin{equation*}
\frac1{N_M}\sum_{i=1}^{N_M}\ell_i^2=O(1),
\qquad
\frac1{N_M}\sum_{i=1}^{N_M}c_{M,i}^2=O(1),
\qquad
\frac1{N_M}\sum_{i=1}^{N_M}\ell_i^2c_{M,i}^2=O(1).
\end{equation*}
Then
\begin{equation*}
\widehat\lambda_M-\lambda_M^\dagger=O_p(M^{-1/2})
\qquad\text{and}\qquad
\widehat R_M^{\mathrm{PPAT}}(\widehat\lambda_M)-R_M=O_p(M^{-1/2}).
\end{equation*}
In particular, both quantities are $o_p(1)$, so
$\widehat R_M^{\mathrm{PPAT}}(\widehat\lambda_M)-R_M\to_p 0$ and $\widehat\lambda_M-\lambda_M^\dagger\to_p 0$.

\item \emph{(Asymptotic normality.)} Assume, in addition, that for some $\delta>0$,
\begin{equation*}
\frac1{N_M}\sum_{i=1}^{N_M}|\ell_i|^{2+\delta}=O(1),
\qquad
\frac1{N_M}\sum_{i=1}^{N_M}|c_{M,i}|^{2+\delta}=O(1),
\end{equation*}
and define the oracle residualised array $z_{M,i}^{\dagger}=\ell_i-\lambda_M^\dagger c_{M,i}$, with running averages and predictable variances
\begin{align*}
A_{M,m}(z^\dagger) &=\frac1{N_M}\left\{
\frac{z_{M,I_{M,m}}^\dagger}{Q_{M,m}(I_{M,m})} +\sum_{t=1}^{m-1}z_{M,I_{M,t}}^\dagger\right\}, \\[0.5em]
s_{M,m}^2(z^\dagger) &=\operatorname{Var}\!\left(A_{M,m}(z^\dagger)\mid\mathcal F_{M,m-1}\right).
\end{align*}
If the predictable quadratic variation stabilises,
\begin{equation*}
\sigma_{M,\dagger}^2
:=\frac1M\sum_{m=1}^M\gamma_{M,m}^2 s_{M,m}^2(z^\dagger)
\to_p\sigma_\dagger^2
\end{equation*}
for some $0<\sigma_\dagger^2<\infty$, then
\begin{equation*}
\sqrt M\left\{\widehat R_M^{\mathrm{PPAT}}(\widehat\lambda_M)-R_M\right\}
\Rightarrow N(0,\sigma_\dagger^2).
\end{equation*}
\end{enumerate}
\end{thm}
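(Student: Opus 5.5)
The plan is to compare the plug-in estimator with the oracle fixed-coefficient estimator $\widehat R_M^{\mathrm{PPAT}}(\lambda_M^\dagger)=\widehat\mu_M(\ell)-\lambda_M^\dagger\widehat\mu_M(c)$, using the exact algebraic identity
\begin{equation*}
\widehat R_M^{\mathrm{PPAT}}(\widehat\lambda_M)-R_M
=\bigl\{\widehat R_M^{\mathrm{PPAT}}(\lambda_M^\dagger)-R_M\bigr\}
-(\widehat\lambda_M-\lambda_M^\dagger)\,\widehat\mu_M(c).
\end{equation*}
This follows from linearity of $\widehat\mu_M$ in the array. Each piece is then controlled by applying Thm.~\ref{thm:lure_l2_consistency} (and, for part 2, Thm.~\ref{thm:lure_clt}) to a suitable deterministic triangular array. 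This is legitimate because $\lambda_M^\dagger$, $c_{M,i}$ and $D_M$ are all functions of the fixed pool, not of the sampling randomness.

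For part 1, I would establish three $O_p(M^{-1/2})$ bounds.

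\textbf{First bound.} Apply Thm.~\ref{thm:lure_l2_consistency} to the array $\xi_{M,i}=\ell_ic_{M,i}$, whose pool mean is $G_M$. The assumption $N_M^{-1}\sum\ell_i^2c_{M,i}^2=O(1)$ gives $\mathbb E[(\widehat G_M-G_M)^2]=O(M^{-1})$, so Chebyshev yields $\widehat G_M-G_M=O_p(M^{-1/2})$. Dividing by $D_M\ge d$ gives $\widehat\lambda_M-\lambda_M^\dagger=O_p(M^{-1/2})$.

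\textbf{Second bound.} Apply the same theorem to the array $c_{M,i}$, whose pool mean is $\bar c_M=0$. This gives $\widehat\mu_M(c)=O_p(M^{-1/2})$.

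\textbf{Third bound.} Apply the theorem to the array $z^\dagger_{M,i}=\ell_i-\lambda_M^\dagger c_{M,i}$, whose pool mean is $R_M$. Here one must check that $\lambda_M^\dagger$ stays bounded. By Cauchy--Schwarz, $|G_M|\le(\overline{\ell^2}_M D_M)^{1/2}=O(1)$, and with $D_M\ge d$ this gives $\lambda_M^\dagger=O(1)$. Hence $N_M^{-1}\sum(z^\dagger_{M,i})^2\le 2\,\overline{\ell^2}_M+2(\lambda_M^\dagger)^2D_M=O(1)$, and $\widehat R_M^{\mathrm{PPAT}}(\lambda_M^\dagger)-R_M=O_p(M^{-1/2})$.

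Combining the three bounds in the identity, the cross term is $O_p(M^{-1/2})\cdot O_p(M^{-1/2})=O_p(M^{-1})$, and the claimed rates follow.

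For part 2, the cross term is $o_p(M^{-1/2})$, so $\sqrt M\{\widehat R_M^{\mathrm{PPAT}}(\widehat\lambda_M)-R_M\}=\sqrt M\{\widehat R_M^{\mathrm{PPAT}}(\lambda_M^\dagger)-R_M\}+o_p(1)$. By Slutsky it then suffices to apply Thm.~\ref{thm:lure_clt} to the array $z^\dagger$. The quadratic-variation hypothesis is assumed verbatim. The $(2+\delta)$-moment condition follows from $|a-b|^p\le 2^{p-1}(|a|^p+|b|^p)$ with $p=2+\delta$, the boundedness of $\lambda_M^\dagger$, and the assumed $(2+\delta)$-moments of $\ell$ and $c$.

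The main obstacle is the uniform boundedness of $\lambda_M^\dagger$. Thm.~\ref{thm:lure_clt} is stated for a fixed array sequence, so a coefficient that varies with $M$ is only harmless if it cannot blow up. This is exactly where the lower bound $D_M\ge d$ and the Cauchy--Schwarz bound on $G_M$ are used. It is also why the proof compares with the $M$-dependent oracle $\lambda_M^\dagger$ rather than a fixed limit, which would need extra convergence assumptions that the statement does not make.
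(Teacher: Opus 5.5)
Your proposal is correct and follows the paper's proof essentially step for step. It uses the same decomposition $\widehat R_M^{\mathrm{PPAT}}(\widehat\lambda_M)-R_M=\{\widehat\mu_M(z^\dagger)-R_M\}-(\widehat\lambda_M-\lambda_M^\dagger)\widehat\mu_M(c)$, the same three applications of Thm.~\ref{thm:lure_l2_consistency} to the arrays $\ell c$, $c$ and $z^\dagger$, the same Cauchy--Schwarz bound giving $|\lambda_M^\dagger|=O(1)$, and the same transfer of $(2+\delta)$-moments to $z^\dagger$ before invoking Thm.~\ref{thm:lure_clt} and Slutsky; your explicit note that the $O_p(M^{-1/2})$ rates come from the $O(M^{-1})$ mean-squared-error bound in the proof of Thm.~\ref{thm:lure_l2_consistency} plus Chebyshev is a useful clarification, since that theorem's statement only asserts convergence to zero.
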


\begin{proof}
Throughout we apply the $L^2$-consistency result (Thm. \ref{thm:lure_l2_consistency}) and the martingale
central limit theorem for LURE (Thm. \ref{thm:lure_clt}) to fixed finite-pool arrays.

We first bound $\widehat\lambda_M-\lambda_M^\dagger$. Since
\begin{equation*}
\widehat\lambda_M-\lambda_M^\dagger=\frac{\widehat G_M-G_M}{D_M}
\end{equation*}
and $D_M\ge d>0$, it suffices to control $\widehat G_M-G_M$. Here $\widehat G_M=\widehat\mu_M(\ell c)$
is the LURE estimator of the finite-pool mean $G_M$ of the array $g_{M,i}:=\ell_i c_{M,i}$, which satisfies
\begin{equation*}
\frac1{N_M}\sum_{i=1}^{N_M}g_{M,i}^2 =\frac1{N_M}\sum_{i=1}^{N_M}\ell_i^2c_{M,i}^2=O(1).
\end{equation*}
Thm. \ref{thm:lure_l2_consistency} applied to $g_{M,i}$ therefore gives $\widehat G_M-G_M=O_p(M^{-1/2})$, and hence
\begin{equation*}
\widehat\lambda_M-\lambda_M^\dagger=O_p(M^{-1/2}).
\end{equation*}
Similarly, the centred proxy array $c_{M,i}$ has zero finite-pool mean, $\bar c_M=0$,
and satisfies $\frac1{N_M}\sum_i c_{M,i}^2=O(1)$, so Thm. \ref{thm:lure_l2_consistency} applied to $c_{M,i}$ gives
\begin{equation*}
\widehat\mu_M(c)=\widehat\mu_M(c)-\bar c_M=O_p(M^{-1/2}).
\end{equation*}

We next describe the properties of the oracle residualised array
$z_{M,i}^\dagger=\ell_i-\lambda_M^\dagger c_{M,i}$ needed for our results. First, $\lambda_M^\dagger$ is
bounded: by Cauchy--Schwarz,
\begin{equation*}
|G_M|
=\left|\frac1{N_M}\sum_{i=1}^{N_M}\ell_i c_{M,i}\right|
\le\left(\frac1{N_M}\sum_{i=1}^{N_M}\ell_i^2\right)^{1/2}
\left(\frac1{N_M}\sum_{i=1}^{N_M}c_{M,i}^2\right)^{1/2},
\end{equation*}
so that
\begin{equation*}
|\lambda_M^\dagger| =\frac{|G_M|}{D_M} \le\left(\frac{N_M^{-1}\sum_i\ell_i^2}{D_M}\right)^{1/2} =O(1),
\end{equation*}
using $D_M\ge d>0$ and $N_M^{-1}\sum_i\ell_i^2=O(1)$. Second, since $\bar c_M=0$, the array $z^\dagger$ has finite-pool mean
\begin{equation*}
\frac1{N_M}\sum_{i=1}^{N_M}z_{M,i}^\dagger =\frac1{N_M}\sum_{i=1}^{N_M}\ell_i -\lambda_M^\dagger\frac1{N_M}\sum_{i=1}^{N_M}c_{M,i} =R_M,
\end{equation*}
and, using $(a-b)^2\le 2a^2+2b^2$ together with the boundedness of $\lambda_M^\dagger$,
\begin{equation*}
\frac1{N_M}\sum_{i=1}^{N_M}(z_{M,i}^\dagger)^2
\le 2\frac1{N_M}\sum_{i=1}^{N_M}\ell_i^2
+2(\lambda_M^\dagger)^2\frac1{N_M}\sum_{i=1}^{N_M}c_{M,i}^2 =O(1).
\end{equation*}
Thm. \ref{thm:lure_l2_consistency} applied to the fixed array $z^\dagger$ therefore gives
\begin{equation*}
\widehat\mu_M(z^\dagger)-R_M=O_p(M^{-1/2}).
\end{equation*}

We now prove consistency. Writing $\widehat R_M^{\mathrm{PPAT}}(\widehat\lambda_M)=\widehat\mu_M(\ell)-\widehat\lambda_M\widehat\mu_M(c)$
and using the linearity of $\widehat\mu_M$,
\begin{equation*}
\begin{aligned}
\widehat R_M^{\mathrm{PPAT}}(\widehat\lambda_M)-R_M
&=\widehat\mu_M(\ell)-\widehat\lambda_M\widehat\mu_M(c)-R_M\\
&=\widehat\mu_M(\ell)-\lambda_M^\dagger\widehat\mu_M(c)-R_M
-(\widehat\lambda_M-\lambda_M^\dagger)\widehat\mu_M(c)\\
&=\bigl(\widehat\mu_M(z^\dagger)-R_M\bigr)
-(\widehat\lambda_M-\lambda_M^\dagger)\widehat\mu_M(c).
\end{aligned}
\end{equation*}
The first term is $O_p(M^{-1/2})$ by the previous step, while the second term is
\begin{equation*}
(\widehat\lambda_M-\lambda_M^\dagger)\widehat\mu_M(c)
=O_p(M^{-1/2})\,O_p(M^{-1/2})=O_p(M^{-1}).
\end{equation*}
Therefore
\begin{equation*}
\widehat R_M^{\mathrm{PPAT}}(\widehat\lambda_M)-R_M=O_p(M^{-1/2}),
\end{equation*}
which proves part~1.

It remains to prove asymptotic normality. Since $|\lambda_M^\dagger|=O(1)$, the additional
$(2+\delta)$-moment assumptions control the corresponding moment of $z^\dagger$: by
$|a-b|^{2+\delta}\le 2^{1+\delta}(|a|^{2+\delta}+|b|^{2+\delta})$,
\begin{equation*}
\frac1{N_M}\sum_{i=1}^{N_M}|z_{M,i}^\dagger|^{2+\delta}
\le C\left[
\frac1{N_M}\sum_{i=1}^{N_M}|\ell_i|^{2+\delta}
+|\lambda_M^\dagger|^{2+\delta}\frac1{N_M}\sum_{i=1}^{N_M}|c_{M,i}|^{2+\delta}
\right]=O(1)
\end{equation*}
for a finite constant $C$. The fixed array $z^\dagger$ thus satisfies the hypotheses of the
LURE central limit theorem (Thm. \ref{thm:lure_clt}), and, using the assumed variance stabilisation $\frac1M\sum_{m=1}^M\gamma_{M,m}^2 s_{M,m}^2(z^\dagger)\to_p\sigma_\dagger^2$, we obtain
\begin{equation*}
\sqrt M\{\widehat\mu_M(z^\dagger)-R_M\}\Rightarrow N(0,\sigma_\dagger^2).
\end{equation*}
Finally, we show that replacing $\lambda_M^\dagger$ by $\widehat\lambda_M$ is first-order
negligible. From the decomposition above,
\begin{equation*}
\sqrt M\left\{\widehat R_M^{\mathrm{PPAT}}(\widehat\lambda_M)-\widehat\mu_M(z^\dagger)\right\}
=-(\widehat\lambda_M-\lambda_M^\dagger)\sqrt M\,\widehat\mu_M(c).
\end{equation*}
We have shown $\widehat\lambda_M-\lambda_M^\dagger=O_p(M^{-1/2})=o_p(1)$, while
$\sqrt M\,\widehat\mu_M(c)=O_p(1)$; hence the right-hand side is $o_p(1)$, that is,
\begin{equation*}
\sqrt M\left\{\widehat R_M^{\mathrm{PPAT}}(\widehat\lambda_M)-\widehat\mu_M(z^\dagger)\right\}=o_p(1).
\end{equation*}
Combining this asymptotic equivalence with the central limit theorem for
$\widehat\mu_M(z^\dagger)$ and applying Slutsky's theorem yields
\begin{equation*}
\sqrt M\left\{\widehat R_M^{\mathrm{PPAT}}(\widehat\lambda_M)-R_M\right\}
\Rightarrow N(0,\sigma_\dagger^2),
\end{equation*}
which proves part~2.
\end{proof}

\vspace{0.25cm}
The preceding theorem gives a limiting normal distribution with generally unknown variance. We now show that this variance can be replaced by an estimate computed from the acquired labels, while still yielding asymptotically valid confidence intervals. This allows us to obtain approximate confidence intervals for our plug-in PPAT estimator.
\begin{cor}[Studentised plug-in PPAT CLT]
\label{cor:plugin_ppat_studentized}
Assume the conditions of Thm.~\ref{thm:plugin_ppat_clt}, and in addition that
\begin{equation*}
\frac1{N_M}\sum_{i=1}^{N_M}|\ell_i|^4=O(1),
\qquad \frac1{N_M}\sum_{i=1}^{N_M}|c_{M,i}|^4=O(1).
\end{equation*}

For $\lambda \in \mathbb{R}$, define the residualised array $z_{M,i}(\lambda)=\ell_i-\lambda c_{M,i}$ with running average
\begin{equation*}
A_{M,m}(\lambda)=\frac1{N_M}
\left\{\frac{z_{M,I_{M,m}}(\lambda)}{Q_{M,m}(I_{M,m})} + \sum_{t=1}^{m-1}z_{M,I_{M,t}}(\lambda)\right\},
\end{equation*}
and let
\begin{equation*}
\widehat\sigma_M^2(\widehat\lambda_M)=\frac1M\sum_{m=1}^M \gamma_{M,m}^2 \left\{A_{M,m}(\widehat\lambda_M) - \widehat R_M^{\mathrm{PPAT}}(\widehat\lambda_M)\right\}^2.    
\end{equation*}

Then 
\begin{equation*}
\widehat\sigma_M^2(\widehat\lambda_M)\to_p\sigma_\dagger^2,
\end{equation*}
and hence
\begin{equation*}
\frac{\sqrt M\left\{\widehat R_M^{\mathrm{PPAT}}(\widehat\lambda_M)-R_M\right\}}
{\widehat\sigma_M(\widehat\lambda_M)}
\Rightarrow
N(0,1).
\end{equation*}
\end{cor}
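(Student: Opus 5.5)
The plan is to compare $\widehat\sigma_M^2(\widehat\lambda_M)$ with the infeasible oracle estimator $\widehat\sigma_M^2(\lambda_M^\dagger)$. The oracle estimator is the plug-in variance estimator of Thm.~\ref{thm:lure_variance_estimator} for the fixed array $z^\dagger$, except that it is centred at $\widehat R^{\mathrm{PPAT}}_M(\widehat\lambda_M)$ rather than at $\widehat\mu_M(z^\dagger)$. The argument has three steps.

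\textbf{Step 1: consistency of the oracle estimator.} First I check that $z^\dagger$ satisfies the hypotheses of Thm.~\ref{thm:lure_variance_estimator}. In the proof of Thm.~\ref{thm:plugin_ppat_clt} we showed $|\lambda_M^\dagger|=O(1)$. Combining this with the inequality $|a-b|^4\le 8(|a|^4+|b|^4)$ and the assumed fourth moments of $\ell$ and $c$ gives $N_M^{-1}\sum_i|z^\dagger_{M,i}|^4=O(1)$. Together with $\sigma^2_{M,\dagger}\to_p\sigma_\dagger^2$, Thm.~\ref{thm:lure_variance_estimator} then yields
\[
\frac1M\sum_m\gamma_{M,m}^2\bigl(A_{M,m}(\lambda_M^\dagger)-\widehat\mu_M(z^\dagger)\bigr)^2\to_p\sigma_\dagger^2 .
\]

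\textbf{Step 2: the perturbation is linear in $\lambda$.} Write $A_{M,m}(\lambda)=A_{M,m}(\ell)-\lambda A_{M,m}(c)$, where $A_{M,m}(c)$ is the running average for the array $c$. By the decomposition in the proof of Thm.~\ref{thm:plugin_ppat_clt},
\[
A_{M,m}(\widehat\lambda_M)-\widehat R^{\mathrm{PPAT}}_M(\widehat\lambda_M)
=\bigl(A_{M,m}(\lambda_M^\dagger)-\widehat\mu_M(z^\dagger)\bigr)-(\widehat\lambda_M-\lambda_M^\dagger)\bigl(A_{M,m}(c)-\widehat\mu_M(c)\bigr).
\]
Set $u_m$ equal to the first bracket and $v_m=A_{M,m}(c)-\widehat\mu_M(c)$. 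Expanding the square gives
\[
\widehat\sigma_M^2(\widehat\lambda_M)=\frac1M\sum\gamma^2u_m^2-2(\widehat\lambda_M-\lambda_M^\dagger)\frac1M\sum\gamma^2u_mv_m+(\widehat\lambda_M-\lambda_M^\dagger)^2\frac1M\sum\gamma^2v_m^2 .
\]
By Cauchy--Schwarz, it suffices to show $\frac1M\sum_m\gamma_{M,m}^2v_m^2=O_p(1)$, because $\widehat\lambda_M-\lambda_M^\dagger=O_p(M^{-1/2})$ by Thm.~\ref{thm:plugin_ppat_clt}.

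\textbf{Step 3: the main obstacle.} The difficulty is bounding $\frac1M\sum\gamma^2v_m^2$ without assuming that the variance for the array $c$ stabilises. I will avoid needing it. Since $\bar c_M=0$,
\[
v_m^2\le 2\bigl(A_{M,m}(c)-\bar c_M\bigr)^2+2\widehat\mu_M(c)^2 .
\]
The overlap bound from the proof of Thm.~\ref{thm:lure_l2_consistency} gives $\mathbb E[(A_{M,m}(c)-\bar c_M)^2]\le\beta^{-1}N^{-1}\sum_ic_{M,i}^2=O(1)$ uniformly in $m$, and $\gamma_{M,m}$ is bounded. Hence $\frac1M\sum\gamma^2(A_{M,m}(c)-\bar c_M)^2$ has bounded expectation and is $O_p(1)$ by Markov's inequality. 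The term $\widehat\mu_M(c)^2=O_p(M^{-1})$ is negligible. It follows that the cross and quadratic terms are $O_p(M^{-1/2})$ and $O_p(M^{-1})$ respectively, so $\widehat\sigma_M^2(\widehat\lambda_M)\to_p\sigma_\dagger^2>0$.

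The studentised limit then follows from part 2 of Thm.~\ref{thm:plugin_ppat_clt} together with Slutsky's theorem and the continuous mapping $x\mapsto x^{-1/2}$.
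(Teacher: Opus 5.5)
Your proposal is correct and follows essentially the same route as the paper. Both arguments establish consistency of the oracle estimator $\widehat\sigma_M^2(\lambda_M^\dagger)$ for the fixed array $z^\dagger$ via Thm.~\ref{thm:lure_variance_estimator}, use the affine-in-$\lambda$ decomposition, and control the cross and quadratic remainders using $\widehat\lambda_M-\lambda_M^\dagger=O_p(M^{-1/2})$. Your Markov bound on $\frac1M\sum_m\gamma_{M,m}^2v_m^2$ is a more explicit, second-moment-only justification of the $O_p(1)$ claim that the paper simply attributes to the fourth-moment condition on $c$.

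One harmless slip: $\widehat\sigma_M^2(\lambda_M^\dagger)$ is centred at $\widehat R^{\mathrm{PPAT}}_M(\lambda_M^\dagger)=\widehat\mu_M(z^\dagger)$, not at $\widehat R^{\mathrm{PPAT}}_M(\widehat\lambda_M)$. It is therefore exactly the estimator of Thm.~\ref{thm:lure_variance_estimator}, which is what you actually use in Steps 1--2.
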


\begin{proof}
Recall from Thm.~\ref{thm:plugin_ppat_clt} the oracle residualised array
$z_{M,i}^\dagger=\ell_i-\lambda_M^\dagger c_{M,i}$, together with the facts
$|\lambda_M^\dagger|=O(1)$ and $\widehat\lambda_M-\lambda_M^\dagger=o_p(1)$. Evaluated at the
deterministic oracle coefficient, $\widehat\sigma_M^2(\lambda_M^\dagger)$ is precisely the LURE
variance estimator for the fixed array $z^\dagger$. The variance-estimation result of Thm. \ref{thm:lure_variance_estimator} therefore applies: its moment condition holds because $|\lambda_M^\dagger|=O(1)$ and the
fourth-moment assumptions give
\begin{equation*}
\frac1{N_M}\sum_{i=1}^{N_M}|z_{M,i}^\dagger|^4=O(1),
\end{equation*}
This gives $\widehat\sigma_M^2(\lambda_M^\dagger)\to_p\sigma_\dagger^2$. 

It remains to replace $\lambda_M^\dagger$ with $\widehat\lambda_M$.  Both $A_{M,m}(\lambda)$ and
$\widehat R_M^{\mathrm{PPAT}}(\lambda)=\widehat\mu_M(\ell)-\lambda\widehat\mu_M(c)$ are affine in
$\lambda$, so
\begin{equation*}
A_{M,m}(\widehat\lambda_M)-\widehat R_M^{\mathrm{PPAT}}(\widehat\lambda_M) = \left\{A_{M,m}(\lambda_M^\dagger)-\widehat R_M^{\mathrm{PPAT}}(\lambda_M^\dagger)\right\}
-(\widehat\lambda_M-\lambda_M^\dagger)\left\{A_{M,m}(c)-\widehat\mu_M(c)\right\},
\end{equation*}
where $A_{M,m}(c)$ is the running average of the array $c_{M,\cdot}$. Substituting this identity into
$\widehat\sigma_M^2(\widehat\lambda_M)$ and expanding, every term other than
$\widehat\sigma_M^2(\lambda_M^\dagger)$ carries a factor of $\widehat\lambda_M-\lambda_M^\dagger$.
Since $\widehat\lambda_M-\lambda_M^\dagger=o_p(1)$, while $\widehat\sigma_M^2(\lambda_M^\dagger)$ and
the empirical quadratic average
\begin{equation*}
\frac1M\sum_{m=1}^M\gamma_{M,m}^2\left\{A_{M,m}(c)-\widehat\mu_M(c)\right\}^2
\end{equation*}
are both $O_p(1)$ (the latter by the fourth-moment condition on $c$), these terms are $o_p(1)$. Hence
\begin{equation*}
\widehat\sigma_M^2(\widehat\lambda_M)-\widehat\sigma_M^2(\lambda_M^\dagger)=o_p(1),
\qquad\text{and therefore}\qquad
\widehat\sigma_M^2(\widehat\lambda_M)\to_p\sigma_\dagger^2.
\end{equation*}
The studentised CLT then follows from Thm.~\ref{thm:plugin_ppat_clt} and Slutsky's theorem.
\end{proof}


\subsubsection{Discussion of Assumptions}
\label{sec:app:assumption-discussion}
\input{appendix/mathematical_results/asymptotic_assumptions}


%% file: appendix/mathematical_results/tightness_var_bound_v3.tex
Here, we discuss when the bound in Prop. \ref{prop:ppat_var_bound} is tight and, consequently, when we can expect $\lambda^\dagger$ to serve as a suitable alternative to $\lambda^\star$. In particular, we show that the tightness of the bound depends critically on how small $M/N$ is and how close the proposal is to being uniform.

We begin by noting that the bound is derived by chaining four separate inequalities to relax the exact variance. By tracking when each of these relaxations holds as an equality, we can 
better understand its overall tightness. As before, write $\zeta_i := \zeta_{M,i}(\lambda) = \ell_i - \lambda c_{M,i}$ for the residualised loss, $R_M = N^{-1}\sum_i \zeta_i$ for its pool mean, and $\sigma_\zeta^2 := N^{-1}\sum_i(\zeta_i - R_M)^2$ for its pool variance, so that $D(\lambda) = \sigma_\zeta^2 + R_M^2$. The proof passes through the following four relaxations:

\begin{enumerate}
    \item[(i)] \emph{Second-moment relaxation.} We bound the conditional variance of the round-$m$
    increment by its conditional second moment, which discards the (nonnegative) squared mean of
    the \emph{remaining} residuals, $N^{-2}\big(\sum_{i \in S_{M,m}} \zeta_i\big)^2$. 

    \item[(ii)] \emph{Overlap relaxation.} We replace $1/Q_{M,m}(i)$ by its worst-case value $N/\beta$ based on the uniform overlap condition. This is exact only where the proposal sits exactly at the overlap floor, $Q_{M,m}(i) = \beta/N$, for every point with $\zeta_i \neq 0$. The more concentrated (non-uniform) the proposal is, the looser this step becomes.

    \item[(iii)] \emph{Pool-extension relaxation.} We enlarge the sum over the remaining pool
    $S_{M,m}$ to a sum over the full pool $\{1,\dots,N\}$, discarding the squared residuals of the
    $m-1$ points already queried. This discarded mass is a vanishing fraction of the total whenever
    the number of acquired labels is small relative to the pool, $M \ll N$.

    \item[(iv)] \emph{Round-weight aggregation.} Finally, we bound $\sum_m \gamma_{M,m}^2$ by
    $\big(\max_m \gamma_{M,m}\big) \sum_m \gamma_{M,m} = \big(\max_m \gamma_{M,m}\big) M$. Since $\gamma_{M,m}$ is \emph{strictly increasing} in $m$, this step is strict for any $M \geq 2$: the later rounds contribute disproportionately more to the bound than to a hypothetical ``equal-weight'' aggregation. The size of this gap is controlled by the spread of the weights
    across rounds,
    \begin{equation*}
        \frac{\gamma_{M,M}}{\gamma_{M,1}} = \frac{N(N-1)}{(N-M)(N-M+1)} \; \longrightarrow \; 1
        \quad \text{as } M/N \to 0,        
    \end{equation*}
    i.e., when only a small fraction of the pool is queried, the weights $\gamma_{M,1},\dots,\gamma_{M,M}$
    are all $\approx 1$, 
    and this relaxation becomes negligible.
\end{enumerate}

It's not difficult to see that the above relaxations are governed by two separate sets of conditions. Firstly, the relaxations (ii)-(iv) are all controlled by the same two knobs -- \emph{how small $M/N$ is} and
\emph{how close the proposal is to uniform} -- and become negligible together as $M/N \to 0$ under a near-uniform proposal. Secondly, relaxation (i) depends only on whether the residualised losses are ``centred'', i.e. on the ratio $R_M^2/\sigma_\zeta^2$ between the squared pool mean and the pool variance of $\zeta_i$. 

In the settings that motivate our approach, we expect relaxation (i) to be negligible.
Its cost is controlled solely by the ratio $R_M^2/\sigma_\zeta^2$, which is small precisely
when the residualised losses are highly variable across the pool relative to their average magnitude, $\sigma_\zeta^2 \gg R_M^2$. This is exactly the high-variance regime in which active testing is beneficial. Thus, we expect our bound to be tight when the budget is a small fraction of the pool ($M \ll N$) and our proposal is close to uniform. In particular, we note that $\lambda^\star$ coincides with $\lambda^\dagger$ when our proposal is uniform.

%% file: appendix/mathematical_results/asymptotic_assumptions.tex
Thms.~\ref{thm:lure_l2_consistency}, \ref{thm:lure_clt} and \ref{thm:lure_variance_estimator} rely on a number of assumptions, which we now examine one at a time. Broadly, they fall into four groups: the finite--population asymptotic regime, two conditions on the active proposal (admissibility and uniform overlap), a family of moment conditions on the loss
array, and the stabilisation of the predictable quadratic variation. As we argue below, the first three are either standard, directly enforceable, or mild conditions on the fixed test pool that involve no distributional assumptions; only the last is genuinely substantive, which we verify through our empirical results. Throughout, recall that these results are stated for the general triangular array $\xi_{M,i}$ and specialise to LURE ($\xi_{M,i}=\ell_i$) and PPAT ($\xi_{M,i}=\zeta_{M,i}(\lambda)=\ell_i-\lambda(\tilde\ell_i-\tilde R_M)$).

\paragraph{Finite--population regime:}
All three results are stated in the finite--population asymptotic regime, in which the pool size $N_M$ and the budget $M$ grow together with
$N_M/M\to\alpha$ for some $\alpha>1$. This is the standard regime for
without--replacement sampling from a finite population \citep{farquhar2021statistical}, and it simply
says that we label a fixed fraction $M/N_M\to 1/\alpha\in(0,1)$ of the pool. The requirement $\alpha>1$ rules out the degenerate case in which we label (essentially) the entire pool.

\paragraph{Uniform overlap.}
This is a positivity (overlap) condition requiring that, at every round, each remaining point retains at least a fixed fraction of uniform sampling mass. It is the finite--population analogue of the overlap/positivity assumption that is ubiquitous in importance sampling, inverse--probability weighting, and
off--policy evaluation, and it plays exactly the same role here: it keeps the importance weights bounded.  Crucially, this condition is not merely plausible but is enforced by construction in our experiments, as described in \S\ref{sub:ppat_active_proposal}.

\paragraph{Moment conditions on the loss array.}
The three results require control of, respectively, the second, $(2+\delta)$--th,
and fourth finite--pool moments of the array:
\begin{equation*}
\underbrace{\tfrac{1}{N_M}\sum_{i=1}^{N_M}\xi_{M,i}^2=O(1)}_{\text{consistency (Thm. \ref{thm:lure_l2_consistency})}},
\qquad
\underbrace{\tfrac{1}{N_M}\sum_{i=1}^{N_M}|\xi_{M,i}|^{2+\delta}=O(1)}_{\text{normality (Thm.~\ref{thm:lure_clt})}},
\qquad
\underbrace{\tfrac{1}{N_M}\sum_{i=1}^{N_M}|\xi_{M,i}|^{4}=O(1)}_{\text{variance/CIs (Thm.~\ref{thm:lure_variance_estimator})}}.
\end{equation*}
The single most important point about these conditions is that they are
statements about the empirical moments of a fixed, deterministic collection of
losses: because we work conditionally on the test pool, they involve no
distributional or i.i.d.\ assumption on the data whatsoever. They simply require
that the average magnitude of the (residualised) losses does not blow up as the
pool grows.

For PPAT, the array $\xi_{M,i}=\zeta_{M,i}(\lambda)$ is a linear combination of the true loss $\ell_i$ and the centred proxy loss $c_{M,i}=\widetilde\ell_i-\widetilde R_M$. Since centring does not increase moments beyond a constant factor, each of the above conditions reduces to the same bounded--moment requirement on the true losses $\ell_i$ and the proxy losses $\tilde\ell_i$ separately -- for instance, $N_M^{-1}\sum_i\ell_i^4=O(1)$ and $N_M^{-1}\sum_i\tilde\ell_i^4=O(1)$ for the fourth--moment condition. These are easy to satisfy. For any \emph{bounded} loss -- for example the $0$--$1$ loss, or any loss on a compact label
space -- all finite--pool moments are automatically bounded and the conditions hold trivially. For \emph{unbounded} losses such as squared error or cross--entropy, they hold provided the losses do not develop pool--growing heavy tails, i.e.\ provided no vanishingly small subset of points carries losses large enough to dominate the average $p$--th power; in practise, this is a weak requirement.

\paragraph{Stabilisation of the predictable quadratic variation.}
The asymptotic normality and variance--estimation results (Thms.~\ref{thm:lure_clt} and \ref{thm:lure_variance_estimator}) additionally require that the predictable quadratic variation $\sigma_M^2=\frac1M\sum_{m=1}^M\gamma_{M,m}^2\,s_{M,m}^2$ converge in probability to a finite, strictly positive constant $\sigma^2$. This is the one genuinely non--trivial assumption, which we now discuss below.

Firstly, we note that it is the standard stabilisation
condition for a martingale CLT \citep[Chapter 3]{Hall1980MartingaleLT}. The centred, rescaled error
$\sqrt M(\widehat R_M-R_M)$ is a sum of martingale differences, and the fluctuations of such a sum are governed by its accumulated conditional variance $\sigma_M^2$;
the CLT holds precisely when this accumulated variance stabilises. 

Secondly, the finiteness of $\sigma^2$ is essentially free: combining the overlap bound with the second--moment condition gives
$s_{M,m}^2\le\beta^{-1}\,N_M^{-1}\sum_i\xi_{M,i}^2$, and since $\gamma_{M,m}$ is uniformly bounded , $\sigma_M^2$ is uniformly bounded above. Moreover, the positivity of $\sigma^2>0$  asks that the estimator have genuinely non--degenerate fluctuations at the $\sqrt M$ scale. It fails only in degenerate situations, for example if the residualised losses became asymptotically constant across the pool. The convergence requirement, however, is more substantive and 
generally does not follow from pool--level moment bounds alone, except in simpler cases such as non-adaptive proposals or random sampling without replacement. While it is difficult to establish this convergence for arbitrary fully adaptive proposals, our coverage experiments in \S\ref{sub:coverage} provide indirect empirical support for it: the PPAT intervals attain the nominal coverage level, and do so at least as quickly as, and often faster than, the competing methods. Such behaviour would be difficult to achieve if $\sigma_M^2$ failed to stabilise. We also note that while the random and LURE baselines sometimes fall short of nominal coverage at the budgets considered, we expect this to improve with an increased labelling budget.